\numberwithin{equation}{section}
\def\eqref#1{equation~\ref{#1}}
\def\1{\bm{1}}
\def\eps{{\epsilon}}
\def\vzero{{\bm{0}}}
\def\vone{{\bm{1}}}
\def\vmu{{\bm{\mu}}}
\def\va{{\bm{a}}}
\def\vb{{\bm{b}}}
\def\vc{{\bm{c}}}
\def\ve{{\bm{e}}}
\def\vg{{\bm{g}}}
\def\vq{{\bm{q}}}
\def\vr{{\bm{r}}}
\def\vs{{\bm{s}}}
\def\vu{{\bm{u}}}
\def\vv{{\bm{v}}}
\def\vw{{\bm{w}}}
\def\vx{{\bm{x}}}
\def\vy{{\bm{y}}}
\def\mA{{\bm{A}}}
\def\mB{{\bm{B}}}
\def\mD{{\bm{D}}}
\def\mG{{\bm{G}}}
\def\mH{{\bm{H}}}
\def\mI{{\bm{I}}}
\def\mM{{\bm{M}}}
\def\mS{{\bm{S}}}
\def\mW{{\bm{W}}}
\def\mX{{\bm{X}}}
\def\mY{{\bm{Y}}}
\def\mZ{{\bm{Z}}}
\def\mLambda{{\bm{\Lambda}}}
\DeclareMathAlphabet{\mathsfit}{\encodingdefault}{\sfdefault}{m}{sl}
\SetMathAlphabet{\mathsfit}{bold}{\encodingdefault}{\sfdefault}{bx}{n}
\def\sS{{\mathbb{S}}}
\DeclareMathOperator*{\argmax}{arg\,max}
\DeclareMathOperator{\interior}{int}
\DeclareMathOperator{\relinterior}{relint}
\DeclareMathOperator{\aff}{aff}
\newcommand{\RR}{\mathbb{R}}
\DeclareMathOperator{\EE}{\mathbb{E}}
\DeclareMathOperator{\PP}{\mathbb{P}}
\DeclareMathOperator{\diag}{diag}
\DeclareMathOperator{\poly}{poly}
\newcommand{\loss}{\mathcal L}
\newcommand{\ellpi}{\mathcal L_{\pi}}
\newcommand{\ellrr}{\mathcal L_{\mathrm{RR}}}
\newcommand{\ellgd}{\mathcal L_{\mathrm{GD}}}
\renewcommand{\vec}{\mathrm{vec}}
\DeclareMathOperator{\conv}{conv}
\DeclareMathOperator{\Span}{Span}
\newcommand{\ol}{\overline}
\DeclareMathOperator{\sgn}{sgn}
\newcommand{\Xgdbn}{\ol{\mX}_{\mathrm{GD}}}
\newcommand{\Xrrbn}{\ol{\mX}_{\mathrm{RR}}}
\newcommand{\Xpi}{\mX_{\pi}}
\newcommand{\Xpibn}{\ol{\mX}_{\pi}}
\newcommand{\Xmax}{\ol{\mX}_{\mathrm{max}, 2}}
\newcommand{\Xmaxf}{\ol{\mX}_{\mathrm{max}, F}}
\newcommand{\Ygd}{\mY_{\mathrm{GD}}}
\newcommand{\Yrr}{\mY_{\mathrm{RR}}}
\newcommand{\Ypi}{\mY_{\pi}}
\newcommand{\Yhatpi}{\hat{\mY}_{\pi}}
\newcommand{\Zgd}{\ol{\mZ}_{\mathrm{GD}}}
\newcommand{\Zrr}{\ol{\mZ}_{\mathrm{RR}}}
\newcommand{\Zpi}{\ol{\mZ}_{\pi}}
\newcommand{\Zrrbn}{\ol{\mZ}_{\mathrm{RR}}}
\newcommand{\Zpibn}{\ol{\mZ}_{\pi}}
\newcommand{\Sgdls}{\mS_{\mathrm{GD}}^{\mathrm{LS}}}
\newcommand{\Sgdsc}{\mS_{\mathrm{GD}}^{\mathrm{SC}}}
\newcommand{\Sls}{\mS^{\mathrm{LS}}}
\newcommand{\Ssc}{\mS^{\mathrm{SC}}}
\newcommand{\Xls}{\mX^{\mathrm{LS}}}
\newcommand{\Xsc}{\mX^{\mathrm{SC}}}
\newcommand{\Xpisc}{\Xpibn^{\mathrm{SC}}}
\newcommand{\Xgdsc}{\Xgdbn^{\mathrm{SC}}}
\newcommand{\Yls}{\mY^{\mathrm{LS}}}
\newcommand{\Ysc}{\mY^{\mathrm{SC}}}
\newcommand{\Mgd}{\mM_{\mathrm{GD}}}
\newcommand{\Mpi}{\mM_{\pi}}
\newcommand{\Mrr}{\mM_{\mathrm{RR}}}
\newcommand{\vgd}{\vv_{\mathrm{GD}}}
\newcommand{\vpi}{\vv_{\pi}}
\newcommand{\Grr}{G_{\mathrm{RR}}}
\newcommand{\Gpi}{G_{\pi}}
\newcommand{\alpharr}{\alpha_{\mathrm{RR}}}
\newcommand{\alphapi}{\alpha_{\pi}}
\newcommand{\vsigma}{\bm{\sigma}}
\newcommand{\vgt}{\Tilde{\bm{g}}}
\newcommand{\Cweight}{C_{\mathrm{w}}}
\newcommand{\Bweight}{A_{\mathrm{w}}}
\newcommand{\Bloss}{A_L}
\newcommand{\Brr}{A_{\mathrm{RR}}}
\newcommand{\Tpi}[2]{T_{\pi}^{\qty{#1,#2}}}
\newcommand{\kpi}[1]{k_{\pi}^{#1}}
\newcommand{\partI}{\textcolor{red}{\text{(I) }}}
\newcommand{\partII}{\textcolor{blue}{\text{(II) }}}
\newcommand{\mGamma}{\bm{\Gamma}}
\newcommand{\bn}{{\mathsf{BN}}}
\newcommand{\bnpi}{{\mathsf{BN}_{\pi}}}
\newcommand{\bnrr}{{\mathsf{BN}_{\mathrm{RR}}}}
\newcommand{\errorterm}{A}
\DeclareMathOperator{\relu}{ReLU}
\crefname{assumption}{assumption}{assumptions}
\Crefname{assumption}{Assumption}{Assumptions}
\title{On the Training Instability of Shuffling SGD with Batch Normalization}
\author{\name David X. Wu \email{david\_wu@berkeley.edu}\\
  \addr{UC Berkeley, Berkeley, CA, USA 94720} \\ 
  \name Chulhee Yun \email{chulhee.yun@kaist.ac.kr}\\
    \addr{KAIST, Seoul, Korea, 02455}\\
  \name Suvrit Sra \email{suvrit@mit.edu}\\
    \addr{Massachusetts Institute of Technology, Cambridge, MA, USA 02139}
}
\begin{document}

\maketitle

\begin{abstract}
We uncover how SGD interacts with batch normalization and can exhibit undesirable training dynamics such as divergence. More precisely, we study how Single Shuffle (SS) and Random Reshuffle (RR)---two widely used variants of SGD---interact surprisingly differently in the presence of batch normalization: \emph{RR leads to much more stable evolution of training loss than SS}. As a concrete example, for regression using a linear network with batch normalization, we prove that SS and RR converge to distinct global optima that are ``distorted'' away from gradient descent. Thereafter, for classification we characterize conditions under which training divergence for SS and RR can, and cannot occur. We present explicit constructions to show how SS leads to distorted optima in regression and divergence for classification, whereas RR avoids both distortion and divergence.  We validate our results by confirming them empirically in realistic settings, and conclude that the separation between SS and RR used with batch normalization is relevant in practice.
\end{abstract}

\section{Introduction}
\label{sec:intro}
Recent work in deep learning theory attempts to uncover how 
the choice of optimization algorithm and architecture
influence training stability and efficiency. On the optimization front, stochastic gradient descent (SGD) is the \emph{de facto} workhorse, and its importance has correspondingly led to the development of many different variants that aim to increase the ease and speed of training, such as AdaGrad \citep{duchi2011adaptive} and Adam \citep{kingma2014adam}.

In reality, practitioners often do not use with-replacement sampling of gradients as required by SGD. Instead they use \emph{without-replacement} sampling, leading to two main variants of SGD: single-shuffle (SS) and random-reshuffle. SS randomly samples and fixes a permutation at the beginning of training, while RR randomly resamples permutations at each epoch. These shuffling algorithms are often more practical and can have improved convergence rates \citep{haochen2019random,safran2020good,yun2021open,yun2022minibatch,cho2023sgda,cha2023tighter}. 

Architecture design offers another avenue for practitioners to train networks more efficiently and encode salient inductive biases. Normalizing layers such as BatchNorm (BN) \citep{ioffe2015batch}, LayerNorm \citep{ba2016layer}, or InstanceNorm \citep{ulyanov2016instance} are often used with SGD to accelerate convergence and stabilize training. Recent work studies how these scale-invariant layers affect training through the effective learning rate \citep{li2019exponential,li2020reconciling,wan2021spherical,lyu2022understanding}.

Motivated by these practical choices, we study how SS and RR interact with batch normalization at \emph{training time}. 
Our experiments (Fig.~1) suggest that combining SS and BN can lead to surprising and undesirable training phenomena: 
\begin{enumerate}[label=(\roman*)]
    \item The training risk often diverges when using SS+BN to train linear networks (i.e. without nonlinear activations) on real datasets (see \Cref{fig:3l_lnn_training_loss}), while using SS without BN does not cause divergence (see \Cref{fig:no_bn}). 
    \item Divergence persists after tuning the learning rate and other hyperparameters (\Cref{sec:clf-experiments}) and  also manifests more quickly in deeper linear networks (\Cref{fig:3l_lnn_training_loss}).
    \item SS+BN usually converges slower than RR+BN in nonlinear architectures such as ResNet18 (see \Cref{fig:resnet_slow_cifar10}).  
\end{enumerate}
In light of these surprising experimental findings, we seek to develop a theoretical explanation.

\subsection{Summary of our contributions}
We develop a theoretical and experimental understanding of how shuffling SGD and BN collude to create divergence and other undesirable training behavior. Since these phenomena manifest themselves on the training risk, our results are not strictly coupled with generalization. 

Put simply, the aberrant training dynamics stem from BN \emph{not} being permutation invariant across epochs. This simple property interacts with SS undesirably, although \emph{a priori} it is not obvious whether it should. More concretely, one expects SGD+BN to optimize the gradient descent (GD) risk in expectation. However, due to BN's sensitivity to permutations, both SS+BN and RR+BN implicitly train induced risks different from GD, and also from each other. 

\begin{figure}[t]
     \centering
     \begin{subfigure}[b]{0.48\textwidth}
         \centering
         \includegraphics[width=\textwidth]{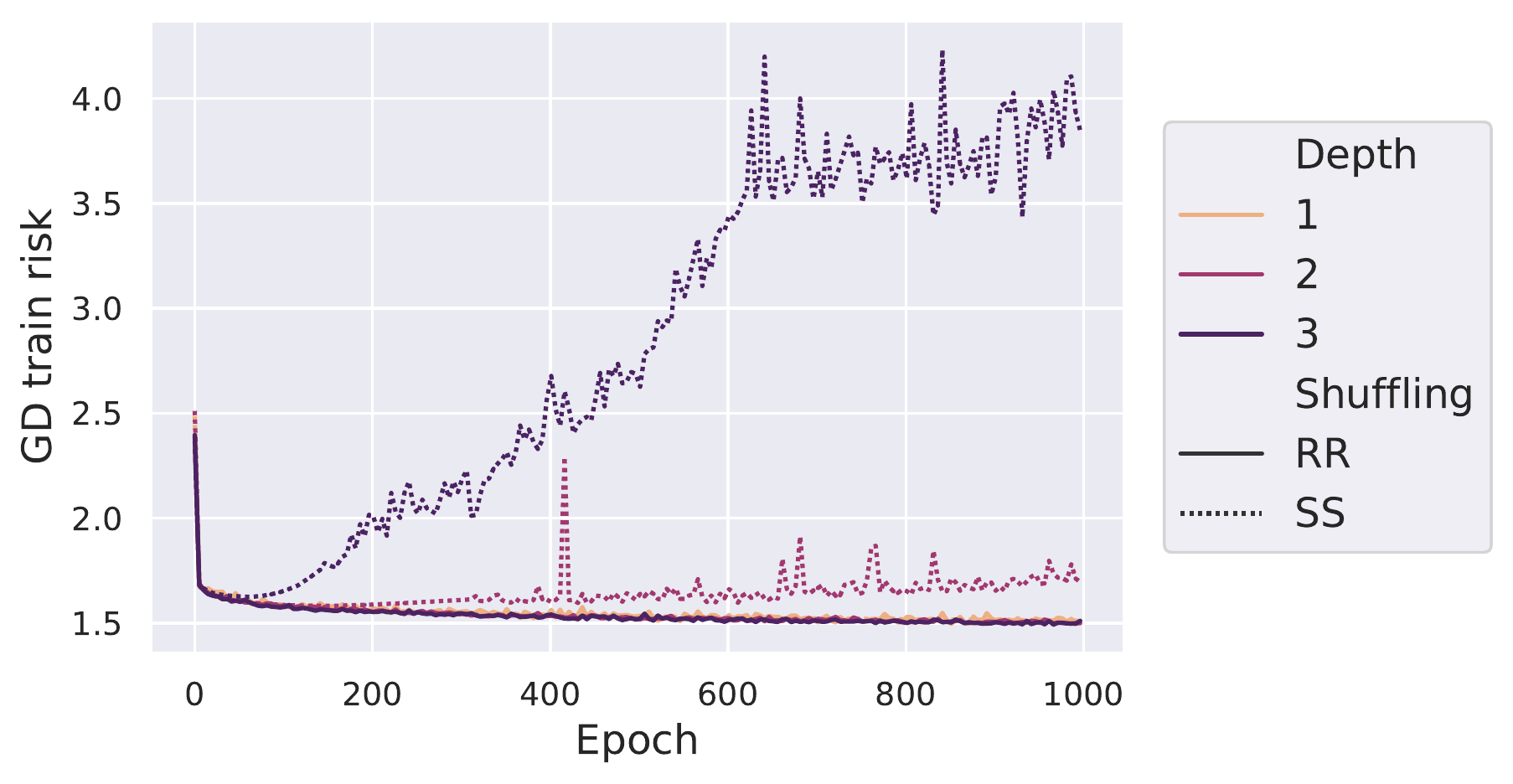}
         \caption{Depths 1, 2, and 3 linear+BN networks.}
         \label{fig:3l_lnn_training_loss}
     \end{subfigure}
     \hfill
     \begin{subfigure}[b]{0.48\textwidth}
         \centering
         \includegraphics[width=\textwidth]{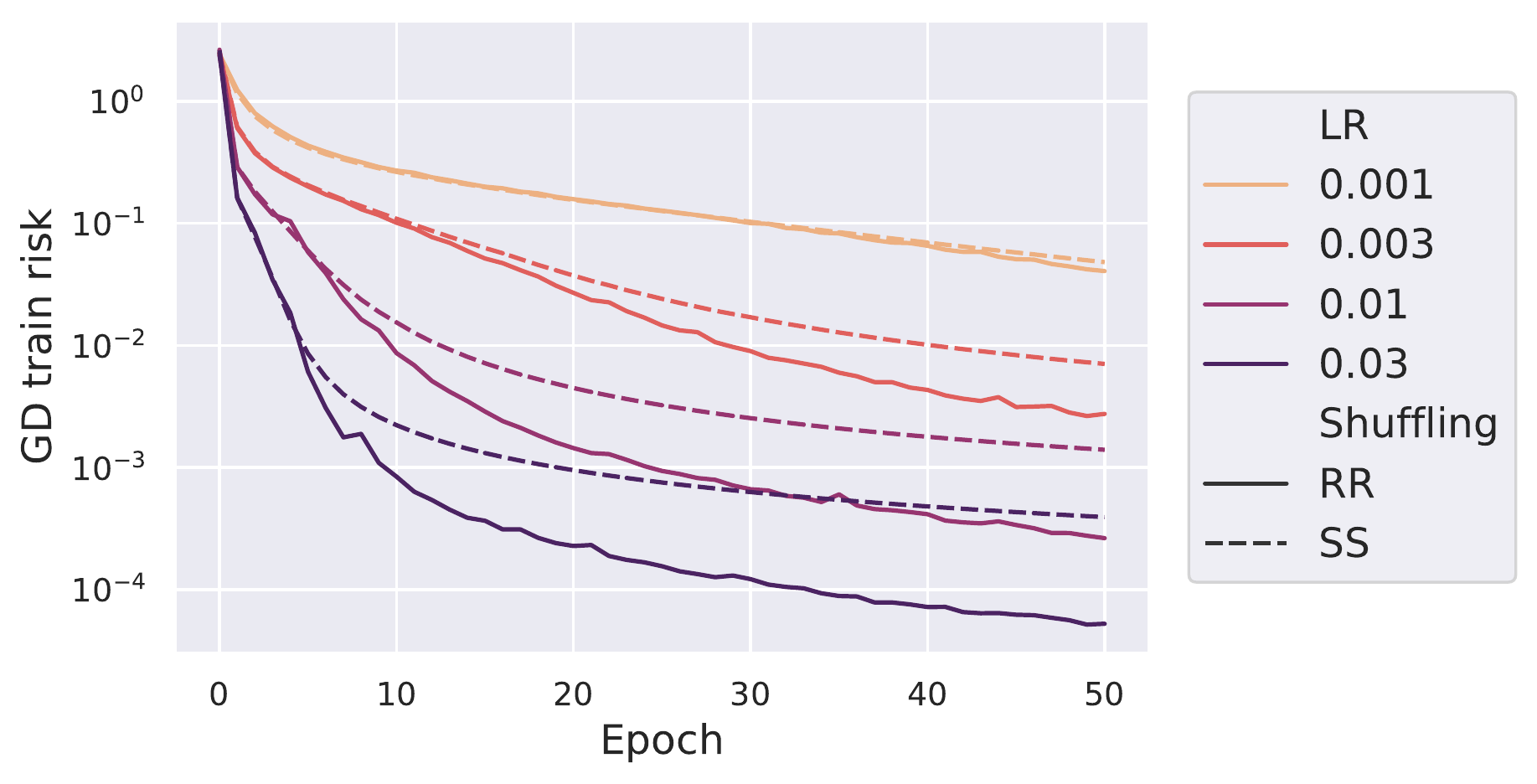}
         \caption{Finetuned ResNet18.}
         \label{fig:resnet_slow_cifar10}
     \end{subfigure}
     \caption{Surprising training phenomena using SS/RR+BN.} \label{fig:surprising-phenomena}
\end{figure}

\begin{itemize}
    \item In \Cref{subsec:regression-results}, we prove that the network $f(\mX; \Theta) = \mW\mGamma \bn(\mX)$ converges to the optimum for the distorted risk induced by SS and RR (\Cref{thm:ss-convergence,thm:rr-convergence,}); the diagonal matrix $\mGamma$ denotes the \emph{trainable} scale parameters in the BN layer. Our proof requires a delicate analysis of the evolution of gradients, the noise arising from SS, and the two-layer architecture. Due to the presence of $\mGamma$, our results do not assume a fully-connected linear network, which distinguishes them from prior convergence results. In \Cref{sec:opt-analysis-reg}, we present a toy dataset for which SS is distorted away from GD with constant probability while RR averages out the distortion to align with GD. We validate our theoretical findings on synthetic data in \Cref{sec:reg-experiments}.
    
    \item In \Cref{subsec:classification-results}, we connect properties of the distorted risks to divergence. With this step, we provide insights into which regimes can lead to divergence of the training risk  (\Cref{thm:ss-divergence-informal,thm:rr-divergence-informal}). We show that in certain regimes, SS+BN can suffer divergence, whereas RR+BN provably avoids divergence. These results motivate our construction of a toy dataset where SS leads to divergence with constant probability while RR avoids divergence (\Cref{example:clf-divergence}). In \Cref{sec:clf-experiments}, we empirically validate our results on deeper linear+BN networks on a variety of datasets and hyperparameters. Our experiments also demonstrate that SS trains more slowly than RR in more realistic nonlinear settings, including ReLU+BN networks and ResNet18. In doing so, we extend the relevance of our theoretical results to more complex and realistic settings. 
\end{itemize}

\subsection{Related work}\label{sec:related-work}
\paragraph{Interplay between BN and SGD.}
Prior theoretical work primarily studied how BN interacts with GD or with-replacement SGD \citep{arora2018theoretical,santurkar2018does,li2019exponential,cai2019quantitative,wan2021spherical,lyu2022understanding}.  
\citet{arora2018theoretical,wan2021spherical} assumed global bounds on the smoothness with respect to network parameters 
and the SGD noise to analyze convergence to stationary points. We instead prove convergence to the global minimum of the SS distorted risk $\ellpi$ with \emph{no} such 
assumptions (\Cref{thm:ss-convergence}). \citet{li2019exponential} assumed the batch size is large enough to ignore SGD noise, whereas we explicitly exhibit and study the separation between shuffling SGD and GD. For fully scale-invariant networks trained with GD, \citet{lyu2022understanding} identified an oscillatory edge of stability behavior around a manifold of minimizers. Our BN network has trainable scale-variant parameters $\mW$ and $\mGamma$, and we train with shuffling SGD instead of GD. Hence, the noise that leads to distorted risks is fundamentally different.

\paragraph{BN's effect on risk function.}
Previous work identified the distortion of risk function due to noisy batch statistics in BN. \citet{yong2020momentum} studied the asymptotic regularization effect of noisy batch statistics \emph{in expectation} for with-replacement SGD. In contrast, we characterize this noise  nonasymptotically w.h.p.\ over $\pi$ for SS and a.s.\ with respect to the data for RR. \citet{wu2021rethinking} studied the difficulty of precisely estimating the population statistics at train time, especially when using an exponential moving average. We avoid these issues altogether by evaluating directly on the GD risk. Moreover, we prove concentration inequalities for without-replacement batch statistics (\cref{prop:concentration-features}).

\paragraph{Ghost batch normalization.}
In the presence of BN, it is common practice to use \emph{ghost batch normalization}, a scheme which break up large batches into virtual ``ghost'' batches, as this tends to improve the generalization of the network \citep{hoffer2017train,shallue2019measuring,Summers2020Four}. Minibatch statistics are calculated with respect to the ghost batches, and each gradient step is computed by summing the gradient contributions from the ghost batches. This algorithm is closely related to our method of analysis for SS+BN/RR+BN. Indeed, in our setup we also break up the full batch into mini-batches, and our analysis reduces to showing that SS+BN and RR+BN trajectories track those obtained by following the aggregate gradient signal from summing over mini-batches. We comment more on the similarities between ghost BN and our setup in \cref{sec:distorted-risks}.

\paragraph{Shuffling and optimization.} Outside SGD, the effect of random shuffling has also been studied for classical nonlinear optimization schemes such as coordinate gradient descent (CGD) and ADMM (see \citet{sun2020efficiency,gurbuzbalaban2020randomness} and references therein). On convex quadratic optimization problems, they demonstrate separations in convergence rates between SS, RR, and with-replacement sampling. Our main focus is the optimum that the algorithms converge to rather than their convergence rates.

\paragraph{Implicit bias.}
Our work is also motivated by a burgeoning line of work which studies the \emph{implicit bias} of different optimization algorithms \citep{soudry2018implicit,gunasekar2018characterizing,ji2018gradient,ji2019implicit,ji2020directional,yun2021unifying,jagadeesan2022inductive}. 
These results establish how optimization algorithms such as gradient flow (GF), gradient descent (GD) or even with-replacement SGD are biased towards certain optima. For example, in the interpolating regime, GD converges to the min-norm solution \citep{gunasekar2018characterizing,woodworth2020kernel} for linear regression and the max-margin classifier for classification \citep{soudry2018implicit,pmlr-v89-nacson19a,pmlr-v89-nacson19b}.

Most directly related to our work is \citet{cao2023implicit}; they establish that linear (CNN) models $\mGamma \bn(\mW\mX)$ with BN as the final layer trained with GD converge to the (patchwise) \emph{uniform}-margin classifier with an explicit convergence rate faster than linear models without BN. Notably, their techniques are able to control the training dynamics of the $\mW$ inside of BN. In contrast, our networks are of the form $\mW\mGamma \bn(\mX)$, so the network is no longer scale-invariant with respect to $\mW$, which is essential to their analysis. Furthermore, we study the surprising interactions between shuffling SGD and BN compared to full-batch GD and BN, whereas they use full-batch GD. 

Finally, while our work does not focus on generalization, it is connected in spirit to implicit bias. Indeed, our analysis centers the study of how the risk functions and optima are affected by choices of the optimizer (SS/RR) and the architecture (BN).
\section{Problem setup}\label{sec:setup}
For $n \in \mathbb{Z}^+$ we use the notation $[n] \triangleq \qty{1, \ldots, n}$. We write $\pi$ to denote a permutation of $[n]$, and $\sS_n$ is the symmetric group of all such $\pi$. For any matrix $\mA \in \RR^{d \times n}$, $\pi \circ \mA \in \RR^{d \times n}$ is result of shuffling the columns of $\mA$ according to $\pi$. Also, $\norm{\mA}_2$ and $\norm{\mA}_F$ refer to the spectral norm and Frobenius norm, respectively. We write $\sigma_{\min}(\mA) \triangleq \inf_{\norm{\vv} = 1} \norm{\mA\vv}$ to denote minimum singular value of $\mA$.
According to our notation, $\sigma_{\min}(\mA) > 0$ \emph{only if} $\mA$ is tall or square.
We use $\Span(\mA)$ to denote the span of $\mA$'s columns.
The (coordinatewise) sign function $\sgn(\cdot): \RR \to \qty{-1, 0, 1}$ is defined as $\sgn(x) = x/\abs{x}$ for $x \neq 0$ and $\sgn(0) = 0$.

\paragraph{Data.} Let $\mZ = (\mX, \mY)$ be the given dataset, with $\mX = \begin{bmatrix}\vx_1 & \cdots & \vx_n \end{bmatrix} \in \RR^{d \times n}$ representing the feature matrix and corresponding labels $\mY = \begin{bmatrix}\vy_1 & \cdots & \vy_n \end{bmatrix}\in \RR^{p \times n}$. In the classification setting we will assume $\mY \in \qty{\pm 1}^{1 \times n}$.

\paragraph{Prediction model.} A batch normalization (BN) layer can be separated into a normalizing component $\bn$ and a scaling component $\mGamma$; we ignore the bias parameters for analysis. Given any matrix $\mB = \begin{bmatrix}\vx_1 & \cdots &\vx_q\end{bmatrix} \in \RR^{d \times q}$ (here, $q \geq 2$ is arbitrary), the normalizing transform $\bn(\cdot)$ maps it to $\bn(\mB) \in \RR^{d \times q}$ by operating coordinatewise on each $\vx_i$ in $\mB$. 
In particular, for the $k$th coordinate of $\vx_i$, denoted as $x_{i,k}$, the transform $\bn$ sends $x_{i,k} \mapsto \tfrac{x_{i,k} - \mu_{k}}{\sqrt{\sigma_{k}^2 + \epsilon}}$ where $\mu_{k}$ and $\sigma_{k}^2$ are the batch empirical mean and variance of the $k$th coordinate, respectively, and $\epsilon$ is an arbitrary positive constant used to avoid numerical instability. For technical reasons, we omit $\epsilon$ in our analysis. 
The scaling matrix $\mGamma \in \RR^{d \times d}$ is a diagonal matrix which models the tunable coordinatewise scale parameters inside the BN layer.

Throughout the paper, we consider neural networks of the form $f(\cdot; \Theta) = \mW\mGamma \bn(\cdot)$\footnote{ We can readily generalize to arbitrary learned (but frozen) feature mappings under suitable changes to the assumptions.}.
We use $\Theta = (\mW,\mGamma)$ to denote the collection of all parameters in the network. With the presence of batch normalization layers, the output of  $f$ is a function of the input datapoint \emph{as well as} the batch it belongs to. Even changing one point of a batch $\mB$ can affect the batch statistics (i.e., $\mu_k$'s and $\sigma^2_k$'s) and in turn change the outputs of $f$ for the entire batch.
The collection of network outputs for $\mB$ reads $f(\mB;\Theta) = \mW \mGamma \bn(\mB)$.

\paragraph{Loss functions.} We study regression with squared loss $\ell(\hat \vy,\vy) \triangleq \norm{\hat \vy - \vy}^2$ and binary classification with logistic loss $\ell(\hat y, y) \triangleq -\log(\rho(y \hat y))$, where $\rho(t) = 1/(1+e^{-t})$. Let $\hat \mY, \mY \in \RR^{p \times q}$ denote network outputs and true labels for a mini-batch of $q$ datapoints, respectively. Define the mini-batch risk as the columnwise sum 
\[
\mathcal L(\hat \mY, \mY) \triangleq \sum_{i=1}^q \ell (\hat \mY_{:,i},\mY_{:,i}),
\]
where $\mY_{:,i}$ denotes the $i$th column of $\mY$.

\paragraph{Optimization methods.}
We consider shuffling-based variants of SGD, namely single-shuffle (SS) and random-reshuffle (RR). These algorithms proceed in \emph{epochs}, i.e., full passes through shuffled dataset.
As the names suggest, SS randomly samples a permutation $\pi \in \sS_n$ at the beginning of the first epoch and adheres to this permutation. RR randomly resamples permutations $\pi_k \in \sS_n$ at each epoch $k$.

Throughout, the (mini-)batch size will be denoted as $B$. For simplicity, we assume that the $n$ datapoints can be divided into $m$ batches of size $B$. With a permutation $\pi \in \sS_n$, the dataset $\mZ = (\mX, \mY)$ is thus perfectly partitioned into $m$ batches $(\mX_\pi^1, \mY_\pi^1), \ldots, (\mX_\pi^m, \mY_\pi^m)$, where $\mX_\pi^j \in \RR^{d \times B}$ and $\mY_\pi^j \in \RR^{p \times B}$ consist of the $(j(B-1)+1, \ldots, jB)$th columns of the shuffled $\pi \circ \mX$ and $\pi \circ \mY$, respectively. 

For a parameter $\Theta$ optimized with SS or RR, we denote the $j$th iterate on the $k$th epoch by $\Theta_j^k$. The starting iterate of the $k$th epoch is $\Theta_0^k$ which is equal to the last iterate of the previous epoch $\Theta_m^{k-1}$.  
For each $j \in [m]$, SS and RR perform a mini-batch SGD update with stepsize $\eta_k > 0$:
\[
\Theta_j^k \leftarrow \Theta_{j-1}^k - \eta_k \grad_\Theta \mathcal L(f(\mX_{\pi_k}^j;\Theta_{j-1}^k),\mY_{\pi_k}^j).
\]

\section{Main regression results: convergence to optima of distorted risks}\label{sec:regression}
In this section, we introduce the framework of distorted risks to elucidate the distinction between SS+BN and RR+BN. This framework also applies to classification; we continue to study it in \cref{sec:classification}. We then present our global convergence results (\cref{thm:ss-convergence,thm:rr-convergence}) for the distorted risks induced by SS and RR for squared loss regression. In the one-dimensional case, we uncover an averaging relationship between the SS and RR optima (\cref{prop:ss-rr-relationship}) which can help RR reduce distortion. We exemplify this averaging relationship with a simple example and extend it to higher dimensions with experiments on synthetic data.

\subsection{Framework: the idea of distorted risks}\label{sec:distorted-risks}
We now formally introduce the notion of a \emph{distorted risk}. Distorted risks are crucial to our analysis, as they encode the interaction between shuffling SGD and BN. We show that these distorted risks $\ellpi$ and $\ellrr$ are respectively induced by certain batch normalized datasets $\Xpibn$ and $\Xrrbn$.

Recall that the network outputs for a batch depend on the entire batch.
The \emph{undistorted} risk we actually want to minimize is the risk which corresponds to full-batch GD. Define the GD features $\Xgdbn \triangleq \bn(X)$, which induces this GD risk:
\[
\ellgd(\Theta) \triangleq \mathcal L(f(\mX;\Theta),\mY)
= \mathcal L(\mW \mGamma \Xgdbn, \mY).
\]

However, during epoch~$k$, SS or RR optimizes a distorted risk dependent on $\pi_k$. To see why, define the SS dataset
\begin{align*}
    \Xpibn \triangleq \bnpi (\mX) &\triangleq \mqty[\bn(\mX_\pi^1) & \cdots & \bn(\mX_\pi^m)] \\
    \mY_\pi &\triangleq \mqty[\mY_\pi^1 & \cdots & \mY_\pi^m],
\end{align*}
for every permutation $\pi \in \sS_n$. Similarly, form the RR dataset $(\Xrrbn, \Yrr) \in \RR^{d \times (n \cdot n!)} \times \RR^{p \times (n \cdot n!)}$ by concatenating the SS datasets $(\Xpibn, \Ypi)$ across all $\pi$.

Crucially, the SS data $\Xpibn$ encodes the distortion due to the interaction between SS with permutation $\pi$ and BN; the RR data $\Xrrbn$ does the same for RR and BN. Indeed, since SS uses fixed $\pi$, it implicitly optimizes the SS distorted risk
\[
\mathcal L_{\pi}(\Theta) \triangleq 
\sum_{j=1}^m \mathcal L(f(\mX_{\pi}^j;\Theta),\mY_{\pi}^j) = \mathcal{L}(\mW\mGamma \ol{\mX}_{\pi}, \mY_{\pi}).
\]

Likewise, by collapsing the epoch update into a noisy ``SGD'' update, we observe that RR over epochs implicitly optimizes the RR distorted risk 
\[
\ellrr(\Theta) \triangleq \frac{1}{n!} \sum_{\pi \in \sS_n} \ellpi(\Theta) = \frac{1}{n!}\mathcal{L}(\mW\mGamma \Xrrbn, \Yrr).
\label{eq:ellrr}
\]

We reiterate that SS and RR distortions originate from using \emph{both} shuffling and batch normalization: shuffling alters the batch-dependent affine transforms that BN applies. With this notation, the connection between SS+BN/RR+BN and ghost BN becomes more evident: one can view the full batch as the batch in ghost BN and the mini-batches as the virtual ghost batches. Moreover, the proofs of \cref{thm:ss-convergence,thm:rr-convergence} demonstrate that ghost BN would witness the same type of distortion as SS+BN/RR+BN.

To aid clarity, we adopt the convention that overlines connote batch normalization with \emph{some} batching, and vice versa. For example, the SS dataset $\Xpibn \triangleq \bnpi(\mX)$ is normalized, while the shuffled dataset $\Xpi = \pi \circ \mX$ is not. 
\subsection{Convergence results for regression}\label{subsec:regression-results}
We now present our main regression results: SS+BN and RR+BN converge to the global optima of their respective distorted risks encoded by the SS dataset $\Xpibn$ and the RR dataset $\Xrrbn$. We require the following rank assumptions.

\begin{assumption}[Full rank assumption]\label{assumption:full-rank-reg}
\,\\ \vspace*{-3ex}
\begin{enumerate}[label=\normalfont{(\alph*)},ref={Assumption~\theassumption(\alph*)}]
\item \label{assumption:full-rank-ss} $\Xpibn \in \RR^{d \times n}$ satisfies $\rank(\Xpibn) \ge d$. In particular, $\sigma_{\min}(\Xpibn\Xpibn^\top) > 0$. 
\item \label{assumption:full-rank-rr}  $\Xrrbn \in \RR^{d \times (n \cdot n!)}$ satisfies $\rank(\Xrrbn) \ge d$. In particular, $\sigma_{\min}(\Xrrbn\Xrrbn^\top) > 0$. 
\end{enumerate}
\end{assumption}

It is natural to ask when \Cref{assumption:full-rank-reg} holds. We demonstrate that the following milder assumption implies it; the assumption states that the feature matrix $\mX$ is drawn from a joint density on matrices in a \emph{potentially non-i.i.d.\ fashion}. 
\begin{assumption}\label{assumption:density}
$\mX$ is drawn from a density with respect to the Lebesgue measure on $\RR^{d \times n}$. 
\end{assumption}
Since $\bn$ centers the mini-batch features, we have $\rank(\Xpibn) \le \min\qty{d, (B-1)\frac{n}{B}}$ and $\rank(\Xrrbn) \le \min\qty{d, (B-1)\binom{n}{B}}$.\footnote{Note that $\Xrrbn$ contains many duplicate batches; only $\binom{n}{B}$ of them are unique, up to permutations of $B$ columns inside a batch.} We now show that if $B > 2$ these upper bounds are achieved almost surely. Thus, we identify reasonable conditions under which \Cref{assumption:full-rank-reg} holds almost surely over the draw of data, irrespective of shuffling. 
\begin{proposition}\label{prop:full-rank}
Assume \Cref{assumption:density} and $B > 2$. Then we have $\rank(\Xpibn) = \min\qty{d, (B-1)\frac{n}{B}}$ and $\rank(\Xrrbn) = \min\qty{d, (B-1)\binom{n}{B}}$ a.s.. Consequently, if $(B-1)\frac{n}{B} \ge d$, \ref{assumption:full-rank-ss} holds a.s. for $\Xpibn$, and if $(B-1)\binom{n}{B} \ge d$, \ref{assumption:full-rank-rr} holds a.s. for $\Xrrbn$. 
\end{proposition}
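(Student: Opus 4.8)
The plan is to show that the \emph{generic} rank of $\Xpibn$ (resp.\ $\Xrrbn$) equals the stated upper bound, using that the matrix entries are real-analytic functions of the data so that a single non-vanishing minor forces maximal rank off a measure-zero set. The upper bounds $\rank(\Xpibn)\le\min\qty{d,(B-1)\tfrac nB}$ and $\rank(\Xrrbn)\le\min\qty{d,(B-1)\binom nB}$ are already recorded above (each batch is centered, so its normalized columns omit the all-ones direction and contribute rank at most $B-1$). Write $r^\star \triangleq \min\qty{d,(B-1)\tfrac nB}$; the remaining task is to prove the bound is attained a.s.

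First I would isolate the domain of definition. Let $U\subseteq\RR^{d\times n}$ be the set of $\mX$ for which every coordinate has strictly positive empirical variance within every batch, so that $\bn(\cdot)$ is well-defined on $U$. For a fixed batch and coordinate, zero variance means the $B$ relevant entries coincide, i.e.\ $B-1$ linear equations, hence codimension $B-1$. Since $B>2$ gives $B-1\ge 2$, the complement $\RR^{d\times n}\setminus U$ is a finite union of linear subspaces of codimension $\ge 2$; removing such a set from $\RR^{d\times n}$ leaves a connected (indeed path-connected) open set, and has measure zero. By \Cref{assumption:density}, $\mX\in U$ almost surely. On $U$ each entry of $\Xpibn$ is real-analytic in $\mX$ (a polynomial numerator over a positive square-root denominator), so every $r^\star\times r^\star$ minor is real-analytic on $U$. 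The event $\qty{\rank(\Xpibn)<r^\star}$ is exactly the common zero set of all such minors; since $U$ is connected, any real-analytic function on $U$ is either identically zero or vanishes only on a measure-zero set. It therefore suffices to exhibit one $\mX^\star\in U$ with $\rank(\bnpi(\mX^\star))=r^\star$.

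The crux is this witness construction, and it is where $B>2$ enters a second time. Distinct batches occupy disjoint columns of $\mX$, so their raw data can be chosen independently; for $B\ge 3$ a generic centered-and-normalized batch spans an \emph{arbitrary} $(B-1)$-dimensional subspace of $\RR^d$ (centering drops one dimension, and the positive per-coordinate rescaling preserves rank and direction within a continuum). I would choose the $m=n/B$ batches so that their column spans $V_1,\dots,V_m$ are in general position while keeping every coordinate's within-batch variance positive so that $\mX^\star\in U$: when $m(B-1)\le d$ take the $V_j$ in direct sum, and when $m(B-1)>d$ take them to jointly span $\RR^d$. Either way $\dim\sum_j V_j=r^\star$. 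The delicate point—the main obstacle—is realizing the target spans without making any coordinate constant on a batch; this fails for $B=2$, where normalization collapses each batch span to one of finitely many $\qty{\pm1}$-directions, so general position (and the conclusion) genuinely breaks down there.

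Finally I would handle the RR case and the consequences. Because internal reorderings of a batch only permute columns, $\Span(\Xrrbn)=\sum_{S}\Span(\bn(\mX_S))$ over all $B$-subsets $S$ of the columns of $\mX$, and the same analyticity-plus-connectedness argument applies on the (again connected) domain where all batch variances are positive, reducing to a witness attaining $\min\qty{d,(B-1)\binom nB}$. Since $\Xrrbn$ contains the columns of $\Xpibn$ for any fixed $\pi$, we have $\rank(\Xrrbn)\ge\rank(\Xpibn)$, so in the common regime where the cap equals $d$ the witness is immediate; in the regime $(B-1)\binom nB\le d$ one again places the subspaces in general position. For the stated implications: if $(B-1)\tfrac nB\ge d$ then $r^\star=d$, so $\rank(\Xpibn)=d$ a.s., whence $\Xpibn\Xpibn^\top\in\RR^{d\times d}$ is invertible and $\sigma_{\min}(\Xpibn\Xpibn^\top)>0$, giving \ref{assumption:full-rank-ss}; the argument for \ref{assumption:full-rank-rr} is identical.
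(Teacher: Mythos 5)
Your overall skeleton --- prove the upper bound, observe that the entries of $\Xpibn$ are real-analytic on the domain $U$ where all batch variances are positive, note that for $B>2$ the complement of $U$ is a finite union of codimension-$(B-1)\ge 2$ subspaces so $U$ is connected, and then reduce the a.s.\ statement to exhibiting a single witness where some maximal minor is nonzero --- is sound, and the connectedness observation is in fact a tidy simplification of the paper's setup (the paper works with functions analytic on ``finitely many connected components'' and invokes a zero-set result of Mityagin). But your proof stands or falls on the witness, and the claim you lean on there is false: a centered-and-normalized batch does \emph{not} span an arbitrary $(B-1)$-dimensional subspace. Since $\bn$ enforces unit variance in every coordinate, every row of $\bn(\mB)$ has Euclidean norm exactly $\sqrt{B}$, so $\Span(\bn(\mB))$ must admit a basis matrix all of whose rows have equal norm. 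Take $d=3$, $B=3$, and $V=\Span\qty{\ve_1,\ \ve_2+\alpha\ve_3}$ with $0<\abs{\alpha}<1$: any basis matrix of $V$ has third row equal to $\alpha$ times its second row, so equal row norms force rows two and three to vanish, and $V$ is realizable by no normalized batch. Thus the realizable spans form a strict (for large $d$, measure-zero) subfamily of the Grassmannian, and ``take the $V_j$ in direct sum / jointly spanning'' requires an actual construction, not general position. For SS this gap is repairable --- e.g., batches whose normalized matrices are built from linearly independent $\qty{\pm 1}$ sign vectors, suitably centered and rescaled, are exact fixed points of $\bn$ and realize independent spans --- but you did not supply any such construction, and you yourself flag this step as the unresolved ``delicate point.''

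The gap is more serious for RR, where your premise ``distinct batches occupy disjoint columns of $\mX$, so their raw data can be chosen independently'' is simply wrong: the $\binom{n}{B}$ batches of $\Xrrbn$ share datapoints, so in the regime $(B-1)\binom{n}{B}\le d$ (where the claimed rank is $(B-1)\binom{n}{B}$) you cannot design the batch spans one at a time, and the entire difficulty is showing that these heavily entangled spans are jointly independent. This is exactly what the paper's core argument handles without any witness: it proves that the $(B-1)\binom{n}{B}$ normalized coordinate functions (one dropped per batch) are linearly independent \emph{as functions of the data}, by exploiting the jump discontinuity of $\bn$ at batches with all-equal entries --- along $(x_1,x_2,x_3,\ldots)=(\epsilon,\epsilon^2,-\epsilon-\epsilon^2,0,\ldots)$ the first normalized coordinate has one-sided limits $\pm\sqrt{B/2}$ as $\epsilon\to 0^{\pm}$, contradicting any nontrivial dependence --- and then shows a maximal minor of the function matrix is not identically zero by induction on $d$ via cofactor expansion along the first row. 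Note that this is also where $B>2$ does its real work in the paper (for $B=2$ the outputs collapse to $\pm 1$ and the limit argument dies); in your write-up $B>2$ only buys connectedness of $U$, while the nonvanishing step --- the actual content of the proposition --- is left open.
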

Although we could have just assumed \Cref{assumption:full-rank-reg}, the nonlinearity introduced by BN makes it nontrivial to identify mild sufficient conditions on the original features to control the rank of SS and RR datasets. Furthermore, controlling the rank of these datasets is crucial to our analysis of GD risk divergence in the classification setting (see \cref{sec:classification}).

Next, we present our main SS convergence result: SS converges for appropriate stepsizes. We defer the proof and explicit convergence rates to \cref{app:ss-proof}.
\begin{theorem}[Convergence of SS]\label{thm:ss-convergence}
Let $f(\cdot; \Theta) = \mW\mGamma \bn(\cdot)$ be a linear+BN network initialized at $\Theta_0^1 = (\mW_0^1,\mGamma_0^1) = (\vzero,\mI)$.  
We train $f$ using SS with permutation $\pi$ and suppose that \ref{assumption:full-rank-ss} holds for this $\pi$. SS uses the following decreasing stepsize, which is well-defined:
\begin{align*}
    \eta_k = \frac{1}{k^\beta} \cdot  \min\Bigl\{&O\Bigl(\frac{1}{\sigma_{\min}(\Xpibn\Xpibn^\top)}\Bigr), \frac{\sqrt{2\beta-1}\poly(\sigma_{\min}(\Xpibn^\top))}{\poly(n, d, \norm{\mY}_F)}\Bigr\},
\end{align*}
where $1/2 < \beta < 1$.
Then the risk $\ellpi(\Theta_0^k)$ converges to the global minimum $\ellpi^*$ as $k \to \infty$.
\end{theorem}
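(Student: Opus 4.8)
The plan is to exploit the fact that under SS the permutation $\pi$ is frozen across all epochs, so the batch-normalized features $\Xpibn = \bnpi(\mX)$ form a \emph{fixed} matrix. Minimizing $\ellpi(\mW,\mGamma) = \norm{\mW\mGamma\Xpibn - \mY_{\pi}}_F^2$ is then a two-layer linear least-squares problem with a diagonal second factor, and one SS+BN epoch is exactly incremental (cyclic) gradient descent on the $m$ per-batch terms of this fixed objective. My first step is to collapse one epoch into a single perturbed full-batch step: since each inner iterate satisfies $\Theta_{j-1}^k - \Theta_0^k = O(\eta_k)$, a Lipschitz expansion of the per-batch gradients gives
\[
\Theta_0^{k+1} = \Theta_0^k - \eta_k \grad_\Theta\ellpi(\Theta_0^k) + \vr_k, \qquad \norm{\vr_k} = O(\eta_k^2),
\]
where the $O(\eta_k^2)$ hides the (to-be-shown uniformly bounded) local smoothness constants. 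Because $1/2 < \beta < 1$ yields $\sum_k \eta_k^2 < \infty$ and $\sum_k \eta_k = \infty$, the SS trajectory becomes a summably-perturbed gradient descent on $\ellpi$.

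The core of the argument is to show this non-convex two-layer descent reaches the global optimum. I would track the effective product $\mV_k \triangleq \mW_0^k\mGamma_0^k$ against the convex surrogate $g(\mV) \triangleq \norm{\mV\Xpibn - \mY_{\pi}}_F^2$, noting $\ellpi(\Theta) = g(\mW\mGamma)$. Writing $\mG = (\mV\Xpibn - \mY_{\pi})\Xpibn^\top = \tfrac12\grad g(\mV)$, a direct computation shows the gradient-flow limit of the product obeys a \emph{column-wise preconditioned} flow: the $k$-th column of $\dot\mV$ equals $-2(\gamma_k^2\mI + \vw_k\vw_k^\top)\vg_k$, where $\vw_k,\vg_k$ are the $k$-th columns of $\mW,\mG$ and $\gamma_k$ is the $k$-th diagonal of $\mGamma$. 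Crucially, the same computation yields the conservation law $\tfrac{d}{dt}(\gamma_k^2 - \norm{\vw_k}^2) = 0$; with the initialization $(\mW,\mGamma)=(\vzero,\mI)$ this pins $\gamma_k^2 = \norm{\vw_k}^2 + 1 \ge 1$, so each column preconditioner satisfies $\mI \preceq \gamma_k^2\mI + \vw_k\vw_k^\top \preceq (1 + 2\norm{\vw_k}^2)\mI$, i.e.\ it is uniformly bounded below and controlled above by the parameter norm.

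With the preconditioner sandwiched, the rest is quantitative. Under \ref{assumption:full-rank-ss} we have $\Xpibn\Xpibn^\top \succ 0$, so $g$ is $2\sigma_{\min}(\Xpibn\Xpibn^\top)$-strongly convex in $\mV$ with a unique minimizer $\mV^*$; since $\mGamma=\mI$ is admissible the product $\mW\mGamma$ ranges over all of $\RR^{p\times d}$, whence $\ellpi^* = g(\mV^*)$. The lower bound $\gamma_k^2\ge1$ forces $g(\mV_k)$ to decrease along the (perturbed, discretized) preconditioned descent, while strong convexity (via the resulting PL inequality) converts this into contraction toward $g(\mV^*)$; the summable errors $\vr_k$ and the discrete slippage of the conservation law are absorbed because $\sum_k\eta_k^2<\infty$, and $\sum_k\eta_k=\infty$ guarantees the contraction acts infinitely often. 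Together these give $\ellpi(\Theta_0^k) = g(\mV_k) \to \ellpi^*$.

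I expect the main obstacle to lie in the discrete-time versions of the previous two paragraphs: simultaneously maintaining the lower bound $\gamma_k^2 \gtrsim 1$ (non-degeneracy of the preconditioner) \emph{and} an a priori bound on $\norm{\mW_0^k}$, $\norm{\mGamma_0^k}$ (so the preconditioner is bounded above and the per-epoch smoothness constants are uniform). The conservation law is exact only for gradient flow, so in the incremental dynamics it must be replaced by an inductive estimate of the form $\gamma_k^2 = \norm{\vw_k}^2 + 1 + O\bigl(\sum_{\ell\le k}\eta_\ell^2\bigr)$, and boundedness must be bootstrapped jointly with the convergence of $g(\mV_k)$. This is exactly what dictates the intricate stepsize schedule: the factor $\poly(\sigma_{\min}(\Xpibn^\top))/\poly(n,d,\norm{\mY}_F)$ is the threshold keeping the joint induction (boundedness, non-degeneracy, and descent) from breaking, while the $\sqrt{2\beta-1}$ factor calibrates the summable error budget.
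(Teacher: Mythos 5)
Your proposal is correct and follows essentially the same route as the paper: collapsing each epoch into a perturbed full-batch step with summable $O(\eta_k^2)$ noise, analyzing the product $\mV=\mW\mGamma$ via the column-wise preconditioner $\gamma_k^2\mI+\vw_k\vw_k^\top\succeq\mI$ (the paper's correlation lemma), using the conservation law $\gamma_k^2-\norm{\vw_k}^2=1$ (the paper's invariance matrix $\mD$, kept approximately valid by the $\sum_k\eta_k^2<\infty$ budget), and closing a joint induction on boundedness, non-degeneracy, and a PL-based per-epoch contraction. Even your diagnosis of where the difficulty lies and why the stepsize carries the $\sqrt{2\beta-1}$ factor matches the paper's inductive hypotheses $L$, $R$, $D$ and its stepsize calibration.
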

\Cref{thm:ss-convergence} shows that using both SS and BN induces the network to converge to the global optimum of the SS distorted risk instead of the usual GD risk. The proof proceeds by aggregating the epoch-wise gradient updates on the collapsed matrix $\mW\mGamma$. The main difficulty lies in carefully bounding the accumulation of various types of noise. 

We now turn to RR convergence. For the sake of analysis, we make the following compact iterates assumption which is common in the RR literature  \citep{haochen2019random,nagaraj2019sgd,ahn2020sgd,rajput2020closing}.
\begin{assumption}\label{assumption:compact}
For all $(i,k)$, the iterates $\Theta_i^k = (\mW_i^k, \mGamma_i^k)$ satisfy  $\norm{\mW_i^k \mGamma_i^k}_2 \le \Brr$ for some absolute constant $\Brr$.
\end{assumption}

Finally, we can show that RR converges in expectation to the global optimum of the RR distorted risk $\ellrr$. We defer the proof and explicit convergence rates to \Cref{app:rr-proof}.
\begin{theorem}[Convergence of RR]\label{thm:rr-convergence}
Assume \ref{assumption:full-rank-rr} and \Cref{assumption:compact}.  Using the same $f$ and initialization as in \Cref{thm:ss-convergence}. 
we train training $f$ using RR with the following decreasing stepsize, which is well-defined:
\begin{align*}
    \eta_k = \frac{1}{k^\beta} \cdot \min\Bigl\{&O\Bigl(\frac{1}{\sigma_{\min}(\Xrrbn \Xrrbn^\top)}\Bigr), \frac{\sqrt{2\beta-1}}{\poly(n, d, \norm{\mY}_F, \Brr)}\Bigr\},
\end{align*}
where $1/2 < \beta < 1$. Then the risk $\ellrr(\Theta_0^k)$ converges in expectation to the global minimum $\ellrr^*$ as $k \to \infty$.
\end{theorem}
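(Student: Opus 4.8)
The plan is to collapse the two-layer parametrization onto the product $\mM = \mW\mGamma$, recognize $\ellrr$ as a strongly convex quadratic in $\mM$, and then run a decreasing-stepsize stochastic-approximation argument in which each \emph{epoch} of RR plays the role of a single noisy step targeting $\ellrr = \frac{1}{n!}\sum_\pi \ellpi$. This mirrors the SS analysis of \Cref{thm:ss-convergence}, with the one genuinely new ingredient being the across-epoch randomness of $\pi_k$, which I will handle by an expectation/descent argument rather than the deterministic noise bookkeeping used for SS.

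First I would reduce the objective. For squared loss, $\ellrr(\Theta) = \tfrac{1}{n!}\norm{\mM\Xrrbn - \Yrr}_F^2$ depends on $\Theta$ only through $\mM = \mW\mGamma$, and every $p\times d$ matrix is realizable as $\mW\mGamma$ (e.g.\ with $\mGamma=\mI$), so the global minimum over $\Theta$ coincides with the least-squares minimum over $\mM$. By \ref{assumption:full-rank-rr} the matrix $\Xrrbn\Xrrbn^\top$ is invertible, hence $\ellrr$ is strongly convex in $\mM$ with unique minimizer $\mM^\star = \Yrr\Xrrbn^\top(\Xrrbn\Xrrbn^\top)^{-1}$ and optimal value $\ellrr^\star$; the modulus of strong convexity is controlled by $\sigma_{\min}(\Xrrbn\Xrrbn^\top)$, which is exactly why this quantity appears in the first term of the stepsize.

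Next I would analyze one epoch at fixed $\pi_k$, reusing the per-epoch expansion from \cref{app:ss-proof}. Writing $\mM_k = \mW_0^k\mGamma_0^k$ and expanding the $m$ mini-batch updates while collecting powers of $\eta_k$, I expect a collapsed update of the schematic form
\[
\mM_{k+1} = \mM_k - \eta_k\,\Phi_k\!\left(\grad_\mM \mathcal{L}_{\pi_k}(\mM_k)\right) + \eta_k^2\,\mE_k,
\]
where $\Phi_k$ is the preconditioner produced by differentiating through $\mM=\mW\mGamma$ (roughly, $\grad_\mM\mathcal L$ right-multiplied by $\mGamma_k^2$ plus a term from the movement of $\mGamma$) and $\mE_k$ collects the within-epoch iterate drift and all higher-order cross terms. \Cref{assumption:compact} bounds $\norm{\mM_k}_2\le \Brr$ along the whole trajectory, making residuals, gradients, and $\norm{\mE_k}_F$ uniformly bounded so that the drift is genuinely $O(\eta_k^2)$. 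I would then take the conditional expectation over the freshly sampled $\pi_k$: since $\E_{\pi_k}[\grad_\mM \mathcal{L}_{\pi_k}(\mM_k)] = \grad_\mM\ellrr(\mM_k)$, the expected epoch update is an approximate \emph{preconditioned} gradient step on $\ellrr$. Combined with strong convexity this yields a one-epoch descent inequality $\E[\ellrr(\mM_{k+1}) - \ellrr^\star] \le (1 - c\,\eta_k)\,\E[\ellrr(\mM_k) - \ellrr^\star] + C\eta_k^2$, and the schedule $\eta_k = k^{-\beta}$ with $1/2<\beta<1$ satisfies $\sum_k\eta_k=\infty$ and $\sum_k\eta_k^2<\infty$, so the standard Robbins--Monro recursion forces $\E[\ellrr(\mM_k)]\to\ellrr^\star$.

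The main obstacle is controlling the preconditioner $\Phi_k$. The collapsed dynamics are not a clean gradient step on $\mM$, so positive expected descent requires the effective direction to correlate with $\grad_\mM\ellrr$ with a \emph{uniformly} positive constant; this reduces to keeping the diagonal $\mGamma_k$ well-conditioned, i.e.\ its entries bounded away from both $0$ and $\infty$ along the trajectory. The upper bound is immediate from \Cref{assumption:compact}, but the lower bound on the smallest diagonal entry of $\mGamma_k$ is not, and must be tracked directly from the dynamics using the initialization $\mGamma_0^1=\mI$ together with the smallness of the $\eta_k$. Establishing this lower bound, and cleanly separating the first-order epoch signal from the genuine $O(\eta_k^2)$ drift, is the delicate heart of the proof.
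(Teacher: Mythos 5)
Your route is the paper's route: collapse to $\mM=\mW\mGamma$, use strong convexity of $\ellrr$ in $\mM$ (Fact~\ref{lemma:strong-convexity}), decompose each epoch into a preconditioned full-batch signal plus $O(\eta_k^2)$ noise, use unbiasedness $\E_{\pi_k}[\grad_\mM \mathcal{L}_{\pi_k}(\mM_k)]=\grad_\mM\ellrr(\mM_k)$ under the filtration $\mathcal{F}_k$, invoke \Cref{assumption:compact} to get the \emph{almost-sure} (not merely in-expectation) bounds on losses, weights, and noise that the conditional-expectation argument requires, and finish with a Robbins--Monro recursion. All of that matches \cref{app:rr-proof}, and your identification of the crux --- keeping the preconditioner uniformly correlated with $\grad_\mM\ellrr$, i.e.\ keeping $\sigma_{\min}(\mGamma_k)$ bounded away from zero --- is exactly where the paper's Corollaries~\ref{cor:inductive-correlation} and~\ref{cor:inductive-pl-bound} live.

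However, you leave that crux unresolved, and the resolution you gesture at --- ``tracked directly from the dynamics using the initialization $\mGamma_0^1=\mI$ together with the smallness of the $\eta_k$'' --- would fail as stated. Naive tracking gives a per-step drift of $\mGamma$ of order $\eta_k$, and since $\sum_k \eta_k=\infty$ for $\beta<1$, the accumulated first-order drift diverges: smallness of individual steps cannot by itself keep $\mGamma$ away from zero over infinitely many epochs. The missing idea is the approximate conservation law of the two-layer parametrization (Fact~\ref{fact:invariances}): the exact gradient identity
\begin{equation*}
\diag(\mW^\top \grad_\mW \ellpi) = (\grad_{\mGamma}\ellpi)\,\mGamma
\end{equation*}
causes the $O(\eta_k)$ terms in the update of $\mD = \mI + \diag(\mW^\top\mW - \mGamma^2)$ to cancel \emph{exactly}, so $\mD$ drifts only by $O(\eta_k^2)$ per step (\cref{lemma:rr-invariance-bound}). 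Because $\beta>1/2$ makes $\sum_k \eta_k^2<\infty$, the drift is summable and the constant in the stepsize (this is precisely where the $\sqrt{2\beta-1}$ factor comes from, via \cref{cor:inductive-rr-approximate-invariances}) guarantees $\norm{\mD_j^k}_2\le \tfrac12$ for all iterates; Courant--Fischer then yields $\sigma_{\min}(\mGamma_j^k)^2\ge \tfrac12$, which is the uniform correlation constant your descent inequality needs. Note also that \Cref{assumption:compact} bounds $\norm{\mW_k\mGamma_k}_2$, not $\norm{\mW_k}_2$ and $\norm{\mGamma_k}_2$ separately; the paper extracts the individual bounds (\cref{cor:uniform-norm-bound}) from the same invariance $\mD$ via \cref{lemma:weight-bound}, so without the conservation argument your ``uniformly bounded gradients and noise'' step is also incomplete. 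Everything else in your sketch goes through as you describe.
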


The proof of \cref{thm:rr-convergence} is similar to the SS case; the main subtlety is using \Cref{assumption:compact} to handle expectations.

The main takeaway of \cref{thm:ss-convergence,thm:rr-convergence} is that SS+BN and RR+BN converge to the optima of the SS and RR distorted risks, respectively. These distorted optima may differ from optimum of the GD risk. Moreover, the required stepsize for convergence is usually smaller for SS (where the requirement depends on $\pi$) compared to RR.  

\subsection{RR averages out SS distortion}\label{sec:opt-analysis-reg}
Having shown that the two different algorithms drive the network parameters to global optima of two different distorted risks, it behooves us to study these optima. 
By collapsing the final layers $\mW$ and $\mGamma$ into a single matrix $\mM = \mW\mGamma \in \RR^{p \times d}$, we can study the global optima $\Mpi^*$ and $\Mrr^*$ on the normalized datasets $\Xpibn$ and $\Xrrbn$. These global optima naturally correspond to the global optima of $\ellpi$ and $\ellrr$.
In this section we illustrate how RR can average out SS distortion in the one-dimensional case.

We first relate the SS optima $\Mpi^*$ to the RR optimum $\Mrr^*$. A simple gradient calculation reveals $\Mrr^* = \sum_{\pi} \Ypi \Xpibn^\top (\sum_{\pi} \Xpibn \Xpibn^\top)^{-1}.$ Since $\bn$ enforces the unit variance constraint, $\Xpibn\Xpibn^\top = n$ if $d=1$. Simple algebraic manipulation then implies the following proposition.

\begin{proposition}\label{prop:ss-rr-relationship}
If $d=1$, $\Mrr^* = \frac{1}{n!} \sum_{\pi \in \sS_n} \Mpi^*.$
\end{proposition}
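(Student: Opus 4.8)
The plan is to reduce both optima to closed-form least-squares solutions and then exploit the fact that, in one dimension, batch normalization forces the Gram term $\Xpibn\Xpibn^\top$ to be the \emph{same} scalar for every permutation, so the inverse in the RR formula distributes trivially across the permutation sum.

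First I would write down the SS optimum explicitly. Since $\ellpi(\Theta) = \norm{\mM\Xpibn - \Ypi}_F^2$ with $\mM = \mW\mGamma$, the unique minimizer under \ref{assumption:full-rank-ss} is the least-squares solution $\Mpi^* = \Ypi\Xpibn^\top(\Xpibn\Xpibn^\top)^{-1}$. The key observation is that $\Xpibn\Xpibn^\top = n$ when $d=1$: here $\Xpibn$ is a $1 \times n$ row vector, and $\bn$ normalizes each of the $m$ mini-batches to zero empirical mean and unit empirical variance, so within a batch of size $B$ the squared normalized entries sum to $B$; summing over the $m = n/B$ batches gives $mB = n$. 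Hence $\Mpi^* = \tfrac1n \Ypi\Xpibn^\top$ for every $\pi$.

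Next I would substitute the same scalar identity into the RR formula $\Mrr^* = \big(\sum_\pi \Ypi\Xpibn^\top\big)\big(\sum_\pi \Xpibn\Xpibn^\top\big)^{-1}$ stated just above the proposition (which is itself the least-squares solution on the stacked data $\Xrrbn$, using $\Xrrbn\Xrrbn^\top = \sum_\pi \Xpibn\Xpibn^\top$ and $\Yrr\Xrrbn^\top = \sum_\pi \Ypi\Xpibn^\top$). Because each summand in the second factor equals the scalar $n$, the denominator collapses to $\sum_{\pi\in\sS_n} n = n\cdot n!$. Factoring this out yields $\Mrr^* = \tfrac{1}{n\cdot n!}\sum_\pi \Ypi\Xpibn^\top = \tfrac{1}{n!}\sum_\pi \tfrac1n\Ypi\Xpibn^\top = \tfrac1{n!}\sum_{\pi\in\sS_n}\Mpi^*$, which is the claim.

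There is no serious obstacle here; the only point requiring care is the scalar identity $\Xpibn\Xpibn^\top = n$, which follows directly from the definition of $\bn$ and the partition of the $n$ columns into $m$ batches of size $B$. The conceptual takeaway worth emphasizing is precisely \emph{why} this collapse is special to $d=1$: for $d>1$ the matrices $\Xpibn\Xpibn^\top$ generally differ across permutations, so $\big(\sum_\pi \Xpibn\Xpibn^\top\big)^{-1}$ does not factor through the individual inverses and $\Mrr^*$ need not equal the average of the $\Mpi^*$. The one-dimensional unit-variance constraint is exactly what lets RR average out the SS distortion.
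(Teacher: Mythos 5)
Your proof is correct and takes essentially the same route as the paper: the closed-form least-squares expressions $\Mpi^* = \Ypi\Xpibn^\top(\Xpibn\Xpibn^\top)^{-1}$ and $\Mrr^* = \bigl(\sum_\pi \Ypi\Xpibn^\top\bigr)\bigl(\sum_\pi \Xpibn\Xpibn^\top\bigr)^{-1}$, the key scalar identity $\Xpibn\Xpibn^\top = n$ forced by the unit-variance constraint when $d=1$, and the resulting collapse of the inverse across the permutation sum. You simply make explicit the ``simple algebraic manipulation'' the paper leaves implicit (the per-batch sum-of-squares equal to $B$ over $m=n/B$ batches), which is a fine elaboration rather than a different argument.
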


\Cref{prop:ss-rr-relationship} identifies an explicit averaging relationship between RR and SS in the one-dimensional case.
This motivates the following simple construction where RR's averaging behavior removes SS distortion. 

\paragraph{Dataset: SS distorted with constant probability, RR averages out distortion.} We visualize our toy dataset with $16n$ datapoints where $d=p=1$, $B=2$, and $n=3$ in \Cref{fig:toy-dataset-reg}, along with the possible SS optima $\Mpi^*$. The dataset is comprised of four clusters of $4n$ points in the square $[-1, 1]^2$. By vertical symmetry of the clusters and \Cref{prop:ss-rr-relationship}, the \emph{RR and GD optima coincide at zero}. However, SS is distorted away from GD. An anticoncentration calculation shows $\Mpi^* \neq 0$ with probability $1 - O(\frac{1}{\sqrt{n}})$ and $\abs{\Mpi^*} = \Omega(\frac{1}{\sqrt{n}})$ with constant probability. The key insight is linking SS distortion to breaking symmetry in the SS dataset (see \Cref{prop:toy-regression-dataset} for details).

\subsection{Regression experiments}\label{sec:reg-experiments}
For our regression experiments, we used synthetic data with $n = 100$, $B=10$, and $d=10$. For $i \in [n]$, we sampled $\vx_i \sim N(\vzero, \mI_d)$ and generated $y_i = \mM_{\mathrm{true}}\vx_i + \epsilon_i \in \RR$ with $\mM_{\mathrm{true}} \sim U[-1, 1]^d$ and $\epsilon_i \sim N(0, 1)$. We trained the network $\mW\mGamma\bn(\mX)$ using SS and RR with an inverse learning rate schedule. We observed convergence to near optimal values on the SS and RR risks (\Cref{fig:regression-convergence}), which supports the convergence results (\Cref{thm:ss-convergence,thm:rr-convergence}).

We also extended the toy dataset to the synthetic setup described above. As \Cref{fig:normalize_distances} makes apparent, SS is consistently distorted away from the GD optimum, whereas RR averages out this distortion effect. We generated 500 datasets and evaluated the distortion for each one with the normalized distance $d(\mM) \triangleq \frac{\norm{\mM - \Mgd^*}}{\norm{\Mgd*}}$. For SS, we computed the mean $d(\Mpi^*)$ for $1000$ random draws of $\pi$. For RR, we approximated $d(\Mrr^*)$ as follows. We sampled $1000$ fresh random permutations to approximate the RR dataset $\Xrrbn$, which we then used to approximate $\Mrr^*$ (since it is intractable to average over all $n!$ permutations). 

\begin{figure}[!ht]
     \centering
     \begin{subfigure}[b]{0.48\textwidth}
         \centering
         \includegraphics[width=\textwidth]{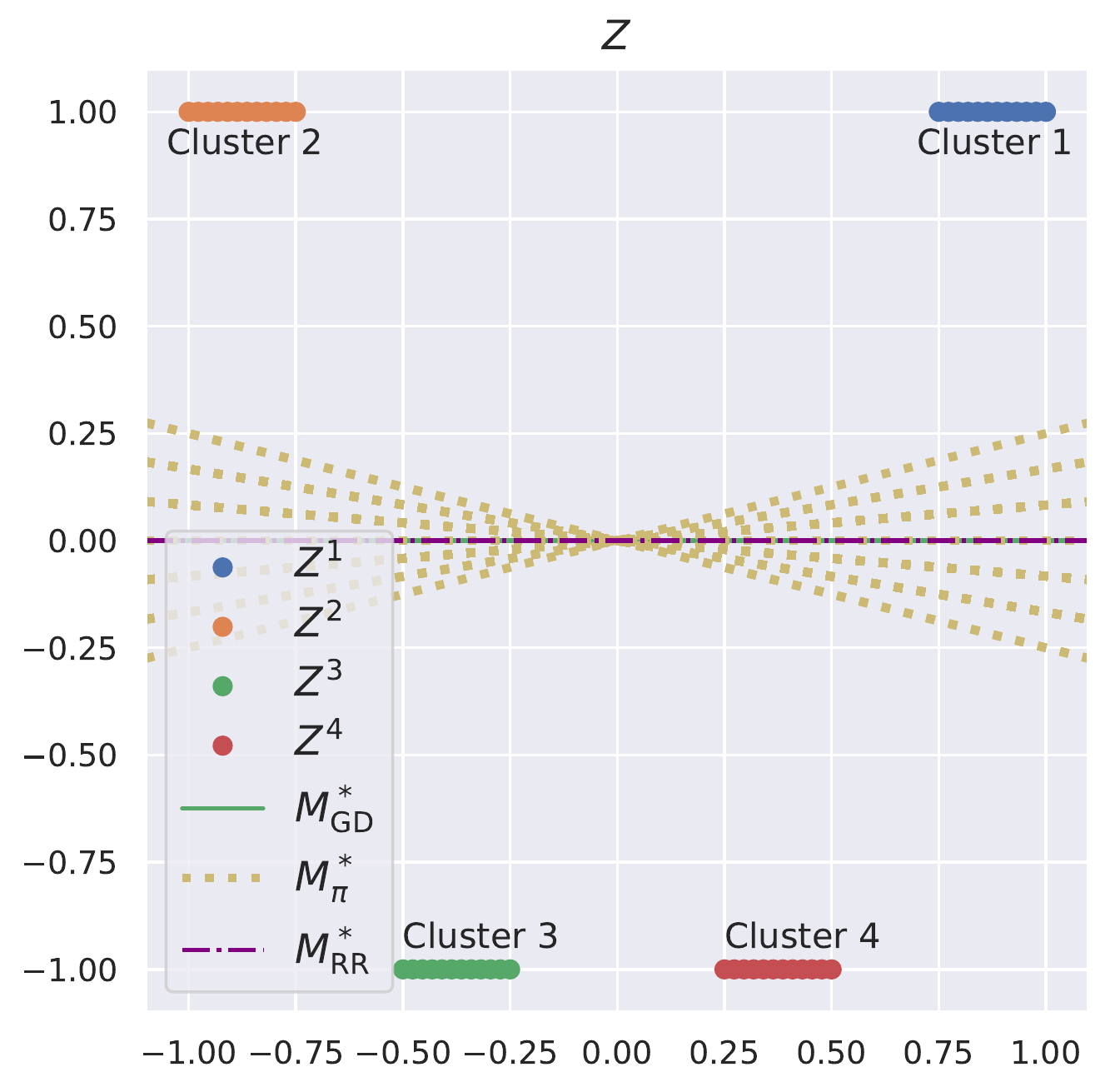}
         \caption{Dataset with 48 datapoints demonstrating distortion of SS optima $\Mpi^*$.} 
         \label{fig:toy-dataset-reg}
     \end{subfigure}
     \hfill
     \begin{subfigure}[b]{0.48\textwidth}
         \centering
         \includegraphics[width=\textwidth]{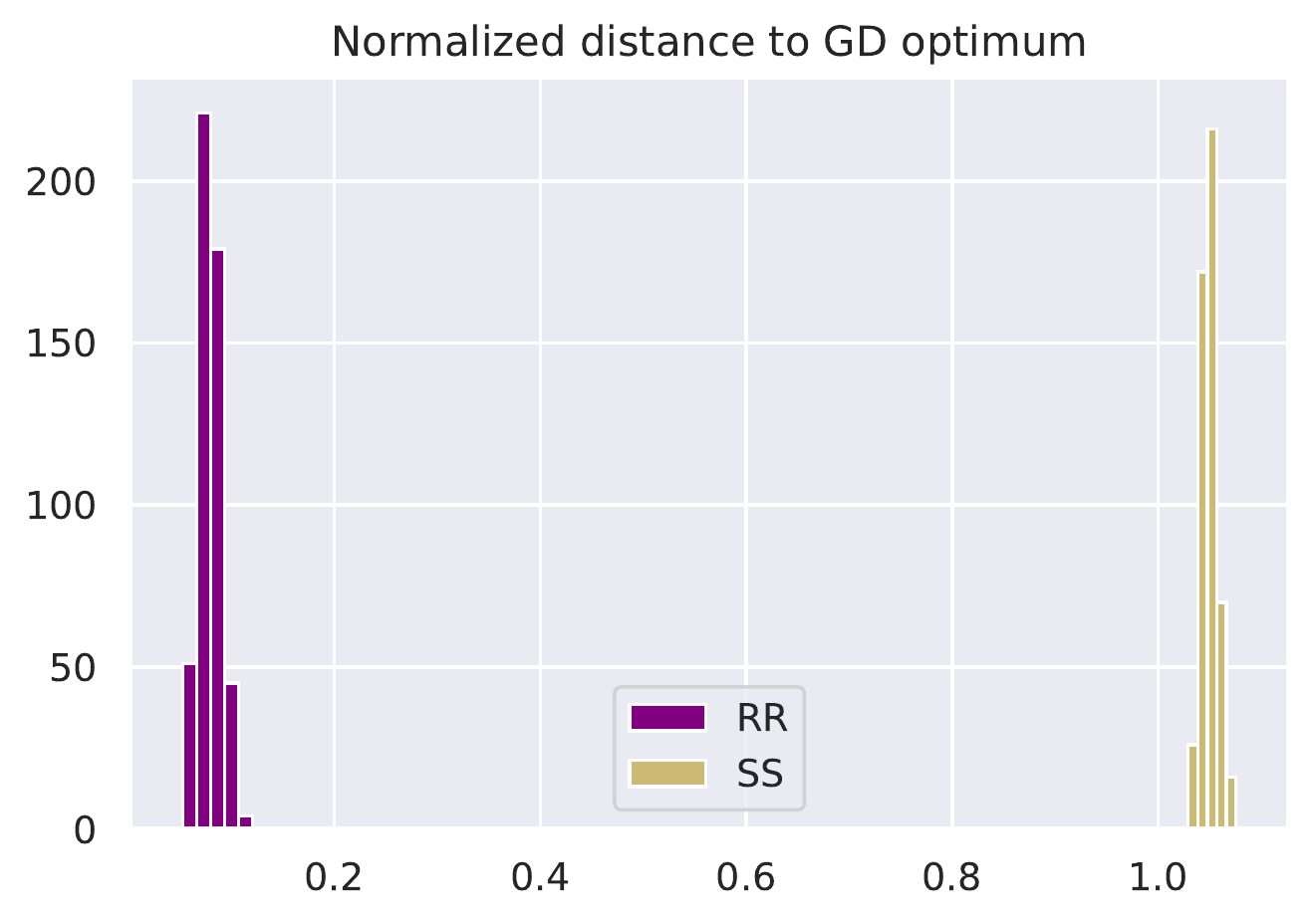}
         \caption{Normalized distance to GD optimum $d(\mM) = \norm{\mM-\Mgd^*}/\norm{\Mgd^*}$.}
         \label{fig:normalize_distances}
     \end{subfigure}
     \hfill
     
     \caption{Top: toy dataset for regression, showing how RR can average out the distortion of SS. Bottom: histogram of distortion of SS and RR optima on synthetic data for $d=10$. The SS optima significantly deviate from the GD optima, whereas the RR optima are relatively close. This supports the intuition that RR can nontrivially smooth out the bias of SS in higher dimensions.}\label{fig:reg-opt-distortion}
\end{figure}
\section{Main classification results: divergence regimes based on distorted risks}\label{sec:classification}
We now turn to analyzing linear+BN binary classifiers $f(\mX; \Theta) = \sgn(\mW\mGamma \bn(\mX))$ trained with the logistic risk. To characterize divergence, we identify salient properties of the distorted risks first introduced in \cref{sec:distorted-risks}. These properties identify regimes where the SS+BN classifier can diverge on the GD risk (\cref{thm:ss-divergence-informal}) yet the RR+BN classifier does not diverge (\cref{thm:rr-divergence-informal}). This motivates the construction of a toy dataset (\cref{example:clf-divergence}) where the optimal SS classifier diverges on the GD risk with constant probability. In \cref{sec:clf-experiments} we extend our results to more realistic networks and datasets, demonstrating that these phenomena are not an artifact of our theoretical setup. Our theoretical results offer some justification for the empirical phenomenon of divergence when SS SGD is combined with BN for classification. 

We briefly remark on why we analyze divergence conditions instead of directional convergence. The main difficulty lies in analyzing SGD instead of GD. One could hope to extend the techniques for directional convergence for homogeneous networks in \citet{lyu2019gradient} to the stochastic setting, but this is outside the scope of our paper. Furthermore, the analyses for deep linear networks such as \citet{ji2020directional} rely on invariants which do not hold for us due to the diagonal $\mGamma$ and the $\bn$ layers for deeper networks. 

Throughout, we use $\vv = (\mW\mGamma)^\top \in \RR^d$ to refer to the vector that determines the decision boundary of our classifier $f$. We remind the reader of the datasets which induce the different distorted risks (\cref{sec:distorted-risks}). Given dataset $\mZ = (\mX, \mY)$, the GD dataset is $\Zgd \triangleq (\Xgdbn, \Ygd) = (\bn(\mX), \mY)$. Similarly define the SS dataset $\Zpi \triangleq (\Xpibn, \Ypi) = (\bnpi(\mX), \pi \circ \mY)$ and the RR dataset $\Zrr \triangleq (\Xrrbn, \Yrr)$ by concatenating $\Zpi$ over all permutations $\pi$. If the labels are clear from context, we occasionally abuse terminology and refer to the features as the dataset. 

\subsection{Analysis of problem structure for classification}\label{subsec:classification-results}
To analyze the optima of the distorted risks, we introduce relevant concepts from \citet{ji2019implicit}. Given a dataset $\mZ = (\mX, \mY) = \qty{(\vx_i, y_i)}_{i=1}^n$, with labels $y_i \in \qty{\pm 1}$, greedily define a \emph{maximal linearly separable subset} $\Sls \triangleq (\Xls, \Yls)$ as follows. Include $(\vx_i, y_i)$ in $\Sls$ if there exists a classifier $\vu_i \in \RR^d$ with $y_i\vu_i^\top \vx_i > 0$ and $y_j\vu_i^\top \vx_j \ge 0$ for all $j$. For reasons that will be clear shortly, denote the complement of $\Sls$ in $\mZ$ by $\Ssc \triangleq (\Xsc, \Ysc)$. 

In particular, there exists a classifier $\vu$ such that: (1) $\Sls$ is perfectly separated by $\vu$ (2) the datapoints $\Xsc$ in $\Ssc$ are orthogonal to $\vu$, so they are on the decision boundary. We can choose $\vu$ to be the max-margin classifier $\vu^{\mathrm{MM}}$ on $\Sls$. The notation $\Ssc$ is chosen because the logistic risk is strongly convex when restricted to bounded subsets of $\Span(\Xsc)$, meaning there is a unique finite minimizer $\vv^{\mathrm{SC}}$ in this subspace. \citet{ji2019implicit} show that linear classifiers trained on the logistic risk with GD are implicitly biased towards the ray $\vv^{\mathrm{SC}} + t \cdot \vu^{\mathrm{MM}}$ for $t > 0$. 

We now identify a salient property of the distorted risks.
\begin{definition}[Separability decomposition]\label{def:separability-decomp}
The \emph{separability decomposition} of dataset $\mZ$ refers to  $\mZ = \Sls \sqcup \Ssc$.
\end{definition}
If $\Sls = \mZ$,  we say $\mZ$ is linearly separable (LS). If both $\Sls$ and $\Ssc$ are nonempty, we say $\mZ$ is \emph{partially linearly separable} (PLS). Finally, if $\Ssc = \mZ$, we slightly abuse terminology and say $\mZ$ is strongly convex (SC). 
\footnote{Here, PLS refers to the ``general case'' discussed in \citet{ji2019implicit}, but we chose to use this alternative terminology because we found the term ``general'' can lead to confusion.}

Because the logistic loss does not always have finite infima, we now introduce the notion of an optimal direction.
\begin{definition}[Optimal direction]\label{def:optimal} Given dataset $\mZ = (\mX, \mY)$, we say a sequence of iterates $\vv(t)$ \emph{infimizes} $\mathcal{L}$ if $\mathcal{L}(\vv(t)^\top \mX, \mY) \to \inf_{\vw \in \RR^d} \mathcal{L}(\vw^\top \mX, \mY)$. We call $\vv \in \RR^d$ an \emph{optimal direction} if 
there exists $\vu \in \RR^d$ such that $\qty{\vu + t\vv}_{t \ge 1}$ infimizes $\mathcal{L}$.\footnote{This definition is catered towards the SC+full rank $\mX$ or PLS/LS case. However, since \cref{prop:full-rank} provides sufficient conditions for full-rank data, this subtlety is unimportant.}
\end{definition}
\Cref{def:separability-decomp} is motivated by the following results which identify how the separability decomposition affects optimal directions. Their proofs are deferred to \cref{sec:characterize-direction}.
\begin{lemma}\label{lemma:infimize} Let $\mZ = \Sls \sqcup \Ssc$. If $\vv$ is an optimal direction for $\mathcal{L}$, then $\vv^\top \vx = 0$ for all $\vx \in \Span(\Xsc)$ and $y_i\vv^\top \vx_i > 0$ for every $(\vx_i, y_i) \in \Sls$. 
\end{lemma}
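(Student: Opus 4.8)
The plan is to fix an infimizing ray of the prescribed form and decouple the risk along the separability decomposition. Write $\vw(t) = \vu + t\vv$ for the ray witnessing that $\vv$ is an optimal direction, and split
\[
\mathcal{L}(\vw(t)^\top\mX,\mY) = \underbrace{\sum_{(\vx_i,y_i)\in\Sls}\ell(\vw(t)^\top\vx_i, y_i)}_{=:A(t)} \;+\; \underbrace{\sum_{(\vx_i,y_i)\in\Ssc}\ell(\vw(t)^\top\vx_i, y_i)}_{=:C(t)},
\]
observing that every logistic summand is strictly positive, so $A(t),C(t)\ge 0$. Everything will follow by squeezing these two pieces against the infimal value once that value is identified.

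First I would compute $\inf_{\vw}\mathcal{L}(\vw^\top\mX,\mY)$. Let $\mathcal{L}^{\mathrm{SC}}$ denote the logistic risk restricted to the points of $\Ssc$, regarded as a function on $\Span(\Xsc)$; by the structure recalled from \citet{ji2019implicit} it is strongly convex with unique finite minimizer $\vv^{\mathrm{SC}}$, and I set $\mathcal{L}^* := \mathcal{L}^{\mathrm{SC}}(\vv^{\mathrm{SC}})$. For the lower bound, decompose any $\vw = \vw_\parallel + \vw_\perp$ with $\vw_\parallel \in \Span(\Xsc)$; since $\vw^\top\vx_i = \vw_\parallel^\top\vx_i$ for $\vx_i \in \Ssc$, the $\Ssc$-part of the risk equals $\mathcal{L}^{\mathrm{SC}}(\vw_\parallel) \ge \mathcal{L}^*$, while the $\Sls$-part is nonnegative, so $\mathcal{L}(\vw^\top\mX,\mY)\ge\mathcal{L}^*$. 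For the matching upper bound I would evaluate along $\vv^{\mathrm{SC}} + s\,\vu^{\mathrm{MM}}$: because $\vu^{\mathrm{MM}}$ separates $\Sls$ with positive margin, the $\Sls$-part vanishes as $s\to\infty$, while $\vu^{\mathrm{MM}}\perp\Span(\Xsc)$ pins the $\Ssc$-part at exactly $\mathcal{L}^*$. Hence $\inf_{\vw}\mathcal{L}(\vw^\top\mX,\mY) = \mathcal{L}^*$.

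With this value in hand the two conclusions drop out. Since $C(t)\ge\mathcal{L}^*$ by the same projection bound and $A(t)\ge0$, while $A(t)+C(t)\to\mathcal{L}^*$ by the optimal-direction hypothesis, we get $0\le A(t)\le (A(t)+C(t))-\mathcal{L}^*\to 0$, forcing $A(t)\to 0$ and therefore $C(t)\to\mathcal{L}^*$. From $A(t)\to0$ and nonnegativity of each summand, every term of $A(t)$ tends to $0$, i.e. $y_i\vw(t)^\top\vx_i = y_i\vu^\top\vx_i + t\,y_i\vv^\top\vx_i \to +\infty$ for each $(\vx_i,y_i)\in\Sls$; a quantity affine in $t$ diverges to $+\infty$ only if its slope is positive, so $y_i\vv^\top\vx_i>0$, which is the second claim.

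For the first claim I would exploit $C(t)\to\mathcal{L}^*<\infty$. Writing the projection $\vw_\parallel(t) = \mathrm{proj}_{\Span(\Xsc)}(\vu) + t\,\mathrm{proj}_{\Span(\Xsc)}(\vv)$, if $\vb:=\mathrm{proj}_{\Span(\Xsc)}(\vv)\ne 0$ then $\vb$ is a nonzero vector of $\Span(\Xsc)$, and the SC property forbids any such $\vb$ from weakly separating $\Ssc$; hence some $(\vx_i,y_i)\in\Ssc$ has $y_i\vb^\top\vx_i<0$, making that term's argument diverge to $-\infty$ and $C(t)\to\infty$, a contradiction. Thus $\mathrm{proj}_{\Span(\Xsc)}(\vv)=0$, i.e. $\vv^\top\vx=0$ for all $\vx\in\Span(\Xsc)$. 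The individual steps are routine; the one requiring care is the infimum computation, since it is where the two structural facts—that $\vu^{\mathrm{MM}}$ strictly separates $\Sls$ while being orthogonal to $\Span(\Xsc)$, and that $\mathcal{L}^{\mathrm{SC}}$ is coercive on $\Span(\Xsc)$—must be invoked to make the $\Sls$ and $\Ssc$ contributions genuinely decouple along the relevant rays. Once that decoupling is clean, the squeeze and the affine-divergence arguments are immediate.
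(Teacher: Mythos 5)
Your proof is correct, and it rests on the same two structural facts as the paper's own argument: the existence of a classifier (e.g.\ $\vu^{\mathrm{MM}}$) that strictly separates $\Sls$ while being orthogonal to $\Span(\Xsc)$, and the existence of a unique finite minimizer $\vv^{\mathrm{SC}}$ of the risk restricted to $\Span(\Xsc)$. The organization differs, however: the paper's proof is a few-line direct contradiction (a strictly negative margin on a point of $\Ssc$ makes the loss along the ray diverge; a non-positive margin on a point of $\Sls$ prevents the ray from reaching the infimum witnessed by the orthogonal separator), whereas you first pin down $\inf_{\vw}\mathcal{L}=\mathcal{L}^*$ and then squeeze the $\Sls$- and $\Ssc$-parts separately. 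Your route buys something concrete: the paper's argument for the orthogonality claim, as written, only exhibits a single point of $\Ssc$ with $\vv^\top\vx_i=0$, while your projection argument (if $\vb=\mathrm{proj}_{\Span(\Xsc)}(\vv)\neq \vzero$ then some $y_i\vb^\top\vx_i<0$, so $C(t)\to\infty$) genuinely yields $\vv\perp\Span(\Xsc)$. The one step you invoke without proof is that a nonzero $\vb\in\Span(\Xsc)$ cannot satisfy $y_i\vb^\top\vx_i\ge 0$ on all of $\Ssc$; this is indeed part of the \citet{ji2019implicit} structure, but it merits two lines since it is exactly where the ``SC'' property enters: if such a $\vb$ had a strict inequality at some point, then $\vb+s\vu^{\mathrm{MM}}$ for large $s$ would strictly separate that point while weakly separating all of $\mZ$, certifying it as a member of $\Sls$ and contradicting the greedy maximality of $\Sls$; and if all the inequalities were equalities, then $\vb\perp\Span(\Xsc)$, contradicting that $\vb$ is a nonzero element of $\Span(\Xsc)$.
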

Combining the above lemma and the definitions yields the following proposition, which characterizes SS and RR divergence using the separability decomposition.  
\begin{proposition}\label{prop:divergence-possibility}
Suppose \ref{assumption:full-rank-ss} holds, the iterates $\vpi(t)$ infimize $\ellpi$, and their projections onto $\Span(\Xpisc)^{\perp}$ converge in direction to some optimal direction $\vpi^*$ for $\ellpi$. Then the GD risk $\ellgd$ diverges if and only if $\Zpi$ is PLS or LS and there exists some $(\vx_i, y_i) \in \Zgd$ such that $y_i\vpi^{*\top} \vx_i < 0$. The analogous statement holds true for $\Zrr$ under \ref{assumption:full-rank-rr}. Furthermore, the ``if'' part holds true for SS and RR without \Cref{assumption:full-rank-reg}.
\end{proposition}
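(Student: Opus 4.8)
The plan is to translate divergence of $\ellgd$ into the asymptotic behavior of the per-example GD margins $m_i(t) \triangleq y_i \vpi(t)^\top \bar{\vx}_i$, where $\bar{\vx}_i$ is the $i$th column of $\Xgdbn = \bn(\mX)$. Since the logistic loss $\ell(m) = \log(1+e^{-m})$ is bounded on any ray $m \ge c$ and blows up as $m \to -\infty$, the finite sum $\ellgd(\vpi(t)) = \sum_i \ell(m_i(t))$ diverges if and only if some GD margin $m_i(t) \to -\infty$. The whole proof thus reduces to reading off the sign of each limiting GD margin from the data.

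The key device is the orthogonal decomposition $\vpi(t) = \va(t) + \vb(t)$ with $\va(t) \in \Span(\Xpisc)$ and $\vb(t) \in \Span(\Xpisc)^\perp$. First I would show that along any sequence infimizing $\ellpi$, the SC component converges, $\va(t) \to \vpi^{\mathrm{SC}}$, to a finite limit: the loss on $\Spi^{\mathrm{SC}}$ depends only on $\va(t)$ (those features lie in $\Span(\Xpisc)$), and this sub-risk is coercive with a unique finite minimizer on $\Span(\Xpisc)$ \citep{ji2019implicit}; a squeeze using $\ellpi^{\mathrm{LS}} \ge 0$ forces $\ellpi^{\mathrm{SC}}(\va(t))$ to its infimum. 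Next, in the PLS/LS case ($\Spi^{\mathrm{LS}} \neq \emptyset$) I would argue that infimizing drives every margin on $\Spi^{\mathrm{LS}}$ to $+\infty$, and since $\va(t)$ is bounded this forces $\|\vb(t)\| \to \infty$. Combined with the hypothesis that $\vb(t)$ converges in direction to $\vpi^*$ and \Cref{lemma:infimize} (which gives $\vpi^* \in \Span(\Xpisc)^\perp$), I can write $m_i(t) = y_i\va(t)^\top\bar{\vx}_i + \|\vb(t)\|\,(y_i\vpi^{*\top}\bar{\vx}_i/\|\vpi^*\| + o(1))$, whose first term stays bounded.

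With this expression the dichotomy is immediate: if some GD point has $y_i\vpi^{*\top}\bar{\vx}_i < 0$, then $m_i(t) \to -\infty$ and $\ellgd$ diverges, whereas if $y_i\vpi^{*\top}\bar{\vx}_i \ge 0$ for all $i$, then every $m_i(t)$ is bounded below and $\ellgd$ stays bounded. This settles both directions of the equivalence in the PLS/LS case, and it already proves the ``if'' part for SS (and, mutatis mutandis, RR) with no rank assumption. To finish the ``only if'' direction I must rule out divergence when $\Zpi$ is SC: here $\Span(\Xpisc) = \Span(\Xpibn)$, so \ref{assumption:full-rank-ss} gives $\Span(\Xpibn) = \RR^d$, whence $\vb(t) = 0$ and $\vpi(t) = \va(t) \to \vpi^{\mathrm{SC}}$ is bounded, making $\ellgd$ bounded. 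I would then repeat the argument verbatim for RR using $\Xrrbn$, $\Zrr$, and \ref{assumption:full-rank-rr}.

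The main obstacle is precisely the SC case of the ``only if'' direction: without full rank, $\vpi(t)$ may drift along a direction orthogonal to every SS feature — leaving $\ellpi$ untouched — yet this direction need not be orthogonal to the GD features $\bar{\vx}_i$, so $\ellgd$ could still diverge. Verifying that \ref{assumption:full-rank-ss} collapses this orthogonal complement to $\{0\}$, and that this is the only place rank is used, is the crux; it also explains why the ``if'' part survives with no rank assumption.
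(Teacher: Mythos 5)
Your proposal is correct and follows essentially the same route as the paper's proof: use the separability decomposition and \cref{lemma:infimize} to pin the SC component to a finite limit, note that infimizing $\ellpi$ in the PLS/LS case forces the iterate norm (equivalently $\|\vb(t)\|$) to diverge so that a strictly misclassified GD point yields divergence without any rank assumption, and use \ref{assumption:full-rank-ss} only to collapse $\Span(\Xpisc)^\perp$ to $\{\vzero\}$ in the SC case. Your write-up is in fact more explicit than the paper's (the orthogonal decomposition $\vpi(t)=\va(t)+\vb(t)$ and the margin expansion are left implicit there), and it shares the paper's one gloss: when $y_i\vpi^{*\top}\bar{\vx}_i = 0$ the term $\|\vb(t)\|\cdot o(1)$ is not automatically bounded below, so "no strict mistake implies no divergence" is asserted rather than fully justified in both arguments.
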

In particular, \Cref{prop:divergence-possibility} implies that if the RR dataset is SC and rank $d$, the GD risk does not diverge. Moreover, it naturally leads to the question of understanding ranks and separability decompositions of the SS and RR datasets; the former question is already answered by \cref{prop:full-rank}. 

To analyze the separability decomposition with high probability or almost surely, we assume the labels are balanced.
\begin{assumption}[Balanced classes]\label{assumption:balanced} The data $\mZ$ either has
\,\\\vspace{-3.5ex}
\begin{enumerate}[label=\normalfont(\alph*), ref={Assumption~\theassumption(\alph*)}]
    \item \label{assumption:ss-balanced} an equal number of positive and negative examples; or 
    \item \label{assumption:rr-balanced} at least $B$ positive and $B$ negative examples.
\end{enumerate}
\end{assumption}

Finally, we informally state our main classification: SS+BN can diverge in some regimes (see \cref{prop:ss-structure} for details).
\begin{theorem}[SS+BN can diverge (informal))]\label{thm:ss-divergence-informal}
Assume \Cref{assumption:density},  \ref{assumption:ss-balanced}, and $B>2$. If $d \le (B-1)\frac{n}{B}$, SS can diverge if $B = \Omega(\log n)$ and $\Zgd$'s separability decomposition can change with small perturbations.  Otherwise, SS can diverge regardless of the batch size and the separability decomposition of $\Zgd$. 
\end{theorem}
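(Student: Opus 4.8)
The plan is to obtain \Cref{thm:ss-divergence-informal} by combining the divergence characterization of \Cref{prop:divergence-possibility} with the rank computation of \Cref{prop:full-rank}. By \Cref{prop:divergence-possibility}, for a fixed permutation the GD risk $\ellgd$ diverges as soon as (a) the SS dataset $\Zpi$ is PLS or LS, and (b) the optimal SS direction $\vpi^*$ misclassifies some GD point, i.e. $y_i\vpi^{*\top}\vx_i<0$ for some $(\vx_i,y_i)\in\Zgd$; crucially, this ``if'' direction needs no rank hypothesis. Hence it suffices to produce, under the standing assumptions, a positive-probability event (over the random shuffle) on which both (a) and (b) hold. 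First I would apply \Cref{prop:full-rank}: under \Cref{assumption:density} and $B>2$ we have $\rank(\Xpibn)=\min\{d,(B-1)\tfrac{n}{B}\}$ almost surely, which is exactly the dichotomy of the theorem—either $\Xpibn$ spans $\RR^d$ (when $d\le(B-1)\tfrac{n}{B}$) or it is confined to a proper subspace $V\triangleq\Span(\Xpibn)$ of dimension $(B-1)\tfrac{n}{B}$ (when $d>(B-1)\tfrac{n}{B}$). In both regimes the full-batch features $\Xgdbn=\bn(\mX)$ have rank $\min\{d,n-1\}$, so they may span all of $\RR^d$ even when the SS features do not.

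In the rank-deficient regime I would argue unconditionally. Since every SS feature lies in $V$, every SS gradient does too, so the separating component of the iterates stays in $V$ and therefore $\vpi^*\in V$. Consequently $\vpi^*$ classifies each GD point only through its projection onto $V$. But the GD features are normalized over the full batch rather than over the mini-batches, so they do not lie in $V$ and their $V$-projections genuinely differ from the SS features; for almost every draw of data from the density, some GD point's projection then falls on the wrong side of $\vpi^*$, giving (b). Condition (a)—that $\Zpi$ retains a nonempty separable part, so that $\vpi^*\ne\vzero$—I would extract from \ref{assumption:ss-balanced} together with the structural count of \Cref{prop:ss-structure}. Because this mechanism uses only the subspace confinement of the SS features and the mismatch between mini-batch and full-batch normalization, it is insensitive to the batch size and to the separability decomposition of $\Zgd$, which is precisely the ``otherwise'' clause.

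In the full-rank regime $\Xpibn$ spans $\RR^d$, no free complement exists, and I would argue probabilistically over the shuffle. Here $\Xgdbn$ is also full rank (since $d\le(B-1)\tfrac{n}{B}\le n-1$), which unlocks a clean route to (b): if $\Zgd$ is SC then $\vzero$ lies in the interior of $\conv\{y_i\bn(\vx_i)\}$, so \emph{every} nonzero direction misclassifies some GD point, and thus (b) holds automatically once $\Zpi$ is PLS or LS. It therefore remains to make $\Zpi$ separable with constant probability. This is where the two hypotheses enter: $B=\Omega(\log n)$ is the threshold at which the without-replacement concentration of \Cref{prop:concentration-features}, after a union bound over the $m$ batches and $d$ coordinates (failure probability $n\,e^{-\Omega(B)}=n^{-\Omega(1)}$), certifies both that $\Xpibn$ is full rank and that each mini-batch normalization is a controlled perturbation of $\Xgdbn$; and the instability of $\Zgd$'s separability decomposition means a boundary point can be tipped into the separable part of $\Zpi$. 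An anticoncentration estimate over the draw of the shuffle, in the spirit of the toy regression analysis of \Cref{sec:opt-analysis-reg}, then shows this tipping occurs with constant probability (when $\Zgd$ is instead PLS, a finer boundary-point argument is needed, still powered by the same concentration and anticoncentration).

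The main obstacle is condition (b): certifying that the \emph{optimal} SS direction, rather than merely some feasible separator, gets a GD point wrong, and doing so with a quantified probability. Condition (a) is essentially combinatorial—it follows from \ref{assumption:ss-balanced}, \Cref{lemma:infimize}, and the rank bookkeeping of \Cref{prop:full-rank,prop:ss-structure}—but (b) couples two distinct geometries, the SS one set by $\Xpibn$ and $\Span(\Xpisc)$ and the GD one set by $\Xgdbn$, through the batch-dependence of $\bn$. The SC sub-case sidesteps this coupling, but the general full-rank case requires tracking how $\vpi^*$ moves with the shuffle and extracting a genuine anticoncentration (not concentration) event. I expect this quantitative coupling, carried out in detail in \Cref{prop:ss-structure}, to be the technical heart of the proof.
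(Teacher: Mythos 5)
Your top-level reduction is the paper's: use \cref{prop:divergence-possibility} to turn divergence into a question about the separability decomposition of $\Zpi$, and \cref{prop:full-rank} for the rank dichotomy at $d \lessgtr (B-1)\frac{n}{B}$. But there is a genuine gap at the heart of the argument: you misattribute the $B=\Omega(\log n)$ threshold to concentration of batch statistics, and you never mention monochromatic batches, which are the actual mechanism. In the paper (\cref{prop:ss-structure}), the reason divergence requires $B=\Omega(\log n)$ in the underparameterized regime is a coupon-collector fact (\cref{cor:ss-sc-whp}): under \ref{assumption:ss-balanced}, $B=o(\log n)$ produces $\Omega(n)$ monochromatic batches w.h.p.; BN forces each such batch to have mean zero, so $\vzero$ lies in the relative interiors of both class hulls (\cref{lemma:batch-separability}), and together with a.s.\ full rank (\cref{prop:full-rank}) and \cref{lemma:sc-sufficient} this makes $\Zpi$ SC w.h.p.\ --- whence, by \cref{prop:divergence-possibility}, divergence is \emph{impossible} for small batches. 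Without this, your proposal cannot explain why the $\Omega(\log n)$ hypothesis appears at all. Moreover, the concentration result you invoke operates at the much larger scale $B=\Omega(d\log(nd)/\gamma^2)$ and plays the \emph{opposite} role in the paper: combined with $\gamma$-robustness it shows $\Zpi$ inherits $\Zgd$'s decomposition (\cref{prop:concentration-robustness}), i.e., it rules divergence \emph{out}; it does not certify full rank (that is the a.s.\ consequence of \Cref{assumption:density} via \cref{prop:full-rank}), and $B=\Omega(\log n)$ alone gives no controlled perturbation of $\Xgdbn$.

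A second gap is that your two load-bearing steps are asserted, not proved, and are stronger than the theorem needs. In the overparameterized regime, your claim that for almost every draw some GD point's projection onto $V=\Span(\Xpibn)$ is misclassified by $\vpi^*$ couples the data-dependent optimum to the GD geometry and is nowhere justified; the paper claims no such a.s.\ statement. Instead it shows $\Zpi$ is PLS/LS w.h.p.\ (any optimal direction places monochromatic batches exactly on the decision boundary, \cref{fact:interpolate}, while separating the rest, \cref{lemma:overparam-separation}), and the word ``can'' in \cref{thm:ss-divergence-informal} is witnessed by explicit constructions (\cref{prop:toy-classification-dataset}, \cref{fig:conceptual}). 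Likewise, your promised ``anticoncentration estimate'' showing constant-probability tipping of an arbitrary non-robust $\Zgd$ into PLS/LS is exactly the hard step, and it is left as a promise; the paper does not prove such a general statement either, handling it by example. Finally, citing \cref{prop:ss-structure} as an ingredient is circular: it \emph{is} the formal version of the theorem you are proving, so the monochromatic-batch and robustness analysis it contains is what your proof was required to supply.
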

Whereas \cref{thm:ss-divergence-informal} establishes regimes where SS+BN can diverge, we can show that RR+BN prevents divergence in a much larger regime (see \cref{prop:rr-structure} for details).
\begin{theorem}[RR+BN does not diverge (informal)]\label{thm:rr-divergence-informal}
Assume \Cref{assumption:density},  \ref{assumption:rr-balanced}, and $B>2$. If $d \le (B-1)\binom{n}{B}$, RR does not diverge almost surely.
\end{theorem}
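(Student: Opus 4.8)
The plan is to reduce the claim to two almost-sure structural facts about the RR dataset and then invoke \Cref{prop:divergence-possibility}. Under \Cref{assumption:density}, \ref{assumption:rr-balanced}, $B>2$ and $d \le (B-1)\binom{n}{B}$, \Cref{prop:full-rank} gives $\rank(\Xrrbn) = d$ a.s., so \ref{assumption:full-rank-rr} holds and the ``only if'' direction of \Cref{prop:divergence-possibility} is available. Consequently it suffices to prove that, almost surely, the RR dataset $\Zrr$ is strongly convex (SC): if $\Zrr$ is SC it is neither PLS nor LS, the divergence criterion of \Cref{prop:divergence-possibility} cannot be met, and $\ellgd$ does not diverge. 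Thus the entire content of the theorem is the almost-sure SC-ness of $\Zrr$.

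For the SC step I would exploit the single defining feature of BN: it centers each mini-batch. Recall $\Zrr$ is SC exactly when $\vzero \in \relinterior \conv\{y_i \ol{\vx}_i\}$, where $\ol{\vx}_i$ ranges over the BN-normalized columns of $\Xrrbn$ and $y_i$ over their labels. The key observation is that for any \emph{monochromatic} batch $S$ (all $B$ labels equal to a common $y \in \{\pm 1\}$, which exists by \ref{assumption:rr-balanced}), centering forces $\sum_{i \in S} \ol{\vx}_i = \vzero$, hence the \emph{signed} columns also sum to zero, $\sum_{i \in S} y_i \ol{\vx}_i = y\sum_{i\in S}\ol{\vx}_i = \vzero$. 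A finite set of vectors that sums to zero has conical hull equal to its linear span (add a large multiple of the zero relation to make all coefficients nonnegative). Therefore the conical hull of the signed points contributed by the monochromatic batches already contains the linear span of all monochromatic normalized columns.

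The remaining step, and the crux of the argument, is a spanning lemma: almost surely the monochromatic normalized columns span $\RR^d$. I would obtain this by applying the rank computation underlying \Cref{prop:full-rank} separately to the positive-only and negative-only subpopulations of $\mX$ --- each inherits \Cref{assumption:density} as a submatrix of generic data --- and then combining them, using genericity, so that the union of positive-batch and negative-batch columns attains full rank $d$ a.s. Once the monochromatic columns span $\RR^d$, the conical hull of the signed points is all of $\RR^d$; equivalently the signed points positively span $\RR^d$, which is precisely $\vzero \in \interior \conv\{y_i \ol{\vx}_i\} = \relinterior\conv\{y_i\ol{\vx}_i\}$, giving SC. Combined with $\rank(\Xrrbn) = d$ and \Cref{prop:divergence-possibility}, this rules out divergence a.s.

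I expect the spanning lemma to be the main obstacle. Two difficulties stand out: first, BN applies a batch-dependent, coordinatewise nonlinear rescaling, so the normalized columns are nonlinear functions of the data and the rank argument of \Cref{prop:full-rank} must be re-run on the monochromatic subcollections rather than quoted verbatim; second, I must track how many monochromatic batches \ref{assumption:rr-balanced} actually guarantees and verify that the union of positive and negative batches reaches rank $d$ --- this is where the precise counting condition of the formal statement (\cref{prop:rr-structure}) enters, with $d \le (B-1)\binom{n}{B}$ serving as the simplified sufficient hypothesis. Everything else (the cone/span identity and the reduction through \Cref{prop:divergence-possibility}) is routine convex geometry.
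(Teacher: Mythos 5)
Your reduction through \cref{prop:divergence-possibility} and \cref{prop:full-rank} matches the paper's, and your cone identity (a finite set of vectors summing to zero has conical hull equal to its linear span, so positive spanning of the signed monochromatic columns would put $\vzero$ in the interior of their convex hull) is sound convex geometry. But the crux of your argument --- the spanning lemma asserting that, almost surely, the monochromatic normalized columns alone span $\RR^d$ --- is false under the stated hypotheses. Each monochromatic batch, after BN centering, contributes columns of rank at most $B-1$, and there are only $\binom{n^+}{B}+\binom{n^-}{B}$ monochromatic batches, where $n^{+}, n^{-}$ are the class sizes. Hence the monochromatic columns have rank at most $(B-1)\bigl(\binom{n^+}{B}+\binom{n^-}{B}\bigr)$, whereas the theorem only assumes $d \le (B-1)\binom{n}{B}$, a count dominated by the \emph{mixed} batches. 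Concretely, take $n^{+}=n^{-}=B$, which \ref{assumption:rr-balanced} allows: exactly two monochromatic batches exist, so their columns span at most a $2(B-1)$-dimensional subspace, yet any $d$ with $2(B-1) < d \le (B-1)\binom{2B}{B}$ satisfies all hypotheses of the theorem. In that regime your positive-spanning criterion cannot certify SC, and no re-run of the rank argument of \cref{prop:full-rank} on the positive and negative subpopulations can repair it --- the deficiency is in the count of monochromatic batches, not in genericity.

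The paper's proof (\cref{prop:rr-structure}) sidesteps this by decoupling the two jobs you assign to the monochromatic batches. It uses them only once, and needs only one per class: by \cref{lemma:batch-separability}, a single all-positive batch and a single all-negative batch (present in $\Zrr$ \emph{deterministically} under \ref{assumption:rr-balanced}, since $\Zrr$ concatenates all permutations) force $\vzero$ into $\relinterior(\conv(\Xrrbn^+)) \cap \relinterior(\conv(\Xrrbn^-))$, because BN makes each batch mean-zero and the batch mean lies in the relative interior of the batch hull. Full dimensionality is then supplied by the \emph{entire} matrix $\Xrrbn$ --- mixed batches included --- via \cref{prop:full-rank}, and \cref{lemma:sc-sufficient} upgrades the pair ``relative interiors of the class hulls intersect'' plus ``full dimension'' to SC; \cref{prop:divergence-possibility} then rules out divergence. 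The fix for your write-up is to replace your spanning lemma with this strictly weaker pair of facts: you do not need the signed monochromatic points to positively span $\RR^d$, only $\vzero$ in the relative interior of each class hull together with full rank of the whole RR dataset.
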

\Cref{thm:ss-divergence-informal} implies that one cannot prevent SS divergence by simply increasing the batch size $B$; it is also necessary for the GD dataset to be ``robustly'' LS or SC. Moreover, as soon as $d > (B-1)\frac{n}{B}$, SS can diverge. In stark contrast, \Cref{thm:rr-divergence-informal} establishes that even for small $B$, RR is \emph{almost surely} robust to divergence as long as $d \le (B-1)\binom{n}{B}$. Although our theorems do not prove that SS+BN \emph{necessarily} diverges, they offer some theoretical explanation for why SS+BN appears to be less stable than RR+BN for classification.

\subsection{RR prevents divergence while SS diverges}\label{example:clf-divergence}
We present a toy dataset where SS drastically distorts the optimal direction, leading to divergence with constant probability. Meanwhile, RR does not diverge on this dataset. We use $d=B=2$ to simplify the construction.\footnote{Since $B=2$, there is no contradiction with \cref{thm:ss-divergence-informal}.}

\begin{figure}[!ht]
     \centering
     \begin{subfigure}[b]{0.48\textwidth}
         \centering
         \includegraphics[width=\textwidth]{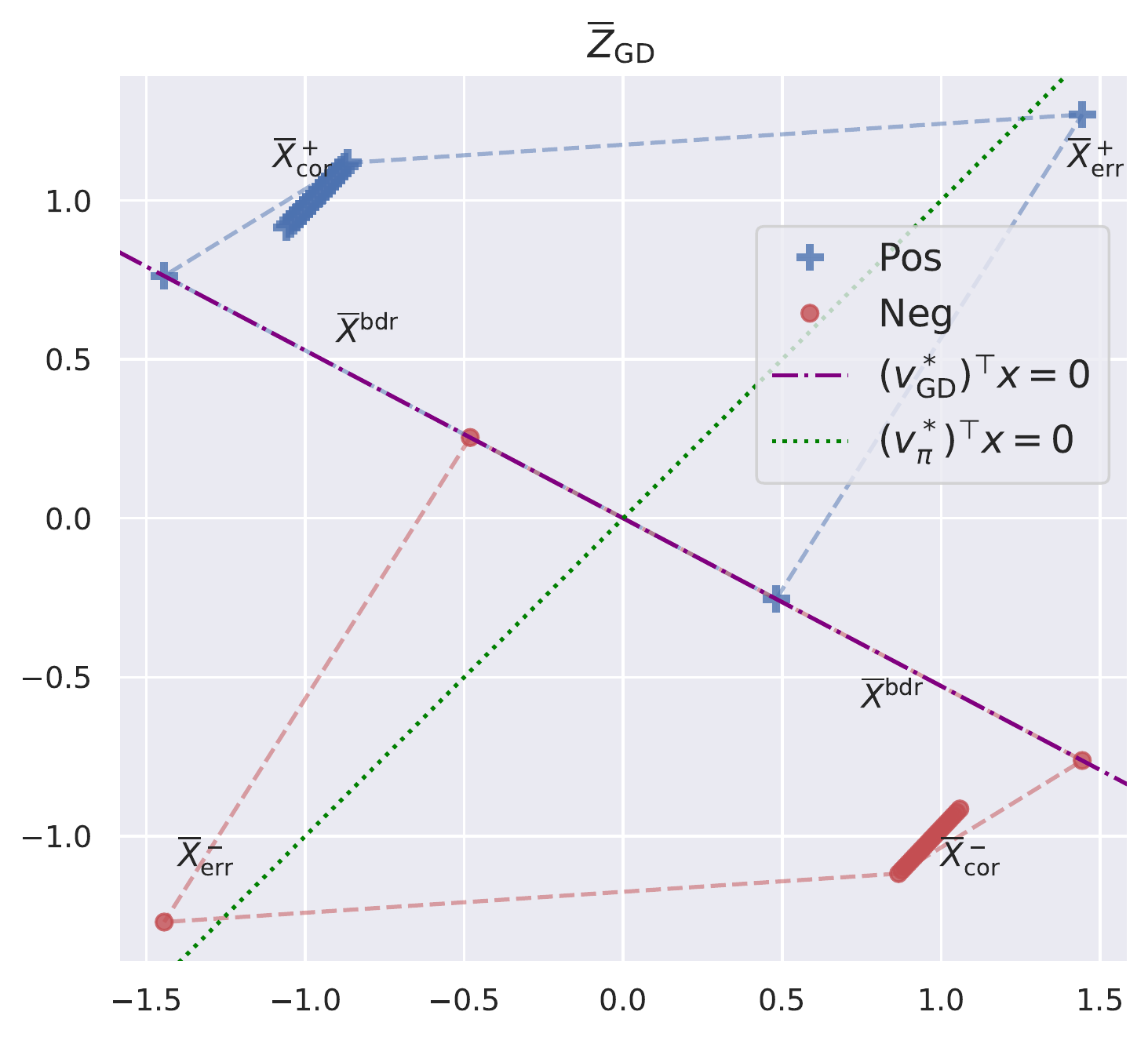}
         \caption{Toy classification dataset showing divergence of SS with constant probability.}
         \label{fig:toy-dataset-clf}
     \end{subfigure}
     \hfill
     \begin{subfigure}[b]{0.48\textwidth}
         \centering
         \includegraphics[width=\textwidth]{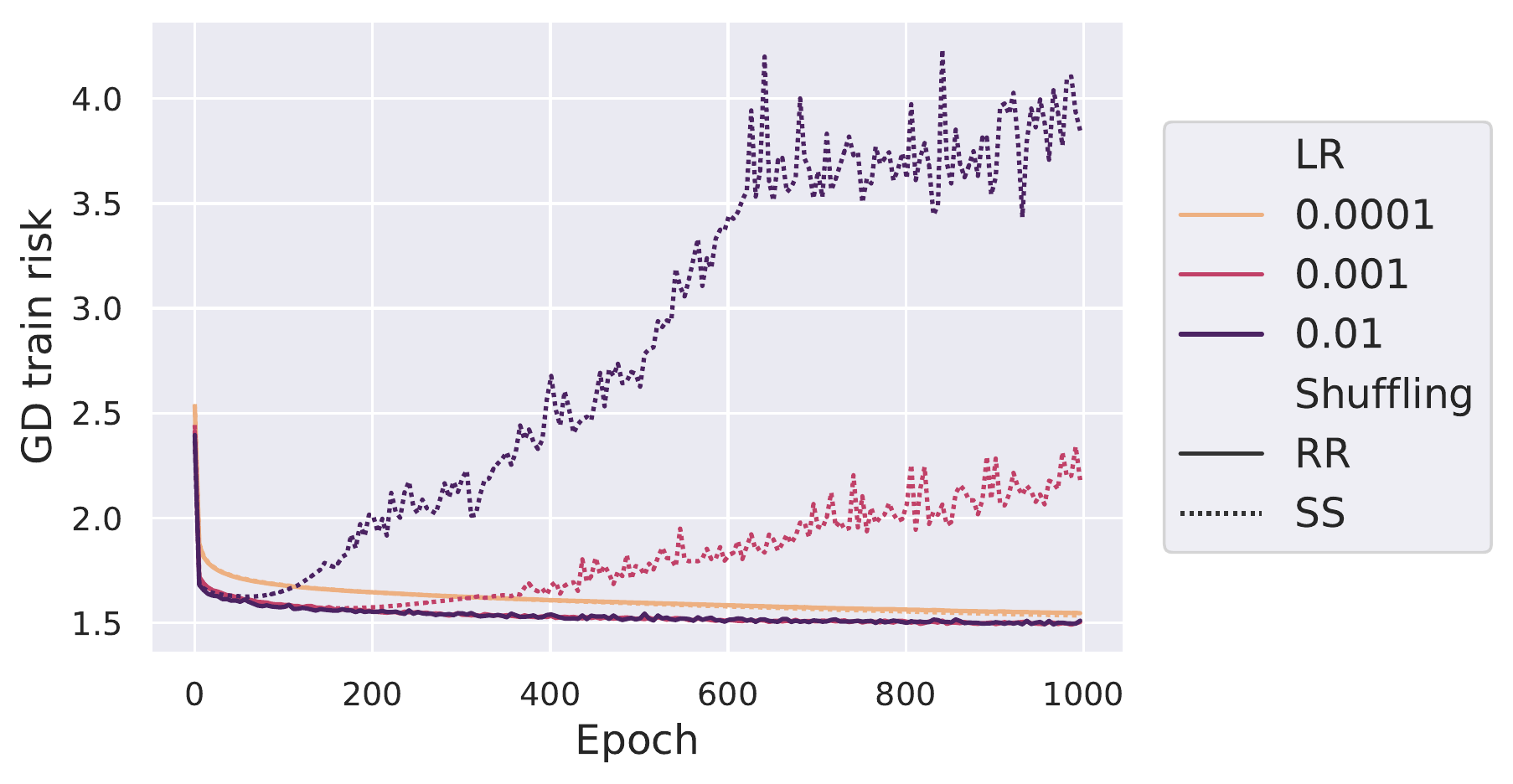}
         \caption{3 layer linear+BN networks trained with varying stepsizes.}
         \label{fig:lnn_divergence}
     \end{subfigure}
    \caption{Left: Toy dataset demonstrating divergence of GD risk with constant probability. The dashed lines trace out the convex hulls of the positive and negative points. Right: divergence of GD risk for a variety of stepsizes on CIFAR10. Note that there was eventually a separation for $\eta = 10^{-4}$ (see \Cref{fig:eventual_separation}).}
\end{figure}

\paragraph{Dataset: SS diverges with contsant probability; RR does not.}
We describe our construction (\Cref{fig:toy-dataset-clf}) at a high level; see \cref{prop:toy-classification-dataset} for details. The GD dataset is PLS with unique optimal direction $\vgd^*$ (its decision boundary is the purple dash-dotted line). Moreover, with constant probability the SS dataset is PLS with unique optimal direction $\vpi^*$ (green dotted line) distorted away from $\vgd^*$. Also, $\vpi^*$ misclassifies points in the GD dataset ($\ol{\mX}^+_{\mathrm{err}}$ and $\ol{\mX}^-_{\mathrm{err}}$). Under the additional assumptions in \cref{prop:divergence-possibility}, the GD risk diverges. Finally, since the RR dataset is SC and rank $d$, RR does not diverge on the GD dataset.

\begin{figure*}[t]
     \centering
     \begin{subfigure}[b]{0.48\textwidth}
         \centering
         \includegraphics[width=\textwidth]{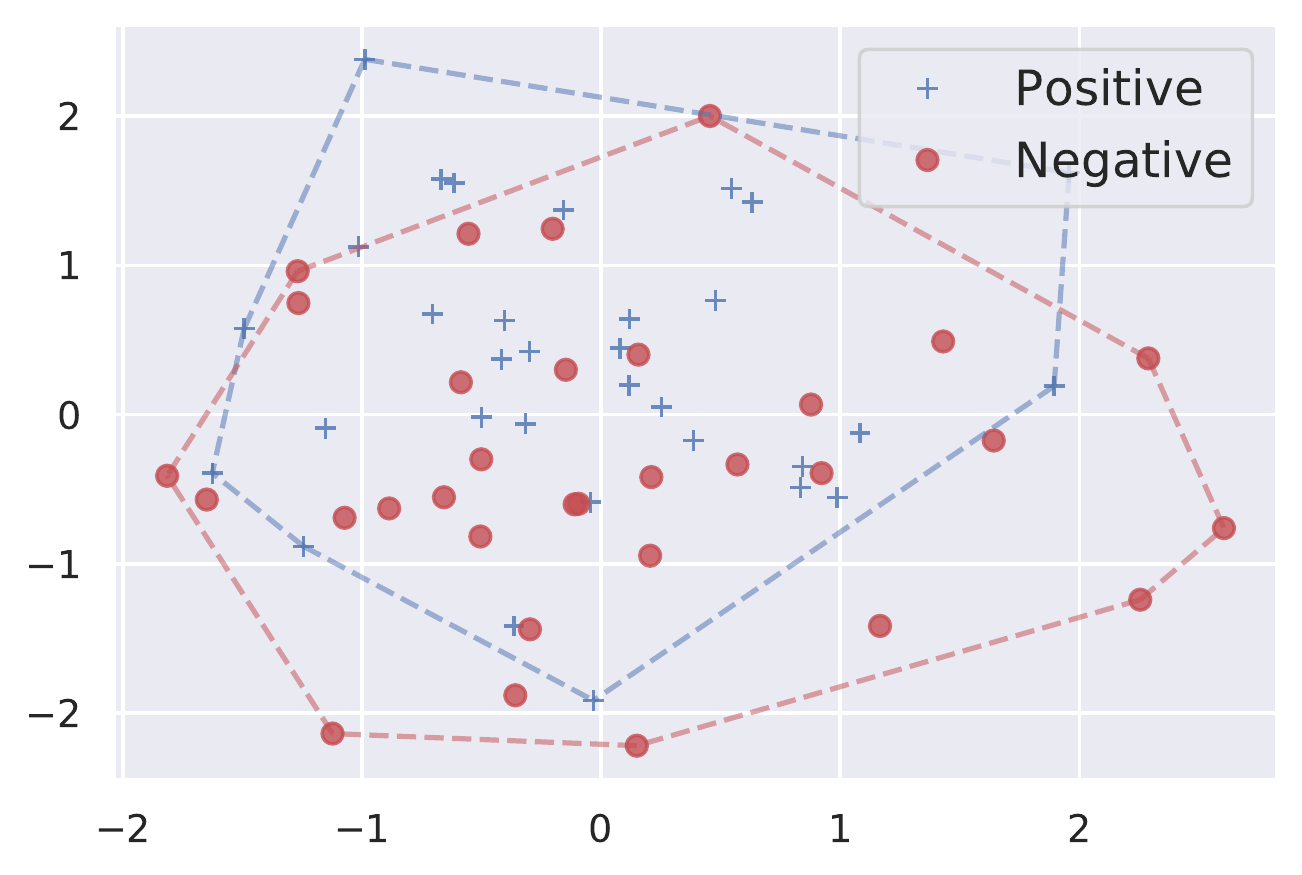}
         \caption{$\Zgd = (\bn(\mA_0^0 \mX), \mY)$ at initialization.}
         \label{fig:xgd-initial}
     \end{subfigure}
     \hfill
     \begin{subfigure}[b]{0.48\textwidth}
         \centering
         \includegraphics[width=\textwidth]{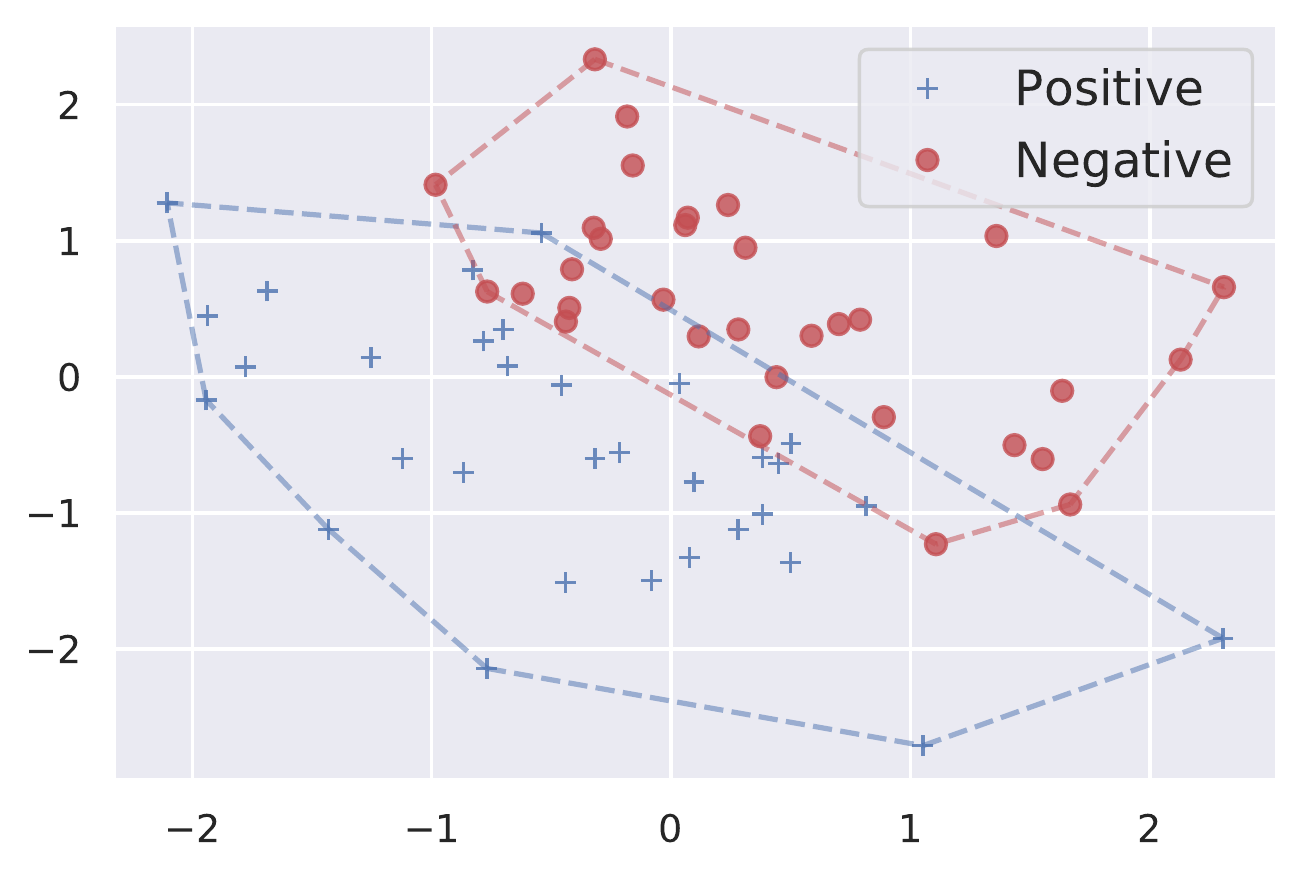}
         \caption{$\Zgd = (\bn(\mA_0^T \mX),\mY)$ after training.}
         \label{fig:xgd-final}
     \end{subfigure} 
 \hfill \\
     \begin{subfigure}[b]{0.48\textwidth}
         \centering
         \includegraphics[width=\textwidth]{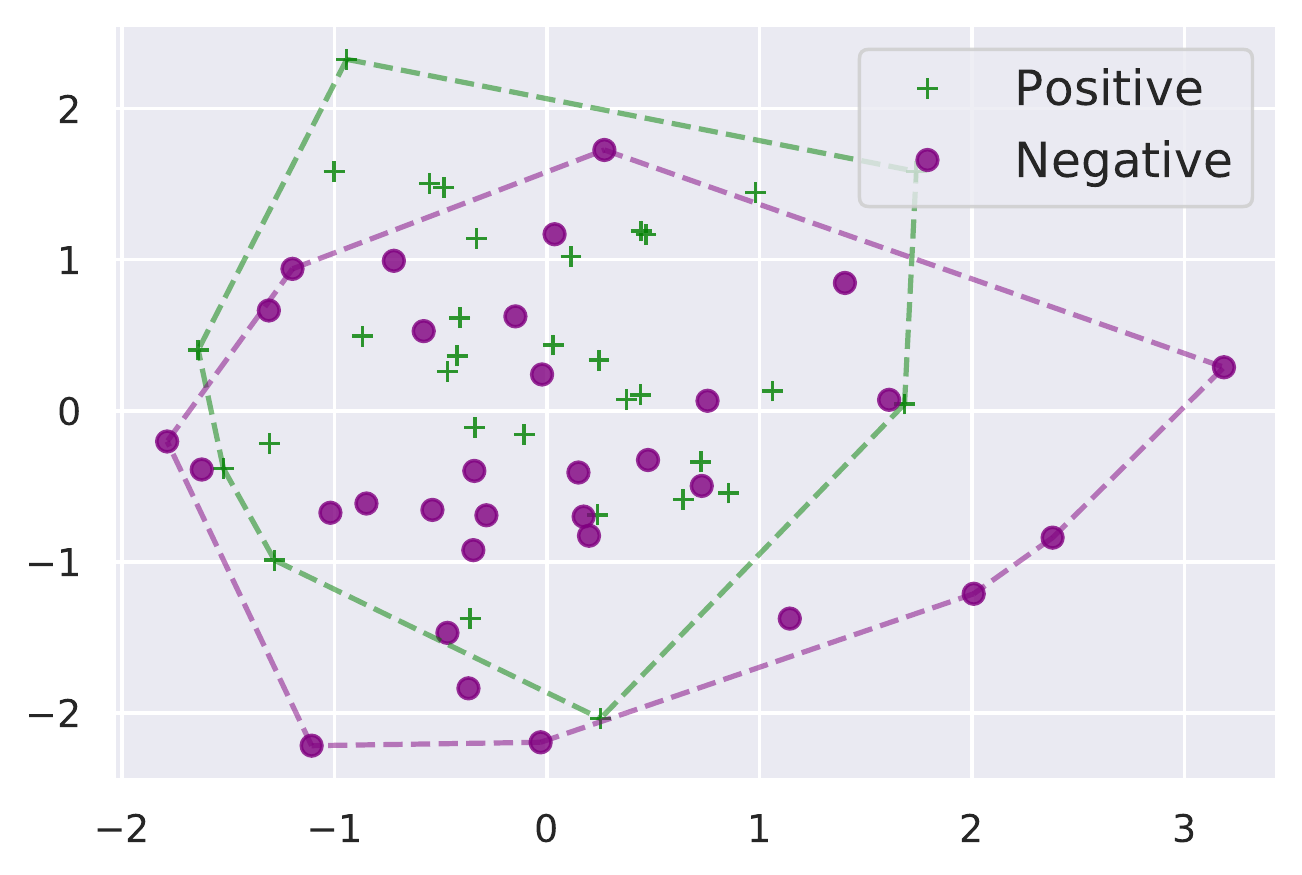}
         \caption{$\Zpi = (\bnpi(\mA_0^0 \mX),\Ypi)$ at initialization.}
         \label{fig:xss-initial}
     \end{subfigure}
     \hfill
     \begin{subfigure}[b]{0.48\textwidth}
         \centering
        \includegraphics[width=\textwidth]{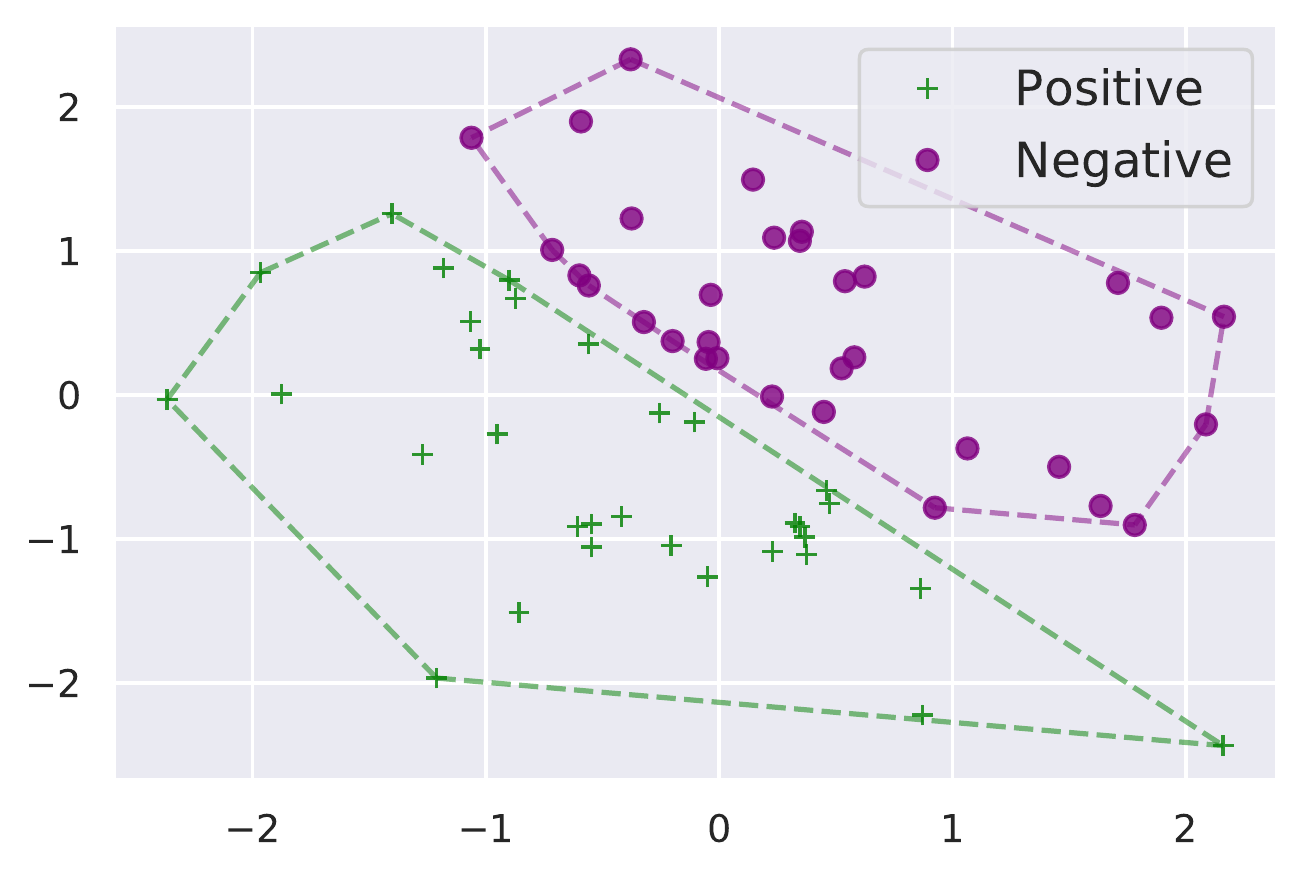}
         \caption{$\Zpi = (\bnpi(\mA_0^T \mX), \Ypi)$ after training.}
         \label{fig:xss-final}
     \end{subfigure}
     \caption{Snapshots of GD dataset $\Zgd$ and SS dataset $\Zpi$ before and after running SS for $T = 10^4$ epochs with $32$ positive and negative synthetic examples. While the GD dataset remains SC, the SS dataset become LS. Here $B = 16$, $\eta = 10^{-2}$, and $\eps = 10^{-5}$ for BN.}\label{fig:conceptual}
\end{figure*}

\subsection{Experiments on linear and nonlinear networks}\label{sec:clf-experiments}
We now verify our theoretical classification results on linear+BN and extend them to nonlinear networks on a variety of real-world datasets. This demonstrates that the separation between SS, RR, and GD is relevant in realistic settings and not merely an artifact of the linear setting. We refer to the linear+BN network $\mW\mGamma\bn(\mX)$ as 1-layer linear network, and also consider deeper linear networks with tunable parameters inside BN layers. We observe strikingly different training behaviors in the shallow and deep linear networks. The networks are formally defined in \cref{app:experiments}; see \url{https://github.com/davidxwu/sgd-batchnorm-icml} for the experiment code.

\begin{figure*}[!ht]
     \centering
    \includegraphics[width=0.95\textwidth]{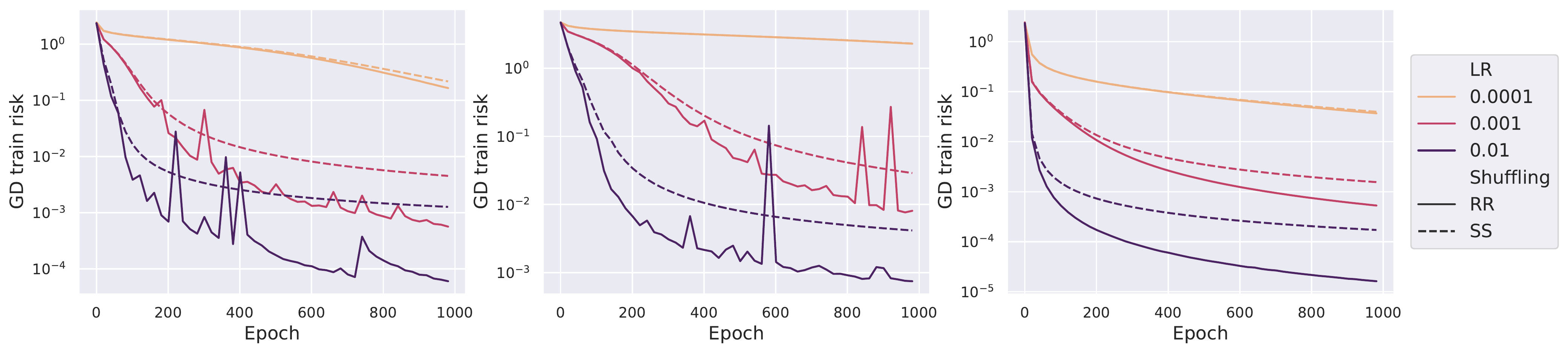}
     \caption{3 layer ReLU+BN MLP on (left to right): CIFAR10, CIFAR100, and MNIST. Note the slower convergence for SS versus RR.}
     \label{fig:fc_slow}
     \hfill
\end{figure*}
\begin{figure*}[!ht]
     \centering     \includegraphics[width=0.95\textwidth]{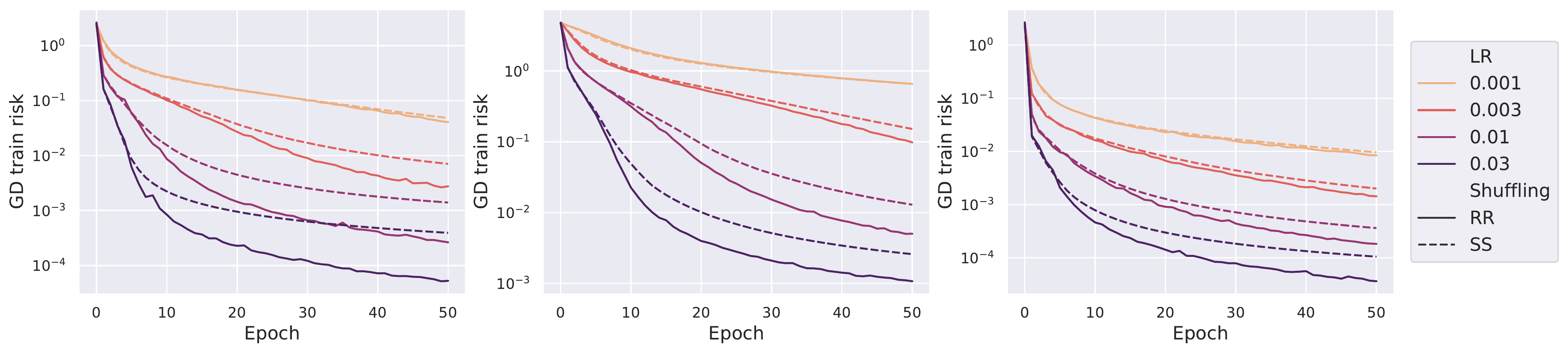}
     \caption{ResNet18 finetuned on (left to right): CIFAR10, CIFAR100, and MNIST. Note the slower convergence for SS versus RR across datasets. For the smallest learning rate $\eta = 10^{-3}$, we observed a separation after 200 epochs.}
     \label{fig:resnet_slow}
\end{figure*}
As a motivating example, we ran an experiment on synthetic data (\cref{fig:conceptual}) with the 2-layer linear network $f(\mX) = \mW\mGamma\bn(\mA\mX)$. Note that the tunable matrix $\mA$ acts before BN.
Intriguingly, we observe that the SS dataset with features $\Xpibn = \bnpi(\mA\mX)$ is SC \emph{at initialization}, but updating $\mA$ with SS makes it LS \emph{after training}. Moreover, the batch size is large relative to $n$, so this dataset satisfies the necessary conditions for divergence in \Cref{prop:divergence-possibility,thm:ss-divergence-informal}. 

More specifically, \Cref{fig:xgd-initial,fig:xss-initial} plot the 2-dimensional GD and SS datasets, respectively, which are SC at initialization. However, after training with SS, we can see from \Cref{fig:xgd-final,fig:xss-final} that SS updates $\mA$ to make the SS dataset LS, whereas the GD dataset stays SC. Hence, by \Cref{prop:divergence-possibility}, the GD risk diverges. This example partially explains the discrepancy in training behavior between the 1-layer and deeper networks. Indeed, whereas the 1-layer architecture has static $\Zpi$, the deeper networks have evolving weights inside BN which can push $\Zpi$ to be LS/PLS.

To exhibit the above divergence on real data, we conducted experiments on the CIFAR10. Using SS and RR, we trained linear+BN networks of depths up to 3 for $T = 10^3$ epochs using stepsize $\eta=10^{-2}$, batch size $B=128$, and 512 hidden units per layer (see \Cref{app:experiments} for precise details).

As depicted in \Cref{fig:3l_lnn_training_loss}, we consistently observed SS divergence for the deeper networks (see \Cref{fig:linear_separability} for more evidence of divergence). As predicted by \cref{thm:rr-divergence-informal}, RR did not exhibit divergence behavior.  These phenomena persisted despite ablating the learning rate in $\qty{0.01, 0.001, 0.0001}$, momentum in $\qty{0, 0.9, 0.99}$, and batch size in $\qty{32, 64, 128}$. The learning rate ablation is shown in \Cref{fig:lnn_divergence}; see \Cref{app:experiments} for the rest.

For the nonlinear experiments, we extended to the CIFAR10, MNIST, and CIFAR100 datasets. We used SS and RR to train 3-layer 512 hidden unit MLPs with BN and ReLU activation for $T=10^3$ epochs, and also to finetune pretrained ResNet18 for $T = 50$ epochs. We consistently observed that in the final stages of training (i.e., relatively small training risk), SS trained slower than RR across all of the datasets, even after tuning the learning rate (see \Cref{fig:fc_slow,fig:resnet_slow}). 
\section{Conclusion}
This paper established that training BN networks with SS can lead to undesirable training behavior, including slower convergence or even divergence of the GD risk. However, RR provably mitigates this divergence behavior, and experimental evidence suggests that using RR usually converges faster than SS. This separation in training behavior between SS, RR, and GD is because data shuffling directly affects how BN operates on mini-batches. Our theoretical results establish a separation for the special case where BN is applied to the input features. The more general and realistic case where BN is applied to dynamically evolving layers is left as an important direction for future work. We also observed in preliminary experiments that a similar separation manifested for generalization, and we hope that adopting a similar perspective will prove fruitful in pursuing this direction. One concrete path towards studying the implicit bias of shuffling SGD and BN with dynamically evolving layers is to combine our techniques with those of \citet{cao2023implicit}. We remark that similar surprising phenomena may arise when using other design choices that are implemented in a mini-batch fashion such as mixup \citep{zhang2017mixup} and Sharpness-Aware Minimization (SAM) \citep{foret2020sharpness}. For these reasons, we generally recommend that practitioners use RR instead of SS. Further future directions include establishing directional convergence for homogeneous classifiers trained with shuffling SGD and theoretically understanding conditions under which deeper networks diverge faster. 

\section*{Acknowledgements}
Part of the work was done while DW was an undergraduate at MIT. DW acknowledges support from NSF Graduate Research Fellowship DGE-2146752. CY acknowledges support by Institute of Information \& communications Technology Planning \& evaluation (IITP) grant (No.\ 2019-0-00075, Artificial Intelligence Graduate School Program (KAIST)) funded by the Korea government (MSIT). CY is also supported by the National Research Foundation of Korea (NRF) grants (No.\ NRF-2019R1A5A1028324, RS-2023-00211352) funded by the Korea government (MSIT). CY acknowledges support from a grant funded by Samsung Electronics Co., Ltd. SS acknowledge support from an NSF CAREER grant (1846088), and NSF CCF-2112665 (TILOS AI Research Institute).
DW appreciates helpful discussions with Xiang Cheng, Sidhanth Mohanty, Erik Jenner, Louis Golowich, Sam Gunn, and Thiago Bergamaschi.

\bibliography{bib}
\bibliographystyle{plainnat}

\appendix

\section{Proofs for regression results}
In this appendix, we provide the full details for the proof of convergence for SS and RR in the regression case.

\paragraph{Additional notation.}
We introduce some additional notation which we will use throughout the proof of \Cref{thm:ss-convergence,thm:rr-convergence}. 
For a matrix $\mA$, we use $\mA_{i,:}$ and $\mA_{:,j}$ to denote the $i$th row and $j$th column of $\mA$, respectively. We also use $A_{i,j}$ to denote the $(i,j)$th entry of $\mA$.
The Hadamard product of two matrices $\mA, \mB \in \RR^{m \times n}$ is denoted by $\mA \odot \mB$, with $(\mA \odot \mB)_{i,j} = A_{i,j}B_{i,j}$. The diagonal operator $\diag: \RR^{m \times m} \to \RR^{m \times m}$ is defined by $\diag(\mA) = \mI \odot \mA$. We denote the Frobenius inner product $\ev{\mA, \mB}_F = \sum_{i,j} A_{i,j}B_{i,j}$ and its induced norm by $\norm{\mA}_F$. 

Also recall from \Cref{sec:setup} that when $\Theta$ is optimized with SS or RR, the $i$th iterate on the $k$th epoch is denoted by $\Theta_i^k$. For simplicity, we will often say the $(i,k)$th iterate to refer to $\Theta_i^k$. Denote the collapsed parameter matrix defined in \Cref{sec:regression} by $\mM \triangleq \mW\mGamma$. We will abuse notation and sometimes denote the $(i,k)$th iterate by $\mM_i^k \triangleq \mW_i^k\mGamma_i^k$. 

Recall that the mini-batch risk used for updating the $(i,k)$th iterate of SS or RR is given by
$\loss(f(\Xpi^{i+1};\Theta_i^k),\Ypi^{i+1}) = \norm{\Ypi^{i+1} - \mW_i^k \mGamma_i^k \bn(\Xpi^{i+1})}_F^2$ where $\pi$ denotes the permutation chosen for the $k$th epoch and $\mX_\pi^j \in \RR^{d \times B}$ and $\mY_\pi^j \in \RR^{p \times B}$ consist of the $(jB-B+1, \ldots, jB)$th columns of $\pi \circ \mX$ and $\pi \circ \mY$, respectively. Since this notation is a bit lengthy, we simplify it to $\loss(\Xpi^{j}; \Theta) \triangleq \loss(f(\Xpi^{j};\Theta),\Ypi^{j})$ for any $j \in [m]$.
Here, we can also view the mini-batch risk as a function of $\mM = \mW\mGamma$, so we will sometimes abuse notation and write 
\begin{align*}
\loss(\Xpi^{j}; \mM) &\triangleq \norm{\Ypi^{j} - \mM \bn(\Xpi^{j})}_F^2,\\
\grad_{\mM} \loss(\Xpi^{j}; \mM) &\triangleq -(\Ypi^{j} - \mM \bn(\Xpi^{j}))\bn(\Xpi^{j})^\top.
\end{align*}

For SS, we work with a fixed permutation $\pi \in \sS_n$ and input dataset $(\mX, \mY)$. Recall that we defined $\Xpibn \triangleq \bnpi(\mX)$ from \Cref{sec:regression}, i.e., the column-wise concatenation of all batches after batch normalization: $\Xpibn = [\bn(\Xpi^1)~\cdots~\bn(\Xpi^m)]$. When the context of parameters $\Theta = (\mW, \mGamma)$ and permutation $\pi \in \sS_n$ chosen by SS are clear, we denote the collection of outputs over the dataset by $\Yhatpi \triangleq \mW\mGamma\Xpibn$. 
Also recall that the distorted SS risk $\ellpi(\Theta)$ we set out to optimize is defined to be $\ellpi(\Theta) = \ellpi(\mW,\mGamma) = \norm{\Ypi - \mW \mGamma \Xpibn}_F^2$. With $\mM \triangleq \mW\mGamma$, we also abuse notation and write 
\begin{align*}
\ellpi(\mM) &\triangleq \norm{\Ypi - \mM \Xpibn}_F^2,\\
\grad_{\mM} \ellpi(\mM) &\triangleq -(\Ypi - \mM \Xpibn)\Xpibn^\top.
\end{align*}

We will use big--$O$ notation throughout to simplify the presentation of the proofs. When we write $O(\eta_k^t)$ for some exponent $t \ge 1$, we hide constants that depend on $m$, $\norm{\Xpibn}_F$, and various absolute constants defined explicitly below. These constants have at most polynomial dependence on these parameters and absolute constants.

\subsection{Proof of convergence for SS}\label{app:ss-proof}
Let us first prove \Cref{thm:ss-convergence}. 
First, we draw the reader's attention to some standard properties in optimization theory that allow us to prove global convergence. We then sketch out the proof in \Cref{sec:ss-proof-sketch} and flesh out the details in subsequent sections.

\subsubsection{Optimization properties}
It is profitable to keep in mind the general idea behind proving global convergence of SGD for a function $\mathcal L(\Theta)$, which has been exploited in \citet*{ahn2020sgd,zhou2017characterization,nguyen2021unified}. 
The following two properties of the optimization problem are critical in such approaches:

\begin{property}[Smoothness]\label{def:smoothness}
$G$-smoothness of $\mathcal L$, i.e., the gradients of $\mathcal L$ are $G$-Lipschitz. In particular, it implies the following two standard properties:
    \begin{enumerate}[label=(\roman*)]
        \item $\mathcal L(\Theta) \le \mathcal L(\Theta') + \ev{\grad_\Theta \mathcal L(\Theta'), \Theta-\Theta'} + \frac{G}{2}\norm{\Theta' - \Theta}^2$ for all $\Theta, \Theta'$ in the domain of $\mathcal L$. \label{prop:smoothness}
        \item The Hessian $\mH = \grad^2_\Theta \mathcal L(\Theta)$ satisfies $\norm{\mH}_2 \le G$ for all $\Theta$ in the domain of $\mathcal L$. \label{prop:hessian}
    \end{enumerate}
\end{property}

\begin{property}[P\L{} condition]\label{def:pl}
The loss function $\mathcal L$ satisfies the $\alpha$-Polyak-\L{}ojasiewicz condition, i.e., $\norm{\grad \mathcal L(\Theta)}^2 \ge 2\alpha(\mathcal L(\Theta) - \mathcal L^*)$ for all $\Theta$ in the domain of $\mathcal L$.
\end{property}

In our case, we can use global smoothness and strong convexity (which implies the P\L{} condition) of $\ellpi$ with respect to $\mM = \mW\mGamma$, but these global properties do not hold with respec to our optimization variables $\Theta = (\mW, \mGamma)$. Importantly, unlike the analyses of \citet*{ahn2020sgd,nguyen2021unified}, we cannot directly leverage the global smoothness and strong convexity as is, because we do not directly perform gradient updates on $\mM$. Instead, we effectively use a ``dynamic'' P\L{} condition which depends on $\mGamma$. The subtlety in the analysis is to show that such behavior can be controlled to ensure convergence in the end. 

Finally, a third property --- which is often exploited to prove convergence results for linear neural networks --- is the notion of an (approximate) invariance property satisfied by the layers of the neural network. Indeed, in the continuous time case, i.e., when we minimize $\ellpi(\Theta(t))$ with gradient flow $\dot\Theta(t) = -\grad_\Theta \ellpi(\Theta(t))$, such an invariance can be directly shown by the differential equations, see \citet*{wu2019global} for instance. To that end, define the following quantity 
\begin{equation}
\mD \triangleq \mI + \diag(\mW^\top \mW - \mGamma^2),
\label{eq:invariance_mat}
\end{equation}
which we refer to as the invariance matrix. For each iterate $\Theta_i^k$ of SS, the corresponding $\mD_i^k$ can also be naturally defined.
In gradient flow, $\mD(t)$ actually remains invariant with time $t \in [0, \infty)$. We quickly prove this property here, and later prove that an approximate version holds in the discrete and stochastic case, although the bounds are messier.  
\begin{fact}\label{fact:invariances}
In the gradient flow formulation, we have $\dv{}{t} \mD(t) = \vzero$. Moreover, in both the gradient flow and discrete time formulation, we have 
\begin{equation}
\diag(\mW^\top \grad_\mW \ellpi) = (\grad_{\mGamma} \ellpi) \mGamma. \label{eq:gradIdentity}
\end{equation}
\end{fact}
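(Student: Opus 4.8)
The plan is to establish the gradient identity \eqref{eq:gradIdentity} first, since the invariance $\dv{}{t}\mD = \vzero$ is a two-line consequence of it. The crucial structural observation is that $\ellpi$ depends on $(\mW,\mGamma)$ only through the product $\mM = \mW\mGamma$, so both gradients factor through $\grad_\mM \ellpi = -(\Ypi - \mM\Xpibn)\Xpibn^\top$. First I would apply the chain rule to $\mM = \mW\mGamma$, using that $\mGamma$ is diagonal and so carries only $d$ trainable entries. Writing $M_{ij} = W_{ij}\gamma_j$, entrywise differentiation yields
\begin{align*}
\grad_\mW \ellpi &= (\grad_\mM \ellpi)\mGamma, & \grad_\mGamma \ellpi &= \diag(\mW^\top \grad_\mM \ellpi),
\end{align*}
where the second expression reflects that only the diagonal entries of $\mGamma$ are free parameters.

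With these two formulas in hand, the identity \eqref{eq:gradIdentity} follows from one bookkeeping fact about the $\diag$ operator: for any $\mP \in \RR^{d \times d}$ and diagonal $\mGamma$, one has $\diag(\mP\mGamma) = \diag(\mP)\mGamma$, since both sides have $(j,j)$ entry $P_{jj}\gamma_j$. Applying this with $\mP = \mW^\top \grad_\mM \ellpi$ gives
\[
\diag(\mW^\top \grad_\mW \ellpi) = \diag\bigl(\mW^\top (\grad_\mM \ellpi)\mGamma\bigr) = \diag(\mW^\top \grad_\mM \ellpi)\,\mGamma = (\grad_\mGamma \ellpi)\mGamma,
\]
which is exactly \eqref{eq:gradIdentity}. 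This derivation uses nothing about the optimization dynamics, only the functional form of $\ellpi$, so it holds verbatim in both the continuous-time and discrete-time settings, as claimed.

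For the invariance, I would differentiate $\mD(t) = \mI + \diag(\mW^\top \mW - \mGamma^2)$ along the gradient flow $\dot\mW = -\grad_\mW \ellpi$, $\dot\mGamma = -\grad_\mGamma \ellpi$. Using $\diag(\mA^\top \mB) = \diag(\mB^\top \mA)$ we get $\diag(\dot\mW^\top \mW + \mW^\top \dot\mW) = 2\diag(\mW^\top \dot\mW)$, and since diagonal matrices commute, $\dv{}{t}\mGamma^2 = 2\mGamma\dot\mGamma$. Hence
\[
\dv{}{t}\mD = 2\diag(\mW^\top \dot\mW) - 2\mGamma\dot\mGamma = -2\diag(\mW^\top \grad_\mW \ellpi) + 2\mGamma\grad_\mGamma \ellpi.
\]
Substituting the identity $\diag(\mW^\top \grad_\mW \ellpi) = (\grad_\mGamma\ellpi)\mGamma = \mGamma\grad_\mGamma\ellpi$ (the last step again using that diagonal matrices commute) cancels the two terms, giving $\dv{}{t}\mD = \vzero$.

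The computation is elementary; the only real obstacle is careful bookkeeping with the $\diag$ operator and the diagonal structure of $\mGamma$---in particular, correctly computing $\grad_\mGamma\ellpi$ given that only the diagonal of $\mGamma$ is trainable, and keeping track of which quantities are genuinely diagonal. Everything else reduces to the two commutation facts $\diag(\mP\mGamma) = \diag(\mP)\mGamma$ and $\diag(\mA^\top \mB) = \diag(\mB^\top\mA)$.
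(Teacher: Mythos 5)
Your proposal is correct and follows essentially the same route as the paper's proof: you write $\grad_\mW \ellpi = (\grad_\mM \ellpi)\mGamma$ and $\grad_\mGamma \ellpi = \diag(\mW^\top \grad_\mM \ellpi)$, which are exactly the paper's gradient formulas \eqref{eq:gradW}--\eqref{eq:gradG} with $\grad_\mM\ellpi = -(\Ypi - \Yhatpi)\Xpibn^\top$ factored out, and then derive \eqref{eq:gradIdentity} and $\dv{}{t}\mD = \vzero$ by the same direct calculation along the flow. The only difference is that you make explicit the bookkeeping the paper compresses into ``a direct calculation,'' namely the facts $\diag(\mP\mGamma) = \diag(\mP)\mGamma$ and $\diag(\mA^\top\mB) = \diag(\mB^\top\mA)$, which is a fine (slightly more detailed) rendering of the same argument.
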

\begin{proof}
For the proof, we write out the (full) gradients of $\ellpi$ with respect to $\mW$ and $\mGamma$ for reference:
\begin{align}
    \grad_\mW \ellpi &= -(\Ypi - \Yhatpi)\Xpibn^\top \mGamma, \label{eq:gradW}\\
    \grad_{\mGamma} \ellpi &= -\diag(\mW^\top(\Ypi-\Yhatpi)\Xpibn^\top). \label{eq:gradG}
\end{align}

A direct calculation shows that $\diag(\mW^\top \grad_\mW \ellpi) = (\grad_{\mGamma} \ellpi) \mGamma$. Due to the gradient flow formulation $\dot\Theta(t) = -\grad_\Theta \ellpi(\Theta(t))$ we have $\dv{}{t} \mW(t) = -\grad_\mW \ellpi$ and $\dv{}{t} \mGamma(t) = -\grad_{\mGamma} \ellpi$, so it follows from \Cref{eq:gradIdentity} that $\dv{}{t} \mD(t) = 0$. 
\end{proof}

We now formally state the smoothness and P\L{} guarantees for our setup.

\begin{lemma}[Smoothness with respect to $\mM$]\label{lemma:smoothness-M}
The SS risk $\ellpi$ is $\Gpi$--smooth with respect to $\mM = \mW\mGamma$, where 
\[\Gpi \triangleq \norm{\Xpibn}_2^2.\]
\end{lemma}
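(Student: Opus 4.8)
The plan is to exploit that $\ellpi$ is a convex quadratic in $\mM = \mW\mGamma$, so that smoothness reduces to a one-line Lipschitz bound on the gradient. First I would invoke the gradient formula already recorded in the additional notation, namely $\grad_{\mM}\ellpi(\mM) = -(\Ypi - \mM\Xpibn)\Xpibn^\top$, and observe that it is affine in $\mM$. Subtracting the gradients at two arbitrary points $\mM_1, \mM_2$ telescopes the $\Ypi$ term away and leaves the clean identity $\grad_{\mM}\ellpi(\mM_1) - \grad_{\mM}\ellpi(\mM_2) = (\mM_1 - \mM_2)\,\Xpibn\Xpibn^\top$.

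Next I would bound this difference in Frobenius norm. Applying the mixed submultiplicative inequality $\norm{\mA\mB}_F \le \norm{\mA}_F\norm{\mB}_2$ with $\mA = \mM_1 - \mM_2$ and $\mB = \Xpibn\Xpibn^\top$, together with the identity $\norm{\Xpibn\Xpibn^\top}_2 = \norm{\Xpibn}_2^2$, yields $\norm{\grad_{\mM}\ellpi(\mM_1) - \grad_{\mM}\ellpi(\mM_2)}_F \le \norm{\Xpibn}_2^2\,\norm{\mM_1 - \mM_2}_F$. This is precisely $\Gpi$-Lipschitzness of the gradient with $\Gpi = \norm{\Xpibn}_2^2$, i.e. $\Gpi$-smoothness in the sense of \Cref{def:smoothness}. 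Equivalently, through the Hessian characterization in \Cref{def:smoothness}, since $\ellpi$ is quadratic in $\mM$ its Hessian (viewed as a map on $\vec(\mM)$) is the constant operator $\mI_p \otimes (\Xpibn\Xpibn^\top)$, whose spectral norm is again $\norm{\Xpibn\Xpibn^\top}_2 = \norm{\Xpibn}_2^2$.

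There is essentially no hard step here; the computation is routine, and none of the invariance or noise-control machinery used elsewhere in the SS convergence proof is required. The only points needing minor care are (i) correctly identifying $\norm{\Xpibn\Xpibn^\top}_2 = \norm{\Xpibn}_2^2$, which is immediate from the singular value decomposition of $\Xpibn$, and (ii) being consistent about measuring the parameter $\mM$ in the Frobenius norm while measuring $\Xpibn$ in the spectral norm, so that the mixed inequality $\norm{\mA\mB}_F \le \norm{\mA}_F\norm{\mB}_2$ is the correct tool to invoke.
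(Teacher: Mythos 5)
Your proposal is correct and matches the paper's own proof: the paper likewise subtracts the affine gradients so the $\Ypi$ term cancels, leaving $(\mM - \mM')\Xpibn\Xpibn^\top$, and bounds this via submultiplicativity and $\norm{\Xpibn\Xpibn^\top}_2 = \norm{\Xpibn}_2^2$ (working first in spectral norm and noting the identical Frobenius-norm bound, exactly as you do). Your added Hessian remark $\mI_p \otimes (\Xpibn\Xpibn^\top)$ is consistent with the paper's strong-convexity computation in the companion lemma and is a fine equivalent check.
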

\begin{proof}
We directly check the Lipschitz gradient condition. Indeed, we have
\begin{align*}
    &\,\norm{\grad_\mM \ellpi(\mM) - \grad_\mM \ellpi(\mM')}_2  \\
    &\, =\norm{(\Ypi - \mM \Xpibn)\Xpibn^\top - (\Ypi - \mM'\Xpibn)\Xpibn^\top}_2 \\
    &\, =\norm{(\mM - \mM')\Xpibn\Xpibn^\top}_2 \le \norm{\Xpibn}_2^2 \norm{\mM - \mM'}_2,
\end{align*}

Note that the same inequality holds (with the same value of $\Gpi$) if we instead used the Frobenius norm, due to the fact that $\norm{\mA\mB}_F \le \norm{\mB}_2\norm{\mA}_F$ in the last line.
\end{proof} 
\begin{lemma}[Strong convexity with respect to $\mM$]\label{lemma:sc-M}
Under \ref{assumption:full-rank-ss}, SS risk $\ellpi$ is $\alphapi$--strongly convex with respect to $\mM = \mW\mGamma$, where 
\[\alphapi \triangleq \sigma_{\min}(\Xpibn \Xpibn^\top).\]
Hence, $\ellpi$ is also $\alphapi$--P\L{} with respect to $\mM$.
\end{lemma}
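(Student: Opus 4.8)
The plan is to verify the strong convexity directly from the definition by computing the Hessian of $\ellpi$ as a function of $\mM$. Since $\ellpi(\mM) = \norm{\Ypi - \mM\Xpibn}_F^2$ is a quadratic in the entries of $\mM$, its Hessian is constant and strong convexity amounts to showing a uniform lower bound on this Hessian. The cleanest route is to expand the second-order behavior: for any perturbation direction $\mDelta$, the Bregman-type remainder is
\[
\ellpi(\mM + \mDelta) - \ellpi(\mM) - \ev{\grad_\mM \ellpi(\mM), \mDelta}_F = \norm{\mDelta \Xpibn}_F^2.
\]
Thus it suffices to show $\norm{\mDelta \Xpibn}_F^2 \ge \alphapi \norm{\mDelta}_F^2$ for all $\mDelta$, with $\alphapi = \sigma_{\min}(\Xpibn\Xpibn^\top)$.

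First I would establish the identity above by expanding the squared Frobenius norm and using $\grad_\mM \ellpi(\mM) = -(\Ypi - \mM\Xpibn)\Xpibn^\top$ from the additional notation; the linear terms cancel and only the purely quadratic term $\norm{\mDelta\Xpibn}_F^2$ survives. Next I would reduce $\norm{\mDelta\Xpibn}_F^2$ to a quantity controlled by $\sigma_{\min}(\Xpibn\Xpibn^\top)$. Writing $\norm{\mDelta\Xpibn}_F^2 = \sum_i \norm{\mDelta_{i,:}\Xpibn}^2 = \sum_i \mDelta_{i,:}\Xpibn\Xpibn^\top \mDelta_{i,:}^\top$, each summand is a quadratic form in the $i$th row of $\mDelta$ against the symmetric PSD matrix $\Xpibn\Xpibn^\top$, so each is bounded below by $\sigma_{\min}(\Xpibn\Xpibn^\top)\norm{\mDelta_{i,:}}^2$. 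Summing over rows gives $\norm{\mDelta\Xpibn}_F^2 \ge \sigma_{\min}(\Xpibn\Xpibn^\top)\norm{\mDelta}_F^2 = \alphapi\norm{\mDelta}_F^2$, which is exactly the $\alphapi$-strong convexity condition.

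The only place where an assumption is genuinely needed is to guarantee $\alphapi > 0$, and this is precisely where I would invoke \ref{assumption:full-rank-ss}: since $\Xpibn \in \RR^{d\times n}$ has $\rank(\Xpibn) \ge d$, the $d \times d$ Gram matrix $\Xpibn\Xpibn^\top$ is positive definite, so $\sigma_{\min}(\Xpibn\Xpibn^\top) > 0$ as claimed in the assumption statement. Without full rank the bound still holds but only with a possibly zero constant, yielding mere convexity rather than strong convexity. Finally, the P\L{} conclusion is immediate and standard: $\alpha$-strong convexity implies the $\alpha$-P\L{} inequality $\norm{\grad\ellpi}^2 \ge 2\alpha(\ellpi - \ellpi^*)$, so no separate argument is required.

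I do not anticipate a serious obstacle here, since the loss is an exact quadratic in $\mM$; the main care is bookkeeping the row-wise decomposition of the Frobenius norm correctly and confirming that strong convexity with respect to $\mM$ (rather than the true optimization variables $\Theta = (\mW,\mGamma)$) is indeed what the lemma claims. As the surrounding text emphasizes, this $\mM$-level convexity does not transfer directly to $\Theta$, so I would be careful to state the result only at the level of $\mM$, matching the lemma.
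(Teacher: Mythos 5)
Your proof is correct and is essentially the paper's argument: the paper computes the constant Hessian $\grad^2_{\vec(\mM)}\ellpi(\mM) = \Xpibn\Xpibn^\top \otimes \mI$ and lower-bounds it by $\sigma_{\min}(\Xpibn\Xpibn^\top)$, which is exactly your row-wise inequality $\norm{\Delta\Xpibn}_F^2 = \sum_i \Delta_{i,:}\Xpibn\Xpibn^\top\Delta_{i,:}^\top \ge \alphapi\norm{\Delta}_F^2$ written in Kronecker form, with \ref{assumption:full-rank-ss} invoked at the same point to ensure $\alphapi > 0$, and the P\L{} conclusion following by the same standard implication. One cosmetic caveat: with the paper's convention $\grad_\mM\ellpi(\mM) = -(\Ypi-\mM\Xpibn)\Xpibn^\top$ (which drops the usual factor of $2$) the linear terms in your expansion do not cancel exactly, but since the paper's Hessian omits the same factor, your computation if anything establishes strong convexity with a constant twice as large, so the claimed $\alphapi$ is unaffected.
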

\begin{proof}
Take the Hessian of $\ellpi(\mM)$ with respect to the vectorized version $\vec(\mM)$ of $\mM$ to obtain $\grad_{\vec(\mM)}^2 \ellpi(\mM) = \Xpibn \Xpibn^\top \otimes \mI_p$, where $\otimes$ denotes the Kronecker product. Then evidently $\grad_{\vec(\mM)}^2 \ellpi(\mM) \succeq \sigma_{\min}(\Xpibn\Xpibn^\top)\mI_p$. Owing to \ref{assumption:full-rank-ss}, this proves the claim.
\end{proof} 
\subsubsection{Proof sketch of convergence}\label{sec:ss-proof-sketch}

\begin{proof}[Proof sketch of \Cref{thm:ss-convergence}]
The high level idea is this: we want to prove that $\ellpi(\mM_0^k) \to \ellpi^*$ as $k \to \infty$. However, we will instead show the much stronger statement that $\ellpi(\mM_i^k) \to \ellpi^*$ for all $i \in [m]$. Our high level approach is heavily inspired by the proof strategies in \citet{wu2019global,ahn2020sgd}. Indeed, many of the technical lemmas in \cref{sec:norm-bounds} are analogous to ones proved in \citet{wu2019global}, and the motivation for unrolling shuffling mini-batch updates to an epoch update with additional noise comes from \citet{ahn2020sgd}.

As a necessary ingredient of the proof, we will demonstrate that for sufficiently small chosen $\eta_k$, we have an update equation that roughly looks like (modulo constants and noise terms) 
\begin{equation}\label{eq:one-step-iterates}
     \ellpi(\mM_i^{k+1}) - \ellpi^* \lesssim (1-\eta_k)(\ellpi(\mM_i^{k}) - \ellpi^*) + O(\eta_k^2) \quad\quad \text{ for all } 0 \le i \le m-1.
\end{equation} 
\begin{remark}
Note that it is \emph{not} necessarily the case that
\[
 \ellpi(\mM_{i+1}^k) - \ellpi^* \lesssim (1-\eta_k)(\ellpi(\mM_i^{k}) - \ellpi^*) + O(\eta_k^2)
\]
That is, the SS excess risk $\ellpi$ does not necessarily ``decrease'' from one iterate to the next; however, we can instead guarantee that the per-epoch progress bound (\Cref{eq:one-step-iterates}) holds for any fixed iteration index $i \in [m]$ \emph{after every epoch}.
\end{remark}

We impose an ordering relation on pairs $(a,b)$ in the natural way: we say $(a, b) \le (i, k)$ if $k = b$ and $a \le i$, or if $b < k$. This is just tracking whether the iteration index $(a,b)$ (the $a$th iterate of the $b$th epoch) is seen before the iterate $(i, k)$. 
To complete the induction on an iterate $(i, k+1)$ we need three inductive hypotheses $L[a, b]$, $D[a, b]$, and $R[a, b]$ to hold for all $(a, b) < (i,k+1)$. We define them formally below.
\begin{hypothesis}[Loss stays bounded by an absolute constant]\label{hyp:loss-bounded}
For all $a, b$ satisfying $0 \le a \le m-1$ and $b \ge 1$, the inductive property $L[a,b]$ states $\ellpi(\Theta_{a}^{b}) \le C_{L}$, for some appropriately chosen absolute constant $C_L$. 

In particular, we can set $C_L \triangleq \max\qty{\ellpi(\Theta_t^1): 0 \le t \le m-1}$. Since we only look at the loss values for the first epoch, $C_L$ is indeed an absolute constant depending on $\pi$. 
\end{hypothesis}
\begin{hypothesis}[Loss satisfies one-epoch inequality]\label{hyp:risk-update}
For all $a, b$ satisfying $0 \le a \le m-1$ and $b > 1$, the inductive property $R[a,b]$ states that 
\[
\ellpi(\mM_a^{b}) - \ellpi^* \le \qty(1 -\frac{\alphapi \eta_k}{2}) (\ellpi(\mM_a^{b-1}) - \ellpi^*) + O(\eta_k^2),
\]
where the constant hidden in the $O(\eta_k^2)$ does not depend on $k$.
\end{hypothesis}
\begin{hypothesis}[Approximate invariances hold]\label{hyp:approx-invariances}
For all $a, b$ satisfying $0 \le a \le m-1$ and $b \ge 1$, the inductive property $D[a,b]$ states that 
\[
\norm{\mD_a^b}_2 \le 
\begin{cases}
C_D \sum_{t=1}^{b-1} \eta_t^2 \le \frac{1}{2} & \text{ if $a = 0$,}\\
C_D \sum_{t=1}^{b} \eta_t^2 \le \frac{1}{2} & \text{ otherwise,}
\end{cases}
\]
where $C_D$ is an appropriately chosen absolute constant which does not depend on $a$ or $b$.
\end{hypothesis}
Since the first iterate of the $k$th epoch $\Theta_0^k$ is the same as the last iterate of the $(k-1)$th epoch $\Theta_m^k$, the same convention applies to inductive hypotheses; for example, by $L[m,k-1]$ we mean $L[0,k]$.

In particular, the inductive hypotheses imply the following claims.
\begin{enumerate}[label=(\roman*)]
    \item By \cref{cor:inductive-weight-bound}, $L[a, b]$ implies that $\norm{\bm{M}_{a}^{b}}_2 \le \frac{C_{L}^{1/2} + \norm{\Ypi}_F}{\sigma_{\min}(\Xpibn^\top)} \triangleq \xi$.  
    \item Also by \cref{cor:inductive-weight-bound}, $D[a, b]$ and $L[a, b]$ together imply that we have $\norm{\mW_{a}^{b}}_2^2 \le d^2(\frac{1}{2} + \xi)$ and $\norm{\mGamma_{a}^{b}}_2^2 \le \frac{3}{2} + d^2(\frac{1}{2}+\xi)$. For the sake of notational convenience we will write $\Cweight \triangleq \sqrt{\frac{3}{2} + d^2(\frac{1}{2}+\xi)}$, so that 
    $\max\qty{\norm{\mW_{a}^{b}}_2, \norm{\mGamma_{a}^{b}}_2} \le \Cweight$.
    \item By \cref{cor:inductive-pl-bound}, $D[a, b]$ implies that $\sigma_{\min}(\mGamma_{a}^{b})^2 \ge 1/2$. 
    \item By \cref{prop:inductive-one-epoch-update}, if $R[a, b]$ holds for all $(a,b)$, then for appropriately chosen $\eta_k$, the risk $\ellpi(\mM_a^b)$ converges to $\ellpi^*$ at a sublinear rate.
\end{enumerate}

We will explain at a high level how these statements together allow us to conclude that $L[i, k+1]$, $D[i, k+1]$, and $R[i, k+1]$ hold. 
The idea, as in \citet*{ahn2020sgd}, is to accumulate the gradient updates in each epoch and isolate the signal and noise components of each gradient update. For clarity of exposition, we assume for now that $i = 0$. Here are a couple subtleties which we spell out explicitly, including how to generalize to $i > 0$. 
\begin{itemize}
    \item We are not directly performing gradient updates on $\mM$; we instead perform gradient updates on $\mW$ and $\mGamma$. Nevertheless, the \emph{effective} gradient signal for $\mM$ can still be extracted, and we term the remaining noise the \emph{mismatched gradient noise}. For every iterate $(j, k)$, this will formally be denoted by $\vq_j^k$. 
    \item We are not taking a full batch gradient step from $\mM_{0}^k$ to $\mM_0^{k+1}$. Rather, we are taking mini-batch updates which induce path dependency. Nevertheless, as previous works have shown, even at iterate $(j, k)$, we can still extract the full-batch gradient signal evaluated at $\mM_0^k$, and we term the remaining noise the \emph{path dependent noise}. For every iterate $(j, k)$, this will formally be denoted by $\ve_j^k$.
    \item If $i > 0$, then the stepsize changes from $\eta_k$ to $\eta_{k+1}$ in the middle of our pass through the entire dataset. Nevertheless, it's not hard to see that this noise should be relatively small, of order $\eta_{k+1} - \eta_k$ --- which is $O(\eta_k^2)$, as $\eta_k = \Omega(1/k)$. We will call this the \emph{stepsize noise}, the accumulation of which for an epoch update starting from iterate $(i,k)$ to $(i,k+1)$ will be denoted by $\vs_{(i,k+1)}^{(i,k)}$.
\end{itemize}

We can accumulate these noise terms across the update across epoch $k$ to form a composite noise term $\vr^k$. The \emph{full-batch update signal for $\mM$} starting from $\mM_0^k$ will be denoted by $\vgt^k$. We emphasize that $\vgt^k \neq \grad_{\mM} \ellpi(\mM_0^k)$ because we only perform direct gradient updates on the component layers $\mW$ and $\mGamma$. Then as we will show in \cref{sec:epoch-updates}, we can write  
\begin{align}
\mM_0^{k+1} &=  \mM_0^k - \eta_k\vgt^k + \eta_k^2\vr^k.\label{eq:sgd-M-initial}
\end{align}

Next, as seen in \Cref{lemma:smoothness-M}, $\ellpi$ is globally $G_{\pi}$--smooth with respect to $\mM$ for some absolute constant $G_{\pi}$ which \emph{depends} on $\pi$. Thus, using the smoothness inequality as in \Cref{def:smoothness}, we obtain 
\[
\ellpi(\mM_0^{k+1}) - \ellpi(\mM_0^k) \le \ev{\grad_\mM \ellpi(\mM_0^k), \mM_0^{k+1} - \mM_0^k}_F + \frac{\Gpi}{2}\norm{\mM_0^{k+1} - \mM_0^k}_F^2.
\]

The main idea is that we have the following inequality (proved in \cref{lemma:correlation}) that shows that even though $\vgt^k \neq \grad_{\mM} \ellpi(\mM_0^k)$, it is nonetheless correlated to the ``correct'' gradient update $\grad_{\mM} \ellpi(\mM_0^k)$: 
\[
\ev{\grad_{\mM} \ellpi(\mM_0^k), \vgt^k}_F \ge \sigma_{\min}(\mGamma_0^k)^2 \norm{\grad_{\mM} \ellpi(\mM_0^k)}_F^2 \ge \frac{1}{2}\norm{\grad_{\mM} \ellpi(\mM_0^k)}_F^2,
\]
due to the inductive hypothesis $D[0, k]$.

For the stated stepsizes $\eta_k$, one can then plug in the gradient update \Cref{eq:sgd-M-initial} and massage the inequalities a bit to obtain that 
\begin{equation}
    \ellpi(\mM_0^{k+1}) - \ellpi(\mM_0^k) \le -\frac{\eta_k}{4} \norm{\grad_\mM \ellpi(\mM_0^k)}_F^2 + O(\eta_k^2),
\end{equation}
where the constant hidden by the big--$O$ notation is $\poly(m, \Cweight, C_L, \norm{\Xpibn}_F)$.

We now use $\alphapi$-strong convexity of $\ellpi$ with respect to $\mM$ (and hence $\alphapi$-P\L{}) shown in \Cref{lemma:sc-M} to obtain 
\begin{equation}\label{eq:one-step-sgd}
\ellpi(\mM_0^{k+1}) - \ellpi^* \le \qty(1 - \frac{\alphapi \eta_k}{2})(\ellpi(\mM_0^k) - \ellpi^*) + O(\eta_k^2).
\end{equation}
Note that this is precisely the statement of $R[0, k+1]$. 

Provided that we can appropriately bound the noise terms $\vr^k$ to get the asserted $O(\eta_k^2)$ term above, this will imply $R[0, k+1]$. For sufficiently small stepsizes $\eta_k$, we can also use \Cref{eq:one-step-sgd} to prove $L[0, k+1]$.

On the other hand, to prove $D[0, k+1]$, we can directly bound the update $\norm{\mD_0^{k+1} - \mD_{m-1}^k}_2 \le O(\eta_k^2)$ and combine this with the inductive hypothesis $D[m-1,k]$ using the triangle inequality. If the stepsize $\eta_k = O(1/k^\beta)$ for $1/2 < \beta < 1$, then $\sum_{k \ge 1} \eta_k^2 < \infty$, so the absolute constant $C_D$ can be picked such that $\norm{\mD_0^{k+1}}_2 \le \frac{1}{2}$. 

Hence, $R[0,k]$, as stated in \Cref{eq:one-step-sgd}, holds for all $k$ by induction. We can thus unroll the inequality and conclude that $\ellpi(\mM_0^k)$ converges to $\ellpi^*$ under the stated stepsize assumptions, as desired.
\end{proof}

We now outline the structure of the proceeding sections, which fill in the details of the above proof sketch. In \cref{sec:epoch-updates}, we explicitly write out the accumulation of gradient updates across an entire epoch, decomposing into signal and noise components. In \cref{sec:norm-bounds}, we prove some technical lemmas controlling the singular values and norms of various weight matrices and gradients via the approximate invariance matrix $\mD$ and the inductive hypotheses. In \cref{sec:noise} we leverage the norm bounds developed in \cref{sec:norm-bounds} to demonstrate that the accumulated noise terms defined in \cref{sec:epoch-updates} are negligible. Using these results, we are able to establish the $R[i, k+1]$ and $L[i, k+1]$ in \cref{sec:risk-update}. We then turn to bounding the approximate invariances to establish $D[i, k+1]$ in \cref{sec:invariances}. The stray details of the induction are spelled out in \cref{sec:ss-induction}. 
\subsubsection{Rewriting SS epoch gradient updates}\label{sec:epoch-updates}
To show that $L[0, k+1]$ holds, we need to accumulate gradients from $\mM_{0}^{k}$ to $\mM_{0}^{k+1}$. 

First, we look at a single iterate update. For every $j< m$ we have 
\begin{align}
    \mM_{j+1}^k &= (\mW_{j}^k - \eta_k \grad_{\mW} \loss(\Xpi^{j+1}; \Theta_{j}^k))(\mGamma_{j}^k - \eta_k \grad_{\mGamma} \loss(\Xpi^{j+1}; \Theta_{j}^k)) \\ 
    &= \mM_j^k - \eta_k\vg_j^{k} + \eta_k^2\vq_j^k, \label{eq:one-iterate-update}
\end{align}
where we have defined 
\begin{equation}\label{def:pseudo-gradient-signal}
    \vg_j^{k} \triangleq \grad_{\mW} \loss(\Xpi^{j+1}; \Theta_j^k) \mGamma_j^k + \mW_j^k \grad_{\mGamma} \loss(\Xpi^{j+1}; \Theta_j^k),
\end{equation}
which is the gradient of the $(j+1)$th batch of $\Xpibn$ evaluated on the $j$th iterate on epoch $k$, 
and 
\begin{equation}\label{def:mismatched-noise}
    \vq_j^k \triangleq \grad_{\mW} \loss(\Xpi^{j+1}; \Theta_j^k)\grad_{\mGamma} \loss(\Xpi^{j+1}; \Theta_j^k),
\end{equation}
which is the mismatched gradient noise term associated with the fact that we performed gradient updates on $\mW$ and $\mGamma$ rather than $\mM$ directly.

The key observation here is that 
\begin{align*}
\vg_j^k &= \grad_{\mM} \loss(\Xpi^{j+1}; \mM_j^k) (\mGamma_j^k)^2 + \mW_j^k \diag((\mW_j^k)^\top\grad_{\mM} \loss(\Xpi^{j+1}; \mM_j^k)).
\end{align*}
In other words, $\vg_j^k$ is correlated to the ``true'' mini-batch gradient $\grad_{\mM} \loss(\Xpi^{j+1}; \mM_j^k)$ with respect to $\mM$ through the ``interaction terms'' $\mGamma_j^k$ and $\mW_j^k$. 

We show in \cref{lemma:pseudo-M-update} that we can control the size of the noise terms $\vq_j^k$ which arise from the fact that we are not truly taking gradient updates with respect to $\mM$. More specifically, \cref{lemma:pseudo-M-update} implies that $\norm{\vq_j^k}_F = O(1)$.

Next, we actually accumulate gradients. 
The main obstacle we have to deal with is that the mini-batch updates prevent the gradient accumulation from being exactly equal to the full-batch update starting at $\mM_0^k$. Inspired by the approach in \citet[Theorem~1]{ahn2020sgd}, we separate out the gradient update $\vg_j^k$ into a signal term $\vgt_j^k$ and noise term $\ve_j^k$.
Specifically, we write
\begin{equation}\label{eq:signal-noise-update}
    \mM_{j+1}^{k} = \mM_{j}^k - \eta_k\vgt_j^k + \eta_k^2\ve_j^k + \eta_k^2\vq_j^k,
\end{equation}
where 
\begin{equation}\label{def:true-signal}
    \vgt_j^k \triangleq \grad_{\mW} \loss(\Xpi^{j+1}; \Theta_0^k) \mGamma_0^k + \mW_0^k \grad_{\mGamma} \loss(\Xpi^{j+1}; \Theta_0^k),
\end{equation}
is the signal of the gradient update of the $(j+1)$th batch evaluated with parameter values $\Theta_0^k$ (instead of $\Theta_j^k$) and 
\begin{equation}\label{def:noise-term}
    \ve_j^k \triangleq \frac{\vgt_j^k - \vg_j^k}{\eta_k}.
\end{equation}
In particular, in \cref{lemma:noise-bound} below we show that $\norm{\ve_j^k}_F = O(1)$, so that indeed the noise term is negligible with respect to the true gradient signal. 

Taking this as given for now, when we accumulate the gradient updates across epoch $k$, we see that we can define
\begin{equation}\label{def:true-epoch-signal}
\vgt^k \triangleq \sum_{j=0}^{m-1} \vgt_j^k = \grad_{\mW} \ellpi(\Theta_0^k) \mGamma_0^k + \mW_0^k \grad_{\mGamma} \ellpi(\Theta_0^k),
\end{equation}
so that the accumulation reads 
\begin{align}
\mM_0^{k+1} &= \mM_0^k - \eta_k\vgt^k  + \eta_k^2  \sum_{j=0}^{m-1} (\ve_j^k + \vq_j^k)  \label{eq:epoch-update}\\
&=  \mM_0^k - \eta_k\vgt^k + \eta_k^2\vr^k,\label{eq:sgd-M}
\end{align}
where we have additionally defined the composite noise term:
\begin{equation}
    \vr^k \triangleq \sum_{j=0}^{m-1} (\ve_j^k + \vq_j^k),\label{def:composite-noise-term}
\end{equation}
Note that if we instead start from $i>0$, then the composite noise term $\vr^k$ will have an additional noise term $\vs_{(i,k+1)}^{(i,k)}$, which we will address in \cref{sec:noise}. In particular, we show there that the norm of $\vs_{(i,k+1)}^{(i,k)}$ is $O(1)$. Combining this with \cref{lemma:pseudo-M-update,lemma:noise-bound}, we can conclude that $\norm{\vr^k}_F = O(1)$. 

\subsubsection{Norm and singular value bounds based on approximate invariances}\label{sec:norm-bounds}
In this section, we prove several helper lemmas which help us bound noise terms in \cref{sec:noise} and the approximate invariances in \cref{sec:invariances}. 

\paragraph{Upper bounds on the norms of $\mW$ and $\mGamma$.}
Much of \citet*{wu2019global} is dedicated towards showing that the approximate invariances control the weight norms. The trouble with directly extending their strategy lies in the fact that in our setting the invariance $\mD$ is diagonal, which complicates the process of bounding various matrix norms. We first state the following technical lemma which involves the operator norm of Hadamard products.
\begin{lemma}[3.1f in \citet{johnson1990matrix}]\label{lemma:hadamard}
Let $\mA, \mB \in \RR^{d \times d}$ be matrices such that $\mA$ is positive definite. Then $\norm{\mA \odot \mB}_2 \le \norm{\mA}_2 \norm{\mB}_2$ 
\end{lemma}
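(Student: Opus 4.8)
The plan is to exploit the positive-definiteness of $\mA$ through a Gram factorization and to prove the slightly stronger bound $\norm{\mA \odot \mB}_2 \le (\max_i A_{ii})\norm{\mB}_2$; the stated inequality then follows immediately, since each diagonal entry of a positive semidefinite matrix is a Rayleigh quotient and hence $\max_i A_{ii} \le \norm{\mA}_2$. To set up the factorization, I would write $\mA = \mU^\top \mU$ (via the Cholesky or symmetric square-root factorization) and let $\vu_1, \dots, \vu_d$ denote the columns of $\mU$, so that $A_{ij} = \vu_i^\top \vu_j = \sum_{r=1}^d U_{ri} U_{rj}$.

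Next I would estimate the bilinear form $\vx^\top (\mA \odot \mB)\vy$ over unit vectors $\vx, \vy \in \RR^d$. Substituting the Gram representation of $A_{ij}$ and regrouping the sum over the factorization index $r$ rewrites this as $\sum_{r=1}^d (\vp^{(r)})^\top \mB\, \vq^{(r)}$, where $p^{(r)}_i \triangleq x_i U_{ri}$ and $q^{(r)}_j \triangleq y_j U_{rj}$. I would then bound each summand by the operator norm, $(\vp^{(r)})^\top \mB\, \vq^{(r)} \le \norm{\mB}_2 \norm{\vp^{(r)}}\norm{\vq^{(r)}}$, and apply Cauchy--Schwarz over $r$ to obtain
\[
\vx^\top (\mA \odot \mB)\vy \le \norm{\mB}_2 \Big(\sum_{r} \norm{\vp^{(r)}}^2\Big)^{1/2}\Big(\sum_{r} \norm{\vq^{(r)}}^2\Big)^{1/2}.
\]
The crucial simplification is that the weighted norms collapse to diagonal terms: $\sum_{r}\norm{\vp^{(r)}}^2 = \sum_i x_i^2 \sum_r U_{ri}^2 = \sum_i x_i^2 A_{ii} \le (\max_i A_{ii})\norm{\vx}^2$, and symmetrically for $\vq^{(r)}$. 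Taking the supremum over unit $\vx, \vy$ then yields $\norm{\mA \odot \mB}_2 \le (\max_i A_{ii})\norm{\mB}_2$, as desired.

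The main point requiring care is the two-layered application of Cauchy--Schwarz: one must first factor out $\norm{\mB}_2$ from each rank-one contribution indexed by $r$ and only afterward sum over $r$, so that the squared norms telescope exactly into the diagonal-weighted quantities $\sum_i x_i^2 A_{ii}$. Everything else --- the existence of the factorization $\mA = \mU^\top \mU$ and the bound $\max_i A_{ii} \le \norm{\mA}_2$ for positive (semi)definite matrices --- is routine. Alternatively, since the lemma is attributed to \citet{johnson1990matrix}, one could simply invoke it directly; the above is how I would reconstruct the short proof.
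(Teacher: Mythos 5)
Your proof is correct, and it is worth noting that the paper itself does not prove this lemma at all: it simply imports it as statement 3.1f from \citet{johnson1990matrix}. What you have reconstructed is the classical Schur argument, and it buys strictly more than the cited statement in two ways. First, you obtain the sharper bound $\norm{\mA \odot \mB}_2 \le (\max_i A_{ii})\norm{\mB}_2$, from which the paper's version follows via $A_{ii} = \ve_i^\top \mA \ve_i \le \norm{\mA}_2$; the paper only ever uses the weaker form (e.g.\ with $\mA = \mI$, where your bound is in fact tight). Second, your argument needs only positive \emph{semi}definiteness of $\mA$, since the Gram factorization $\mA = \mU^\top \mU$ exists for any PSD matrix --- the strict definiteness in the hypothesis is superfluous. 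The mechanics check out: writing $\vx^\top(\mA\odot\mB)\vy = \sum_r (\vp^{(r)})^\top \mB\, \vq^{(r)}$ with $p^{(r)}_i = x_i U_{ri}$, $q^{(r)}_j = y_j U_{rj}$ is a valid regrouping, the per-term bound and the Cauchy--Schwarz over $r$ are applied in the right order, and the collapse $\sum_r \norm{\vp^{(r)}}^2 = \sum_i x_i^2 A_{ii}$ is exactly right; since the operator norm of a (possibly nonsymmetric) real matrix is the supremum of the bilinear form over unit vectors, taking that supremum finishes the proof. One cosmetic point: you should bound $\abs{(\vp^{(r)})^\top \mB\, \vq^{(r)}}$ with absolute values so the chain of inequalities is airtight for all sign patterns, though the supremum over $\pm\vx$ makes this harmless. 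In short: where the paper delegates to the literature, you give a short, self-contained, and slightly stronger proof.
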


We leverage \cref{lemma:hadamard} to prove the following useful helper lemma that relates bounds on $\norm{\mI \odot \mW^\top \mW}_2$ to $\norm{\mW}_2$. 
\begin{lemma}\label{lemma:equiv-norms}
Suppose $\norm{\mI \odot \mW^\top \mW}_2 \le \beta$, where $\mW \in \RR^{p \times d}$. Then $\norm{\mW}_2 \le \sqrt{d\beta}$. Conversely, if $\norm{\mW}_2 \le \beta$, then $\norm{\mI \odot \mW^\top \mW}_2 \le \beta^2$. 
\end{lemma}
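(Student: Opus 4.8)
The plan is to exploit the fact that Hadamard-multiplying by the identity simply extracts the diagonal, so that $\mI \odot \mW^\top \mW = \diag(\mW^\top \mW)$ is the diagonal matrix whose $j$th entry is $\norm{\mW_{:,j}}^2$, the squared norm of the $j$th column of $\mW$. Since this matrix is diagonal with nonnegative entries, its spectral norm equals $\max_{j} \norm{\mW_{:,j}}^2$. Both directions then reduce to relating the column norms of $\mW$ to its spectral norm.

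For the forward implication, I would bound the spectral norm by the Frobenius norm, using $\norm{\mW}_2 \le \norm{\mW}_F$, and then expand $\norm{\mW}_F^2 = \sum_{j=1}^d \norm{\mW_{:,j}}^2$. The hypothesis $\norm{\mI \odot \mW^\top \mW}_2 \le \beta$ gives $\norm{\mW_{:,j}}^2 \le \beta$ for each $j$, so summing over the $d$ columns yields $\norm{\mW}_F^2 \le d\beta$, hence $\norm{\mW}_2 \le \sqrt{d\beta}$. The factor of $d$ enters precisely here, from bounding a spectral norm by a Frobenius norm across $d$ columns.

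For the converse, I would invoke \cref{lemma:hadamard} with $\mA = \mI$ (which is positive definite, with $\norm{\mI}_2 = 1$) and $\mB = \mW^\top \mW$, giving $\norm{\mI \odot \mW^\top \mW}_2 \le \norm{\mI}_2 \norm{\mW^\top \mW}_2 = \norm{\mW}_2^2 \le \beta^2$. Equivalently, one can argue directly that each column satisfies $\norm{\mW_{:,j}} = \norm{\mW \ve_j} \le \norm{\mW}_2$, so that the largest squared column norm is at most $\beta^2$; this is exactly the quantity computed in the first observation. There is no substantial obstacle here: the only point requiring care is recognizing that $\mI \odot \mW^\top \mW$ is diagonal, so its spectral norm is simply its largest diagonal entry, and noting that the $\sqrt{d}$ loss in the first direction is genuine and stems from the spectral-versus-Frobenius norm equivalence.
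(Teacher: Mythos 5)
Your proof is correct and follows essentially the same route as the paper: the forward direction via $\Tr(\mI \odot \mW^\top \mW) = \norm{\mW}_F^2 \le d\beta$ and $\norm{\mW}_2 \le \norm{\mW}_F$, and the converse via \cref{lemma:hadamard} with $\mA = \mI$, $\mB = \mW^\top \mW$. Your alternative direct argument for the converse (each column satisfies $\norm{\mW_{:,j}} \le \norm{\mW}_2$, and the spectral norm of the diagonal matrix is its largest entry) is a valid, slightly more elementary substitute for the Hadamard lemma, but it does not change the substance of the proof.
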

\begin{proof}
Note that $\mI \odot \mW^\top \mW$ is a diagonal matrix with diagonal entries $\mW_{:,1}^\top \mW_{:,1}, \mW_{:,2}^\top \mW_{:,2}, \ldots, \mW_{:,d}^\top \mW_{:,d}$, where $\mW_{:,i}$ denotes the $i$th column of $\mW$. Hence $\Tr(\mI \odot \mW^\top \mW) = \norm{\mW}_F^2$. Hence $\norm{\mW}_F^2 \le d\beta$ (or tighter by replacing $d$ with the rank of $\mW$), from which it follows that $\norm{\mW}_2 \le \sqrt{d\beta}$. 
For the other direction, we set $\mA = \mI$ and $\mB = \mW^\top \mW$ in \cref{lemma:hadamard}, so $\norm{\mI \odot \mW^\top \mW}_2 \le \norm{\mW}_2^2 \le \beta^2$, as desired. 
\end{proof}

With \Cref{lemma:equiv-norms} in hand, we prove the following technical lemma which gives a uniform bound on the norms of $\mGamma$ and $\mW$ based on $\xi = \norm{\mW\mGamma}_2$. 
\begin{lemma}\label{lemma:weight-bound}
If $\norm{\mD}_2 \le \epsilon < 1$ and $\norm{\mW\mGamma}_2 \le \xi$, we have 
\[
\norm{\mW}_2 \le d\sqrt{1-\epsilon + \xi}, 
\]
and 
\[
\norm{\mGamma^2}_2 \le 1 + \epsilon + d^2\qty(1-\epsilon + \xi).
\]
\end{lemma}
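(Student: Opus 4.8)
The plan is to exploit the fact that $\mGamma$ is diagonal, which forces the invariance matrix $\mD$ from \eqref{eq:invariance_mat} to be diagonal as well, thereby reducing both matrix bounds to scalar per-coordinate inequalities. Writing $a_i \triangleq \norm{\mW_{:,i}}^2 = (\mW^\top\mW)_{ii}$ and $g_i \triangleq \mGamma_{ii}^2 = (\mGamma^2)_{ii}$, the $i$th diagonal entry of $\mD = \mI + \diag(\mW^\top\mW) - \mGamma^2$ is exactly $1 + a_i - g_i$. Hence the hypothesis $\norm{\mD}_2 \le \epsilon$ is equivalent to $\abs{1 + a_i - g_i} \le \epsilon$ for every $i$, which I would record in the one-sided form $g_i \ge a_i + 1 - \epsilon$. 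Separately, since right-multiplication by the diagonal $\mGamma$ scales the $i$th column of $\mW$ by $\mGamma_{ii}$, the $i$th column of $\mW\mGamma$ has squared norm $a_i g_i$, and the spectral norm dominates every column norm, so $\norm{\mW\mGamma}_2 \le \xi$ yields $a_i g_i \le \xi^2$ for every $i$.

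The crux is to combine these two per-coordinate facts into the bound $a_i \le 1 - \epsilon + \xi$; this quadratic elimination of $g_i$ is the only genuinely non-routine step, and everything afterwards is norm bookkeeping. I would argue by contradiction: if $a_i > 1 - \epsilon + \xi$ then, writing $u \triangleq 1 - \epsilon > 0$, the invariance inequality gives $g_i \ge a_i + u > 2u + \xi$, so $a_i g_i > (u + \xi)(2u + \xi) = 2u^2 + 3u\xi + \xi^2 > \xi^2$, contradicting $a_i g_i \le \xi^2$. Both factors are positive because $u > 0$, which is where the strict assumption $\epsilon < 1$ enters.

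With $a_i \le 1 - \epsilon + \xi$ in hand, I would first bound $\mW$. Since $\norm{\mI \odot \mW^\top\mW}_2 = \norm{\diag(\mW^\top\mW)}_2 = \max_i a_i \le 1 - \epsilon + \xi$, \cref{lemma:equiv-norms} gives $\norm{\mW}_2 \le \sqrt{d(1-\epsilon+\xi)} \le d\sqrt{1-\epsilon+\xi}$ (using $d \ge 1$), which is the first claim. For the second claim I would rearrange the invariance definition as $\mGamma^2 = \mI + \diag(\mW^\top\mW) - \mD$ and apply the triangle inequality, $\norm{\mGamma^2}_2 \le 1 + \norm{\diag(\mW^\top\mW)}_2 + \norm{\mD}_2 \le 1 + \norm{\mW}_2^2 + \epsilon \le 1 + \epsilon + d^2(1 - \epsilon + \xi)$, where the penultimate step uses $\max_i a_i \le \norm{\mW}_2^2$ and the last step substitutes the bound on $\norm{\mW}_2^2$ just obtained. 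The only mild subtlety to flag is that several of these inequalities are deliberately loose (e.g.\ replacing $\sqrt{d}$ by $d$), which is harmless since the lemma is only used downstream through the absolute constant $\Cweight$.
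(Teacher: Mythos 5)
Your proof is correct, and it takes a genuinely different route from the paper's. The paper argues at the matrix level: from $\norm{\mD}_2 \le \epsilon$ it derives the semidefinite comparison $\mW\mGamma^2\mW^\top \succeq \mW\qty((1-\epsilon)\mI + \diag(\mW^\top\mW))\mW^\top$, then needs the nontrivial lower bound $\norm{\mW\diag(\mW^\top\mW)\mW^\top}_2 \ge \norm{\mW}_2^4/d^2$ (proved via a trace computation, the max-norm inequality, and Cauchy--Schwarz), and finally solves the resulting quadratic in $\norm{\mW}_2^2$. You instead observe that, because $\mGamma$ is diagonal, the invariance matrix in \eqref{eq:invariance_mat} is itself diagonal, so both hypotheses decouple into scalar statements per coordinate: $\abs{1 + a_i - g_i} \le \epsilon$ and $a_i g_i \le \xi^2$ (the latter because the spectral norm dominates every column norm of $\mW\mGamma$). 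Your elimination of $g_i$ by contradiction is valid: with $u = 1-\epsilon > 0$, the assumption $a_i > u + \xi$ forces $g_i \ge a_i + u > 2u + \xi$, and since both lower bounds are positive the product satisfies $a_i g_i > (u+\xi)(2u+\xi) > \xi^2$, a contradiction; this is exactly where $\epsilon < 1$ is used, matching the role it plays in the paper's quadratic-formula step. Your route is more elementary (no PSD ordering, no trace argument, no quadratic formula) and in fact yields strictly sharper intermediate bounds, $\norm{\mW}_2 \le \sqrt{d(1-\epsilon+\xi)}$ rather than $d\sqrt{1-\epsilon+\xi}$, and $\norm{\mGamma^2}_2 \le 1+\epsilon+(1-\epsilon+\xi)$ if you had not deliberately loosened via $\max_i a_i \le \norm{\mW}_2^2$ to match the stated constants; as you note, the slack is harmless since the lemma only feeds into the absolute constant $\Cweight$. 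The one thing your argument leans on that is worth flagging explicitly is the diagonality of $\mGamma$ (and hence of $\mD$), but this is guaranteed by the paper's setup and is equally implicit in the paper's proof, which also writes $\mD = \mI + \diag(\mW^\top\mW) - \mGamma^2$; so nothing is lost in generality, and your appeal to \cref{lemma:equiv-norms} replaces the paper's heavier use of \cref{lemma:hadamard}-style machinery.
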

\begin{proof}
We have from $\norm{\mW\mGamma}_2 \le \xi$ that 
\[
\norm{\mW\mGamma^2\mW^\top}_2 \le \xi^2.
\]

Next, our hypothesis that $\norm{\mD}_2 = \norm{\mI + \diag(\mW^\top \mW) - \mGamma^2}_2 \le \epsilon$ implies that 
\[
\mW\mGamma^2\mW^\top \succeq \mW((1-\epsilon)\mI + \diag(\mW^\top \mW))\mW^\top.
\]
Taking norms of both sides and applying the reverse triangle inequality, we obtain that 
\[
\xi^2 \ge \norm{\mW\diag(\mW^\top \mW)\mW^\top}_2 - (1-\epsilon)\norm{\mW}_2^2.
\]
We now lower bound $\norm{\mW\diag(\mW^\top \mW)\mW^\top}_2$. In particular, we expand out the matrix product. Note here that $\diag(\mW^\top \mW)_{i,i} = \norm{\mW_{:,i}}_2^2$. Thus we can write $\mW\diag(\mW^\top \mW)\mW^\top$ as
\[
\mqty[ & & & \\
\mW_{:,1} & \mW_{:,2} & \cdots & \mW_{:,d} \\
& & & ] 
\mqty[\norm{\mW_{:,1}}_2^2 & & & \\
& \norm{\mW_{:,2}}_2^2 & & \\
& & \ddots & \\
& & & \norm{\mW_{:,d}}_2^2] 
\mqty[& \mW_{:,1}^\top & \\
& \mW_{:,2}^\top & \\
& \vdots & \\
& \mW_{:,d}^\top &],
\] 
from which we observe that the $i$th diagonal entry of $\mW\diag(\mW^\top \mW)\mW^\top$ is 
\[
(\mW\diag(\mW^\top \mW)\mW^\top)_{i,i} = 
\sum_{j=1}^d \norm{\mW_{:,j}}_2^2 W_{i,j}^2.
\]
It follows that $\Tr(\mW\diag(\mW^\top \mW)\mW^\top) = \sum_{j=1}^d \norm{\mW_{:,j}}_2^4$. Note that $\norm{\mA}_2 \ge \max_{i, j} \abs{A_{i,j}}$ (the RHS is also known as the \emph{max norm}). For our case we set $\mA = \mW\diag(\mW^\top \mW)\mW^\top$ and note that the diagonal is nonnegative. So in fact in our case we obtain 
\[
\norm{\mW\diag(\mW^\top \mW)\mW^\top}_2 \ge \frac{1}{d} \sum_{j=1}^d \norm{\mW_{:,j}}_2^4.\]

Now notice that $\sum_j \norm{\mW_{:,j}}_2^4 = \sum_j (\sum_i W_{i,j}^2)^2$. Applying Cauchy-Schwarz to the outer sum we find that 
\[
\sum_j \norm{\mW_{:,j}}_2^4 \ge \frac{(\sum_j \sum_i W_{i,j}^2)^2}{d},
\]
but the RHS is equal to $\norm{\mW}_F^4$. Since $\norm{\mW}_F \ge \norm{\mW}_2$, we conclude that  
\[
\norm{\mW\diag(\mW^\top \mW)\mW^\top}_2 \ge \frac{\norm{\mW}_2^4}{d^2}.
\]

In summary, we have 
\[
\frac{\norm{\mW}_2^4}{d^2} - (1-\epsilon)\norm{\mW}_2^2 - \xi^2 \le 0.
\]
Applying the quadratic formula, we find that
\[\norm{\mW}_2 \le d\sqrt{1-\epsilon + \xi}.
\]

For the bound on $\norm{\mGamma}_2$, we start from the definition of $\mD$ and apply the reverse triangle inequality to obtain 
\[
\abs{1 + \norm{\diag(\mW^\top \mW)}_2 - \norm{\mGamma^2}_2} \le \epsilon,
\]
so we obtain 
\[
\norm{\mGamma^2}_2 \le 1 + \epsilon + \norm{\mW}_2^2,
\]
where we used $\norm{\mI \odot \mW^\top \mW}_2 \le \norm{\mW}_2^2$ from \Cref{lemma:equiv-norms}.
From this, the conclusion directly follows.
\end{proof}

Under the inductive hypotheses, \cref{lemma:weight-bound} implies that we can uniformly bound $\max\qty{\norm{\mW_j^k}_2, \norm{\mGamma_j^k}_2}$. This is spelled out in the following corollary.
\begin{corollary}[Norms stay bounded]\label{cor:inductive-weight-bound}
Suppose that $L[j,k]$ and $D[j,k]$ hold. Define 
\[
\Cweight\triangleq \sqrt{\frac{3}{2} +  d^2\qty(\frac{1}{2} + \xi)},
\]
with
\[
\xi \triangleq \frac{C_L^{1/2} + \norm{\Ypi}_F}{\sigma_{\min}(\Xpibn^\top)}.
\]
Here $C_L$ was defined in \cref{hyp:loss-bounded}. 
Then 
\[
\norm{\mM_j^k} \le \xi,
\]
and
\[
\max\qty{\norm{\mW_j^k}_2, \norm{\mGamma_j^k}_2} \le \Cweight.
\]
\end{corollary}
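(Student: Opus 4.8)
The plan is to establish the two stated bounds in sequence: use the loss hypothesis $L[j,k]$ together with the invariance hypothesis $D[j,k]$ to control $\mM_j^k = \mW_j^k\mGamma_j^k$ first, and then feed that control into the already-established \Cref{lemma:weight-bound} to bound $\mW_j^k$ and $\mGamma_j^k$ separately. The subtlety is that $L[j,k]$ only bounds $\ellpi(\Theta_j^k) = \norm{\Ypi - \mM_j^k\Xpibn}_F^2$, i.e. it controls $\mM_j^k$ solely through its action on $\Xpibn$; the first step must therefore ``invert'' this, which is precisely where \ref{assumption:full-rank-ss} is needed.

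First I would bound $\norm{\mM_j^k}$. From $L[j,k]$ and the triangle inequality,
\[
\norm{\mM_j^k \Xpibn}_F \le \norm{\Ypi}_F + \norm{\Ypi - \mM_j^k \Xpibn}_F \le \norm{\Ypi}_F + C_L^{1/2}.
\]
Since \ref{assumption:full-rank-ss} gives $\rank(\Xpibn) \ge d$, the tall matrix $\Xpibn^\top$ has $\sigma_{\min}(\Xpibn^\top) > 0$ and $\Xpibn\Xpibn^\top \succeq \sigma_{\min}(\Xpibn^\top)^2 \mI$. Using $\norm{\mM_j^k\Xpibn}_F = \norm{\Xpibn^\top(\mM_j^k)^\top}_F$ together with the identity $\norm{\mA\mB}_F^2 = \Tr(\mB^\top\mA^\top\mA\mB) \ge \sigma_{\min}(\mA)^2\norm{\mB}_F^2$ applied with $\mA = \Xpibn^\top$, I obtain
\[
\norm{\mM_j^k\Xpibn}_F \ge \sigma_{\min}(\Xpibn^\top)\norm{\mM_j^k}_F.
\]
Combining the two displays yields $\norm{\mM_j^k}_2 \le \norm{\mM_j^k}_F \le (C_L^{1/2} + \norm{\Ypi}_F)/\sigma_{\min}(\Xpibn^\top) = \xi$, which is the first bound.

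Next I would invoke \Cref{lemma:weight-bound} with $\epsilon = \tfrac12$, which is legitimate because $D[j,k]$ guarantees $\norm{\mD_j^k}_2 \le \tfrac12$, and with the bound $\norm{\mW_j^k\mGamma_j^k}_2 \le \xi$ just obtained. This directly gives $\norm{\mW_j^k}_2 \le d\sqrt{\tfrac12 + \xi}$ and $\norm{(\mGamma_j^k)^2}_2 \le \tfrac32 + d^2(\tfrac12 + \xi)$, whence $\norm{\mGamma_j^k}_2 \le \sqrt{\tfrac32 + d^2(\tfrac12 + \xi)} = \Cweight$. Since $d\sqrt{\tfrac12+\xi} = \sqrt{d^2(\tfrac12+\xi)} \le \sqrt{\tfrac32 + d^2(\tfrac12+\xi)} = \Cweight$, the bound on $\norm{\mW_j^k}_2$ is also at most $\Cweight$, giving $\max\{\norm{\mW_j^k}_2, \norm{\mGamma_j^k}_2\} \le \Cweight$.

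The only genuine obstacle is the singular-value reversal in the first step: converting the loss bound, which constrains $\mM_j^k\Xpibn$, into a bound on $\mM_j^k$ itself. This is exactly what the full-rank assumption \ref{assumption:full-rank-ss} buys us through $\sigma_{\min}(\Xpibn^\top) > 0$. Everything after that is a mechanical substitution of $\epsilon = \tfrac12$ and $\xi$ into \Cref{lemma:weight-bound}, plus the elementary observation that both resulting expressions are dominated by $\Cweight$.
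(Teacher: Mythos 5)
Your proof is correct and takes essentially the same route as the paper's: bound $\norm{\mM_j^k\Xpibn}_F$ by $C_L^{1/2}+\norm{\Ypi}_F$ via the triangle inequality and $L[j,k]$, invert through $\sigma_{\min}(\Xpibn^\top)>0$ guaranteed by \ref{assumption:full-rank-ss} to get $\norm{\mM_j^k}\le\xi$, and then apply \cref{lemma:weight-bound} with $\epsilon=\tfrac{1}{2}$ justified by $D[j,k]$. The only cosmetic difference is that you carry out the singular-value inversion in the Frobenius norm and pass to the operator norm via $\norm{\cdot}_2\le\norm{\cdot}_F$, whereas the paper works with the operator norm directly; both are valid, and your closing check that $d\sqrt{\tfrac{1}{2}+\xi}\le\Cweight$ matches the paper's implicit use of the same domination.
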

\begin{proof}
We have by triangle inequality that
\[
\norm{\mM_j^k\Xpibn}_2 \le \norm{\mM_j^k\Xpibn}_F \le \norm{\Ypi - \mM_j^k\Xpibn}_F + \norm{\Ypi}_F \le  \ellpi(\mM_j^k)^{1/2} + \norm{\Ypi}_F.
\]
Since $L[j, k]$ holds, we have have $\norm{\Ypi - \mM_j^k\Xpibn}_F^2 \le C_L$. Furthermore, as $n \ge d$, we know that $\norm{\mM_j^k\Xpibn}_2 \ge \sigma_{\min}(\Xpibn^\top)\norm{\mM_j^k}_2$ and by \Cref{assumption:full-rank-ss} we have $\sigma_{\min}(\Xpibn^\top) > 0$. Hence we obtain 
\[
\norm{\mM_j^k}_2 \le \frac{C_L^{1/2} + \norm{\Ypi}_F}{\sigma_{\min}(\Xpibn^\top)} = \xi.
\] 

It follows that $\xi$ works as a bound on $\norm{\mM_j^k}_2$ for the application of \Cref{lemma:weight-bound}. Since $D[j,k]$ holds by assumption, this means that the hypothesis on $\mD_j^k$ is satisfied with $\epsilon = 1/2$. In summary, all the hypotheses of \Cref{lemma:weight-bound} are satisfied. We can thus conclude that 
\[
\max\qty{\norm{\mW_j^k}_2, \norm{\mGamma_j^k}_2} \le \Cweight,
\]
as desired.
\end{proof}

The importance of these upper bounds on weight norms is that they allow us to upper bound the norms of gradients of $\loss$ with respect to various parameters. 

\paragraph{Upper bounding the norms of gradients.}
The following lemma gives an upper bound on the norms of various gradients.
\begin{lemma}\label{lemma:grad-upper-bound}
For any $a \in [m]$ and $\Theta = (\mW, \mGamma)$ we have 
\begin{align*}
    \norm{\grad_{\mW} \loss(\Xpi^a; \Theta)}_F^2 &\le \norm{\mGamma}_2^2\norm{\bn(\Xpi^a)}_2^2\loss(\Xpi^a; \Theta) \\
    \norm{\grad_{\mGamma} \loss(\Xpi^a; \Theta)}_F^2 &\le \norm{\mW}_2^2 \norm{\bn(\Xpi^a)}_2^2 \loss(\Xpi^a; \Theta) \\
    \norm{\grad_{\mM} \loss(\Xpi^a; \mM)}_F^2 &\le \norm{\bn(\Xpi^a)}_2^2 \loss(\Xpi^a; \mM)
\end{align*}
\end{lemma}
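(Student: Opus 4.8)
The plan is to bound each gradient directly from its closed form, using the two mixed submultiplicative inequalities $\norm{\mA\mB}_F \le \norm{\mA}_F \norm{\mB}_2$ and $\norm{\mA\mB}_F \le \norm{\mA}_2 \norm{\mB}_F$, while arranging for the residual factor to always carry the Frobenius norm so that it reassembles into $\loss(\Xpi^a;\Theta)$. First I would record the mini-batch gradients, which are the single-batch specializations of \Cref{eq:gradW,eq:gradG} and of the $\mM$-gradient from the additional notation. Writing the residual $\mR \triangleq \Ypi^a - \mM\bn(\Xpi^a)$, so that $\loss(\Xpi^a;\Theta) = \norm{\mR}_F^2$, these read
\begin{align*}
\grad_\mW \loss(\Xpi^a;\Theta) &= -\mR\,\bn(\Xpi^a)^\top \mGamma, \\
\grad_{\mGamma} \loss(\Xpi^a;\Theta) &= -\diag(\mW^\top \mR\,\bn(\Xpi^a)^\top), \\
\grad_\mM \loss(\Xpi^a;\mM) &= -\mR\,\bn(\Xpi^a)^\top.
\end{align*}

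For the $\mM$ and $\mW$ bounds I would peel off the factors to the right of $\mR$ with the spectral norm. For $\mM$, $\norm{\grad_\mM \loss}_F \le \norm{\mR}_F \norm{\bn(\Xpi^a)^\top}_2 = \norm{\mR}_F \norm{\bn(\Xpi^a)}_2$, and squaring yields the third inequality. For $\mW$, the additional factor $\mGamma$ is peeled off with its spectral norm as well, giving $\norm{\grad_\mW \loss}_F \le \norm{\mR}_F \norm{\bn(\Xpi^a)}_2 \norm{\mGamma}_2$, whose square is the first inequality.

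The $\mGamma$ bound requires one extra observation, which is the only mildly nonroutine point: the diagonal operator merely discards off-diagonal entries, so $\norm{\diag(\mA)}_F^2 = \sum_i A_{i,i}^2 \le \norm{\mA}_F^2$. Applying this, then peeling $\mW^\top$ off on the left via $\norm{\mA\mB}_F \le \norm{\mA}_2 \norm{\mB}_F$ and $\bn(\Xpi^a)^\top$ off on the right, I would obtain $\norm{\grad_{\mGamma} \loss}_F \le \norm{\mW}_2 \norm{\mR}_F \norm{\bn(\Xpi^a)}_2$, whose square is the second inequality. There is no genuine obstacle here; the only thing to watch is the bookkeeping of which factor receives the spectral versus the Frobenius norm, so that $\mR$ always keeps the Frobenius norm and regroups into $\loss(\Xpi^a;\Theta)$.
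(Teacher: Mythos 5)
Your proposal is correct and follows essentially the same route as the paper: both write out the mini-batch gradients explicitly (with the $\mW$ and $\mGamma$ gradients as $\grad_\mM \loss \cdot \mGamma$ and $\diag(\mW^\top \grad_\mM \loss)$, exactly your closed forms), then apply mixed submultiplicativity $\norm{\mA\mB}_F \le \norm{\mA}_2\norm{\mB}_F$ keeping the Frobenius norm on the residual, together with $\norm{\diag(\mA)}_F \le \norm{\mA}_F$ for the $\mGamma$ case. No gaps; the bookkeeping you describe is precisely the paper's argument.
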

\begin{proof}
First, we have by definition 
\[
\loss(\Xpi^a; \Theta) = \norm{\mW\mGamma\bn(\Xpi^a) - \Ypi^a}_F^2.
\]
Hence, the mini-batch gradients can be computed explicitly as
\begin{align}
    \grad_{\mM} \loss(\Xpi^a; \mM) &= -(\Ypi^a - \mM\bn(\Xpi^a))\bn(\Xpi^a)^\top, \label{eq:minibatch-gradM}\\
    \grad_\mW \loss(\Xpi^a; \Theta) &= \grad_{\mM} \loss(\Xpi^a; \mM)\mGamma, \label{eq:minibatch-gradW}\\
    \grad_{\mGamma} \loss(\Xpi^a; \Theta) &= \diag(\mW^\top\grad_{\mM} \loss(\Xpi^a; \mM)).\label{eq:minibatch-gradG}
\end{align}
Since $\loss(\Xpi^a; \mM) = \norm{\Ypi^a - \mM\bn(\Xpi^a)}_F^2$ and $\norm{\mA\mB}_F \le \norm{\mA}_2\norm{\mB}_F$, \Cref{eq:minibatch-gradM} gives  
\[
\norm{\grad_{\mM} \loss(\Xpi^a; \mM)}_F^2 \le \norm{\bn(\Xpi^a)}_2^2 \loss(\Xpi^a; \mM).
\]
It thus follows from \Cref{eq:minibatch-gradW} that
\[
\norm{\grad_{\mW} \loss(\Xpi^a; \Theta)}_F^2 \le \norm{\mGamma}_2^2 \norm{\grad_{\mM} \loss(\Xpi^a; \Theta)}_F^2  \le \norm{\mGamma}_2^2\norm{\bn(\Xpi^a)}_2^2\loss(\Xpi^a; \Theta).
\]

Similarly, inspecting \Cref{eq:minibatch-gradG}, since $\norm{\diag(\mA)}_F^2 \le \norm{\mA}_F^2$, we have 
\[
\norm{\grad_{\mGamma} \loss(\Xpi^a; \Theta)}_F^2 \le \norm{\mW^\top\grad_{\mM} \loss(\Xpi^a; \Theta)}_F^2 \le  \norm{\mW}_2^2 \norm{\bn(\Xpi^a)}_2^2\loss(\Xpi^a; \Theta).
\]
\end{proof}
As a consequence of \cref{cor:inductive-weight-bound}, under the inductive hypotheses we can also bound the gradient norms by absolute constants.
\begin{corollary}\label{cor:inductive-gradient-bound}
Assume $D[j,k]$ and $L[j,k]$ hold. Then, for any $a \in [m]$, we have 
\begin{align*}
    \norm{\grad_{\mM} \loss(\Xpi^a; \mM_j^k)}_F^2 &\le C_L \norm{\bn(\Xpi^a)}_2^2,\\
    \norm{\grad_{\mW} \loss(\Xpi^a; \Theta_j^k)}_F^2 &\le \Cweight^2 C_L\norm{\bn(\Xpi^a)}_2^2,\\
    \norm{\grad_{\mGamma} \loss(\Xpi^a; \Theta_j^k)}_F^2 &\le \Cweight^2 C_L \norm{\bn(\Xpi^a)}_2^2,
\end{align*}
where $\Cweight$ was previously defined in \cref{cor:inductive-weight-bound}.
\end{corollary}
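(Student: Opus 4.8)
The plan is to observe that this corollary is an immediate consequence of \cref{lemma:grad-upper-bound} combined with the uniform bounds supplied by the inductive hypotheses. The only genuinely new ingredient is a bound on the \emph{mini-batch} loss $\loss(\Xpi^a; \Theta_j^k)$, since the hypothesis $L[j,k]$ directly controls only the \emph{full} SS risk $\ellpi(\Theta_j^k)$.

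First I would record that the SS risk decomposes as a sum of nonnegative mini-batch losses. Since $\Xpibn = [\bn(\Xpi^1)~\cdots~\bn(\Xpi^m)]$ and $\Ypi = [\Ypi^1~\cdots~\Ypi^m]$, the block structure of the Frobenius norm gives $\ellpi(\mM_j^k) = \norm{\Ypi - \mM_j^k \Xpibn}_F^2 = \sum_{a'=1}^m \norm{\Ypi^{a'} - \mM_j^k \bn(\Xpi^{a'})}_F^2 = \sum_{a'=1}^m \loss(\Xpi^{a'}; \mM_j^k)$, and each summand is a squared Frobenius norm, hence nonnegative. Consequently, for any single batch index $a \in [m]$ we have $\loss(\Xpi^a; \Theta_j^k) \le \ellpi(\Theta_j^k) \le C_L$, where the final inequality is exactly $L[j,k]$.

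With this per-batch loss bound in hand, the three claimed inequalities follow by direct substitution into \cref{lemma:grad-upper-bound}. For the $\mM$-gradient the lemma yields $\norm{\grad_{\mM} \loss(\Xpi^a; \mM_j^k)}_F^2 \le \norm{\bn(\Xpi^a)}_2^2 \loss(\Xpi^a; \mM_j^k) \le C_L\norm{\bn(\Xpi^a)}_2^2$, establishing the first claim. For the $\mW$- and $\mGamma$-gradients the lemma introduces an additional factor of $\norm{\mGamma_j^k}_2^2$ and $\norm{\mW_j^k}_2^2$, respectively; I would bound each by $\Cweight^2$ using \cref{cor:inductive-weight-bound}, whose hypotheses $L[j,k]$ and $D[j,k]$ are precisely those assumed here. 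Multiplying the loss bound $C_L$ by $\Cweight^2$ then gives the remaining two inequalities.

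There is essentially no obstacle: the argument is a chain of substitutions into two already-established results. The only point needing a moment's care is the (elementary) observation that a single mini-batch loss is dominated by the total SS risk, so that the scalar $C_L$ from $L[j,k]$ controls each batch individually; everything else is a routine invocation of \cref{lemma:grad-upper-bound,cor:inductive-weight-bound}.
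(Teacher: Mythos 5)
Your proof is correct and matches the paper's intended argument: the paper states this corollary without a displayed proof, precisely because it follows by bounding each nonnegative mini-batch loss by the full SS risk (so $\loss(\Xpi^a;\Theta_j^k) \le \ellpi(\Theta_j^k) \le C_L$ via $L[j,k]$) and then substituting into \cref{lemma:grad-upper-bound} with the weight bounds $\max\qty{\norm{\mW_j^k}_2,\norm{\mGamma_j^k}_2} \le \Cweight$ from \cref{cor:inductive-weight-bound}. Your explicit justification of the per-batch domination step, which the paper leaves implicit, is exactly the right point to spell out.
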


We now turn from upper bounds to lower bounds. The crux here is to start with bounding the minimum singular value of $\mGamma$ away from zero. This in turns allows us to lower bound the correlation between $\vgt^k$ and $\grad_{\mM} \ellpi(\mM_0^k)$ away from zero. As we will see, we can also show similar correlation lower bounds for the cases $i > 0$.

\paragraph{Bounding the minimum singular value of $\mGamma^2$.}

In order to bound $\sigma_{\min}(\mGamma_i^k)$ away from zero, we need to show that the approximate invariances prevent $\mGamma$ from vanishing on any coordinate.
To do so, we appeal to an alternate formulation of the Courant-Fisher theorem for singular values, which we restate below for completeness.
\begin{theorem}[Courant-Fisher]
Let $\mA, \mB \in \RR^{m \times n}$. Then $\abs{\sigma_k(\mA) - \sigma_k(\mB)} \le \norm{\mA-\mB}_2$ for $k \in [\min\{m, n\}]$.
\end{theorem}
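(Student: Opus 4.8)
The plan is to invoke the variational (Courant--Fischer) characterization of singular values and then close with a one-line triangle-inequality argument. Recall that for $\mA \in \RR^{m \times n}$ with singular values ordered $\sigma_1(\mA) \ge \sigma_2(\mA) \ge \cdots$, the $k$th largest admits the max--min form
\[
\sigma_k(\mA) = \max_{\substack{S \subseteq \RR^n \\ \dim S = k}} \; \min_{\substack{\vv \in S \\ \norm{\vv} = 1}} \norm{\mA \vv},
\]
which I would take as the starting point; it follows from the spectral theorem applied to $\mA^\top \mA$, whose eigenvalues are the $\sigma_k(\mA)^2$. This is the only non-elementary ingredient, and once it is in hand there is no genuine obstacle, as the result is a standard Weyl-type perturbation bound.

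First I would set $\mE \triangleq \mA - \mB$ and note that for every unit vector $\vv$ the triangle inequality gives $\norm{\mA\vv} \ge \norm{\mB\vv} - \norm{\mE\vv} \ge \norm{\mB\vv} - \norm{\mE}_2$. Next I would fix a $k$-dimensional subspace $S^\ast$ achieving the outer maximum for $\mB$, so that $\sigma_k(\mB) = \min_{\vv \in S^\ast,\, \norm{\vv}=1} \norm{\mB\vv}$. Restricting the outer maximum for $\sigma_k(\mA)$ to this particular $S^\ast$ and applying the pointwise lower bound yields
\[
\sigma_k(\mA) \ge \min_{\substack{\vv \in S^\ast \\ \norm{\vv}=1}} \norm{\mA \vv} \ge \min_{\substack{\vv \in S^\ast \\ \norm{\vv}=1}} \bigl(\norm{\mB\vv} - \norm{\mE}_2\bigr) = \sigma_k(\mB) - \norm{\mE}_2,
\]
so $\sigma_k(\mB) - \sigma_k(\mA) \le \norm{\mA - \mB}_2$. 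Swapping the roles of $\mA$ and $\mB$ in the identical argument gives $\sigma_k(\mA) - \sigma_k(\mB) \le \norm{\mA - \mB}_2$, and combining the two inequalities produces the claimed $\abs{\sigma_k(\mA) - \sigma_k(\mB)} \le \norm{\mA-\mB}_2$.

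If one prefers to avoid quoting the singular-value min--max formula, an alternative route is to pass to the Hermitian dilation (Jordan--Wielandt matrix) $\tilde{\mA} = \left[\begin{smallmatrix} \vzero & \mA \\ \mA^\top & \vzero \end{smallmatrix}\right]$, whose nonzero eigenvalues are exactly $\pm\sigma_k(\mA)$ and which satisfies $\norm{\tilde\mA - \tilde\mB}_2 = \norm{\mA - \mB}_2$; applying the classical Weyl eigenvalue inequality for symmetric matrices to $\tilde\mA$ and $\tilde\mB$ then delivers the same conclusion. In either route the only point requiring care is the bookkeeping of the ordering convention $\sigma_1 \ge \sigma_2 \ge \cdots$ and the index range $k \in [\min\{m,n\}]$, which guarantee that the $k$-dimensional subspaces in the variational formula (equivalently, the relevant eigenvalue indices of the dilation) are well defined.
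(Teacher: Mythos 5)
Your proof is correct, but note that there is nothing in the paper to compare it against: the paper states this result as a classical fact, explicitly ``restated for completeness,'' and uses it without proof solely as a tool to bound $\sigma_{\min}(\mGamma^2)$ away from zero in \cref{lemma:gamma-sv-bound}. Your argument via the max--min characterization $\sigma_k(\mA) = \max_{\dim S = k} \min_{\vv \in S,\, \norm{\vv}=1} \norm{\mA\vv}$ plus a triangle inequality is the standard proof of this Weyl-type perturbation bound, and the details check out: the outer maximum for $\sigma_k(\mB)$ is attained (by the span of the top-$k$ right singular vectors of $\mB$), the constant $\norm{\mE}_2$ pulls out of the inner minimum, and swapping $\mA$ and $\mB$ yields the two-sided bound. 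Your alternative route through the Hermitian dilation is equally sound: for $k \in [\min\{m,n\}]$ the $k$th largest eigenvalue of the dilation of $\mA$ is exactly $\sigma_k(\mA)$ (the $\abs{m-n}$ extra zero eigenvalues and the negated singular values all sit below these in the ordering), and the dilation of $\mE = \mA - \mB$ has spectral norm $\norm{\mE}_2$, so Weyl's inequality for symmetric matrices transfers directly. Either version would serve as a self-contained proof of the quoted theorem; the first is more elementary in that it only invokes the spectral theorem for $\mA^\top\mA$, while the second buys you the full family of Weyl inequalities for singular values at no extra cost.
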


With this in mind, we formally prove that the minimum singular value of $\mGamma^2$ is bounded away from zero.
\begin{lemma}\label{lemma:gamma-sv-bound}
Suppose that $\norm{\mD}_2 = \norm{\mI + \diag(\mW^\top \mW - \mGamma^2)}_2 \le \epsilon$. Then we have 
\[
\sigma_{\min}(\mGamma^2) \ge 1 - \epsilon. 
\]
\end{lemma}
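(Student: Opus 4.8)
The plan is to apply the Courant-Fisher perturbation bound stated just above to a decomposition of $\mD$ as a difference of two symmetric positive semidefinite matrices. The first step is to unwind the $\diag$ operator: because $\mGamma$ is diagonal, $\mGamma^2$ is diagonal, so $\diag(\mW^\top\mW - \mGamma^2) = \diag(\mW^\top\mW) - \mGamma^2$. Hence
\[
\mD = \mI + \diag(\mW^\top\mW) - \mGamma^2 = \mA - \mGamma^2, \qquad \mA \triangleq \mI + \diag(\mW^\top\mW).
\]
Both $\mA$ and $\mGamma^2$ are diagonal with nonnegative entries, so they are symmetric PSD and their singular values coincide with their eigenvalues, read directly off the diagonals.

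Next I would invoke Courant-Fisher with the index that selects the smallest singular value. Since $\mA - \mGamma^2 = \mD$, this yields $\abs{\sigma_{\min}(\mA) - \sigma_{\min}(\mGamma^2)} \le \norm{\mD}_2 \le \epsilon$, and in particular $\sigma_{\min}(\mGamma^2) \ge \sigma_{\min}(\mA) - \epsilon$. To close, I would lower bound $\sigma_{\min}(\mA)$: its $i$th diagonal entry is $1 + \norm{\mW_{:,i}}_2^2 \ge 1$, so $\sigma_{\min}(\mA) \ge 1$, giving the claimed $\sigma_{\min}(\mGamma^2) \ge 1 - \epsilon$.

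I expect the only real subtlety to be bookkeeping rather than substance: one must confirm that the $\diag$ operator collapses correctly and that the singular-value ordering in Courant-Fisher lines up cleanly, which it does precisely because both $\mA$ and $\mGamma^2$ are diagonal PSD and there is no gap between singular values and eigenvalues. In fact this observation also furnishes a fully elementary alternative that sidesteps Courant-Fisher entirely: since $\mD$ is itself diagonal, the bound $\norm{\mD}_2 \le \epsilon$ forces each diagonal entry $1 + \norm{\mW_{:,i}}_2^2 - \gamma_i^2$ into the interval $[-\epsilon, \epsilon]$, whence $\gamma_i^2 \ge 1 + \norm{\mW_{:,i}}_2^2 - \epsilon \ge 1 - \epsilon$ for every $i$, and taking the minimum over $i$ gives $\sigma_{\min}(\mGamma^2) \ge 1 - \epsilon$. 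I would present the Courant-Fisher argument since that theorem was just introduced, but either route is essentially immediate.
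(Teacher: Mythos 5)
Your Courant--Fisher argument is exactly the paper's proof: the same decomposition $\mD = \bigl(\mI + \diag(\mW^\top \mW)\bigr) - \mGamma^2$, the same application of the perturbation bound at the smallest singular value, and the same lower bound $\sigma_{\min}\bigl(\mI + \diag(\mW^\top \mW)\bigr) \ge 1$. Your elementary coordinatewise alternative (reading the bound directly off the diagonal entries of $\mD$) is also correct and arguably cleaner, but the route you chose to present is the paper's.
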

\begin{proof}
Setting $\mA \triangleq \mI + \diag(\mW^\top \mW)$ and $\mB \triangleq \mGamma^2$ in Courant-Fisher yields 
\[
\abs{\sigma_d(\mI + \diag(\mW^\top \mW)) - \sigma_d(\mGamma^2)} \le \norm{I + \diag(\mW^\top \mW) - \mGamma^2}_2.
\]
Since the RHS is just $\mD$, we obtain that 
\[
\sigma_{\min}(\mGamma^2) \ge 1 + \sigma_{\min}(\diag(\mW^\top \mW)) - \norm{\mD}_2.
\]
The conclusion easily follows.
\end{proof} 

Under the inductive hypothesis $D[i,k]$, i.e. $\norm{\mD_i^k}_2 \le \frac{1}{2}$, this immediately implies the following corollary. We will see in the following section (in \cref{cor:inductive-correlation}) that this minimum singular value bound for $\mGamma_i^k$ can be interpreted in the following manner. Although the effective P\L{} condition evolves dynamically, the associated P\L{} constant always stays bounded away from zero. 
\begin{corollary}[P\L{} bounded away from zero]\label{cor:inductive-pl-bound}
Assume $D[i,k]$ holds. Then we have 
\[
\sigma_{\min}(\mGamma_i^k)^2 \ge \frac{1}{2}.
\]
\end{corollary}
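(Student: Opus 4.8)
The plan is to obtain this corollary as an immediate specialization of \Cref{lemma:gamma-sv-bound}, which already establishes the lower bound $\sigma_{\min}(\mGamma^2) \ge 1 - \epsilon$ whenever $\norm{\mD}_2 \le \epsilon$. All the genuine work—relating the minimum singular value of $\mGamma^2$ to the invariance matrix $\mD$ via the Courant--Fisher characterization—was carried out in proving that lemma, so here I only need to feed in the correct value of $\epsilon$ coming from the inductive hypothesis.

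Concretely, first I would invoke the inductive hypothesis $D[i,k]$ as stated in \Cref{hyp:approx-invariances}. The key observation is that \emph{both} branches of the case distinction in $D[i,k]$ (the $a=0$ case and the $a>0$ case) conclude with the uniform bound $\norm{\mD_i^k}_2 \le \tfrac{1}{2}$, so regardless of the iterate index we always have $\norm{\mD_i^k}_2 \le \tfrac{1}{2}$. I would then apply \Cref{lemma:gamma-sv-bound} to the matrices $\mW = \mW_i^k$ and $\mGamma = \mGamma_i^k$ with the choice $\epsilon = \tfrac{1}{2}$, which is permissible precisely because $D[i,k]$ guarantees the hypothesis $\norm{\mD_i^k}_2 \le \epsilon$ is met with this $\epsilon$. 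The lemma then yields $\sigma_{\min}((\mGamma_i^k)^2) \ge 1 - \tfrac{1}{2} = \tfrac{1}{2}$, which is exactly the claim.

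I do not anticipate any real obstacle in this step: the corollary is a one-line consequence once the lemma and the numerical form of the inductive hypothesis are in place. If anything, the only point requiring a moment's care is confirming that the bound $\norm{\mD_i^k}_2 \le \tfrac{1}{2}$ holds in the $a=0$ boundary case (where the sum runs only up to $b-1$), but since $D[i,k]$ explicitly asserts the inequality $\le \tfrac{1}{2}$ in that branch as well, no separate argument is needed. The conceptual payoff, which I would flag for the reader, is that this uniform lower bound $\sigma_{\min}(\mGamma_i^k)^2 \ge \tfrac{1}{2}$ is what keeps the effective (dynamic) P\L{} constant bounded away from zero throughout training, as made precise later in \Cref{cor:inductive-correlation}.
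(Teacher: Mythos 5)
Your proposal matches the paper's own derivation exactly: the paper obtains this corollary as an immediate consequence of \cref{lemma:gamma-sv-bound} applied with $\epsilon = \tfrac{1}{2}$, which is precisely the bound furnished by $D[i,k]$ in both branches of \cref{hyp:approx-invariances}. The only point you elide—identifying $\sigma_{\min}((\mGamma_i^k)^2)$ with $\sigma_{\min}(\mGamma_i^k)^2$—is immediate since $\mGamma_i^k$ is diagonal, so the argument is complete as stated.
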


\paragraph{The accumulated gradient signal is correlated with the full-batch gradient signal.}
\begin{lemma}[Correlation of $\vgt^k$ and $\grad_{\mM} \ellpi(\mM_0^k)$]\label{lemma:correlation}
For all $k$, we have 
\[
\ev{\grad_{\mM} \ellpi(\mM_0^k), \vgt^k}_F \ge \sigma_{\min}(\mGamma_0^k)^2 \norm{\grad_{\mM} \ellpi(\mM_0^k)}_F^2.
\]
\end{lemma}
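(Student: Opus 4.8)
The plan is to rewrite the aggregated signal $\vgt^k$ purely in terms of the genuine $\mM$-gradient and then evaluate the Frobenius inner product directly; it will split into a diagonally rescaled term that already furnishes the claimed bound and a cross term that turns out to be a sum of squares, hence nonnegative. Write $\mG \triangleq \grad_{\mM} \ellpi(\mM_0^k)$ for brevity. Substituting the gradient identities $\grad_\mW \ellpi(\Theta_0^k) = \mG\, \mGamma_0^k$ and $\grad_{\mGamma} \ellpi(\Theta_0^k) = \diag((\mW_0^k)^\top \mG)$, which follow from \Cref{eq:gradW,eq:gradG} together with the formula for $\grad_\mM \ellpi$, into the definition \Cref{def:true-epoch-signal} of $\vgt^k$ yields the clean form
\[
\vgt^k = \mG\, (\mGamma_0^k)^2 + \mW_0^k \diag\bigl((\mW_0^k)^\top \mG\bigr).
\]
This is exactly the epoch-level analogue of the ``key observation'' recorded for the single mini-batch update $\vg_j^k$, now evaluated at the base iterate $\Theta_0^k$.

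Next I would compute $\ev{\mG, \vgt^k}_F$ by linearity, splitting it across the two summands above. For the first summand, since $(\mGamma_0^k)^2$ is diagonal with entries $\gamma_i^2$ (where $\gamma_i$ is the $i$th diagonal entry of $\mGamma_0^k$), we have $\ev{\mG, \mG (\mGamma_0^k)^2}_F = \Tr(\mG^\top \mG (\mGamma_0^k)^2) = \sum_i \gamma_i^2 \norm{\mG_{:,i}}^2$. Bounding each $\gamma_i^2$ below by $\min_i \gamma_i^2 = \sigma_{\min}(\mGamma_0^k)^2$ shows this summand is at least $\sigma_{\min}(\mGamma_0^k)^2 \norm{\mG}_F^2$, which is precisely the target lower bound.

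It then remains to show the second summand is nonnegative. Setting $\mH \triangleq (\mW_0^k)^\top \mG$ and using $\mG^\top \mW_0^k = \mH^\top$, I would rewrite $\ev{\mG, \mW_0^k \diag(\mH)}_F = \Tr(\mH^\top \diag(\mH))$; because $\diag(\mH)$ retains only the diagonal of $\mH$, this trace collapses to $\sum_i \mH_{ii}^2 \ge 0$. Adding the two bounds completes the proof. The only non-routine step is recognizing that the weight-interaction cross term reduces to a sum of squares, so it can only help the correlation rather than hurt it; both simplifications are driven by the diagonal structure of $\mGamma_0^k$ and of the $\diag(\cdot)$ operator, and notably no inductive hypothesis is needed here, the inequality being a clean algebraic identity, with the inductive control on $\sigma_{\min}(\mGamma_0^k)$ (via \Cref{cor:inductive-pl-bound}) entering only later when this lemma is applied.
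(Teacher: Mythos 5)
Your proof is correct and follows essentially the same route as the paper's: both substitute the gradient identities to write $\vgt^k = \mG(\mGamma_0^k)^2 + \mW_0^k \diag((\mW_0^k)^\top \mG)$, lower-bound the diagonal-rescaled term by $\sigma_{\min}(\mGamma_0^k)^2\norm{\mG}_F^2$, and observe that the cross term equals $\norm{\diag((\mW_0^k)^\top\mG)}_F^2 \ge 0$. The only difference is cosmetic (you compute via traces and column norms where the paper uses the $\mLambda^{1/2}$ factorization and the identity $\ev{\mA,\diag(\mA)}_F = \norm{\diag(\mA)}_F^2$), and your closing remark that no inductive hypothesis is needed matches how the paper defers the $\sigma_{\min}$ control to \cref{cor:inductive-pl-bound}.
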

\begin{proof}
Recall that we previously defined
\[
\vgt^k \triangleq \grad_{\mW} \ellpi(\Theta_0^k) \mGamma_0^k + \mW_0^k \grad_{\mGamma} \ellpi(\Theta_0^k).
\]
Note that if we have $\mA, \mLambda \in \RR^{n \times n}$, with $\mLambda = \diag(\lambda_1, \ldots, \lambda_n)$ a diagonal matrix with nonnegative entries, then 
\[
\ev{\mA, \mA\mLambda}_F = \ev{\mA \mLambda^{1/2}, \mA \mLambda^{1/2}}_F = \norm{\mA \mLambda^{1/2}}_F^2 \ge \min_{i} \lambda_i \norm{\mA}_F^2.
\]
Also, we have 
\[
\ev{\mA, \diag(\mA)}_F = \ev{\diag(\mA), \diag(\mA)}_F = \norm{\diag(\mA)}_F^2 \ge 0.
\]

Hence combining \Cref{eq:minibatch-gradM,eq:minibatch-gradG} and the above inequalities, we have
\begin{align*}
\ev{\grad_{\mM} \ellpi(\mM_0^k), \vgt^k}_F &= \ev{\grad_{\mM} \ellpi(\mM_0^k), \grad_{\mW} \ellpi(\Theta_0^k) \mGamma_0^k}_F \\
&\quad\quad + \ev{\grad_{\mM} \ellpi(\mM_0^k), \mW_0^k \grad_{\mGamma} \ellpi(\Theta_0^k)}_F \\
&= \ev{\grad_{\mM} \ellpi(\mM_0^k), \grad_{\mM} \ellpi(\mM_0^k) (\mGamma_0^k)^2}_F \\
&\quad\quad + \ev{(\mW_0^k)^\top \grad_{\mM} \ellpi(\mM_0^k),  \diag((\mW_0^k)^\top \grad_{\mM} \ellpi(\mM_0^k))}_F \\
& \ge \sigma_{\min}(\mGamma_0^k)^2 \norm{\grad_{\mM} \ellpi(\mM_0^k)}_F^2.
\end{align*}
\end{proof}

We obtain the following corollary of the above lemma and \cref{cor:inductive-pl-bound}.
\begin{corollary}\label{cor:inductive-correlation}
Assume $D[0, k]$ holds. We have 
\begin{equation}\label{eq:correlation}
    \ev{\grad_{\mM} \ellpi(\mM_0^k), \vgt^k}_F \ge \frac{1}{2}\norm{\grad_{\mM} \ellpi(\mM_0^k)}_F^2.
\end{equation}
More generally, assume $D[i,k]$ holds. We have 
\begin{equation}\label{eq:correlation-general}
    \ev{\grad_{\mM} \ellpi(\mM_i^k), \vgt^{(i,k)}}_F \ge \frac{1}{2}\norm{\grad_{\mM} \ellpi(\mM_i^k)}_F^2,
\end{equation}
where $\vgt^{(i,k)}$ is the analogous quantity to $\vgt^k$ for accumulating gradients starting at iterate $(i,k)$ rather than $(0,k)$. It is defined more formally in \cref{def:true-epoch-signal-general}.
\end{corollary}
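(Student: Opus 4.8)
The plan is to obtain both inequalities by directly composing the correlation bound of \Cref{lemma:correlation} with the singular-value lower bound of \Cref{cor:inductive-pl-bound}, so the argument is essentially a two-line chaining with one structural observation to handle the general starting iterate. For the first inequality~\eqref{eq:correlation}, I would invoke \Cref{lemma:correlation} to get $\ev{\grad_{\mM} \ellpi(\mM_0^k), \vgt^k}_F \ge \sigma_{\min}(\mGamma_0^k)^2 \norm{\grad_{\mM} \ellpi(\mM_0^k)}_F^2$, and then apply \Cref{cor:inductive-pl-bound} at index $i=0$ (valid under $D[0,k]$) to replace the factor $\sigma_{\min}(\mGamma_0^k)^2$ by its lower bound $\tfrac12$. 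Since the surviving factor $\norm{\grad_{\mM} \ellpi(\mM_0^k)}_F^2$ is nonnegative, this substitution only weakens the bound, yielding~\eqref{eq:correlation} at once.

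For the general inequality~\eqref{eq:correlation-general}, I first observe that the proof of \Cref{lemma:correlation} is not special to the starting iterate $(0,k)$: it uses only the product decomposition $\vgt^k = \grad_{\mW}\ellpi(\Theta_0^k)\mGamma_0^k + \mW_0^k\grad_{\mGamma}\ellpi(\Theta_0^k)$ together with the chain-rule identities \Cref{eq:gradW,eq:gradG} relating $\grad_{\mW}$ and $\grad_{\mGamma}$ to $\grad_{\mM}$ at a common parameter value, and then the two elementary facts $\ev{\mA,\mA\mLambda}_F \ge \min_i \lambda_i \norm{\mA}_F^2$ (for diagonal $\mLambda$ with nonnegative entries) and $\ev{\mA,\diag(\mA)}_F \ge 0$. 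Because $\vgt^{(i,k)}$ is defined in \Cref{def:true-epoch-signal-general} as the exact analogue of $\vgt^k$ with every subscript $0$ replaced by $i$, it retains the same product structure $\vgt^{(i,k)} = \grad_{\mW}\ellpi(\Theta_i^k)\mGamma_i^k + \mW_i^k\grad_{\mGamma}\ellpi(\Theta_i^k)$. Hence the identical algebraic manipulation—expanding the Frobenius inner product into its two terms, rewriting them via $\grad_{\mW}\ellpi(\Theta_i^k)=\grad_{\mM}\ellpi(\mM_i^k)\mGamma_i^k$ and $\grad_{\mGamma}\ellpi(\Theta_i^k)=\diag((\mW_i^k)^\top\grad_{\mM}\ellpi(\mM_i^k))$, and applying the two inequalities above—carries over verbatim and gives the generalized correlation bound $\ev{\grad_{\mM}\ellpi(\mM_i^k),\vgt^{(i,k)}}_F \ge \sigma_{\min}(\mGamma_i^k)^2\norm{\grad_{\mM}\ellpi(\mM_i^k)}_F^2$.

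Finally, I would apply \Cref{cor:inductive-pl-bound} at index $i$, which under $D[i,k]$ yields $\sigma_{\min}(\mGamma_i^k)^2 \ge \tfrac12$, and substitute this into the generalized correlation bound to conclude~\eqref{eq:correlation-general}. I do not expect any genuine difficulty here, since the corollary is a direct consequence of two facts already in hand; the only point deserving a line of care is confirming that the proof of \Cref{lemma:correlation} transfers to an arbitrary starting iterate, i.e.\ that $\vgt^{(i,k)}$ shares the same dependence on $\mW_i^k$ and $\mGamma_i^k$ as $\vgt^k$ does on $\mW_0^k$ and $\mGamma_0^k$. Granting that structural fact from \Cref{def:true-epoch-signal-general}, both inequalities follow immediately.
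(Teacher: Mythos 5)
Your proposal is correct and matches the paper's own argument: the paper derives \cref{cor:inductive-correlation} exactly by composing \cref{lemma:correlation} with the singular-value bound $\sigma_{\min}(\mGamma_i^k)^2 \ge \tfrac12$ from \cref{cor:inductive-pl-bound}, relying on the fact that $\vgt^{(i,k)}$ in \cref{def:true-epoch-signal-general} has the same product structure as $\vgt^k$ so the lemma's proof transfers verbatim to an arbitrary starting iterate. Your one point of flagged care—verifying that structural transfer—is precisely the observation the paper makes, so nothing is missing.
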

\subsubsection{Bounding noise terms}\label{sec:noise}
We now turn to bounding the composite noise term $\vr^k$. This is crucial to ensure the global convergence via \Cref{eq:one-step-sgd} and also to control the approximate invariances.

\paragraph{Mismatched gradient noise is negligible.}
As promised, we show that the mismatched gradient noise terms $\vq_j^k$ are negligible when we accumulate gradients from $\mM_0^k$ to $\mM_{0}^{k+1}$. 
\begin{lemma}\label{lemma:pseudo-M-update}
Assume that $L[j, k]$ and $D[j, k]$ hold for $j < m$. Then we have 
\[
\norm{\vq_j^k}_F \le \Cweight^2 C_L \norm{\bn(\Xpi^{j+1})}_2^2.
\]
Furthermore, for any $t < m$ we have 
\[
\sum_{j=0}^{t} \norm{\vq_j^k}_F \le \Cweight^2 C_L \norm{\Xpibn}_F^2 .
\]
\end{lemma}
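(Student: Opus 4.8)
The plan is to bound $\vq_j^k$ directly by submultiplicativity of matrix norms, combined with the gradient magnitude bounds already established in \cref{cor:inductive-gradient-bound}. Recall from \cref{def:mismatched-noise} that $\vq_j^k = \grad_{\mW} \loss(\Xpi^{j+1}; \Theta_j^k)\,\grad_{\mGamma} \loss(\Xpi^{j+1}; \Theta_j^k)$ is an honest matrix product (of a $\RR^{p \times d}$ matrix with a $\RR^{d \times d}$ matrix, landing in $\RR^{p \times d}$, matching the shape of $\mM$). I would therefore first invoke the elementary inequality $\norm{\mA \mB}_F \le \norm{\mA}_F \norm{\mB}_F$, which follows by chaining $\norm{\mA\mB}_F \le \norm{\mA}_2 \norm{\mB}_F$ with $\norm{\mA}_2 \le \norm{\mA}_F$, to write $\norm{\vq_j^k}_F \le \norm{\grad_{\mW} \loss(\Xpi^{j+1}; \Theta_j^k)}_F \cdot \norm{\grad_{\mGamma} \loss(\Xpi^{j+1}; \Theta_j^k)}_F$.

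Since $L[j,k]$ and $D[j,k]$ hold by hypothesis, \cref{cor:inductive-gradient-bound} supplies $\norm{\grad_{\mW} \loss(\Xpi^{j+1}; \Theta_j^k)}_F \le \Cweight \sqrt{C_L}\, \norm{\bn(\Xpi^{j+1})}_2$ and the identical bound for the $\mGamma$-gradient. Multiplying these two bounds then yields the first claim at once: the two factors of $\Cweight\sqrt{C_L}$ combine to $\Cweight^2 C_L$ and the two spectral norms of $\bn(\Xpi^{j+1})$ combine to the square, giving $\norm{\vq_j^k}_F \le \Cweight^2 C_L \norm{\bn(\Xpi^{j+1})}_2^2$.

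For the summed bound, I would sum this per-term inequality over $j = 0, \ldots, t$, pull out the constant $\Cweight^2 C_L$, and reduce to controlling $\sum_{j=0}^t \norm{\bn(\Xpi^{j+1})}_2^2$. The crucial observation is the concatenation structure $\Xpibn = [\bn(\Xpi^1)~\cdots~\bn(\Xpi^m)]$, which gives $\norm{\Xpibn}_F^2 = \sum_{j=1}^m \norm{\bn(\Xpi^j)}_F^2$. Bounding each spectral norm by the corresponding Frobenius norm and using $t + 1 \le m$ (since $t < m$) then shows $\sum_{j=0}^t \norm{\bn(\Xpi^{j+1})}_2^2 \le \sum_{j=1}^{m} \norm{\bn(\Xpi^j)}_F^2 = \norm{\Xpibn}_F^2$, completing the proof.

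There is no serious obstacle here; the statement is essentially bookkeeping once \cref{cor:inductive-gradient-bound} is in place. The only points requiring a little care are confirming that the matrix product defining $\vq_j^k$ is dimensionally well-formed so that Frobenius submultiplicativity applies, and correctly tracking the passage from spectral to Frobenius norm when assembling the per-batch factors into $\norm{\Xpibn}_F^2$. Both are routine.
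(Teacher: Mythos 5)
Your proof is correct and follows essentially the same route as the paper's: the paper likewise bounds $\norm{\vq_j^k}_F$ by applying \cref{cor:inductive-gradient-bound} to each gradient factor (with submultiplicativity left implicit) and then sums over batches, passing from spectral to Frobenius norms and using the concatenation structure of $\Xpibn$ to identify $\sum_{j}\norm{\bn(\Xpi^{j+1})}_F^2$ with $\norm{\Xpibn}_F^2$. Your write-up merely makes the dimensional check and the submultiplicativity step explicit, which the paper elides.
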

\begin{proof}
Recall the definition of $\vq_j^k$, reproduced here for reference:
\begin{equation*}
    \vq_j^k \triangleq \grad_{\mW} \loss(\Xpi^{j+1}; \Theta_j^k)\grad_{\mGamma} \loss(\Xpi^{j+1}; \Theta_j^k).
\end{equation*}
Since $L[j,k]$ and $D[j,k]$ hold for $(j,k)$, we can apply \cref{cor:inductive-gradient-bound} to conclude that 
\[
\norm{\vq_j^k}_F \le \Cweight^2 C_L \norm{\bn(\Xpi^{j+1})}_2^2.
\]

Since the inductive hypotheses hold for every $j < m$, when we accumulate the noise terms from $(0, k)$ to $(t, k)$, we can apply the above bound to conclude that
\begin{align*}
\sum_{j=0}^{t} \norm{\vq_j^k}_F &\le \sum_{j=0}^{t} \norm{\bn(\Xpi^{j+1})}_2^2 \Cweight^2 C_L \\
&\le \Cweight^2 C_L  \sum_{j=0}^{m-1} \norm{\bn(\Xpi^{j+1})}_F^2\\ 
&= \Cweight^2 C_{L} \norm{\Xpibn}_F^2,
\end{align*}
where the last equality used the definition of $\Xpibn$.
\end{proof}

\paragraph{Path dependent noise arising from mini-batch updates is negligible.}
In order to bound the noise term coming from mini-batch updates, we first prove the following auxiliary lemma that shows that the iterates don't move far within an epoch.
\begin{lemma}\label{lemma:small-distance-epoch}
Fix $t \le m$ and assume $D[j, k]$ and $L[j, k]$ hold for all $j < t$. Then we have
\[
\norm{\mW_t^k - \mW_0^k}_2 \le \sqrt{t}\eta_k \Cweight C_{L}^{1/2} \norm{\Xpibn}_F .
\]
The same inequality holds true if we replace $\mW$ with $\mGamma$.

We also have 
\[
\norm{\mM_t^k - \mM_0^k}_2 \le 2\sqrt{t}\eta_k \Cweight^2 C_{L}^{1/2} \norm{\Xpibn}_F  + \eta_k^2 \Cweight^2 C_L \norm{\Xpibn}_F^2.
\]
\end{lemma}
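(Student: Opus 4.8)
The plan is to bound the within-epoch displacement by telescoping the per-iterate updates and controlling each gradient with the uniform bounds already in hand. For the $\mW$ claim I would start from the coordinate update $\mW_{j+1}^k = \mW_j^k - \eta_k\grad_{\mW}\loss(\Xpi^{j+1};\Theta_j^k)$, telescope to obtain $\mW_t^k-\mW_0^k = -\eta_k\sum_{j=0}^{t-1}\grad_{\mW}\loss(\Xpi^{j+1};\Theta_j^k)$, and apply the triangle inequality together with $\norm{\cdot}_2\le\norm{\cdot}_F$. Since $D[j,k]$ and $L[j,k]$ hold for every $j<t$, \cref{cor:inductive-gradient-bound} applies to each summand and gives $\norm{\grad_{\mW}\loss(\Xpi^{j+1};\Theta_j^k)}_F \le \Cweight C_L^{1/2}\norm{\bn(\Xpi^{j+1})}_2$.

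The one genuinely nonroutine step is converting the resulting sum of per-batch norms into $\norm{\Xpibn}_F$. I would apply Cauchy--Schwarz to the length-$t$ sum, extend the sum of squares over all $m$ batches, and use $\norm{\cdot}_2\le\norm{\cdot}_F$ once more: $\sum_{j=0}^{t-1}\norm{\bn(\Xpi^{j+1})}_2 \le \sqrt{t}\,\big(\sum_{j=0}^{m-1}\norm{\bn(\Xpi^{j+1})}_F^2\big)^{1/2} = \sqrt{t}\,\norm{\Xpibn}_F$, where the last equality is the definition of $\Xpibn$ as the columnwise concatenation of the $\bn(\Xpi^{j+1})$. This yields the stated $\mW$ bound, and the identical argument with the $\mGamma$-gradient estimate from the same corollary settles the $\mGamma$ case.

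For $\mM$ I would telescope the exact per-iterate update \eqref{eq:one-iterate-update}, $\mM_{j+1}^k = \mM_j^k - \eta_k\vg_j^k + \eta_k^2\vq_j^k$, to get $\mM_t^k-\mM_0^k = -\eta_k\sum_{j=0}^{t-1}\vg_j^k + \eta_k^2\sum_{j=0}^{t-1}\vq_j^k$. For the signal term I expand $\vg_j^k$ via its definition \eqref{def:pseudo-gradient-signal}, apply submultiplicativity with the weight bound $\max\{\norm{\mW_j^k}_2,\norm{\mGamma_j^k}_2\}\le\Cweight$ from \cref{cor:inductive-weight-bound} and the gradient bounds, obtaining $\norm{\vg_j^k}_F \le 2\Cweight^2 C_L^{1/2}\norm{\bn(\Xpi^{j+1})}_2$; summing and reusing the Cauchy--Schwarz passage produces the $2\sqrt{t}\,\eta_k\Cweight^2 C_L^{1/2}\norm{\Xpibn}_F$ contribution. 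The noise term is handled directly by the accumulation estimate $\sum_{j=0}^{t-1}\norm{\vq_j^k}_F \le \Cweight^2 C_L\norm{\Xpibn}_F^2$ from \cref{lemma:pseudo-M-update} (valid since $t-1<m$), and combining the two via the triangle inequality and $\norm{\cdot}_2\le\norm{\cdot}_F$ gives the claimed two-term bound.

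There is no serious obstacle: the estimate is essentially bookkeeping once \cref{cor:inductive-weight-bound,cor:inductive-gradient-bound,lemma:pseudo-M-update} are available. The only points requiring care are the Cauchy--Schwarz passage from per-batch spectral norms to the global Frobenius norm $\norm{\Xpibn}_F$ and the consistent use of $\norm{\cdot}_2\le\norm{\cdot}_F$, so that the constants emerge exactly as stated.
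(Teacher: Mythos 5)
Your proposal is correct and follows essentially the same route as the paper's proof: telescoping the per-iterate updates, invoking \cref{cor:inductive-weight-bound,cor:inductive-gradient-bound} for the gradient and weight bounds, the identical Cauchy--Schwarz passage $\sum_{j=0}^{t-1}\norm{\bn(\Xpi^{j+1})}_2 \le \sqrt{t}\,\norm{\Xpibn}_F$, and \cref{lemma:pseudo-M-update} for the $\vq_j^k$ noise accumulation in the $\mM$ bound. No discrepancies worth noting.
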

\begin{proof}
We have by definition that 
\[
\mW_t^k = \mW_0^k - \eta_k \sum_{j=0}^{t-1} \grad_{\mW} \ell(\Xpi^{j+1}; \Theta_j^k).
\]
Now, we have 
\begin{align*}
\norm{\mW_t^k - \mW_0^k}_2 &\le \eta_k \sum_{j=0}^{t-1} \norm{\grad_{\mW} \ell(\Xpi^{j+1}; \Theta_j^k)}_2 \\
&\le \eta_k \Cweight C_L^{1/2}  \sum_{j=0}^t \norm{\bn(\Xpi^{j+1})}_2\\
&\le \sqrt{t} \eta_k \Cweight C_{L}^{1/2}\norm{\Xpibn}_F 
\end{align*}
where in the first line we have applied the triangle inequality, in the second line we have applied \cref{cor:inductive-gradient-bound}, and in the last line we have applied Cauchy-Schwarz.

The same proof holds for $\mGamma$. 

For $\mM$, \Cref{eq:one-iterate-update} gives
\[
\mM_t^k = \mM_0^k - \eta_k\sum_{j=0}^{t-1} \vg_j^{k} + \eta_k^2\sum_{j=0}^{t-1} \vq_j^k.
\]
Combining \cref{def:pseudo-gradient-signal,cor:inductive-weight-bound,cor:inductive-gradient-bound} yields
\begin{align*}
\norm{\vg_j^k}_2 &\le \norm{\mGamma_j^k}_2\norm{\grad_{\mW} \loss(\Xpi^{j+1}; \Theta_j^k)}_2 + \norm{\mW_j^k}_2\norm{\grad_{\mGamma}  \loss(\Xpi^{j+1}; \Theta_j^k)}_2\\
&\le 2\Cweight^2 C_L^{1/2} \norm{\bn(\Xpi^{j+1})}_2.
\end{align*}
Hence, summing up over $j$, using Cauchy-Schwarz, and applying the noise bound \cref{lemma:pseudo-M-update}, it follows that 
\[
\norm{\mM_t^k - \mM_0^k}_2 \le 2\sqrt{t}\eta_k \Cweight^2 C_{L}^{1/2} \norm{\Xpibn}_F  + \eta_k^2 \Cweight^2 C_L \norm{\Xpibn}_F^2.\]

\end{proof}
Now we show that the noise term $\norm{\ve_j^k}_2$ is $O(1)$.
\begin{lemma}\label{lemma:noise-bound}
If $L[j,k]$ and $D[j, k]$ both hold for all $j < m$ then we have for each $j$ that 
\[
\norm{\ve_j^k}_2 \le 4\sqrt{j}\Cweight^2 C_L^{1/2}\norm{\Xpibn}_F^2(C_L^{1/2} + \Cweight^2\norm{\Xpibn}_F) + O(\eta_k).
\]
Hence, we also have 
\begin{align*}
    \sum_{i=0}^{m-1} \norm{\ve_j^k}_2 \le 4m^{3/2}\Cweight^2 C_L^{1/2}\norm{\Xpibn}_F^2(C_L^{1/2} + \Cweight^2\norm{\Xpibn}_F) + O(\eta_k).
\end{align*}
\end{lemma}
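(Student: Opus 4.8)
The plan is to bound $\norm{\vgt_j^k - \vg_j^k}_F$ and then divide by $\eta_k$, since $\ve_j^k = (\vgt_j^k - \vg_j^k)/\eta_k$ by \cref{def:noise-term} and $\norm{\cdot}_2 \le \norm{\cdot}_F$. The quantity $\vgt_j^k - \vg_j^k$ is precisely the discrepancy between evaluating the $(j+1)$th per-batch gradient signal at the epoch's starting iterate $\Theta_0^k$ versus at the current iterate $\Theta_j^k$; because the iterates barely move within an epoch, this discrepancy is of order $\eta_k$, so the normalized noise $\ve_j^k$ is $O(1)$.

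First I would rewrite both terms in collapsed form. Writing $G_\star \triangleq \grad_{\mM}\loss(\Xpi^{j+1};\mM_\star^k)$ for $\star \in \qty{0,j}$, the collapsed identity for $\vg_j^k$ derived in \cref{sec:epoch-updates} (and its analogue for $\vgt_j^k$) gives
\[
\vg_j^k = G_j (\mGamma_j^k)^2 + \mW_j^k \diag((\mW_j^k)^\top G_j), \qquad \vgt_j^k = G_0 (\mGamma_0^k)^2 + \mW_0^k \diag((\mW_0^k)^\top G_0).
\]
Then I would telescope $\vgt_j^k - \vg_j^k$, adding and subtracting intermediate terms so that each resulting summand isolates exactly one parameter difference --- one of $G_0 - G_j$, $(\mGamma_0^k)^2 - (\mGamma_j^k)^2$, or $\mW_0^k - \mW_j^k$ --- with all remaining factors bounded matrices. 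For instance, $G_0(\mGamma_0^k)^2 - G_j(\mGamma_j^k)^2 = (G_0 - G_j)(\mGamma_0^k)^2 + G_j((\mGamma_0^k)^2 - (\mGamma_j^k)^2)$, and the diagonal term admits an analogous two-step expansion isolating first $\mW_0^k - \mW_j^k$ and then $G_0 - G_j$.

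The bounds then come from three ingredients, all valid because $L[j,k]$ and $D[j,k]$ hold for every $j < m$. The gradient factors $\norm{G_\star}_F$ are bounded by $C_L^{1/2}\norm{\bn(\Xpi^{j+1})}_2$ via \cref{cor:inductive-gradient-bound}, and the weight factors $\norm{\mW_\star^k}_2, \norm{\mGamma_\star^k}_2$ by $\Cweight$ via \cref{cor:inductive-weight-bound}. For the gradient difference I would use the exact identity $G_0 - G_j = (\mM_0^k - \mM_j^k)\bn(\Xpi^{j+1})\bn(\Xpi^{j+1})^\top$ following from \cref{eq:minibatch-gradM}, so that $\norm{G_0 - G_j}_F \lesssim \norm{\mM_0^k - \mM_j^k}_2\norm{\bn(\Xpi^{j+1})}_2^2$. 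The parameter differences $\norm{\mW_0^k - \mW_j^k}_2$, $\norm{\mGamma_0^k - \mGamma_j^k}_2$, and $\norm{\mM_0^k - \mM_j^k}_2$ are each controlled by \cref{lemma:small-distance-epoch}; writing $(\mGamma_0^k)^2 - (\mGamma_j^k)^2 = (\mGamma_0^k - \mGamma_j^k)\mGamma_0^k + \mGamma_j^k(\mGamma_0^k - \mGamma_j^k)$ reduces the squared-$\mGamma$ difference to the linear one at the cost of a factor $2\Cweight$. The leading contribution of each difference is of order $\sqrt{j}\,\eta_k$ times products of $\Cweight$, $C_L^{1/2}$, and $\norm{\Xpibn}_F$, whereas the $\eta_k^2$ tail of the $\mM$-distance bound contributes only to the $O(\eta_k)$ remainder after dividing by $\eta_k$.

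Collecting the leading-order terms and dividing by $\eta_k$ yields the stated per-$j$ bound with coefficient $4\Cweight^2 C_L^{1/2}\norm{\Xpibn}_F^2(C_L^{1/2} + \Cweight^2\norm{\Xpibn}_F)$; the summed bound then follows immediately from $\sum_{j=0}^{m-1}\sqrt{j} \le m^{3/2}$. I expect the main obstacle to be purely organizational: tracking the telescoping expansion of a product of three factors (the $G$, $\mGamma^2$, and $\mW$-diagonal pieces), assigning the correct norm bound to each, and verifying that the leading constants aggregate to exactly the claimed coefficient rather than merely being $O(1)$. There is no conceptual difficulty once the within-epoch distance bounds of \cref{lemma:small-distance-epoch} and the uniform norm bounds of \cref{cor:inductive-weight-bound,cor:inductive-gradient-bound} are in hand.
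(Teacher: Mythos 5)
Your proposal is correct and matches the paper's own proof in all essentials: the paper likewise bounds $\eta_k\ve_j^k = \vgt_j^k - \vg_j^k$ by telescoping into terms that each isolate one of $\mGamma_0^k-\mGamma_j^k$, $\mW_0^k-\mW_j^k$, or the $\mM$-gradient difference, then applies \cref{cor:inductive-weight-bound,cor:inductive-gradient-bound,lemma:small-distance-epoch} (your exact identity $G_0-G_j = (\mM_0^k-\mM_j^k)\bn(\Xpi^{j+1})\bn(\Xpi^{j+1})^\top$ is precisely the smoothness bound of \cref{lemma:smoothness-M}, and your handling of the $\diag$ term via $\norm{\diag(\mA)}_F \le \norm{\mA}_F$ corresponds to the paper's use of \cref{lemma:hadamard}). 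The only cosmetic difference is that you collapse to the $\mM$-gradient form before telescoping while the paper telescopes in the layer-gradient form first; the resulting terms and the coefficient $4\sqrt{j}\,\Cweight^2 C_L^{1/2}\norm{\Xpibn}_F^2(C_L^{1/2}+\Cweight^2\norm{\Xpibn}_F)$ are identical.
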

\begin{proof}
Inspecting the definition of $\ve_j^k$ (\cref{def:noise-term}), let us bound the quantity
\begin{align*}
    \eta_k \ve_j^k &= \underbrace{\grad_{\mW} \loss(\Xpi^{j+1}; \Theta_0^k) \mGamma_0^k - \grad_{\mW} \loss(\Xpi^{j+1}; \Theta_j^k) \mGamma_j^k}_{\partI} \\
    &\quad\quad + \underbrace{\mW_0^k \grad_{\mGamma} \loss(\Xpi^{j+1}; \Theta_0^k) - \mW_j^k \grad_{\mGamma} \loss(\Xpi^{j+1}; \Theta_j^k)}_{\partII}.
\end{align*}

First, we have by triangle inequality and the identity \Cref{eq:minibatch-gradW} that the norm of \partI is at most 
\begin{align*}
    & \norm{\grad_{\mW} \loss(\Xpi^{j+1}; \Theta_0^k) \mGamma_0^k - \grad_{\mW} \loss(\Xpi^{j+1}; \Theta_0^k) \mGamma_j^k}_2 \\
    & \quad\quad + \norm{\grad_{\mW} \loss(\Xpi^{j+1}; \Theta_0^k) \mGamma_j^k - \grad_{\mW} \loss(\Xpi^{j+1}; \Theta_j^k) \mGamma_j^k}_2 \\
    &\le  \norm{\mGamma_0^k(\mGamma_j^k - \mGamma_0^k)}_2 \norm{\grad_{\mM} \loss(\Xpi^{j+1}; \mM_0^k)}_2 \\
    &\quad\quad + \norm{\mGamma_j^k}_2\norm{\grad_{\mM} \loss(\Xpi^{j+1}; \mM_0^k)\mGamma_0^k - \grad_{\mM} \loss(\Xpi^{j+1}; \mM_j^k) \mGamma_j^k}_2  \\
    &\le \norm{\mGamma_0^k(\mGamma_j^k - \mGamma_0^k)}_2 \norm{\grad_{\mM} \loss(\Xpi^{j+1}; \mM_0^k)}_2 \\
    &\quad\quad + \norm{\mGamma_j^k}_2 \Bigg( \norm{\grad_{\mM} \loss(\Xpi^{j+1}; \mM_0^k) \mGamma_0^k - \grad_{\mM} \loss(\Xpi^{j+1}; \mM_0^k) \mGamma_j^k}_2 \\
    &\quad\quad\quad\quad\quad\quad\quad + \norm{\grad_{\mM} \loss(\Xpi^{j+1}; \mM_0^k) \mGamma_j^k - \grad_{\mM} \loss(\Xpi^{j+1}; \mM_j^k)\mGamma_j^k}_2 \Bigg) \\
    &\le (\norm{\mGamma_0^k}_2 + \norm{\mGamma_j^k}_2)\norm{\mGamma_j^k - \mGamma_0^k}_2 \norm{\grad_{\mM} \loss(\Xpi^{j+1}; \mM_0^k)}_2 \\
    &\quad\quad + \norm{\mGamma_j^k}_2^2\norm{\grad_{\mM} \loss(\Xpi^{j+1}; \mM_0^k) - \grad_{\mM} \loss(\Xpi^{j+1}; \mM_j^k)}_2.
\end{align*}
Applying the weight bounds in \cref{cor:inductive-weight-bound,cor:inductive-gradient-bound} and \cref{lemma:small-distance-epoch} to the first term yields an upper bound of 
\[
2\Cweight \cdot (\sqrt{j}\eta_k \Cweight C_{L}^{1/2} \norm{\Xpibn}_F ) \cdot (\norm{\bn(\Xpi^{j+1})}_2C_L^{1/2}) \le 2\sqrt{j}\eta_k \Cweight^2 C_L \norm{\Xpibn}_F^2.
\]

Turning to the second term, we can apply the smoothness bound in \cref{lemma:smoothness-M} and the inductive bounds in  \cref{cor:inductive-weight-bound,cor:inductive-gradient-bound,lemma:small-distance-epoch} to obtain an upper bound of 
\[
\Cweight^2 \norm{\bn(\Xpi^{j+1})}_2^2 \norm{\mM_j^k - \mM_0^k}_2 \le 2\sqrt{j}\eta_k \Cweight^4 C_{L}^{1/2} \norm{\Xpibn}_F^3  + \eta_k^2 \Cweight^4 C_L \norm{\Xpibn}_F^4.
\]

Putting it together, we have
\begin{align*}
    \partI &\le 2\sqrt{j}\eta_k \Cweight^2 C_L^{1/2}\norm{\Xpibn}_F^2(C_L^{1/2} + \Cweight^2\norm{\Xpibn}_F) + \eta_k^2 \Cweight^4 C_L \norm{\Xpibn}_F^4.
\end{align*}

Similarly, for \partII we have the exact same bound since we can apply \cref{lemma:hadamard} to remove the diagonal operator and uniformly bound $\norm{\mW_j^k}_2$ and $\norm{\mGamma_j^k}_2$ by $\Cweight$. 

Finally, combining \partI and \partII and dividing through by $\eta_k$, we can conclude that 
\[
\norm{\ve_j^k}_2 \le 4\sqrt{j}\Cweight^2 C_L^{1/2}\norm{\Xpibn}_F^2(C_L^{1/2} + \Cweight^2\norm{\Xpibn}_F) + O(\eta_k).
\]
Summing up $\norm{\ve_j^k}_2$ over all $j$ and crudely bounding $\sum_{j=0}^{m-1} \sqrt{j} \le m^{3/2}$, we see that 
\begin{align*}
    \sum_{i=0}^{m-1} \norm{\ve_j^k}_2 \le 4m^{3/2}\Cweight^2 C_L^{1/2}\norm{\Xpibn}_F^2(C_L^{1/2} + \Cweight^2\norm{\Xpibn}_F) + O(\eta_k).
\end{align*}
\end{proof}

\paragraph{Stepsize noise is negligible for $i > 0$.}
We now quickly show that the effect of generalizing the induction to $i > 0$ is negligible. In particular, we can carry out the same proof, except we will have to redefine the per-epoch update so it can account for gradient updates starting at an arbitrary iterate $(i, k)$ rather than $(0, k)$. We explicitly redefine these terms below by quickly revisiting the signal-noise decomposition in \Cref{sec:epoch-updates}.
Recall that a single-iterate update at iteration $(a,b)$ can be written as (\Cref{eq:one-iterate-update})
\begin{align*}
    \mM_{a+1}^b = \mM_a^b - \eta_b\vg_a^{b} + \eta_b^2\vq_a^b,
\end{align*}
where we defined 
\begin{align*}
    \vg_a^{b} &\triangleq \grad_{\mW} \loss(\Xpi^{a+1}; \Theta_a^b) \mGamma_a^b + \mW_a^b \grad_{\mGamma} \loss(\Xpi^{a+1}; \Theta_a^b),\\
    \vq_a^b &\triangleq \grad_{\mW} \loss(\Xpi^{a+1}; \Theta_a^b)\grad_{\mGamma} \loss(\Xpi^{a+1}; \Theta_a^b).
\end{align*}
Consider carrying out the same accumulation as in \Cref{sec:epoch-updates}, but this time choosing $(i,k)$ instead $(0,k)$ as the ``pivot.'' For this purpose, we will change our notational convention a little bit and use superscripts to denote the pivot or the starting point $(i,k)$. As the redefinitions of the ``signal'' $\vgt_a^b$ (\Cref{def:true-signal}) and path dependent noise $\ve_a^b$ (\Cref{def:noise-term}), we define
\begin{align*}
\vgt_{a}^{(i,k)} &\triangleq \grad_{\mW} \loss(\Xpi^{a+1}; \Theta_i^k) \mGamma_i^k + \mW_i^k \grad_{\mGamma} \loss(\Xpi^{a+1}; \Theta_i^k),\\
\ve_{(a,b)}^{(i,k)} &\triangleq \frac{\vgt_{a}^{(i,k)} - \vg_a^b}{\eta_k},
\end{align*}
for indices $(a,b)$ satisfying $(i,k) \leq (a,b) \leq (i-1,k+1)$.

This way, the accumulation of updates on $\mM$ from iteration $(i,k)$ to $(i-1,k+1)$ can be represented as
\begin{align}
    \mM_i^{k+1} 
    &= \mM_i^k - \eta_k \sum_{j=i}^{m-1} \vg_j^k + \eta_k^2 \sum_{j=i}^{m-1} \vq_j^k - \eta_{k+1} \sum_{j=0}^{i-1} \vg_j^{k+1} + \eta_{k+1}^2 \sum_{j=0}^{i-1} \vq_j^{k+1}\notag\\
    &= \mM_i^k - \eta_k \sum_{(a,b)=(i,k)}^{(i-1, k+1)} \vg_a^b + \eta_k^2 \sum_{(a,b)=(i,k)}^{(i-1, k+1)} \vq_a^b \notag\\
    &\quad\quad\quad- (\eta_{k+1}-\eta_k) \sum_{j=0}^{i-1} \vg_j^{k+1} + (\eta_{k+1}^2 - \eta_k^2) \sum_{j=0}^{i-1} \vq_j^{k+1}\notag\\
    &= \mM_i^k - \eta_k \vgt^{(i,k)} + \eta_k^2 \sum_{(a,b)=(i,k)}^{(i-1, k+1)} \qty(\ve_{(a,b)}^{(i,k)} + \vq_a^b) + \eta_k^2 \vs^{(i,k)}_{(i,k+1)}\label{eq:epoch-update-general}
\end{align}
where in the last equality we used $\vg_a^b = \vgt_{a}^{(i,k)} - \eta_k \ve_{(a,b)}^{(i,k)}$ and also defined the accumulated signal $\vgt^{(i,k)}$ (a redefinition of $\vgt^k$ from \Cref{def:true-epoch-signal}) and the stepsize noise $\vs^{(i,k)}_{(i,k+1)}$ as
\begin{align}
    \vgt^{(i,k)} &\triangleq \sum_{(a,b)=(i,k)}^{(i-1, k+1)} \vgt_{a}^{(i,k)} = \grad_{\mW} \ellpi(\Theta_i^k) \mGamma_i^k + \mW_i^k \grad_{\mGamma} \ellpi(\Theta_i^k),\label{def:true-epoch-signal-general}\\
    \vs^{(i,k)}_{(i,k+1)} &\triangleq - \frac{\eta_{k+1} - \eta_k}{\eta_k^2} \sum_{j=0}^{i-1} \vg_j^{k+1} + \frac{\eta_{k+1}^2 - \eta_k^2}{\eta_k^2} \sum_{j=0}^{i-1} \vq_j^{k+1}.\label{def:stepsize-noise}
\end{align}
As a sanity check, we can quickly see that the stepsize noise $\vs^{(i,k)}_{(i,k+1)}$ is zero if $i = 0$.

Now notice that \Cref{eq:epoch-update-general} can be thought of as a generalization of the per-epoch update (\Cref{eq:epoch-update}) originally obtained for $i = 0$. For now, suppose we ignore the last term of \Cref{eq:epoch-update-general} involving the stepsize noise $\vs^{(i,k)}_{(i,k+1)}$. 
Then, if we carry out the above analysis for bounding the remaining terms in \Cref{eq:epoch-update-general}, there is no difference in the argument up to reindexing; we can consider this as using $\eta_k$ for the stepsize \emph{even for the iterates $(a,b)$ where $b = k+1$}. In particular, the lemmas of the previous sections all hold up to reindexing notation.

Therefore, it now suffices to show that the stepsize noise $\vs^{(i,k)}_{(i,k+1)}$ is of the same order as the other noise terms; in particular, $\norm{\vs_{(i,k+1)}^{(i,k)}}_F = O(1)$.
\begin{lemma}\label{lemma:stepsize-noise}
Assume $L[j,k+1]$ and $D[j,k+1]$ hold for all $j \le i-1$. Suppose that 
\[
\eta_k = O\qty(\frac{1}{k^{\beta}}),
\]
for some $1/2 < \beta < 1$. Then the stepsize noise 
\[
\norm{\vs_{(i,k+1)}^{(i,k)}}_F = O(1).
\]
\end{lemma}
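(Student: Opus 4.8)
The plan is to bound $\norm{\vs_{(i,k+1)}^{(i,k)}}_F$ by the triangle inequality, treating the two summations in \Cref{def:stepsize-noise} separately and, within each, factoring the scalar stepsize ratio out of the sum of gradient terms. Writing
\[
\norm{\vs_{(i,k+1)}^{(i,k)}}_F \le \frac{\abs{\eta_{k+1} - \eta_k}}{\eta_k^2} \sum_{j=0}^{i-1} \norm{\vg_j^{k+1}}_F + \frac{\abs{\eta_{k+1}^2 - \eta_k^2}}{\eta_k^2} \sum_{j=0}^{i-1} \norm{\vq_j^{k+1}}_F,
\]
it suffices to show that each scalar prefactor is $O(1)$ and that each gradient sum is $O(1)$, with constants independent of $k$.

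The first step is to control the scalar ratios using the polynomial schedule $\eta_k = \Theta(k^{-\beta})$ with $1/2 < \beta < 1$. By the mean value theorem applied to $x \mapsto x^{-\beta}$, we have $\eta_k - \eta_{k+1} = O(k^{-\beta-1})$, so that $\abs{\eta_{k+1}-\eta_k}/\eta_k^2 = O(k^{-\beta-1}/k^{-2\beta}) = O(k^{\beta-1})$, which is bounded since $\beta < 1$. For the squared difference, factoring $\eta_{k+1}^2 - \eta_k^2 = (\eta_{k+1}-\eta_k)(\eta_{k+1}+\eta_k)$ and using $\eta_{k+1}+\eta_k = O(k^{-\beta})$ gives $\abs{\eta_{k+1}^2-\eta_k^2}/\eta_k^2 = O(k^{-1})$, again bounded. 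This is the crux of the lemma: the polynomial schedule makes the per-epoch stepsize change $\eta_{k+1}-\eta_k$ itself of order $\eta_k^2$, which exactly cancels the $1/\eta_k^2$ normalization introduced when we packaged the mid-epoch stepsize switch as an $\eta_k^2$-order correction in \Cref{eq:epoch-update-general}.

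The remaining step is to bound the gradient sums. Since the hypotheses $L[j,k+1]$ and $D[j,k+1]$ hold for all $j \le i-1$ by assumption, I would invoke the uniform norm bounds already established at those iterates: \Cref{cor:inductive-gradient-bound} together with \Cref{cor:inductive-weight-bound} yields $\norm{\vg_j^{k+1}}_F \le 2\Cweight^2 C_L^{1/2}\norm{\bn(\Xpi^{j+1})}_2$ (exactly as in the proof of \Cref{lemma:small-distance-epoch}), and Cauchy--Schwarz over $j$ bounds $\sum_{j=0}^{i-1}\norm{\vg_j^{k+1}}_F$ by $O(\norm{\Xpibn}_F)=O(1)$; likewise \Cref{lemma:pseudo-M-update} gives $\sum_{j=0}^{i-1}\norm{\vq_j^{k+1}}_F \le \Cweight^2 C_L\norm{\Xpibn}_F^2 = O(1)$. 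Combining these with the bounded prefactors gives $\norm{\vs_{(i,k+1)}^{(i,k)}}_F = O(1)$. There is no deep obstacle here; the only care required is to verify the stepsize estimate $\eta_{k+1}-\eta_k = O(\eta_k^2)$ cleanly and to confirm that every invoked bound is applied at epoch $k+1$ (rather than $k$) for the indices $j \le i-1$, which is precisely what the stated inductive hypotheses furnish.
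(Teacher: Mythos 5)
Your proof is correct and takes essentially the same route as the paper's: the paper likewise bounds the stepsize prefactors via $\eta_{k+1}-\eta_k = O(1/k^{\beta+1}) = O(\eta_k^2)$ and $\eta_{k+1}^2-\eta_k^2 = O(1/k^{2\beta+1}) = O(\eta_k^3)$, and bounds the gradient sums $\norm{\vg_j^{k+1}}$ and $\norm{\vq_j^{k+1}}$ by $O(1)$ using \cref{cor:inductive-weight-bound,cor:inductive-gradient-bound,lemma:pseudo-M-update} under the hypotheses at epoch $k+1$. Your write-up is simply a bit more explicit (mean value theorem for the difference, Cauchy--Schwarz over $j$), but there is no substantive difference.
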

\begin{proof}
Since $D[j, k+1]$ holds, \cref{lemma:pseudo-M-update} demonstrates that $\norm{\vq_j^{k+1}}_2 = O(1)$. On the other hand, \cref{cor:inductive-gradient-bound,cor:inductive-weight-bound} show that $\norm{\vg_j^{k+1}}_2 = O(1)$. Since $\eta_k = O(1/k^{\beta})$ and $\beta \le 1$, we have $\eta_{k+1} - \eta_k = O(1/k^{\beta+1}) = O(\eta_k^2)$. Similarly $\eta_{k+1}^2 - \eta_k^2 = O(1/k^{2\beta+1}) = O(\eta_k^3)$. Plugging these into \cref{def:stepsize-noise}, we conclude that $\norm{\vs_{(i,k+1)}^{(i,k)}}_F = O(1)$, as desired.
\end{proof}

\paragraph{Composite noise term is negligible.}
Now that we have formally defined the stepsize noise that arise for $i > 0$, we also redefine the composite noise term $\vr^k$ (\Cref{def:composite-noise-term}) originally defined for $i = 0$. The updated definition is simply
\begin{equation}\label{def:redefined-composite-noise}
\vr^{(i,k)} \triangleq \sum_{(a,b)=(i,k)}^{(i-1, k+1)} \qty(\ve_{(a,b)}^{(i,k)} + \vq_a^b) + \vs_{(i,k+1)}^{(i,k)},
\end{equation}
which allows us to rewrite the epoch update spelled out in \Cref{eq:epoch-update-general} as
\begin{equation}
\mM_i^{k+1} = \mM_i^k - \eta_k \vgt^{(i,k)} + \eta_k^2\vr^{(i,k)}.\label{eq:sgd-M-general},
\end{equation}
which is a generalization of \Cref{eq:sgd-M}.

It is left to show formally that the composite noise term $\vr^{(i,k)}$ defined in \Cref{def:redefined-composite-noise}, obtained from combining the mismatched gradient noise terms $\vq_a^b$, the path dependent noise $\ve_{(a,b)}^{(i,k)}$ for $(i,k) \le (a,b) \le (i-1, k+1)$, and the stepsize noise $\vs_{(i,k+1)}^{(i,k)}$, is indeed $O(1)$. 
\begin{proposition}[Composite noise term]\label{prop:composite-noise}
Suppose $L[a,b]$ and $D[a,b]$ hold for $(i,k) \le (a,b) \le (i-1, k+1)$, and $\eta_k = O(1/k^{\beta})$ for some $1/2 < \beta < 1$. Then the composite noise term $\vr^{(i,k)}$ satisfies
\[
\norm{\vr^{(i,k)}}_F \le \poly(m, \Cweight, C_L, \norm{\Xpibn}_F) + O(\eta_k). 
\]
\end{proposition}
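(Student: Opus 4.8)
The plan is to bound $\norm{\vr^{(i,k)}}_F$ by a single application of the triangle inequality to its defining expression \Cref{def:redefined-composite-noise}, thereby reducing the claim to the three component noise bounds already established in this section. Concretely, I would first write
\[
\norm{\vr^{(i,k)}}_F \le \sum_{(a,b)=(i,k)}^{(i-1,k+1)} \norm{\ve_{(a,b)}^{(i,k)}}_F + \sum_{(a,b)=(i,k)}^{(i-1,k+1)} \norm{\vq_a^b}_F + \norm{\vs_{(i,k+1)}^{(i,k)}}_F,
\]
so that the three summands correspond exactly to the path-dependent noise, the mismatched gradient noise, and the stepsize noise.

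Next I would invoke the three lemmas in turn. \Cref{lemma:noise-bound} controls the first sum by $\poly(m, \Cweight, C_L, \norm{\Xpibn}_F) + O(\eta_k)$, where the $m^{3/2}$ factor arising from summing $\sqrt{j}$ across the epoch is absorbed into the polynomial. \Cref{lemma:pseudo-M-update} controls the second sum by $\Cweight^2 C_L \norm{\Xpibn}_F^2$. Finally, \Cref{lemma:stepsize-noise} gives $\norm{\vs_{(i,k+1)}^{(i,k)}}_F = O(1)$, which vanishes entirely when $i = 0$. Summing these three bounds yields the claimed estimate, since the polynomial and $O(1)$ contributions merge into a single $\poly(m, \Cweight, C_L, \norm{\Xpibn}_F)$ term while the $O(\eta_k)$ remainder is preserved.

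The only point requiring care --- and not a genuine obstacle, since the analytic work was already carried out in the preceding lemmas --- is the reindexing from the special pivot $(0,k)$ to the general pivot $(i,k)$. As explained in the discussion preceding \Cref{lemma:stepsize-noise}, once the stepsize noise is separated out, the accumulation starting at $(i,k)$ can be treated as if the stepsize were held at $\eta_k$ across the boundary into epoch $k+1$; the per-iterate weight and gradient bounds of \Cref{cor:inductive-weight-bound,cor:inductive-gradient-bound} still apply under the hypotheses $L[a,b]$ and $D[a,b]$ for $(i,k) \le (a,b) \le (i-1,k+1)$, so the path-dependent and mismatched noise bounds carry over verbatim up to relabeling. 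I would therefore check that the index set in each sum has at most $m$ terms, confirming that the polynomial dependence on $m$ is unchanged, and note that the genuinely new stepsize term is the only contribution not already covered by the $i=0$ analysis.
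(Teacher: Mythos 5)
Your proposal is correct and is essentially identical to the paper's proof, which likewise bounds $\vr^{(i,k)}$ by the triangle inequality applied to \cref{def:redefined-composite-noise} and then combines \cref{lemma:pseudo-M-update,lemma:noise-bound,lemma:stepsize-noise}, noting that all hidden constants are $\poly(m, \Cweight, C_L, \norm{\Xpibn}_F)$. Your added remarks on reindexing from pivot $(0,k)$ to $(i,k)$ and on the index set having at most $m$ terms match the paper's surrounding discussion and introduce nothing beyond what the paper already establishes.
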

\begin{proof}
Combining \cref{lemma:pseudo-M-update,lemma:noise-bound,lemma:stepsize-noise}, taking care to analyze the constants (which are all $\poly(m, \Cweight, C_L, \norm{\Xpibn}_F)$) hidden by the big--$O$ notation, yields the desired result.
\end{proof}

\subsubsection{Accumulated loss update}\label{sec:risk-update}
In this section, we formally account for the noise terms and prove that an accumulated loss inequality holds. More precisely, we can use the gradient update spelled out in \Cref{eq:sgd-M-general} and the noise bounds in \Cref{sec:noise} to obtain a single epoch loss update. In other words, this section prove the inductive step that the hypotheses $R[i, k+1]$ and $L[i, k+1]$ holds.
For the sake of simplicity, in this section we focus on the case $i = 0$. However, as discussed above, the case $i > 0$ only adds a negligible stepsize error and the same arguments go through.

We start with the following proposition, which proves the hypothesis $R[0,k+1]$.
\begin{proposition}\label{prop:one-epoch-update}
Assume that $L[j, k]$ and $D[j,k]$ hold for all $j < m$. 
Consider optimizing the linear+BN network with stepsize satisfying
\[
\eta_k \le \frac{1}{2k^\beta},
\] 
for $1/2 < \beta < 1$.

Then 
\begin{equation}\label{eq:one-epoch-sgd-detailed}
\ellpi(\mM_0^{k+1}) - \ellpi^* \le \qty(1 - \frac{\alphapi\eta_k}{2})(\ellpi(\mM_0^k) - \ellpi^*) + \poly(m, \Cweight, C_L, \norm{\Xpibn}_F)\eta_k^2,
\end{equation}
where the $\poly(m, \Cweight, C_L, \norm{\Xpibn}_F)$ term is independent of $k$ and has constant degree. 
\end{proposition}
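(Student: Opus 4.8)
The plan is to run the standard smoothness-plus-P\L{} descent argument, but with respect to the collapsed variable $\mM = \mW\mGamma$ rather than the raw parameters $\Theta = (\mW,\mGamma)$, treating the per-epoch update \Cref{eq:sgd-M}, namely $\mM_0^{k+1} = \mM_0^k - \eta_k\vgt^k + \eta_k^2\vr^k$, as the effective noisy gradient step. First I would invoke the $\Gpi$-smoothness of $\ellpi$ with respect to $\mM$ (\Cref{lemma:smoothness-M}) to write
\[
\ellpi(\mM_0^{k+1}) - \ellpi(\mM_0^k) \le \ev{\grad_\mM \ellpi(\mM_0^k), \mM_0^{k+1} - \mM_0^k}_F + \frac{\Gpi}{2}\norm{\mM_0^{k+1} - \mM_0^k}_F^2,
\]
and substitute $\mM_0^{k+1} - \mM_0^k = -\eta_k\vgt^k + \eta_k^2\vr^k$.

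For the inner-product term, the crux is that $\vgt^k$ is only the \emph{effective} gradient signal and differs from $\grad_\mM\ellpi(\mM_0^k)$. I would control it with the correlation bound \Cref{cor:inductive-correlation}, which under $D[0,k]$ gives $\ev{\grad_\mM\ellpi(\mM_0^k), \vgt^k}_F \ge \tfrac12\norm{\grad_\mM\ellpi(\mM_0^k)}_F^2$, hence a descent contribution $-\tfrac{\eta_k}{2}\norm{\grad_\mM\ellpi(\mM_0^k)}_F^2$. The residual cross term $\eta_k^2\ev{\grad_\mM\ellpi(\mM_0^k),\vr^k}_F$ I would dispatch by Cauchy--Schwarz together with the a priori bound $\norm{\grad_\mM\ellpi(\mM_0^k)}_F \le \norm{\Xpibn}_2\, C_L^{1/2}$ (which follows from $L[0,k]$ and the gradient formula) and the composite-noise bound $\norm{\vr^k}_F = \poly(m,\Cweight,C_L,\norm{\Xpibn}_F)$ from \Cref{prop:composite-noise}, so that this term is $\poly(\cdots)\eta_k^2$.

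For the quadratic term I would bound $\norm{\vgt^k}_F \le 2\Cweight^2\norm{\grad_\mM\ellpi(\mM_0^k)}_F$ using the identity $\vgt^k = \grad_\mM\ellpi(\mM_0^k)(\mGamma_0^k)^2 + \mW_0^k\diag((\mW_0^k)^\top\grad_\mM\ellpi(\mM_0^k))$ and the weight bounds $\norm{\mW_0^k}_2,\norm{\mGamma_0^k}_2 \le \Cweight$ of \Cref{cor:inductive-weight-bound}; expanding $\norm{-\eta_k\vgt^k + \eta_k^2\vr^k}_F^2$ then produces a leading contribution $\frac{\Gpi\eta_k^2}{2}\norm{\vgt^k}_F^2 \le 2\Gpi\Cweight^4\eta_k^2\norm{\grad_\mM\ellpi(\mM_0^k)}_F^2$ plus genuinely higher-order ($O(\eta_k^3)$) remainders carrying polynomial constants. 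Provided $\eta_k$ is small enough that $2\Gpi\Cweight^4\eta_k \le \tfrac14$ — which the theorem's stepsize guarantees through its $O(1/\sigma_{\min}(\Xpibn\Xpibn^\top))$ and $\poly$ factors — the factor $2\Gpi\Cweight^4\eta_k^2\norm{\grad}^2$ is absorbed into the descent term, leaving a net $-\tfrac{\eta_k}{4}\norm{\grad_\mM\ellpi(\mM_0^k)}_F^2$ with everything else collected into $\poly(\cdots)\eta_k^2$. Finally I would apply $\alphapi$-strong convexity (hence $\alphapi$-P\L{}, \Cref{lemma:sc-M}), i.e. $\norm{\grad_\mM\ellpi(\mM_0^k)}_F^2 \ge 2\alphapi(\ellpi(\mM_0^k)-\ellpi^*)$, turning the descent term into $-\tfrac{\alphapi\eta_k}{2}(\ellpi(\mM_0^k)-\ellpi^*)$, and rearrange to obtain exactly \Cref{eq:one-epoch-sgd-detailed}.

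The main obstacle is the ``dynamic P\L{}'' mismatch: because we never step directly on $\mM$, the effective signal $\vgt^k$ is neither equal to nor a fixed scalar multiple of $\grad_\mM\ellpi(\mM_0^k)$, so I must simultaneously (i) lower-bound its correlation with the true gradient and (ii) upper-bound its norm by a constant multiple of the true gradient norm, both uniformly in $k$. Both hinge on keeping $\sigma_{\min}(\mGamma_0^k)^2 \ge \tfrac12$ and $\norm{\mW_0^k}_2,\norm{\mGamma_0^k}_2 \le \Cweight$, that is, on the approximate-invariance hypothesis $D[0,k]$ and the loss-boundedness hypothesis $L[0,k]$; the bulk of the care lies in checking that after these substitutions every surviving term is genuinely $O(\eta_k^2)$ with a $k$-independent polynomial constant, which is precisely what \Cref{prop:composite-noise} supplies for $\vr^k$. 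The generalization to $i>0$ adds only the stepsize noise $\vs_{(i,k+1)}^{(i,k)}$, which is $O(1)$ by \Cref{lemma:stepsize-noise}, so the identical argument carries through up to reindexing.
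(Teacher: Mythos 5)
Your overall architecture is exactly the paper's: smoothness of $\ellpi$ with respect to the collapsed variable $\mM$, substitution of the epoch update $\mM_0^{k+1} = \mM_0^k - \eta_k\vgt^k + \eta_k^2\vr^k$, the correlation lower bound $\ev{\grad_\mM\ellpi(\mM_0^k),\vgt^k}_F \ge \tfrac12\norm{\grad_\mM\ellpi(\mM_0^k)}_F^2$ under $D[0,k]$, the composite-noise bound of \cref{prop:composite-noise}, the P\L{} step via \cref{lemma:sc-M}, and the reduction of $i>0$ to the stepsize-noise lemma. Your treatment of the cross term (Cauchy--Schwarz plus the uniform bound $\norm{\grad_\mM\ellpi(\mM_0^k)}_F \le \norm{\Xpibn}_2\,C_L^{1/2}$ from $L[0,k]$) is if anything slightly cleaner than the paper's, which instead uses $ab \le \tfrac12(a^2+b^2)$ and absorbs a $\tfrac{\eta_k^2}{2}\norm{\grad}^2$ piece into the descent term using $\eta_k \le \tfrac12$.

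The one step that does not follow from the stated hypotheses is your handling of the quadratic term. You bound $\norm{\vgt^k}_F \le 2\Cweight^2\norm{\grad_\mM\ellpi(\mM_0^k)}_F$ and then absorb $\Theta(\Gpi\Cweight^4)\eta_k^2\norm{\grad}_F^2$ into the descent term, which requires a smallness condition of the form $\Gpi\Cweight^4\eta_k \lesssim 1$. But the proposition assumes only $\eta_k \le \tfrac{1}{2k^\beta}$, i.e.\ $\eta_k \le \tfrac12$, and its conclusion must hold under that assumption alone; appealing to the stepsize of \cref{thm:ss-convergence} is out of scope here (and note also that $\sigma_{\min}(\Xpibn\Xpibn^\top) \le \norm{\Xpibn}_2^2 = \Gpi$, so the $O(1/\sigma_{\min}(\Xpibn\Xpibn^\top))$ factor you cite does not by itself force $\eta_k\Gpi$ small). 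The repair is one line and already in your hands: under $L[0,k]$ the gradient is \emph{uniformly} bounded, $\norm{\grad_\mM\ellpi(\mM_0^k)}_F^2 \le \norm{\Xpibn}_2^2 C_L$, so the quadratic contribution is
\begin{equation*}
\frac{\Gpi}{2}\norm{-\eta_k\vgt^k + \eta_k^2\vr^k}_F^2 \le \Gpi\eta_k^2\qty(4\Cweight^4\norm{\Xpibn}_2^2 C_L + \eta_k^2\norm{\vr^k}_F^2) = \poly(m,\Cweight,C_L,\norm{\Xpibn}_F)\,\eta_k^2
\end{equation*}
outright, with no absorption and no extra stepsize requirement. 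This is precisely how the paper proceeds, via \cref{lemma:small-distance-epoch} with $t = m$, which bounds $\norm{\mM_0^{k+1}-\mM_0^k}_F$ by $O(\eta_k)$ with $k$-independent polynomial constants. With that substitution your argument matches the paper's proof step for step.
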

\begin{proof}
First, we use the $\Gpi$-smoothness of $\ellpi$ with respect to $\mM$ guaranteed by \Cref{lemma:smoothness-M} to obtain 
\[
\ellpi(\mM_0^{k+1}) - \ellpi(\mM_0^k) \le \ev{\grad_\mM \ellpi(\mM_0^k), \mM_0^{k+1} - \mM_0^k}_F + \frac{\Gpi}{2}\norm{\mM_0^{k+1} - \mM_0^k}_F^2.
\]
Using the gradient update \Cref{eq:sgd-M} and Cauchy-Schwarz, we can upper bound the RHS by
\begin{align*}
& \ev{\grad_\mM \ellpi(\mM_0^k), -\eta_k \vgt^k + \eta_k^2 \vr^k}_F + \frac{\Gpi}{2}\norm{\mM_0^{k+1} - \mM_0^k}_F^2 \\
& \le -\eta_k \ev{\grad_\mM \ellpi(\mM_0^k), \vgt^k}_F + \eta_k^2 \norm{\vr^k}_F \norm{\grad_\mM \ellpi(\mM_0^k)}_F + \frac{\Gpi}{2}\norm{\mM_0^{k+1} - \mM_0^k}_F^2.
\end{align*}
Next, we can use \cref{lemma:small-distance-epoch}, with $t = m$, together with the inequality $(a+b)^2 \le 2a^2+2b^2$, to obtain an upper bound of 
\begin{align*}
& \le -\eta_k \ev{\grad_{\mM} \ellpi(\mM_0^k), \vgt^k}_F + \eta_k^2 \norm{\vr^k}_F \norm{\grad_\mM \ellpi(\mM_0^k)}_F \\
& \quad\quad + \frac{\Gpi}{2}(4m\eta_k^2\Cweight^4 C_L \norm{\Xpibn}_F^2 + \eta_k^4\Cweight^4 C_L^2 \norm{\Xpibn}_F^4).
\end{align*}

Then, because the inductive hypotheses apply we can apply \cref{cor:inductive-gradient-bound} to bound gradients, \cref{cor:inductive-correlation} to bound the inner product $\ev{\grad_{\mM} \ellpi(\mM_0^k), \vgt^k}_F$. Moreover, since $\eta_k = O(1/k^\beta)$, we can use \cref{prop:composite-noise} to bound $\norm{\vr^k}_F$. This yields an upper bound of 
\begin{align*}
    & -\frac{\eta_k}{2}\norm{\grad_{\mM} \ellpi(\mM_0^k)}_F^2 + \eta_k^2 \norm{\grad_{\mM} \ellpi(\mM_0^k)}_F \norm{\vr^k}_F + \poly(m, \Cweight, C_L, \norm{\Xpibn}_F)\eta_k^2 \\
    & \quad \quad \le \qty(-\frac{\eta_k}{2} + \frac{\eta_k^2}{2})\norm{\grad_{\mM} \ellpi(\mM_0^k)}_F^2 + \poly(m, \Cweight, C_L, \norm{\Xpibn}_F)\eta_k^2\\
    & \quad\quad \le -\frac{\eta_k}{4} \norm{\grad_{\mM} \ellpi(k)}_F^2 + \poly(m, \Cweight, C_L, \norm{\Xpibn}_F)\eta_k^2
\end{align*}
In the second line, we have used $ab \le \frac{1}{2}(a^2+b^2)$, and throughout, we have used the assumption $\eta_k \le \frac{1}{2}$ to reduce higher order terms of $\eta_k$.

Putting it together, we find that 
\[
    \ellpi(\mM_0^{k+1}) - \ellpi(\mM_0^k) \le -\frac{\eta_k}{4} \norm{\grad_\mM \ellpi(\mM_0^k)}_F^2 + \poly(m, \Cweight, C_L, \norm{\Xpibn}_F)\eta_k^2
\]

We now use $\alphapi$--strong convexity with respect to $\mM$ (and hence $\alphapi$-P\L{}) guaranteed by \cref{lemma:sc-M} to obtain 
\[
\ellpi(\mM_0^{k+1}) - \ellpi^* \le \qty(1 - \frac{\alphapi\eta_k}{2})(\ellpi(\mM_0^k) - \ellpi^*) + \poly(m, \Cweight, C_L, \norm{\Xpibn}_F)\eta_k^2.
\]
\end{proof}
One consequence of \cref{prop:one-epoch-update} is that if the stepsize $\eta_k$ is small enough, we can guarantee that the loss decreases from $\ellpi(\mM_i^k)$ to $\ellpi(\mM_i^{k+1})$. 

Next, from \Cref{prop:one-epoch-update}, we can prove the other inductive hypothesis, namely $L[0,k+1]$.
\begin{corollary}\label{cor:monotonic-loss}
Suppose $L[j,k]$ and $D[j,k]$ hold for all $j < m$ and the stepsize satisfies
\[
\eta_k \le \frac{1}{k^\beta} \min\Biggl\{\frac{1}{2}, \frac{\alphapi C_L}{\poly(m, \Cweight, C_L, \norm{\Xpibn}_F)}\Biggl\},
\]
for some $1/2 < \beta < 1$.

Then $L[0, k+1]$ holds, i.e. 
\[
\ellpi(\mM_0^{k+1}) \le C_L.
\]
\end{corollary}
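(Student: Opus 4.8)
The plan is to feed the one-epoch contraction from \cref{prop:one-epoch-update} into a short induction-closing computation. The hypotheses supply $L[j,k]$ for all $j < m$, so in particular $L[0,k]$ gives $\ellpi(\mM_0^k) \le C_L$, and the stated stepsize satisfies $\eta_k \le \frac{1}{2k^\beta}$, which is exactly the hypothesis needed to invoke \cref{prop:one-epoch-update}. That proposition yields
\[
\ellpi(\mM_0^{k+1}) - \ellpi^* \le \qty(1 - \tfrac{\alphapi\eta_k}{2})(\ellpi(\mM_0^k) - \ellpi^*) + \poly(m,\Cweight,C_L,\norm{\Xpibn}_F)\,\eta_k^2.
\]
I would write $g_k \triangleq \ellpi(\mM_0^k) - \ellpi^*$ and note $g_k \in [0, C_L - \ellpi^*]$, using $L[0,k]$ for the upper endpoint and $\ellpi^* \ge 0$ for nonnegativity.

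The key observation is monotonicity in $g_k$. For $\eta_k$ small enough that $1 - \frac{\alphapi\eta_k}{2} \ge 0$ (guaranteed by the stated bound), the right-hand side above is an increasing affine function of $g_k$, hence is maximized at the endpoint $g_k = C_L - \ellpi^*$. Substituting that endpoint gives
\[
\ellpi(\mM_0^{k+1}) - \ellpi^* \le (C_L - \ellpi^*) - \tfrac{\alphapi\eta_k}{2}(C_L - \ellpi^*) + \poly(m,\Cweight,C_L,\norm{\Xpibn}_F)\,\eta_k^2 .
\]
It then remains only to absorb the quadratic error into the linear descent term, i.e.\ to verify $\poly(m,\Cweight,C_L,\norm{\Xpibn}_F)\,\eta_k^2 \le \frac{\alphapi\eta_k}{2}(C_L - \ellpi^*)$. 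Using $\ellpi^* \ge 0$ and folding the remaining constants into the polynomial, this reduces to a condition of the form $\eta_k \lesssim \frac{\alphapi C_L}{\poly(\cdots)}$, which is precisely what the hypothesis on $\eta_k$ provides. Rearranging then delivers $\ellpi(\mM_0^{k+1}) - \ellpi^* \le C_L - \ellpi^*$, i.e.\ $\ellpi(\mM_0^{k+1}) \le C_L$, which is $L[0,k+1]$.

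The only real subtlety is the bookkeeping around the optimum value $\ellpi^*$: we must ensure the $\Theta(\eta_k)$ descent controls the $O(\eta_k^2)$ noise \emph{uniformly in $k$}, not merely for large $k$. This is exactly why I bound $g_k$ by the fixed constant $C_L - \ellpi^*$ and evaluate the contraction at the worst-case endpoint, so that a single stepsize condition (the stated one, with the polynomial constant taken large enough) closes the induction for every $k$. Finally, the generalization to $i > 0$ requires no new work: as noted after \cref{eq:sgd-M-general}, the extra stepsize noise $\vs_{(i,k+1)}^{(i,k)}$ is $O(1)$ by \cref{lemma:stepsize-noise}, so the identical one-epoch inequality holds with the same polynomial constants and the same argument yields $L[i,k+1]$.
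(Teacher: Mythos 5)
Your proof is correct and is essentially the paper's own argument: apply \cref{prop:one-epoch-update} (the stated stepsize indeed gives $\eta_k \le \frac{1}{2k^\beta}$), use $L[0,k]$ to evaluate the resulting affine recursion at the worst-case excess risk $C_L - \ellpi^*$, and absorb the $O(\eta_k^2)$ noise into the $\Theta(\eta_k)$ descent term to close the induction. The one loose step---reducing the true requirement $\poly(m,\Cweight,C_L,\norm{\Xpibn}_F)\,\eta_k^2 \le \frac{\alphapi \eta_k}{2}(C_L-\ellpi^*)$ to the stated stepsize hypothesis, which bounds $\eta_k$ only in terms of $\alphapi C_L$ (note that $\ellpi^* \ge 0$ makes $C_L - \ellpi^* \le C_L$, so this inference goes the wrong way)---is exactly the looseness in the paper's own proof, which simply replaces $C_L - \ellpi^*$ by $C_L$ by dropping the $\frac{\alphapi\eta_k}{2}\ellpi^*$ term; in both cases one implicitly absorbs the fixed constant $C_L/(C_L-\ellpi^*)$ into the unspecified polynomial.
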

\begin{proof}
Since $\eta_k \le \frac{1}{2k^\beta}$, we can apply \cref{prop:one-epoch-update} to conclude that 
\Cref{eq:one-epoch-sgd-detailed} holds. Then, for the bound $\ellpi(\mM_0^{k+1}) \le C_L$ to hold, it suffices to show that
\[
\qty(1 - \frac{\alphapi\eta_k}{2})C_L + \poly(m, \Cweight, C_L, \norm{\Xpibn}_F)\eta_k^2 \le C_L.
\]
Equivalently, 
\[
\poly(m, \Cweight, C_L, \norm{\Xpibn}_F)\eta_k \le \frac{\alphapi}{2}C_L.
\]
Clearly this holds for the stated assumption on $\eta_k$. 
\end{proof}

Finally, we show that by inductively unrolling the inequality in \cref{prop:one-epoch-update}, we can show that $\ellpi(\mM_i^k)$ converges to $\ellpi^*$ at a sublinear rate.
\begin{proposition}\label{prop:inductive-one-epoch-update}
Assume we are in the same setup as \cref{prop:one-epoch-update}. Suppose that the stepsize satisfies 
\[
\eta_k = \frac{c}{k^\beta},
\]
for some absolute constant $c$ such that $c \le \min\qty{\frac{1}{2}, \frac{2}{\alphapi}}$ and $1/2 < \beta < 1$. Further suppose that $R[0, b]$ holds for every $b \in [k+1]$. Then if $\beta < 1$ we have
\[
\ellpi(\Theta_0^{k+1}) - \ellpi^* \le (\ellpi(\Theta_0^1)-\ellpi^*)\exp(\frac{c\alphapi}{2(1-\beta)}(2-k^{1-\beta})) + \frac{c^2\poly(m, \Cweight, C_L, \norm{\Xpibn}_F)\log k}{k^{\beta}},
\]

\end{proposition}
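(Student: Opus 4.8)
The plan is to treat the per-epoch inequality $R[0,b]$ as a scalar recursion and unroll it. Writing $\Delta_k \triangleq \ellpi(\Theta_0^k) - \ellpi^*$, $\nu \triangleq \tfrac{\alphapi c}{2}$, $a_j \triangleq \tfrac{\alphapi \eta_j}{2} = \nu j^{-\beta}$, and $C \triangleq \poly(m,\Cweight,C_L,\norm{\Xpibn}_F)$, the hypothesis $R[0,b]$ (guaranteed by \Cref{prop:one-epoch-update}, whose stepsize precondition $\eta_k \le \tfrac{1}{2k^\beta}$ holds since $c \le \tfrac12$) reads $\Delta_{j+1} \le (1-a_j)\Delta_j + Cc^2 j^{-2\beta}$. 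The constraint $c \le \tfrac{2}{\alphapi}$ guarantees $a_j \le \nu \le 1$, so every factor $1-a_j$ lies in $[0,1]$ and obeys $1-a_j \le e^{-a_j}$. Unrolling from $j=1$ to $k$ gives
\[
\Delta_{k+1} \le \Delta_1 \prod_{j=1}^k (1-a_j) + Cc^2 \sum_{j=1}^k j^{-2\beta} \prod_{i=j+1}^k (1-a_i),
\]
and I would bound the two terms separately.

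For the homogeneous term I would use $1 - a_j \le e^{-a_j}$ together with the integral comparison $\sum_{j=1}^k j^{-\beta} \ge \int_1^{k+1} x^{-\beta}\,dx \ge \tfrac{k^{1-\beta}-1}{1-\beta}$, which yields $\prod_{j=1}^k (1-a_j) \le \exp\bigl(\tfrac{c\alphapi}{2(1-\beta)}(1-k^{1-\beta})\bigr)$. Since $1 \le 2$ and $\exp$ is increasing, this is at most $\exp\bigl(\tfrac{c\alphapi}{2(1-\beta)}(2-k^{1-\beta})\bigr)$, producing exactly the first summand of the claimed bound (the constant $2$ being harmless slack).

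The crux is the noise term $S \triangleq \sum_{j=1}^k j^{-2\beta}\prod_{i=j+1}^k(1-a_i)$, which I would control by splitting the sum at $j = \lceil k/2\rceil$. For the early block $j \le k/2$, the contracting product is super-polynomially small: since $\sum_{i=j+1}^k i^{-\beta} \ge \tfrac{k}{2}\,k^{-\beta} = \tfrac12 k^{1-\beta}$, we get $\prod_{i=j+1}^k(1-a_i) \le \exp(-\tfrac{\nu}{2}k^{1-\beta})$, while $\sum_{j\ge1} j^{-2\beta}$ converges precisely because $\beta > \tfrac12$; hence this block contributes at most $\zeta(2\beta)\exp(-\tfrac{\nu}{2}k^{1-\beta}) = o(k^{-\beta})$. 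For the late block $j > k/2$ I would retain the exponential decay as $\prod_{i=j+1}^k(1-a_i) \le \exp(-\nu(k-j)k^{-\beta})$ (bounding each $i^{-\beta} \ge k^{-\beta}$), use $j^{-2\beta} \le 2^{2\beta}k^{-2\beta}$, and sum the geometric series in $\ell = k-j$, namely $\sum_{\ell \ge 0} e^{-\nu\ell k^{-\beta}} = (1-e^{-\nu k^{-\beta}})^{-1} = O(k^{\beta})$. This gives a late-block contribution of order $k^{-2\beta}\cdot k^{\beta} = k^{-\beta}$, so $S = O(k^{-\beta})$; absorbing the resulting constant into the $\log k$ factor for large $k$ yields $Cc^2 S \le \tfrac{c^2 C\log k}{k^\beta}$, matching the second summand.

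The main obstacle is this noise estimate: bounding the product crudely by $1$ on the late block only yields $O(k^{1-2\beta})$, which is weaker than $k^{-\beta}$ for $\beta<1$ (and useless as $\beta \to \tfrac12^+$), so one must keep the geometric contraction near $j=k$. The hypothesis $\beta > \tfrac12$ plays a double role — it makes $\sum_j \eta_j^2$ summable (killing the early block) and keeps the competing power subpolynomial — while $c \le \min\{\tfrac12, \tfrac{2}{\alphapi}\}$ is exactly what makes \Cref{prop:one-epoch-update} applicable and forces each contraction factor $1-a_j$ into $[0,1]$. Everything else reduces to the routine algebra of combining the two blocks once the split is in place.
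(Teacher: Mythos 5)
Your proof is correct, and its skeleton matches the paper's: unroll the recursion $R[0,b]$ into a homogeneous product plus a weighted noise sum, bound the product via $1-x \le e^{-x}$ and the integral comparison $\sum_{t=1}^k t^{-\beta} \ge \frac{k^{1-\beta}-1}{1-\beta}$ (which is exactly how the $\exp(\frac{c\alphapi}{2(1-\beta)}(2-k^{1-\beta}))$ term arises, the $2$ being the same harmless slack), and split the noise sum into an early and a late block. Where you genuinely deviate is in the treatment of the crux sum $S=\sum_j \eta_j^2 \prod_{i>j}(1-a_i)$: the paper splits at $T = k - Ck^\beta\log k$, tunes $C \propto \frac{1}{c\alphapi}$ so that the contraction product on $t<T$ is at most $k^{-\beta}$, and then bounds the exponentials on the late block crudely by $1$, paying a factor proportional to the block width $\Theta(k^\beta\log k)$ --- this is precisely where the $\log k$ in the statement comes from. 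You instead split at $k/2$ and, on the late block, retain the contraction and sum the geometric series $\sum_{\ell\ge 0} e^{-\nu\ell k^{-\beta}} = O(k^\beta/\nu)$ (valid since $\nu k^{-\beta}\le 1$ given $c \le 2/\alphapi$), obtaining $S = O(k^{-\beta})$ with no logarithm at all; your diagnosis that dropping the contraction at a midpoint split would only give $O(k^{1-2\beta})$ is exactly right and explains why the paper needs its narrow, log-width late block. So your route is marginally sharper --- the $\log k$ in the proposition is pure slack under your argument --- at the cost of carrying the geometric-series constant. One small bookkeeping remark: your late-block constant is $\propto c/\alphapi$ rather than literally $c^2$; this is not a defect, since the paper's tuned split width likewise injects a $\frac{1}{c\alphapi}$ into its hidden constant, and both get absorbed into the $\poly$ factor consistently with the final rate in the formal theorem, which explicitly admits $1/\sigma_{\min}(\Xpibn^\top)$ dependence.
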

\begin{proof}
Note that by inspecting the proof of \cref{prop:one-epoch-update} and \cref{prop:composite-noise}, the term $\poly(m, \Cweight, C_L, \norm{\Xpibn}_F)$ has no dependence on $k$. So for simplicity we will assume that this term is bounded by some absolute constant $\errorterm$. Since $\eta_k \le \frac{1}{2k^\beta}$, \cref{prop:one-epoch-update} implies that
\begin{align}
\ellpi(\mM_0^{k+1}) - \ellpi^* &\le \qty(1 - \frac{\alphapi\eta_k}{2})(\ellpi(\mM_0^k) - \ellpi^*) + \errorterm\eta_k^2
\end{align}

We can unroll the recurrence to obtain 
\begin{equation}
\ellpi(\mM_0^{k+1}) - \ellpi^* \le (\ellpi(\mM_0^1) - \ellpi^*)\prod_{t=1}^k \qty(1-\frac{\alphapi\eta_t}{2}) + \errorterm\sum_{t=1}^k \eta_t^2\qty(\prod_{j=t+1}^k \qty(1- \frac{\alphapi\eta_j}{2})).
\end{equation}

We have for any $c_t \le 1$ that
\begin{align*}
\prod_{t=a}^b \qty(1 - c_t) &\le \exp(\sum_{t=a}^b \log(1 - c_t)) \\
&\le \exp(-\sum_{t=a}^b c_t),
\end{align*}
where we have used $\log(1-x) \le -x$ for $x \le 1$. For $1/2 < \beta < 1$ we have 
\[
\sum_{t=a}^{b} \frac{1}{t^\beta} \ge \int_a^b \frac{1}{t^\beta} dt = \frac{b^{1-\beta}- a^{1-\beta}}{1-\beta}.
\]

Hence, since we assumed $\eta_k = \frac{c}{k^{\beta}}$, and $\frac{\alphapi c}{2} \le 1$, we have
\[
\prod_{t=a}^b \qty(1 - \frac{\alphapi\eta_t}{2}) \le \exp(-\frac{c\alphapi}{2(1-\beta)}(b^{1-\beta} - a^{1-\beta}))
\]

Now we can bound 
\begin{align*}
\errorterm\sum_{t=1}^k \eta_t^2\qty(\prod_{j=t+1}^k \qty(1- \frac{\alphapi\eta_j}{2})) &\le \frac{c^2 \errorterm}{k^{2\beta}} + \sum_{t=1}^{k-1} \frac{c^2 \errorterm}{t^{2\beta}}\exp(-\frac{c\alphapi}{2(1-\beta)}(k^{1-\beta} - (t+1)^{1-\beta}))
\end{align*}
Define $T \triangleq k - Ck^\beta \log k$, where $C>0$ is an absolute constant to be picked later.
We can split up the sum into $t < T$ and $t \ge T$. For the terms $t < T$ we can use concavity to deduce that 
\[
k^{1-\beta} - (t+1)^{1-\beta} \ge (1-\beta)(k-t-1)k^{-\beta} \ge \Theta(\log k),
\]
where the constant hidden in $\Theta(\log k)$ increases with $C$. Hence we pick $C$ so that for $t < T$ we have 
\[
\frac{c\alphapi}{2(1-\beta)} (k^{1-\beta} - (t+1)^{1-\beta}) \ge \beta \log k.
\]
Then, 
\begin{align*}
    \sum_{t < T} \frac{c^2 \errorterm}{t^{2\beta}} \exp(-\frac{c\alphapi}{2(1-\beta)}(k^{1-\beta} - (t+1)^{1-\beta})) & \le \exp(-\beta\log k)  \sum_{t < T} \frac{c^2 \errorterm}{t^{2\beta}}\\
    &\le O\left (\frac{c^2 \errorterm}{k^\beta} \right ).
\end{align*}

On the other hand, for the terms $t \ge T$ we can naively bound the exponential term by $1$ and obtain
\begin{align*}
\sum_{t \ge T} \frac{c^2 \errorterm}{t^{2\beta}} \exp(-\frac{c\alphapi}{2(1-\beta)}(k^{1-\beta} - (t+1)^{1-\beta})) &\le  \sum_{t \ge T} \frac{c^2 \errorterm}{t^{2\beta}} \\
&\le  \Theta(k^\beta \log k)\frac{c^2 \errorterm}{T^{2\beta}} \\
&\le \Theta\qty(\frac{c^2 \errorterm \log k}{k^{\beta}}).
\end{align*}

Hence we have that
\begin{equation}\label{eq:unrolled-sgd}
\ellpi(\mM_0^{k+1}) - \ellpi^* \le (\ellpi(\mM_0^1) - \ellpi^*)\exp(\frac{c\alphapi}{2(1-\beta)}(2-k^{1-\beta})) + \Theta\qty(\frac{c^2 \errorterm \log k}{k^{\beta}}),
\end{equation}
and the inequality in the proposition statement holds by recalling that $\errorterm = \poly(m, \Cweight, C_L, \norm{\Xpibn}_F)$. 
\end{proof}

\subsubsection{Bounding approximate invariances}\label{sec:invariances}

Armed with \Cref{cor:inductive-gradient-bound,cor:inductive-weight-bound}, we slog through the arduous task of inductively bounding the approximate invariance. As a reminder, these corollaries tell us that assuming the inductive hypotheses $L[j, k]$ and $D[j,k]$ hold, all weight norms and losses for iterate $(j,k)$ can be bounded by uniform constants.

\begin{lemma}\label{lemma:invariance-bound}
Suppose $L[j,k]$ and $D[j,k]$ hold for some $j < m$.
We have
\[
\norm{\mD_{j+1}^k - \mD_j^k}_2 \le 2\Cweight^2 C_L \norm{\bn(\Xpi^{j+1})}_2^2 \eta_k^2.
\]
Hence, if $D[t, k]$ holds for all $t \le j$, we have
\[
\norm{\mD_{j+1}^k}_2 \le 2\Cweight^2 C_L \norm{\Xpibn}_F^2 \sum_{t=1}^k\eta_t^2.
\]
\end{lemma}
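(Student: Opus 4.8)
The plan is to expand $\mD_{j+1}^k-\mD_j^k$ directly from the definition $\mD=\mI+\diag(\mW^\top\mW-\mGamma^2)$ and the two mini-batch updates, abbreviating $\vg_\mW\triangleq\grad_\mW\loss(\Xpi^{j+1};\Theta_j^k)$ and $\vg_\mGamma\triangleq\grad_\mGamma\loss(\Xpi^{j+1};\Theta_j^k)$, so that $\mW_{j+1}^k=\mW_j^k-\eta_k\vg_\mW$ and $\mGamma_{j+1}^k=\mGamma_j^k-\eta_k\vg_\mGamma$. The $\mI$ cancels, and expanding the two quadratics produces a first-order piece $-\eta_k\bigl(2\diag((\mW_j^k)^\top\vg_\mW)-2\mGamma_j^k\vg_\mGamma\bigr)$ together with the second-order piece $\eta_k^2\bigl(\diag(\vg_\mW^\top\vg_\mW)-\vg_\mGamma^2\bigr)$; here I use that $\mGamma_j^k$ and $\vg_\mGamma$ are diagonal (hence commute) and that $\diag(\mA)=\diag(\mA^\top)$ to combine the two $\mW$-terms. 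The crucial observation is that the first-order piece vanishes: the mini-batch analogue of the gradient identity \Cref{eq:gradIdentity} holds verbatim for $\loss(\Xpi^{j+1};\cdot)$ by the same computation as in \Cref{fact:invariances} (now using \Cref{eq:minibatch-gradW,eq:minibatch-gradG} in place of \Cref{eq:gradW,eq:gradG}), giving $\diag((\mW_j^k)^\top\vg_\mW)=\vg_\mGamma\mGamma_j^k=\mGamma_j^k\vg_\mGamma$. Hence
\[
\mD_{j+1}^k-\mD_j^k=\eta_k^2\bigl(\diag(\vg_\mW^\top\vg_\mW)-\vg_\mGamma^2\bigr),
\]
so the invariance moves only at second order in the stepsize---this is the discrete-time shadow of $\dv{}{t}\mD(t)=\vzero$ from \Cref{fact:invariances}.

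For the per-step bound I would take spectral norms and apply the triangle inequality, using $\norm{\diag(\vg_\mW^\top\vg_\mW)}_2\le\norm{\vg_\mW^\top\vg_\mW}_2=\norm{\vg_\mW}_2^2\le\norm{\vg_\mW}_F^2$ (the diagonal of a PSD matrix is dominated by its top eigenvalue) and likewise $\norm{\vg_\mGamma^2}_2=\norm{\vg_\mGamma}_2^2\le\norm{\vg_\mGamma}_F^2$. Since $L[j,k]$ and $D[j,k]$ hold, \Cref{cor:inductive-gradient-bound} bounds both $\norm{\vg_\mW}_F^2$ and $\norm{\vg_\mGamma}_F^2$ by $\Cweight^2 C_L\norm{\bn(\Xpi^{j+1})}_2^2$, yielding $\norm{\mD_{j+1}^k-\mD_j^k}_2\le 2\Cweight^2 C_L\norm{\bn(\Xpi^{j+1})}_2^2\eta_k^2$, the first claimed inequality.

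To accumulate, I would telescope within epoch $k$: summing the per-step bound over the steps from $0$ to $j$ and using $\sum_{t=0}^{m-1}\norm{\bn(\Xpi^{t+1})}_2^2\le\sum_{t=0}^{m-1}\norm{\bn(\Xpi^{t+1})}_F^2=\norm{\Xpibn}_F^2$ gives $\norm{\mD_{j+1}^k-\mD_0^k}_2\le 2\Cweight^2 C_L\norm{\Xpibn}_F^2\eta_k^2$. Combining this with the triangle inequality and the inductive hypothesis $D[0,k]$ at the start of the epoch, namely $\norm{\mD_0^k}_2\le 2\Cweight^2 C_L\norm{\Xpibn}_F^2\sum_{t=1}^{k-1}\eta_t^2$ (instantiating $C_D\triangleq 2\Cweight^2 C_L\norm{\Xpibn}_F^2$ in \Cref{hyp:approx-invariances}), absorbs the epoch-$k$ increment into the running sum and delivers $\norm{\mD_{j+1}^k}_2\le 2\Cweight^2 C_L\norm{\Xpibn}_F^2\sum_{t=1}^{k}\eta_t^2$. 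Equivalently one can unroll all the way back to $\mD_0^1=\vzero$ (which holds because $\Theta_0^1=(\vzero,\mI)$), grouping the steps epoch by epoch.

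The main obstacle is really the exact cancellation of the $O(\eta_k)$ term: this is what upgrades a naive $O(\eta_k)$ drift to the summable $O(\eta_k^2)$ drift, so that $\sum_k\eta_k^2<\infty$ suffices to keep $\norm{\mD}_2\le\tfrac12$. The only care needed is bookkeeping---checking that the gradient identity survives at the mini-batch level (it does, since \Cref{eq:minibatch-gradW,eq:minibatch-gradG} have the same algebraic form as \Cref{eq:gradW,eq:gradG}) and that the $\diag$/transpose manipulations are legitimate precisely because $\mGamma$ and $\grad_\mGamma\loss$ are genuinely diagonal. Everything downstream is routine norm estimation through \Cref{cor:inductive-gradient-bound}.
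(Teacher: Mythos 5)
Your proposal is correct and follows essentially the same route as the paper: expand the two quadratic updates, cancel the $O(\eta_k)$ term via the mini-batch version of the gradient identity in \Cref{fact:invariances} (cf.\ \Cref{eq:minibatch-gradW,eq:minibatch-gradG}), bound the remaining $O(\eta_k^2)$ term through \Cref{cor:inductive-gradient-bound}, and telescope using the inductive hypothesis $D[0,k]$. The only cosmetic difference is that you bound $\norm{\diag(\vg_\mW^\top\vg_\mW)}_2$ by the PSD-diagonal argument where the paper invokes \Cref{lemma:hadamard}, and you explicitly verify that the gradient identity survives at the mini-batch level, a point the paper leaves implicit.
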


\begin{proof} 
We have 
\begin{align*}
(\mW_{j+1}^k)^\top \mW_{j+1}^k  &= 
    (\mW_j^k - \eta_k \grad_\mW \loss(\Xpi^{j+1}; \Theta_j^k) )^\top (\mW_j^k - \eta_k \grad_\mW \loss(\Xpi^{j+1}; \Theta_j^k)) \\
    &= (\mW_j^k)^\top \mW_j^k - \eta_k \qty[\grad_\mW \loss(\Xpi^{j+1}; \Theta_j^k)^\top \mW_j^k + (\mW_j^k)^\top \grad_\mW \loss(\Xpi^{j+1}; \Theta_j^k)] \\
    & \quad\quad + \eta^2_k [ \textcolor{orange}{\grad_\mW \loss(\Xpi^{j+1}; \Theta_j^k)^\top \grad_\mW \loss(\Xpi^{j+1}; \Theta_j^k)}]
\end{align*}

Similarly, we have 
\begin{align*}
(\mGamma_{j+1}^k)^2  &= 
    (\mGamma_j^k - \eta_k \grad_{\mGamma} \loss(\Xpi^{j+1}; \Theta_j^k) ) (\mGamma_j^k - \eta_k \grad_\mW \loss(\Xpi^{j+1}; \Theta_j^k)) \\
    &= (\mGamma_j^k)^2  - \eta_k \qty[\grad_{\mGamma} \loss(\Xpi^{j+1}; \Theta_j^k) \mGamma_j^k + \mGamma_j^k \grad_{\mGamma} \loss(\Xpi^{j+1}; \Theta_j^k)] \\
    & \quad\quad + \eta^2_k [ \textcolor{orange}{\grad_{\mGamma} \loss(\Xpi^{j+1}; \Theta_j^k)\grad_{\mGamma} \loss(\Xpi^{j+1}; \Theta_j^k)}]
\end{align*}
The gradient invariance in \Cref{fact:invariances} cancels out the $\eta_k$ term in $\mD_{j+1}^k = \mI +  \diag((\mW_{j+1}^k)^\top \mW_{j+1}^k - (\mGamma_{j+1}^k)^2$. Hence, if we take the operator norm of $\mD_{j+1}^k - \mD_j^k$ and use \cref{lemma:hadamard}, we can ignore the diagonal operator. Then, since the inductive hypotheses hold, we can apply the inductive gradient bound (\cref{cor:inductive-gradient-bound}) to obtain
\begin{align*}
    \norm{\mD_{j+1}^k - \mD_j^k}_2 
    & \le \eta_k^2 \qty[\textcolor{orange}{\norm{\grad_\mW \loss(\Xpi^{j+1}; \Theta_j^k)}_2^2 +\norm{\grad_{\mGamma} \loss(\Xpi^{j+1}; \Theta_j^k)}_2^2}] \\
    &\le 2\Cweight^2 C_L \norm{\bn(\Xpi^{j+1})}_2^2 \eta_k^2.
\end{align*}

To conclude, we apply triangle inequality and the inductive hypothesis on $D[t, k]$, yielding
\begin{align*}
\norm{\mD_{j+1}^k}_2 &\le \norm{\mD_0^k}_2 + 2\Cweight^2 C_L \eta_k^2 \sum_{t=0}^{j} \norm{\bn(\Xpi^{t+1})}_2^2\\
&\le \norm{\mD_0^k}_2 + 2\Cweight^2 C_L \eta_k^2 \norm{\Xpibn}_F^2 \\
&\le 2\Cweight^2 C_L \norm{\Xpibn}_F^2 \sum_{t=1}^k \eta_t^2.
\end{align*}
\end{proof}

As a corollary, we see that for stepsizes of the form $\eta_k = c/k^\beta$ for $1/2 < \beta < 1$, we can select $c$ to guarantee that $\norm{\mD_j^k}_2 \le 1/2$ for all $(j, k)$. 
\begin{corollary}\label{cor:inductive-approximate-invariances}
Assume that $D[t, k]$ holds for all $t \le j < m$ and $\eta_k = \frac{c}{k^\beta}$ for $1/2 < \beta < 1$. If 
\[
c^2 \le \frac{1}{4(1+\frac{1}{2\beta - 1})\Cweight^2 C_L\norm{\Xpibn}_F^2},
\]
then 
\[
\norm{\mD_{j+1}^k}_2 \le 2\Cweight^2 C_L \norm{\Xpibn}_F^2 \sum_{t=1}^k\eta_t^2 \le \frac{1}{2}.
\]
In other words, $D[j+1, k]$ holds.
\end{corollary}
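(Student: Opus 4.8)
The plan is to notice that the first inequality in the display is handed to us directly by \cref{lemma:invariance-bound}, so the entire content of the corollary reduces to verifying the \emph{second} inequality, namely that $2\Cweight^2 C_L \norm{\Xpibn}_F^2 \sum_{t=1}^k \eta_t^2 \le \tfrac12$. Since the right-hand side is independent of $k$, it suffices to uniformly bound the partial sums $\sum_{t=1}^k \eta_t^2$ by their limiting value; this converts the problem into a convergent-series estimate. Thus the first step is simply to substitute the prescribed stepsize $\eta_t = c/t^\beta$, which yields $\sum_{t=1}^k \eta_t^2 = c^2 \sum_{t=1}^k t^{-2\beta} \le c^2 \sum_{t=1}^{\infty} t^{-2\beta}$.

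The second step is to control the $p$-series $\sum_{t=1}^\infty t^{-2\beta}$. Because $1/2 < \beta < 1$ gives $2\beta > 1$, this series converges, and the standard integral comparison bounds it as
\[
\sum_{t=1}^{\infty} \frac{1}{t^{2\beta}} \le 1 + \int_{1}^{\infty} \frac{dt}{t^{2\beta}} = 1 + \frac{1}{2\beta - 1}.
\]
Plugging this in gives $\sum_{t=1}^k \eta_t^2 \le c^2 \bigl(1 + \tfrac{1}{2\beta-1}\bigr)$, whence
\[
2\Cweight^2 C_L \norm{\Xpibn}_F^2 \sum_{t=1}^k \eta_t^2 \le 2\Cweight^2 C_L \norm{\Xpibn}_F^2 \, c^2 \Bigl(1 + \tfrac{1}{2\beta-1}\Bigr).
\]

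The third and final step is to invoke the hypothesis $c^2 \le \bigl[4\bigl(1+\tfrac{1}{2\beta-1}\bigr)\Cweight^2 C_L \norm{\Xpibn}_F^2\bigr]^{-1}$, which is engineered precisely so that the right-hand side above is at most $\tfrac12$; substituting it closes the bound and proves $D[j+1,k]$. There is essentially no obstacle here beyond bookkeeping: the only place any care is needed is the convergence of the series, and the sole hypothesis doing the real work is $\beta > 1/2$, which guarantees $2\beta > 1$ and hence finiteness of $\sum_t t^{-2\beta}$. Without that condition the partial sums would grow (logarithmically at $\beta = 1/2$, polynomially below), the approximate invariance $\norm{\mD_{j+1}^k}_2$ could not be held below $1/2$ uniformly in $k$, and the inductive hypothesis $D[\cdot,\cdot]$ would break; this is the structural reason the theorem restricts to $1/2 < \beta < 1$.
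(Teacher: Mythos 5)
Your proof is correct and follows the paper's argument exactly: apply \cref{lemma:invariance-bound} for the first inequality, bound the partial sums by the full series $\sum_{t=1}^\infty t^{-2\beta} \le 1 + \frac{1}{2\beta-1}$ via integral comparison (valid since $2\beta > 1$), and plug in the hypothesis on $c^2$ to land at $\tfrac12$. Nothing is missing; the closing remark about why $\beta > 1/2$ is structurally necessary matches the role this condition plays throughout the paper's induction.
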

\begin{proof}
\Cref{lemma:invariance-bound} implies that 
\[
\norm{\mD_{j+1}^k}_2 \le 2\Cweight^2 C_L \norm{\Xpibn}_F^2 \sum_{t=1}^k\eta_t^2 \le 2\Cweight^2 C_L \norm{\Xpibn}_F^2 \sum_{t=1}^{\infty}\eta_t^2. 
\]
For $1/2 < \beta < 1$, we have
\[
\sum_{k=1}^{\infty} \frac{1}{k^{2\beta}} \le 1 + \int_1^{\infty} \frac{1}{t^{2\beta}}dt = 1 + \frac{1}{1-2\beta}\eval{t^{1-2\beta}}_{1}^{\infty} = 1 + \frac{1}{2\beta-1}.
\]
Hence, if
\[
c^2 \le \frac{1}{4(1+\frac{1}{2\beta - 1})\Cweight^2 C_L\norm{\Xpibn}_F^2},
\]
then evidently $\norm{\mD_{j+1}^k}_2 \le \frac{1}{2}$,
as desired.
\end{proof}

\subsubsection{Completing the induction}\label{sec:ss-induction}
With all the pieces in place, we formally state the theorem for SS convergence.
\begin{theorem}[Formal statement of convergence for SS]\label{thm:ss-convergence-formal}
Let $\pi \in \sS_n$ be such that \ref{assumption:full-rank-ss} holds. Let $f(\cdot; \Theta) = \mW\mGamma \bn(\cdot)$ be a 2-layer linear+BN network initialized at $\Theta_0^1 = (\mW_0^1,\mGamma_0^1) = (\vzero,\mI)$.  Consider optimizing $f$ using SS with permutation $\pi$ and decreasing stepsize 
\begin{align*}
    \eta_k = \frac{1}{k^\beta} \cdot \min\Biggl\{&\frac{1}{2}, \frac{2}{\alphapi},
    \frac{\alphapi C_L}{\poly(m, \Cweight, C_L, \norm{\Xpibn}_F)}, \sqrt{\frac{1}{4(1+\frac{1}{2\beta - 1})\Cweight^2 C_L\norm{\Xpibn}_F^2}}\Biggl\}
\end{align*}
for any $1/2 < \beta < 1$. Then the SS risk satisfies
\begin{align*}
\ellpi(\Theta_0^{k+1}) - \ellpi^* & \le\frac{\poly(m, d, C_L, \norm{\Xpibn}_F, \frac{1}{\sigma_{\min}(\Xpibn^\top)})\log k}{k^{\beta}}.
\end{align*}
In particular, the SS risk converges to the global optimal risk $\ellpi^*$.
\end{theorem}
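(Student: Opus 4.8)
The plan is to prove the stated rate by a simultaneous induction on the iterate index $(i,k)$ under the ordering $\le$ defined above, establishing the three inductive hypotheses $L[i,k]$, $D[i,k]$, and $R[i,k]$ together. All of the quantitative work has already been isolated into the lemmas of \cref{sec:epoch-updates,sec:norm-bounds,sec:noise,sec:risk-update,sec:invariances}, so the final argument is primarily one of assembling these pieces and checking that the single stepsize schedule $\eta_k = c/k^\beta$ makes every required condition hold at once.

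First I would verify the base case at the initialization $\Theta_0^1 = (\vzero,\mI)$. Here $\mD_0^1 = \mI + \diag(\vzero - \mI) = \vzero$, so $D[0,1]$ holds trivially, and $L[a,1]$ holds for every $a$ by the very definition $C_L \triangleq \max\qty{\ellpi(\Theta_t^1) : 0 \le t \le m-1}$ in \cref{hyp:loss-bounded}; since $R$ is only asserted for $b>1$, there is nothing to check in the first epoch.

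For the inductive step, assume $L[a,b]$, $D[a,b]$, and $R[a,b]$ hold for all $(a,b) < (i,k+1)$, and establish the hypotheses at $(i,k+1)$ in the order $R$, $L$, $D$. Every intermediate iterate used in the epoch update from $(i,k)$ to $(i-1,k+1)$ is strictly earlier than $(i,k+1)$ in the ordering, so the inductive hypotheses apply to all of them; in particular they hold at the pivot $(i,k)$. Thus \cref{cor:inductive-weight-bound,cor:inductive-gradient-bound} uniformly bound the weight and gradient norms, \cref{cor:inductive-pl-bound} keeps $\sigma_{\min}(\mGamma_i^k)^2 \ge \tfrac12$, and \cref{prop:composite-noise} controls $\norm{\vr^{(i,k)}}_F = O(1)$ (including the stepsize noise $\vs_{(i,k+1)}^{(i,k)}$ for $i>0$, negligible by \cref{lemma:stepsize-noise}). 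Feeding the epoch update \Cref{eq:sgd-M-general}, the correlation bound \cref{cor:inductive-correlation}, and $\Gpi$-smoothness into \cref{prop:one-epoch-update} yields $R[i,k+1]$. Then $L[i,k+1]$ follows from \cref{cor:monotonic-loss}, which is exactly where the stepsize factor $\alphapi C_L/\poly(\cdots)$ is used; and $D[i,k+1]$ follows from \cref{cor:inductive-approximate-invariances}, whose hypothesis $\sum_t \eta_t^2 < \infty$ is guaranteed by $\beta > 1/2$ together with the factor $\sqrt{1/(4(1+\frac{1}{2\beta-1})\Cweight^2 C_L\norm{\Xpibn}_F^2)}$ in $\eta_k$. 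The first-order drift that would otherwise spoil the invariance bound is cancelled by the gradient identity of \cref{fact:invariances}, leaving only the $O(\eta_k^2)$ accumulation.

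Once all three hypotheses hold for every iterate, in particular $R[0,b]$ for all $b$, I would unroll the one-epoch recurrence via \cref{prop:inductive-one-epoch-update} to obtain the claimed $O(\log k / k^\beta)$ bound, then re-express $\Cweight$ through its definition in \cref{cor:inductive-weight-bound} in terms of $d$, $\norm{\Ypi}_F$, and $1/\sigma_{\min}(\Xpibn^\top)$ to recover the advertised polynomial dependence. I expect the main obstacle to be orchestrating this coupled induction cleanly rather than any single estimate: the four stepsize constraints must be simultaneously satisfiable by one $c$, and the absolute constants must be fixed in an acyclic order — $C_L$ from the first epoch, then $\xi$ and $\Cweight$ from $C_L$, then the invariance constant $2\Cweight^2 C_L\norm{\Xpibn}_F^2$, and finally $c$ chosen small against all of these — so that no constant is defined in terms of quantities that are themselves controlled only under the hypotheses being proved.
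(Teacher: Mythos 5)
Your proposal is correct and follows essentially the same route as the paper's own proof: the same coupled induction on $L[\cdot,\cdot]$, $D[\cdot,\cdot]$, $R[\cdot,\cdot]$ with the base case fixed by the initialization $(\vzero,\mI)$ and the first-epoch definition of $C_L$, the same chain of lemmas (\cref{cor:inductive-weight-bound,cor:inductive-gradient-bound,cor:inductive-pl-bound,cor:inductive-correlation,prop:composite-noise,prop:one-epoch-update,cor:monotonic-loss,cor:inductive-approximate-invariances}, with \cref{lemma:stepsize-noise} handling the $i>0$ reindexing), and the same unrolling via \cref{prop:inductive-one-epoch-update} followed by re-expressing $\Cweight$ to get the stated polynomial dependence. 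Your observation that the constants must be fixed in an acyclic order ($C_L$, then $\xi$ and $\Cweight$, then the invariance constant, then $c$) is exactly the bookkeeping the paper's proof performs implicitly.
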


\begin{proof}[Proof of \Cref{thm:ss-convergence-formal}]
We proceed by induction on the epoch. We restate the key inductive statements to prove:(with appropriate selection of $\eta_k$):
\begin{itemize}
    \item $L[j,k]$: $\ellpi(\Theta_j^k)$ stays bounded above by some uniform constant $C_L \ge \norm{\Ypi}_F^2$ --- this is the content of \cref{cor:monotonic-loss}.
    \item $R[j, k]$: $\ellpi(\Theta_j^k)$ satisfies the per-epoch loss inequality --- this is the content of \cref{prop:one-epoch-update}.
    \item $D[j,k]$: The approximate invariances stay bounded in norm away from 1. More precisely,  $\norm{\mD_j^k}_2 \le 2\Cweight^2 C_L\norm{\Xpibn}_F^2\sum_{t =1}^{\infty} \eta_t^2 \le \frac{1}{2}$ --- this is the content of \cref{cor:inductive-approximate-invariances}.
\end{itemize}

Notice that the assumptions on $\eta_k$ exactly satisfy the hypotheses of \cref{cor:monotonic-loss,prop:inductive-one-epoch-update,cor:inductive-approximate-invariances}.

The base cases follows from the initialization. Recall that we set  
\[
C_L = \max\qty{\ellpi(\Theta_t^1): 0 \le t \le m-1}.
\]
Since we only look at the loss values for the first epoch, $C_L$ is indeed an absolute constant depending on $\pi$. So $L[j, 1]$, as defined in \cref{hyp:loss-bounded}, holds for all $j < m$. Next, we have from the initialization that $\mD_0^1 = \vzero$. Then, applying \cref{lemma:invariance-bound}, we can conclude $D[j, 1]$ also holds for all $j < m$, so \cref{hyp:approx-invariances} holds. Finally, there is no need to check \cref{hyp:risk-update} because it is only defined for $k>1$. 

Now our inductive hypotheses are that $L[j, k]$, $R[j,k]$, $D[j,k]$ hold for all $j < m$. For the inductive step, we want to prove $L[0, k+1]$, $R[0, k+1]$, $D[0, k+1]$. By construction of $\eta_k$, the hypotheses of \cref{cor:monotonic-loss} is satisfied, so $L[0, k+1]$ holds. Moreover, the hypotheses of \cref{prop:one-epoch-update} are satisfied, so $R[0, k+1]$ holds. Finally, the hypotheses of \cref{lemma:invariance-bound} are satisfied, so that $D[0,k+1]$ holds.

As asserted earlier, the above argument is robust up to reindexing if we want to prove the statement for $i > 0$. In particular, all of the results of the previous section go through, as we showed in \cref{prop:composite-noise} that the stepsize noise is negligible. Hence the induction is completed for all $(i,k)$ and so the unrolled update equation in \cref{prop:inductive-one-epoch-update} holds for all $k$. This gives the formal rate of convergence for the stated stepsize. In particular, we see that the SS risk $\ellpi$ converges to its global minimum.
\end{proof}

\begin{proof}[Proof of \cref{thm:ss-convergence}]
All we need to do is convert the stepsizes requirements. Examining the stepsize requirements in \cref{thm:ss-convergence-formal}, they depend on $C_L$, $\Cweight$, and $\norm{\Xpibn}_F$. 

Now recall the definition of $\Cweight$ in \cref{cor:inductive-weight-bound}:
\[
\Cweight^2 \le \frac{3}{2} + d^2\qty(\frac{1}{2} + \frac{C_L^{1/2} + \norm{\Ypi}_F}{\sigma_{\min}(\Xpibn^\top)}).
\]
Hence $\Cweight = \poly(d, C_L, \norm{\Ypi}_F, 1/\sigma_{\min}(\Xpibn^\top))$. Finally, since $\norm{\Xpi}_F^2 = dn$, $\sigma_{\min}(\Xpibn \Xpibn^\top) = \sigma_{\min}(\Xpibn^\top)^2$, and $C_L$ is an absolute constant, we can convert the stepsize requirements into 
\begin{align*}
    \eta_k = \frac{1}{k^\beta} \cdot \min\Biggl\{&\frac{1}{2}, \frac{2}{\sigma_{\min}(\Xpibn \Xpibn^\top)},
    \frac{\sqrt{2\beta-1} \poly(\sigma_{\min}(\Xpibn^\top))}{\poly(n, d, \norm{\Ypi}_F)}\Biggl\},
\end{align*}
which directly implies the stepsize requirements in  \cref{thm:ss-convergence}.
\end{proof}

\subsection{Proof of convergence for RR}\label{app:rr-proof}
In this section, we prove \cref{thm:rr-convergence}.
With RR, we randomly resample permutation $\pi_k \in \sS_n$ on epoch $k$. Hence, it is natural to seek a convergence bound in expectation. We briefly comment on the complications that arise in this setting.

Since we want to prove convergence an expectation, an inductive approach that controls the approximate invariances and loss evolution is complicated by the necessity for bounds on these quantities that are stronger than merely being in expectation. This is precisely why we need \Cref{assumption:compact}.  

\paragraph{Additional notation.}
As introduced in \cref{sec:setup}, we can view RR as optimizing the risk
\[
\ellrr(\Theta) \triangleq \frac{1}{n!} \sum_{\pi \in \sS_n} \ellpi(\Theta)
\]
via with-replacement SGD on an \emph{epoch level} (i.e., $\ellpi$ is sampled uniformly with replacement at every epoch), albeit with noise terms due to the shuffling algorithm. Motivated by the setup in \cref{sec:setup}, we can also write $\ellrr$ in an equivalent form to $\ellgd$ as follows:
\begin{align*}
\ellrr(\Theta) \triangleq \frac{1}{n!}
\sum_{\pi \in \sS_n} \mathcal L(f(\Xpi;\Theta),\Ypi)
&= \frac{1}{n!}\mathcal L(\mW \mGamma \bnrr(\mX), \Yrr), \\
\text{where } \bnrr (\mX) &\triangleq \begin{bmatrix}\ol{\mX}_{\pi_1} & \ldots  & \cdots & \ol{\mX}_{\pi_{n!}} \end{bmatrix} \in \RR^{d \times (n \cdot n!)}, \\
\Yrr &\triangleq \begin{bmatrix} \mY_{\pi_1} & \cdots & \mY_{\pi_{n!}} \end{bmatrix} \in \RR^{p \times (n \cdot n!)}.
\end{align*}
For notational convenience, we also write $\Xrrbn \triangleq \bnrr(\mX)$, and reiterate that overlines indicate the presence of batch normalization. 

Just as in the SS case, we will abuse notation and refer to the RR risk function as a function of $\mM = \mW\mGamma$ by writing $\ellrr(\mM) \triangleq  \frac{1}{n!} \sum_{\pi \in \sS_n} \ellpi(\mM)$. Similarly, we will often refer to the gradient of the RR risk with respect to $\mM$ as $\grad_{\mM} \ellrr(\mM) \triangleq \frac{1}{n!} \sum_{\pi \in \sS_n} \grad_{\mM} \ellpi(\mM)$. We will find it helpful to use the notation $\Xmax \triangleq \argmax_{\pi \in \sS_n} \norm{\Xpibn}_2$. Similarly we will denote the maximum Frobenius norm batch normalized dataset by $\Xmaxf \triangleq \argmax_{\pi \in \sS_n} \norm{\Xpibn}_F$. Furthermore, it follows from the unit variance constraint in the definition of $\bn$ that $\norm{\Xmax}_2 \le \norm{\Xmaxf}_F \le \sqrt{dn}$.

At a high level, the RR proof of convergence closely follows the SS proof of convergence. Indeed, most of the technical legwork has already been fleshed out in the SS case --- most of the results port over immediately, taking care to replace $\pi$ with $\pi_k$. However, we will be careful in accounting for where we need to deviate from the SS logic.

\subsubsection{Checking optimization properties}
We first check the smoothness and strong convexity property with respected to the merged matrix $\mM$ that we heavily relied on for the proof of convergence for SS. 
\begin{fact}[Smoothness of RR]
Define 
\[\Grr \triangleq \frac{1}{n!} \sum_{\pi \in \sS_n} \Gpi.\]
Then 
$\ellrr(\mM)$ is $\Grr$-smooth with respect to $\mM$.
\end{fact}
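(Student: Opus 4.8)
The plan is to obtain smoothness of $\ellrr$ essentially for free from the per-permutation smoothness already established in \Cref{lemma:smoothness-M}, using nothing more than linearity of the gradient and the triangle inequality. Since $\ellrr(\mM) = \frac{1}{n!}\sum_{\pi \in \sS_n} \ellpi(\mM)$ is a finite convex combination of the $\ellpi$, differentiation commutes with the sum, so $\grad_{\mM}\ellrr(\mM) = \frac{1}{n!}\sum_{\pi \in \sS_n}\grad_{\mM}\ellpi(\mM)$.

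The key step is then to bound the difference of gradients at two arbitrary points $\mM, \mM'$. First I would write
\[
\grad_{\mM}\ellrr(\mM) - \grad_{\mM}\ellrr(\mM') = \frac{1}{n!}\sum_{\pi\in\sS_n}\left(\grad_{\mM}\ellpi(\mM) - \grad_{\mM}\ellpi(\mM')\right),
\]
then apply the triangle inequality and invoke the $\Gpi$-Lipschitzness of each $\grad_{\mM}\ellpi$ guaranteed by \Cref{lemma:smoothness-M}:
\[
\norm{\grad_{\mM}\ellrr(\mM) - \grad_{\mM}\ellrr(\mM')}_2 \le \frac{1}{n!}\sum_{\pi\in\sS_n}\Gpi\norm{\mM-\mM'}_2 = \Grr\norm{\mM-\mM'}_2,
\]
which is exactly the claimed $\Grr$-smoothness. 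As noted in \Cref{lemma:smoothness-M}, the identical bound holds verbatim with the Frobenius norm in place of the spectral norm, using $\norm{\mA\mB}_F \le \norm{\mB}_2 \norm{\mA}_F$.

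The hard part will be nonexistent: the only content is that averaging $\Gpi$-smooth functions yields a function whose smoothness constant is the average $\Grr$ of the individual constants, which is immediate from the triangle inequality. An equally short alternative route would go through the Hessian computed in the proof of \Cref{lemma:sc-M}: since $\grad^2_{\vec(\mM)}\ellpi(\mM) = \Xpibn\Xpibn^\top \otimes \mI_p$, linearity gives $\grad^2_{\vec(\mM)}\ellrr(\mM) = \frac{1}{n!}\sum_{\pi \in \sS_n} \Xpibn\Xpibn^\top \otimes \mI_p$, a constant PSD matrix whose spectral norm is at most $\frac{1}{n!}\sum_{\pi \in \sS_n} \norm{\Xpibn}_2^2 = \Grr$; invoking \Cref{prop:hessian} then yields the same conclusion.
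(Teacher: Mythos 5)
Your proposal is correct and follows essentially the same route as the paper, whose proof simply combines \cref{lemma:smoothness-M} with the fact that a sum (here, average) of $G_i$-smooth functions is smooth with the corresponding summed (averaged) constant; your triangle-inequality derivation just makes that fact explicit. The alternative Hessian computation you sketch is also valid but adds nothing beyond the one-line argument.
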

\begin{proof}
The statement follows from combining \cref{lemma:smoothness-M} with the fact that if $f_i$ are $G_i$-smooth, then $\sum_{i=1}^n f_i$ is $\sum_{i=1}^n G_i$-smooth.
\end{proof}
As before, we cannot directly use a P\L{} inequality on $\mW$ or $\mGamma$; we must instead bootstrap this from the strong convexity of the risk with respect to $\mM = \mW\mGamma$.
\begin{fact}[Strong convexity]\label{lemma:strong-convexity}
Suppose that \ref{assumption:full-rank-rr} holds for some $\pi \in \sS_n$. Then the loss function $\ellrr(\mM) = \frac{1}{n!} \sum_{\pi \in \sS_n} \ellpi(\mM)$ is $\alpharr$-strongly convex with respect to $\mM$ with $\alpharr \triangleq \frac{1}{n!}\sum_{\pi} \sigma_{\min}(\Xpibn \Xpibn^\top) = \frac{1}{n!}\sum_{\pi} \alphapi$. 
\end{fact}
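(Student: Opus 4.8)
The plan is to establish the asserted $\alpharr$-strong convexity of $\ellrr$ in exactly the same way as the preceding smoothness Fact: strong convexity, like smoothness, is additive under nonnegative combinations, so it suffices to lower bound the Hessian of each summand $\ellpi$ and then average. Concretely, I would work with the vectorized variable $\vec(\mM)$ and prove the second-order condition $\grad^2_{\vec(\mM)} \ellrr(\mM) \succeq \alpharr \mI_{dp}$ uniformly in $\mM$, which is precisely $\alpharr$-strong convexity of $\ellrr$ with respect to $\mM$.

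First, since $\ellrr(\mM) = \frac{1}{n!}\sum_{\pi \in \sS_n} \ellpi(\mM)$ is a finite nonnegative combination of the $\ellpi$, and each $\ellpi$ is quadratic in $\mM$, the Hessian distributes over the sum:
\[
\grad^2_{\vec(\mM)} \ellrr(\mM) = \frac{1}{n!}\sum_{\pi \in \sS_n} \grad^2_{\vec(\mM)} \ellpi(\mM) = \frac{1}{n!}\sum_{\pi \in \sS_n} \Xpibn\Xpibn^\top \otimes \mI_p,
\]
where the last equality is exactly the Hessian computed in the proof of \Cref{lemma:sc-M}. Next, for each fixed $\pi$ the matrix $\Xpibn\Xpibn^\top$ is symmetric positive semidefinite, and the eigenvalues of a Kronecker product against the identity satisfy $\lambda_{\min}(\Xpibn\Xpibn^\top \otimes \mI_p) = \lambda_{\min}(\Xpibn\Xpibn^\top) = \sigma_{\min}(\Xpibn\Xpibn^\top) = \alphapi$, so $\Xpibn\Xpibn^\top \otimes \mI_p \succeq \alphapi \mI_{dp}$. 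Summing these positive semidefinite lower bounds with weights $1/n!$ yields $\grad^2_{\vec(\mM)} \ellrr(\mM) \succeq \big(\tfrac{1}{n!}\sum_\pi \alphapi\big)\mI_{dp} = \alpharr \mI_{dp}$, which is the claim.

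The one point that needs care — and the reason I would not simply invoke \Cref{lemma:sc-M} verbatim for every $\pi$ — is that \Cref{lemma:sc-M} only delivers $\alphapi$-strong convexity when \ref{assumption:full-rank-ss} holds for that particular $\pi$; for permutations whose distorted dataset $\Xpibn$ is rank-deficient one has $\alphapi = \sigma_{\min}(\Xpibn\Xpibn^\top) = 0$, and $\ellpi$ is then only convex. The per-term Hessian bound $\Xpibn\Xpibn^\top \otimes \mI_p \succeq \alphapi \mI_{dp}$ remains valid for all $\pi$ (it degenerates to the trivial $\succeq 0$), so the additive argument goes through with the degenerate terms contributing zero to the modulus; this is precisely the strong-convexity dual of how the smoothness Fact summed the $\Gpi$. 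I expect no substantive obstacle here: the only subtlety is not over-claiming per-permutation strong convexity, and reconciling the hypothesis, namely noting that \ref{assumption:full-rank-rr} makes $\Xrrbn\Xrrbn^\top = \sum_\pi \Xpibn\Xpibn^\top$ positive definite, so that by subadditivity of $\sigma_{\min}$ on positive semidefinite matrices $\sigma_{\min}(\tfrac{1}{n!}\sum_\pi \Xpibn\Xpibn^\top) \ge \alpharr$ and $\ellrr$ is genuinely strongly convex with $\alpharr$ a valid (possibly conservative) modulus.
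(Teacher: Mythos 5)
Your argument is precisely the paper's own proof: compute $\grad^2_{\vec(\mM)}\ellrr(\mM) = \frac{1}{n!}\sum_{\pi \in \sS_n}\Xpibn\Xpibn^\top \otimes \mI_p$, lower bound each Kronecker term by $\alphapi \mI_{dp}$, and average, so the proposal is correct and takes essentially the same approach. Two small notes: the inequality $\lambda_{\min}\bigl(\sum_\pi \Xpibn\Xpibn^\top\bigr) \ge \sum_\pi \lambda_{\min}(\Xpibn\Xpibn^\top)$ you invoke is \emph{super}additivity of the smallest eigenvalue on positive semidefinite matrices (not subadditivity of $\sigma_{\min}$), and your caveat that $\alpharr > 0$ requires at least one $\pi$ with full-rank $\Xpibn$ — which is what the ``for some $\pi$'' hypothesis is meant to supply, since rank-deficiency of every $\Xpibn$ would force $\alpharr = 0$ even when $\Xrrbn\Xrrbn^\top$ is positive definite — is a sound reading of a point the paper's one-line proof glosses over.
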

\begin{proof}
Take the Hessian of $\ellrr$ with respect to $\vec(\mM)$ to obtain $\grad^2_{\vec(\mM)} \ellrr(\mM) = \frac{1}{n!} \sum_{\pi \in \sS_n} \Xpibn \Xpibn^\top \otimes \mI$. Hence we have $\grad^2_{\vec(\mM)} \ellrr(\mM) \succeq \frac{1}{n!} \sum_\pi \sigma_{\min}(\Xpibn \Xpibn^\top)$. Since we assumed \ref{assumption:full-rank-rr} holds, the sum is strictly positive, so $\ellrr$ is indeed strongly convex.
\end{proof}

\subsubsection{Proof sketch of RR convergence}
We first state the modified inductive hypothesis for the one-epoch risk update, which replaces $R[j,k]$.
\begin{hypothesis}\label{hyp:rr}
For $k > 1$, the inductive hypothesis $S[k]$ states that 
\begin{align*}
\EE_{\pi_{k-1}}[\ellrr(\mM_0^{k}) | \mathcal{F}_{k-1}] - \ellrr^* &\le \qty(1 - \frac{\alpharr \eta_{k-1}}{2}) (\ellrr(\mM_0^{k-1}) - \ellrr^*) \\
&\quad\quad + \eta_{k-1}^2 \poly(m, \Bweight, \Bloss, \norm{\Xmaxf}_F),
\end{align*}
where $\EE_{\pi_{k-1}}$ denotes the expectation over random draws of the permutation $\pi_{k-1}$.
\end{hypothesis}

\begin{proof}[Proof sketch of \Cref{thm:rr-convergence}]
As before, we start by writing out the smoothness inequality with respect to $\mM$:
\begin{equation}\label{ineq:rr-smoothness}
\ellrr(\mM_0^{k+1}) \le \ellrr(\mM_0^k) + \ev{\grad_{\mM} \ellrr(\mM_0^k), \mM_0^{k+1} - \mM_0^k} + \frac{\Grr}{2}\norm{\mM_0^{k+1} - \mM_0^k}^2.
\end{equation}

Next, we have the same gradient update 
\[
\mM_0^{k+1} = \mM_0^k - \eta_k\vgt^k + \eta_k^2\vr^k,
\]
but now all quantities involving $\pi$ turn into $\pi_k$. For example,
we redefine
\[
\vgt^k \triangleq \sum_{t=0}^{m-1} \vgt_t^k = \grad_{\mW} \loss_{\pi_k}(\mM_0^k) \mGamma_0^k + \mW_0^k \grad_{\mGamma} \loss_{\pi_k}(\mM_0^k).
\]

Since we want to prove convergence in expectation, it is standard to consider the natural filtration $\mathcal{F}_k$ of which permutations we have picked up to (but not including) epoch $k$. Formally, $\mathcal{F}_k = \sigma(\pi_1, \ldots, \pi_{k-1})$, where $\sigma(Z)$ denotes the $\sigma$-algebra generated by the random variable $Z$. 

Noting the identity
\[
\EE_{\pi}[\grad_\mM \ellpi(\mM_0^k)] = \grad_\mM \ellrr(\mM_0^k),
\]
it follows that if we can apply \cref{cor:inductive-correlation}, then 
\[
\EE_{\pi_k}\qty[\ev{\grad_{\mM} \ellrr(\mM_0^k), \vgt^k}_F] = \ev{\grad_{\mM} \ellrr(\mM_0^k), \EE_{\pi_k}[\vgt^k]}_F \ge \frac{1}{2} \norm{\grad_{\mM} \ellrr(\mM_0^k)}_F^2.
\]

Hence, we can follow the same argument for upper bounding the smoothness inequality for SS and  take a conditional expectation over $\pi_k$ conditioned on $\mathcal{F}_k$. Assuming for now that the weight norms are bounded by some absolute constant $\Bweight$ and the relevant losses are bounded by an absolute constant $\Bloss$, this yields 
\[
\EE_{\pi_k}[\ellrr(\mM_0^{k+1}) | \mathcal{F}_k] \le \ellrr(\mM_0^k) - \frac{\eta_k}{4} \norm{\grad_\mM \ellrr(\mM_0^k)}_F^2 + \eta_k^2 \EE_{\pi_k}[\poly(m, \Bweight, \Bloss, \norm{\mX_{\pi_k}}_F)].
\]

Noting that we can upper bound $\norm{\mX_{\pi_k}}_F$ uniformly by $\norm{\Xmaxf}_F$, which does not depend on $k$, this shows that 
we have 
\[
\EE_{\pi_k}[\ellrr(\mM_0^{k+1}) | \mathcal{F}_k] \le \ellrr(\mM_0^k) - \frac{\eta_k}{4} \norm{\grad_\mM \ellrr(\mM_0^k)}_F^2 + \eta_k^2 \poly(m, \Bweight, \Bloss, \norm{\Xmaxf}_F).
\]

Next, $\alpharr$--strong convexity yields
\[
\EE_{\pi_k}[\ellrr(\mM_0^{k+1}) | \mathcal{F}_k] - \ellrr^* \le \qty(1 - \frac{\alpharr \eta_k}{2}) (\ellrr(\mM_0^k) - \ellrr^*) + \eta_k^2 \poly(m, \Bweight, \Bloss, \norm{\Xmaxf}_F).
\]

This is exactly the statement of $S[k+1]$. To proceed, we must fill in the following details. First, we must show that the relevant losses are bounded by $\Bloss$ --- see \cref{cor:uniform-loss-bound}. Then, we show that the weight norms are bounded by $\Bweight$ --- this is shown in \cref{cor:uniform-norm-bound}. Finally, we must show that $D[j,k]$ holds, i.e., bound the approximate invariances --- this is the content of \cref{cor:inductive-rr-approximate-invariances}. Once we address these technicalities, an inductive argument similar to the SS version proves the theorem. Note that the SS inductive hypotheses $L[j,k]$ and $R[j,k]$ are not active in this proof. 
\end{proof}
\subsubsection{Weight bounds}
In this section we elucidate the connection between weight norms and the loss evolution. In particular, we show that bounds on the weight norms confer a bound on the loss function value. First, we state the following inequality, which follows from a quick application of Cauchy-Schwarz.
\begin{fact}\label{fact:cs-rr}
We have $\frac{1}{n!} \sum_{\pi \in \sS_n} (\ellpi(\mM))^{1/2} \le \ellrr(\mM)^{1/2}$.
\end{fact}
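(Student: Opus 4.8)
The plan is to recognize this as a direct consequence of the concavity of the square root, implemented via Cauchy--Schwarz exactly as the statement advertises. Write $a_\pi \triangleq \ellpi(\mM)$ for each $\pi \in \sS_n$, and note that $a_\pi \ge 0$ since $\ellpi(\mM) = \norm{\Ypi - \mM \Xpibn}_F^2$ is a squared Frobenius norm. The goal is then to show $\frac{1}{n!}\sum_{\pi} \sqrt{a_\pi} \le \bigl(\frac{1}{n!}\sum_\pi a_\pi\bigr)^{1/2}$, since by definition $\ellrr(\mM) = \frac{1}{n!}\sum_{\pi \in \sS_n} \ellpi(\mM)$.

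First I would apply the Cauchy--Schwarz inequality to the two vectors $(\sqrt{a_\pi})_{\pi \in \sS_n}$ and the all-ones vector $(1)_{\pi \in \sS_n}$, each indexed by the $n!$ permutations. This yields
\[
\sum_{\pi \in \sS_n} \sqrt{a_\pi} \;\le\; \Bigl(\sum_{\pi \in \sS_n} a_\pi\Bigr)^{1/2} \Bigl(\sum_{\pi \in \sS_n} 1\Bigr)^{1/2} = \sqrt{n!}\,\Bigl(\sum_{\pi \in \sS_n} a_\pi\Bigr)^{1/2}.
\]
Dividing through by $n!$ and absorbing the factor $\sqrt{n!}/n! = 1/\sqrt{n!}$ inside the square root gives
\[
\frac{1}{n!}\sum_{\pi \in \sS_n} \sqrt{a_\pi} \;\le\; \frac{1}{\sqrt{n!}}\Bigl(\sum_{\pi \in \sS_n} a_\pi\Bigr)^{1/2} = \Bigl(\frac{1}{n!}\sum_{\pi \in \sS_n} a_\pi\Bigr)^{1/2},
\]
which upon substituting back $a_\pi = \ellpi(\mM)$ is exactly the claimed inequality.

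There is essentially no obstacle here: the only thing that must be verified is the nonnegativity of each $\ellpi(\mM)$, which is immediate from its definition as a squared norm, so that the square roots are well-defined and Cauchy--Schwarz applies cleanly. An equally short alternative, if preferred, is to invoke Jensen's inequality directly for the concave map $x \mapsto \sqrt{x}$ under the uniform probability measure on $\sS_n$, writing $\EE_\pi[\sqrt{a_\pi}] \le \sqrt{\EE_\pi[a_\pi]}$; this is the same statement phrased probabilistically and requires no further computation.
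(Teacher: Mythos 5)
Your proof is correct and matches the paper's approach: the paper states this fact as "a quick application of Cauchy--Schwarz," which is precisely the argument you spell out (pairing $(\sqrt{a_\pi})_\pi$ with the all-ones vector and dividing by $n!$). The Jensen's-inequality phrasing you mention at the end is just the same argument in probabilistic form, so there is nothing to add.
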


With this identity in hand, we derive the following corollaries about weight and gradient bounds. 
\begin{proposition}\label{prop:rr-gradient-upper-bound}
We have 
\[
\norm{\grad_\mM \ellrr(\mM)} \le \norm{\Xmax}_2 \ellrr(\mM)^{1/2}.
\]
\end{proposition}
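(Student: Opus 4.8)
The plan is to reduce the bound on the averaged gradient $\grad_\mM \ellrr(\mM) = \frac{1}{n!}\sum_{\pi \in \sS_n} \grad_\mM \ellpi(\mM)$ to a per-permutation estimate, and then average using the Cauchy--Schwarz inequality already recorded in \cref{fact:cs-rr}. Concretely, I would first establish that for each fixed $\pi$,
\[
\norm{\grad_\mM \ellpi(\mM)}_F \le \norm{\Xpibn}_2 \, \ellpi(\mM)^{1/2},
\]
which is the whole-dataset analogue of the third inequality in \cref{lemma:grad-upper-bound}, now applied to the full SS dataset $\Xpibn$ rather than to a single mini-batch.

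To see this per-permutation bound, recall $\grad_\mM \ellpi(\mM) = -(\Ypi - \mM \Xpibn)\Xpibn^\top$ and $\ellpi(\mM) = \norm{\Ypi - \mM \Xpibn}_F^2$. Applying the submultiplicative inequality $\norm{\mA \mB}_F \le \norm{\mA}_F \norm{\mB}_2$ with $\mA = \Ypi - \mM\Xpibn$ and $\mB = \Xpibn^\top$, together with $\norm{\Xpibn^\top}_2 = \norm{\Xpibn}_2$, gives $\norm{\grad_\mM \ellpi(\mM)}_F \le \norm{\Ypi - \mM\Xpibn}_F \, \norm{\Xpibn}_2 = \norm{\Xpibn}_2\, \ellpi(\mM)^{1/2}$. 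Since $\Xmax$ maximizes $\norm{\Xpibn}_2$ over all $\pi$, I would then replace $\norm{\Xpibn}_2$ by the uniform bound $\norm{\Xmax}_2$.

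Finally, I would assemble the pieces: by the triangle inequality, followed by the per-permutation bound and the uniform replacement,
\[
\norm{\grad_\mM \ellrr(\mM)} \le \frac{1}{n!}\sum_{\pi \in \sS_n} \norm{\grad_\mM \ellpi(\mM)}_F \le \norm{\Xmax}_2 \cdot \frac{1}{n!}\sum_{\pi \in \sS_n} \ellpi(\mM)^{1/2},
\]
and then invoke \cref{fact:cs-rr} to bound the averaged square roots by $\ellrr(\mM)^{1/2}$, yielding the claim. There is no genuine obstacle here: every ingredient (the Frobenius/spectral submultiplicativity, the definition of $\Xmax$, and the averaging of square roots in \cref{fact:cs-rr}) is already available, so the argument is a short assembly. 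The only point worth a word is the reading of the unsubscripted left-hand norm as the Frobenius norm, which is consistent with \cref{lemma:grad-upper-bound}; since the spectral norm is dominated by the Frobenius norm, the same inequality holds verbatim for $\norm{\cdot}_2$ as well.
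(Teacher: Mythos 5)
Your proposal is correct and follows essentially the same route as the paper: triangle inequality over permutations, a per-permutation gradient bound, uniform replacement of $\norm{\Xpibn}_2$ by $\norm{\Xmax}_2$, and then \cref{fact:cs-rr} to average the square roots. The only (minor) difference is that you derive the whole-dataset analogue of the third inequality in \cref{lemma:grad-upper-bound} explicitly, whereas the paper cites that lemma directly even though it is stated for mini-batches; your version is, if anything, the more careful reading of the same step.
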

\begin{proof}
We have $\norm{\grad_{\mM} \ellrr(\mM)}_2 \le \frac{1}{n!} \sum_{\pi} \norm{\grad_{\mM} \ellpi(\mM)}_2$ by the triangle inequality. Applying the individual gradient bounds in \cref{lemma:grad-upper-bound} and uniformly bounding $\norm{\Xpibn}$ by $\norm{\Xmax}$, we see that 
\[
\norm{\grad_\mM \ellrr(\mM)}_2 \le \norm{\Xmax}_2 \cdot \frac{1}{n!} \sum_{\pi} \ellpi^{1/2}(\mM).
\]
Applying \Cref{fact:cs-rr} yields the desired result.
\end{proof}

As promised, we quantify the relationship between $\ellrr$, $\ellpi$, and $\norm{\mM}_2$ with the following proposition. 
\begin{proposition}\label{prop:weight-loss-equiv}
Let $\sigma_0 \triangleq \frac{1}{n!} \sum_{\pi} \sigma_{\min}(\Xpibn^\top)$. 
We have 
\[
\frac{\ellrr(\mM) - 2\norm{\mY}_F^2}{2\norm{\Xmaxf}_F^2}\le \norm{\mM}_2^2 \le \qty(\frac{\norm{\mY}_F + \ellrr(\mM)^{1/2}}{\sigma_0})^2.
\]
Similarly, for any $\pi \in \sS_n$, we have
\[
\frac{\ellpi(\mM) - 2\norm{\mY}_F^2}{2\norm{\Xpibn}_F^2}\le \norm{\mM}_2^2 \le \qty(\frac{\norm{\mY}_F + \ellpi(\mM)^{1/2}}{\sigma_{\min}(\Xpibn^\top)})^2.
\]
\end{proposition}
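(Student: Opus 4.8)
The plan is to derive both the upper and lower bounds from elementary matrix-norm inequalities, establishing the fixed-$\pi$ statement first and then averaging over all $n!$ permutations to obtain the RR statement. The two directions are essentially mirror images: the upper bound on $\norm{\mM}_2$ comes from lower bounding $\norm{\mM\Xpibn}_F$ by a multiple of $\norm{\mM}_2$, while the lower bound comes from upper bounding $\ellpi(\mM)$ by a multiple of $\norm{\mM}_2^2$.

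For the upper bound in the fixed-$\pi$ case, I would reuse the spectral inequality already invoked in the proof of \cref{cor:inductive-weight-bound}: choosing $\vu$ to be a top left singular vector of $\mM$ gives $\norm{\mM\Xpibn}_2 \ge \norm{\Xpibn^\top(\mM^\top\vu)} \ge \sigma_{\min}(\Xpibn^\top)\norm{\mM}_2$, valid whenever $n \ge d$ (and trivial, with an infinite right-hand side, when $\sigma_{\min}(\Xpibn^\top)=0$). Combining this with the triangle inequality $\norm{\mM\Xpibn}_F \le \norm{\Ypi - \mM\Xpibn}_F + \norm{\Ypi}_F = \ellpi(\mM)^{1/2} + \norm{\mY}_F$ (using $\norm{\Ypi}_F=\norm{\mY}_F$ since $\Ypi$ merely permutes columns of $\mY$) and $\norm{\cdot}_2 \le \norm{\cdot}_F$ yields $\sigma_{\min}(\Xpibn^\top)\norm{\mM}_2 \le \ellpi(\mM)^{1/2}+\norm{\mY}_F$, which rearranges to the fixed-$\pi$ upper bound. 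To pass to RR, I would apply this inequality for every $\pi$, average over all permutations, and then invoke \cref{fact:cs-rr} to bound $\frac{1}{n!}\sum_\pi \ellpi(\mM)^{1/2} \le \ellrr(\mM)^{1/2}$; since the left-hand side averages to $\sigma_0\norm{\mM}_2$, this gives $\sigma_0\norm{\mM}_2 \le \ellrr(\mM)^{1/2}+\norm{\mY}_F$, and dividing by $\sigma_0$ and squaring produces the claimed RR upper bound.

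For the lower bounds I would run the argument in the opposite direction. Applying $\norm{\mA-\mB}_F^2 \le 2\norm{\mA}_F^2 + 2\norm{\mB}_F^2$ to $\ellpi(\mM)=\norm{\Ypi-\mM\Xpibn}_F^2$ and then using $\norm{\mM\Xpibn}_F \le \norm{\mM}_2\norm{\Xpibn}_F$ gives $\ellpi(\mM) \le 2\norm{\mY}_F^2 + 2\norm{\mM}_2^2\norm{\Xpibn}_F^2$, which is precisely the fixed-$\pi$ lower bound after isolating $\norm{\mM}_2^2$. For the RR version I would bound $\norm{\Xpibn}_F \le \norm{\Xmaxf}_F$ uniformly (immediate from the definition $\Xmaxf \triangleq \argmax_\pi \norm{\Xpibn}_F$), so that averaging the displayed inequality over $\pi$ yields $\ellrr(\mM) \le 2\norm{\mY}_F^2 + 2\norm{\mM}_2^2\norm{\Xmaxf}_F^2$, which rearranges to the RR lower bound.

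I do not expect a genuine obstacle here, as every step is a one-line norm inequality; the only point requiring care is the RR upper bound, where one cannot simply average the square-root inequalities naively. The right tool is \cref{fact:cs-rr} (a Cauchy--Schwarz/Jensen estimate), which lets the average of the $\ellpi^{1/2}$ be controlled by $\ellrr^{1/2}$ while the coefficients $\sigma_{\min}(\Xpibn^\top)$ average to the stated $\sigma_0$. A minor bookkeeping subtlety is that the bounds are vacuously true when $\sigma_0=0$ or $\sigma_{\min}(\Xpibn^\top)=0$, so no nondegeneracy assumption on $\mX$ is actually needed for the inequalities to hold as stated.
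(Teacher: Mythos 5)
Your proposal is correct and matches the paper's own proof essentially step for step: both directions rest on the same two one-line estimates (the triangle inequality $\norm{\mM\Xpibn}_F \le \ellpi(\mM)^{1/2} + \norm{\mY}_F$ combined with $\norm{\mM\Xpibn}_2 \ge \sigma_{\min}(\Xpibn^\top)\norm{\mM}_2$ for the upper bound, and $\ellpi(\mM) \le 2\norm{\mY}_F^2 + 2\norm{\mM}_2^2\norm{\Xpibn}_F^2$ with the uniform bound $\norm{\Xpibn}_F \le \norm{\Xmaxf}_F$ for the lower bound), and both invoke \cref{fact:cs-rr} to pass from the average of $\ellpi^{1/2}$ to $\ellrr^{1/2}$ in the RR case. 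The only cosmetic difference is that you prove the fixed-$\pi$ inequality first and then average, whereas the paper averages from the outset and remarks that the fixed-$\pi$ case is analogous; your observation that the bounds are vacuous when $\sigma_0 = 0$ is a correct minor addition.
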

\begin{proof}
First, we have 
\begin{align*}
\frac{1}{n!} \sum_{\pi \in \sS_n} \norm{\mM\Xpibn}_2 &\le \norm{\mY}_F + \frac{1}{n!} \sum_{\pi} \norm{\Ypi - \mM\Xpibn}_F \\
&= \norm{\mY}_F + \frac{1}{n!} \sum_{\pi} \ellpi(\mM)^{1/2},
\end{align*}
where we have used in the first line the fact that $\norm{\Ypi}_F = \norm{\mY}_F$ for all $\pi$. 
Therefore by \cref{fact:cs-rr} and using $\norm{\mM\Xpibn}_2 \ge \sigma_{\min}(\Xpibn^\top)\norm{\mM}_2$, we find that
\[
\frac{1}{n!} \sum_{\pi} \sigma_{\min}(\Xpibn^\top) \norm{\mM}_2 \le \norm{\mY}_F + \ellrr(\mM)^{1/2}.
\]
It follows that 
\[
\norm{\mM}_2 \le \frac{\norm{\mY}_F + \ellrr(\mM)^{1/2}}{\sigma_0}.
\]

For the other direction, note that 
\[
\ellpi(\mM) = \norm{\Ypi-\mM\Xpibn}_F^2 \le 2\norm{\mY}_F^2 + 2\norm{\mM}_2^2\norm{\Xpibn}_F^2.
\] 
Averaging over all $\pi \in \sS_n$ gives us
\[
\ellrr(\mM) \le 2\norm{\mY}_F^2 +  \frac{2\norm{\mM}_2^2}{n!}\sum_{\pi} \norm{\Xpibn}_F^2.
\]
Uniformly bounding $\norm{\Xpibn}_F^2$ by $\norm{\Xmaxf}_F^2$ and rearranging yields the desired result.

The set of inequalities for $\pi$ also follow by a similar argument.
\end{proof}

As a corollary of \Cref{prop:weight-loss-equiv}, it follows from \Cref{assumption:compact} that each of the losses $\ellpi(\mM_i^k)$ stay bounded by an absolute constant throughout training.
\begin{corollary}[Uniform bound on SS losses]\label{cor:uniform-loss-bound}
Under \Cref{assumption:compact}, for every $\pi \in \sS_n$ we have 
\[
\ellpi(\mM_i^k) \le \Bloss,
\]
where 
\[
\Bloss \triangleq 2\norm{\mY}_F^2 + 2\Brr^2\norm{\Xmaxf}_F^2.
\]
Here, $\Brr$ was previously defined in \Cref{assumption:compact}.
\end{corollary}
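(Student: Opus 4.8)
The plan is to read off the claim directly from the left-hand (per-$\pi$) inequality in \cref{prop:weight-loss-equiv} and then invoke the compact iterates assumption. First I would take the second displayed chain of inequalities in \cref{prop:weight-loss-equiv}, specialized to an arbitrary permutation $\pi \in \sS_n$, namely
\[
\frac{\ellpi(\mM) - 2\norm{\mY}_F^2}{2\norm{\Xpibn}_F^2} \le \norm{\mM}_2^2,
\]
and rearrange it into the equivalent upper bound on the loss,
\[
\ellpi(\mM) \le 2\norm{\mY}_F^2 + 2\norm{\Xpibn}_F^2 \norm{\mM}_2^2.
\]

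Next I would substitute $\mM = \mM_i^k = \mW_i^k \mGamma_i^k$. Here the one substantive input is \Cref{assumption:compact}, which asserts exactly that $\norm{\mW_i^k \mGamma_i^k}_2 = \norm{\mM_i^k}_2 \le \Brr$ for all $(i,k)$; this is precisely the uniform control we lacked in the SS analysis (where we instead had to bootstrap weight bounds from the inductive loss hypothesis $L[j,k]$), and it is the reason this assumption was introduced for the RR setting. Plugging in $\norm{\mM_i^k}_2 \le \Brr$ gives
\[
\ellpi(\mM_i^k) \le 2\norm{\mY}_F^2 + 2\Brr^2 \norm{\Xpibn}_F^2.
\]

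Finally I would discharge the remaining $\pi$-dependence on the right-hand side. By the definition $\Xmaxf \triangleq \argmax_{\pi \in \sS_n} \norm{\Xpibn}_F$, we have $\norm{\Xpibn}_F \le \norm{\Xmaxf}_F$ for every $\pi$, so the bound becomes $\ellpi(\mM_i^k) \le 2\norm{\mY}_F^2 + 2\Brr^2 \norm{\Xmaxf}_F^2 = \Bloss$, which is uniform over both $\pi$ and the iterate indices $(i,k)$, as claimed. There is no real obstacle here: the statement is an immediate consequence of \cref{prop:weight-loss-equiv} once \Cref{assumption:compact} supplies the uniform operator-norm bound on $\mM_i^k$, and the only point worth flagging is that the uniformity over all $\pi$ (not just the sampled permutations) is essential downstream, since the RR proof sketch must uniformly bound losses $\ellpi(\mM_i^k)$ appearing inside expectations and gradient-norm estimates.
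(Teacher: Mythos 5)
Your proof is correct and matches the paper's (implicit) argument exactly: the paper also derives this corollary by rearranging the per-$\pi$ lower bound $\frac{\ellpi(\mM) - 2\norm{\mY}_F^2}{2\norm{\Xpibn}_F^2} \le \norm{\mM}_2^2$ from \cref{prop:weight-loss-equiv}, substituting the uniform bound $\norm{\mM_i^k}_2 \le \Brr$ from \Cref{assumption:compact}, and replacing $\norm{\Xpibn}_F$ by $\norm{\Xmaxf}_F$. Your closing remark about why uniformity over all $\pi$ matters downstream is also consistent with how the bound is used in the RR convergence proof.
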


Finally, \Cref{assumption:compact} implies the following inductive statement about the weight norms.
\begin{corollary}[Uniform bound on weight norms]\label{cor:uniform-norm-bound}
Assume \Cref{assumption:compact} and $D[j,k]$ holds. Then for RR, we have
\[
\max\qty{\norm{\mW_i^k}_2, \norm{\mGamma_i^k}_2} \le \Bweight,
\]
where 
\[
\Bweight^2 \triangleq \frac{3}{2} + d^2\qty(\frac{1}{2} + \Brr).
\]
Here, $\Brr$ was previously defined in \Cref{assumption:compact}.
\end{corollary}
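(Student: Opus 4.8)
The plan is to reduce this corollary to a direct application of \cref{lemma:weight-bound}, mirroring exactly the structure of the SS corollary \cref{cor:inductive-weight-bound}, with $\Brr$ now playing the role that the loss-derived quantity $\xi$ played in the SS case. The lemma has two hypotheses: a bound on $\norm{\mD}_2$ and a bound on $\norm{\mW\mGamma}_2$. I would verify both directly from the assumptions at hand and then read off the conclusion.

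First, I would observe that \Cref{assumption:compact} supplies the second hypothesis for free, since it gives $\norm{\mM_i^k}_2 = \norm{\mW_i^k \mGamma_i^k}_2 \le \Brr$ for every $(i,k)$. This is precisely the point where the RR argument is \emph{simpler} than the SS one: in the SS case the bound $\xi$ on $\norm{\mM}_2$ had to be extracted from the loss bound $L[j,k]$ via the first part of \cref{cor:inductive-weight-bound}, whereas here it is postulated outright. For the first hypothesis, the inductive hypothesis $D[j,k]$ gives $\norm{\mD_i^k}_2 \le \tfrac{1}{2}$, so I can invoke \cref{lemma:weight-bound} with $\epsilon = \tfrac{1}{2}$ and $\xi = \Brr$.

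Applying the lemma then yields $\norm{(\mGamma_i^k)^2}_2 \le 1 + \tfrac{1}{2} + d^2(\tfrac{1}{2} + \Brr) = \Bweight^2$, so that $\norm{\mGamma_i^k}_2 \le \Bweight$ after taking square roots. The lemma also gives $\norm{\mW_i^k}_2 \le d\sqrt{\tfrac{1}{2} + \Brr}$; I would finish by noting that $d\sqrt{\tfrac{1}{2}+\Brr} = \sqrt{d^2(\tfrac{1}{2}+\Brr)} \le \sqrt{\tfrac{3}{2} + d^2(\tfrac{1}{2}+\Brr)} = \Bweight$, so the same constant controls $\norm{\mW_i^k}_2$, and hence $\max\{\norm{\mW_i^k}_2, \norm{\mGamma_i^k}_2\} \le \Bweight$, as claimed.

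I do not anticipate a genuine obstacle here: the real work was already carried out in establishing \cref{lemma:weight-bound}, and all that remains is to check that its two hypotheses are met, which they are by \Cref{assumption:compact} and $D[j,k]$ respectively. The only mild subtlety worth writing out explicitly is the final arithmetic comparison confirming that the bound on $\norm{\mW_i^k}_2$ is dominated by $\Bweight$, so that a single constant uniformly bounds both weight norms.
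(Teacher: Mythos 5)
Your proof is correct and matches the paper's intended argument exactly: the corollary is a direct application of \cref{lemma:weight-bound} with $\epsilon = \tfrac{1}{2}$ supplied by $D[j,k]$ and $\xi = \Brr$ supplied by \Cref{assumption:compact}, mirroring the SS corollary \cref{cor:inductive-weight-bound} but with the iterate bound assumed rather than derived from the loss. Your final arithmetic check that $d\sqrt{\tfrac{1}{2}+\Brr} \le \Bweight$ is exactly the (implicit) comparison the paper relies on as well.
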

\subsubsection{Bounding approximate invariances}
In this section we formally bound the approximate invariances throughout RR training under \Cref{assumption:compact}. In particular, as a consequence of \cref{cor:uniform-loss-bound,cor:uniform-norm-bound}, the following two claims follow almost directly upon inspection of the proofs of \cref{lemma:invariance-bound,cor:inductive-approximate-invariances}. 
\begin{lemma}\label{lemma:rr-invariance-bound}
Suppose $D[j,k]$ holds for some $j < m$.
We have
\[
\norm{\mD_{j+1}^k - \mD_j^k}_2 \le 2\Bweight^2 \Bloss \norm{\bn(\mX_{\pi_k}^{j+1})}_2^2 \eta_k^2.
\]
Hence, if $D[t, k]$ holds for all $t \le j$, we also have
\[
\norm{\mD_{j+1}^k}_2 \le 2\Bweight^2 \Bloss \norm{\Xmaxf}_F^2 \sum_{t=1}^k\eta_t^2.
\]
\end{lemma}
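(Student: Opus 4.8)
The plan is to transcribe the single-shuffle argument of \cref{lemma:invariance-bound} essentially verbatim, replacing the fixed permutation $\pi$ by the epoch permutation $\pi_k$ and the single-shuffle constants $\Cweight, C_L$ by their RR counterparts $\Bweight, \Bloss$. First I would expand one mini-batch update: writing $\mW_{j+1}^k = \mW_j^k - \eta_k \grad_\mW \loss(\mX_{\pi_k}^{j+1}; \Theta_j^k)$ and $\mGamma_{j+1}^k = \mGamma_j^k - \eta_k \grad_\mGamma \loss(\mX_{\pi_k}^{j+1}; \Theta_j^k)$, I expand $(\mW_{j+1}^k)^\top \mW_{j+1}^k$ and $(\mGamma_{j+1}^k)^2$, each splitting into the previous value, a first-order $O(\eta_k)$ cross term, and a second-order $O(\eta_k^2)$ quadratic term. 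The crucial observation is that when these are assembled into $\mD_{j+1}^k - \mD_j^k = \diag\bigl((\mW_{j+1}^k)^\top\mW_{j+1}^k - (\mGamma_{j+1}^k)^2 - (\mW_j^k)^\top\mW_j^k + (\mGamma_j^k)^2\bigr)$, the gradient invariance identity \eqref{eq:gradIdentity} of \cref{fact:invariances}---which is an algebraic fact about the gradients and holds verbatim in discrete time---cancels the first-order $\eta_k$ contribution exactly, leaving only the $\eta_k^2$ quadratic term.

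Next I would take the operator norm of the surviving second-order term. Applying \cref{lemma:hadamard} with $\mA = \mI$ to discard the $\diag$ operator reduces the task to bounding $\eta_k^2\bigl(\norm{\grad_\mW \loss(\mX_{\pi_k}^{j+1}; \Theta_j^k)}_2^2 + \norm{\grad_\mGamma \loss(\mX_{\pi_k}^{j+1}; \Theta_j^k)}_2^2\bigr)$. I would then invoke \cref{lemma:grad-upper-bound} to express each gradient norm as a product of a weight-norm factor, $\norm{\bn(\mX_{\pi_k}^{j+1})}_2^2$, and the mini-batch loss, and plug in the RR uniform bounds that \Cref{assumption:compact} provides: \cref{cor:uniform-norm-bound} gives $\max\{\norm{\mW_j^k}_2, \norm{\mGamma_j^k}_2\} \le \Bweight$ under $D[j,k]$, while \cref{cor:uniform-loss-bound} gives $\loss(\mX_{\pi_k}^{j+1}; \Theta_j^k) \le \loss_{\pi_k}(\mM_j^k) \le \Bloss$ (the single mini-batch loss is dominated by the full permutation risk since all summands are nonnegative). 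This yields the per-step estimate $\norm{\mD_{j+1}^k - \mD_j^k}_2 \le 2\Bweight^2\Bloss\norm{\bn(\mX_{\pi_k}^{j+1})}_2^2 \eta_k^2$.

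Finally, for the accumulated bound I would telescope via the triangle inequality across all $t \le j$---valid since $D[t,k]$ is assumed at every such index, so the per-step estimate applies throughout the epoch---and sum the batch norms using $\sum_{t=0}^{j}\norm{\bn(\mX_{\pi_k}^{t+1})}_2^2 \le \sum_{t=0}^{m-1}\norm{\bn(\mX_{\pi_k}^{t+1})}_F^2 = \norm{\ol{\mX}_{\pi_k}}_F^2$, then uniformly replace $\norm{\ol{\mX}_{\pi_k}}_F$ by $\norm{\Xmaxf}_F$, the maximal Frobenius-norm batch-normalized dataset over all permutations. Combining the within-epoch contribution $2\Bweight^2\Bloss\norm{\Xmaxf}_F^2\eta_k^2$ with the starting bound $\norm{\mD_0^k}_2 \le 2\Bweight^2\Bloss\norm{\Xmaxf}_F^2\sum_{t=1}^{k-1}\eta_t^2$ (supplied by the $D[\cdot]$ hypothesis at the epoch boundary) gives the claimed $\sum_{t=1}^{k}\eta_t^2$. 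I expect no genuine obstacle here: the only point meriting care is verifying that the first-order cancellation from \eqref{eq:gradIdentity} is exact in the discrete stochastic setting, and once that is confirmed the estimate is pure bookkeeping with the RR constants $\Bweight, \Bloss, \norm{\Xmaxf}_F$ substituted for the SS constants $\Cweight, C_L, \norm{\Xpibn}_F$.
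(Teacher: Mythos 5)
Your proposal is correct and is essentially the paper's own argument: the paper proves this lemma by noting it follows ``almost directly upon inspection'' of the SS proof (\cref{lemma:invariance-bound}), with the inductive constants $\Cweight, C_L$ replaced by the uniform RR bounds $\Bweight, \Bloss$ from \cref{cor:uniform-norm-bound,cor:uniform-loss-bound} (both ultimately resting on \Cref{assumption:compact}) and $\norm{\Xpibn}_F$ replaced by $\norm{\Xmaxf}_F$. Your transcription---exact first-order cancellation via \eqref{eq:gradIdentity}, removal of $\diag$ via \cref{lemma:hadamard}, gradient bounds via \cref{lemma:grad-upper-bound}, and the telescoping with the epoch-boundary $D[\cdot]$ bound---matches that route step for step, including the observation that the single mini-batch loss is dominated by the nonnegative sum $\loss_{\pi_k}(\mM_j^k) \le \Bloss$.
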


\begin{corollary}\label{cor:inductive-rr-approximate-invariances}
Assume that $D[t, k]$ holds for all $t \le j < m$ and $\eta_k = \frac{c}{k^\beta}$ for $1/2 < \beta < 1$. If 
\[
c^2 \le \frac{1}{4(1+\frac{1}{2\beta - 1})\Bweight^2 \Bloss\norm{\Xmaxf}_F^2},
\]
then 
\[
\norm{\mD_{j+1}^k}_2 \le 2\Bweight^2 \Bloss \norm{\Xmaxf}_F^2 \sum_{t=1}^k\eta_t^2 \le \frac{1}{2}.
\]
In other words, $D[j+1, k]$ holds.
\end{corollary}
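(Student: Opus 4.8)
The plan is to mirror the SS argument in \Cref{cor:inductive-approximate-invariances}, substituting the RR-specific constants. The only genuine input is \Cref{lemma:rr-invariance-bound}, which already carries out the delicate cancellation of the first-order $\eta_k$ term in the invariance update via \Cref{fact:invariances} and bounds the surviving second-order term using the uniform loss and weight bounds (\Cref{cor:uniform-loss-bound,cor:uniform-norm-bound}) that \Cref{assumption:compact} furnishes. Taking that lemma as given, the corollary reduces to summing a convergent $p$-series and matching constants.

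First I would invoke \Cref{lemma:rr-invariance-bound}, which under the hypothesis that $D[t,k]$ holds for all $t \le j$ yields
\[
\norm{\mD_{j+1}^k}_2 \le 2\Bweight^2 \Bloss \norm{\Xmaxf}_F^2 \sum_{t=1}^k \eta_t^2.
\]
Since every summand is nonnegative, I would replace the finite partial sum by the infinite tail $\sum_{t=1}^\infty \eta_t^2$. This step is what decouples the estimate from the epoch index $k$ and makes the resulting bound uniform over all epochs, exactly as in the SS case.

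Next I would evaluate the series. Writing $\eta_t = c/t^\beta$ gives $\sum_{t\ge 1}\eta_t^2 = c^2 \sum_{t\ge 1} t^{-2\beta}$, and the condition $\beta > 1/2$ ensures $2\beta > 1$, so the $p$-series converges. An integral comparison bounds it:
\[
\sum_{t=1}^\infty \frac{1}{t^{2\beta}} \le 1 + \int_1^\infty t^{-2\beta}\,dt = 1 + \frac{1}{2\beta - 1}.
\]
Substituting the assumed bound $c^2 \le \frac{1}{4(1 + \frac{1}{2\beta-1})\Bweight^2 \Bloss \norm{\Xmaxf}_F^2}$ then collapses the product $2\Bweight^2 \Bloss \norm{\Xmaxf}_F^2 \sum_t \eta_t^2$ to at most $\tfrac12$, giving $\norm{\mD_{j+1}^k}_2 \le \tfrac12$, which is precisely $D[j+1,k]$.

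I do not expect a real obstacle here: the heavy lifting lives in \Cref{lemma:rr-invariance-bound}, and this corollary is essentially bookkeeping. The one point requiring care is the strict inequality $\beta > 1/2$, without which the series $\sum_t t^{-2\beta}$ diverges and no uniform bound on the accumulated invariance drift exists; this is exactly why the stepsize schedule is prescribed with $1/2 < \beta < 1$. Beyond that, I would only verify that the RR constants $\Bweight$, $\Bloss$, and $\norm{\Xmaxf}_F$ (all independent of $k$) play the roles held by $\Cweight$, $C_L$, and $\norm{\Xpibn}_F$ in the SS computation.
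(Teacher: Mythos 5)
Your proposal is correct and follows essentially the same route as the paper, which itself proves this corollary by noting it follows from \cref{lemma:rr-invariance-bound} upon inspection of the SS argument in \cref{cor:inductive-approximate-invariances}: bound the partial sum by the infinite series, apply the integral comparison $\sum_{t\ge 1} t^{-2\beta} \le 1 + \frac{1}{2\beta-1}$, and plug in the assumed bound on $c^2$ with the RR constants $\Bweight$, $\Bloss$, $\norm{\Xmaxf}_F$ in place of $\Cweight$, $C_L$, $\norm{\Xpibn}_F$. No gaps.
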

\subsubsection{Completing the proof}
With the connection between the RR loss function and weight bounds in hand, we can complete the proof of convergence for RR. Finally, we formally state the RR convergence result.
\begin{theorem}[Formal statement of convergence for RR]\label{thm:rr-convergence-formal}
Suppose \ref{assumption:full-rank-rr} and \Cref{assumption:compact} hold. Let $f(\cdot; \Theta) = \mW\mGamma \bn(\cdot)$ be a 2-layer linear+BN network initialized at $\Theta_0^1 = (\mW_0^1,\mGamma_0^1) = (\vzero,\mI)$.  Consider optimizing $f$ using RR with decreasing stepsize 
\begin{align*}
    \eta_k = \frac{1}{k^\beta} \cdot \min\Biggl\{&\frac{1}{2}, \frac{2}{\alpharr}, \sqrt{\frac{1}{4(1+\frac{1}{2\beta - 1})\Bweight^2 \Bloss\norm{\Xmaxf}_F^2}}\Biggl\}
\end{align*}
for any $1/2 < \beta < 1$. Then the RR risk satisfies
\begin{align*}
\EE[\ellrr(\Theta_0^{k+1})] - \ellrr^* & \le\frac{\poly(n, d, \Brr, \norm{\mY}_F)\log k}{k^{\beta}}.
\end{align*}
In other words, the RR risk converges to the global optimal risk $\ellrr^*$.
\end{theorem}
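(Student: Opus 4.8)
The plan is to mirror the proof of \Cref{thm:ss-convergence-formal}, organizing the argument as an induction over epochs $k$. The crucial difference from the SS case is that \Cref{assumption:compact} does much of the bookkeeping for free: by \cref{cor:uniform-loss-bound,cor:uniform-norm-bound}, each per-permutation loss $\ellpi(\mM_i^k)$ and both weight norms $\norm{\mW_i^k}_2, \norm{\mGamma_i^k}_2$ are bounded by the absolute constants $\Bloss$ and $\Bweight$ along the entire (random) trajectory. Consequently the SS inductive hypotheses $L[j,k]$ and $R[j,k]$ become inactive, and the only statements I need to propagate are the approximate-invariance bound $D[j,k]$ and the one-epoch expectation update $S[k]$ (\Cref{hyp:rr}).

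For the base case, the initialization $\Theta_0^1 = (\vzero, \mI)$ gives $\mD_0^1 = \vzero$, so \cref{lemma:rr-invariance-bound} immediately yields $D[j,1]$ for all $j < m$; the hypothesis $S[k]$ is defined only for $k > 1$ and needs no base case. For the inductive step I first establish $D[j,k+1]$ exactly as in SS, invoking \cref{cor:inductive-rr-approximate-invariances} under the stated step-size constraint, which is engineered precisely so that the tail $\sum_t \eta_t^2$ keeps $\norm{\mD_j^k}_2 \le \tfrac{1}{2}$. To establish $S[k+1]$ I follow the proof sketch: starting from the $\Grr$-smoothness inequality \Cref{ineq:rr-smoothness}, I substitute the epoch update $\mM_0^{k+1} = \mM_0^k - \eta_k \vgt^k + \eta_k^2 \vr^k$ (with every $\pi$-indexed quantity now carrying the fresh permutation $\pi_k$) and take the conditional expectation given $\mathcal{F}_k = \sigma(\pi_1,\ldots,\pi_{k-1})$. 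Since $\pi_k$ is freshly drawn, $\EE_{\pi_k}[\vgt^k]$ equals the aggregate RR gradient signal by the unbiasedness identity $\EE_{\pi_k}[\grad_\mM \ellpi(\mM_0^k)] = \grad_\mM \ellrr(\mM_0^k)$; the correlation argument of \cref{lemma:correlation} applied to $\ellrr$---valid pathwise because $D[0,k]$ forces $\sigma_{\min}(\mGamma_0^k)^2 \ge \tfrac{1}{2}$ via \cref{cor:inductive-pl-bound}---then yields the descent term $\tfrac{1}{2}\norm{\grad_\mM \ellrr(\mM_0^k)}_F^2$. The noise $\vr^k$ is controlled by the bounds of \cref{sec:noise}, now with $\Cweight, C_L, \norm{\Xpibn}_F$ replaced by the uniform RR counterparts $\Bweight, \Bloss, \norm{\Xmaxf}_F$ and with $\norm{\mX_{\pi_k}}_F \le \norm{\Xmaxf}_F$ absorbing the dependence on the random $\pi_k$; applying $\alpharr$-strong convexity (\cref{lemma:strong-convexity}) then gives exactly $S[k+1]$.

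Finally, I unroll the recurrence. Writing $\delta_k \triangleq \EE[\ellrr(\mM_0^k)] - \ellrr^*$ and taking total expectations of $S[k]$ via the tower property (the step sizes $\eta_k$ are deterministic), I obtain the deterministic recurrence $\delta_{k+1} \le (1 - \tfrac{\alpharr \eta_k}{2})\delta_k + \eta_k^2 \poly(m, \Bweight, \Bloss, \norm{\Xmaxf}_F)$, identical in form to the SS recurrence. The unrolling computation of \cref{prop:inductive-one-epoch-update} then carries over verbatim, delivering the claimed $\Theta(\log k / k^\beta)$ rate once the constants are converted using $\norm{\Xmaxf}_F \le \sqrt{dn}$ and $\Bweight, \Bloss = \poly(n, d, \Brr, \norm{\mY}_F)$.

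The main obstacle is the tension between the \emph{pathwise} control required for the approximate invariances and the \emph{in-expectation} nature of the target bound. Inductively bounding $\mD_j^k$ demands almost-sure bounds on the weight norms and losses---an in-expectation bound would be useless for forcing $\sigma_{\min}(\mGamma_0^k)^2 \ge \tfrac{1}{2}$ on each epoch---which is precisely why \Cref{assumption:compact} is indispensable here and was not needed for SS. The second delicate point is the conditioning: the descent inequality and the unbiasedness identity must be applied conditionally on $\mathcal{F}_k$ with $\pi_k$ fresh, so that the correlation argument can be invoked inside $\EE_{\pi_k}[\,\cdot \mid \mathcal{F}_k]$ before the tower property collapses everything into the clean deterministic recurrence.
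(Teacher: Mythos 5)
Your proposal is correct and takes essentially the same route as the paper's own proof: the same induction on $D[j,k]$ and $S[k]$, the same pathwise use of \Cref{assumption:compact} via \cref{cor:uniform-loss-bound,cor:uniform-norm-bound} to replace the SS hypotheses $L$ and $R$, the same conditional-expectation and correlation argument given $\mathcal{F}_k$, and the same unrolling via \cref{prop:inductive-one-epoch-update} with identical constant conversions. You also correctly identify why the almost-sure (rather than in-expectation) bounds are needed for the invariance step, which matches the paper's reasoning.
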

\begin{proof}[Proof of \Cref{thm:rr-convergence-formal}]
We inductively prove that $D[j,k]$ and $S[k]$ hold. There is no need to check $S[1]$ because \Cref{hyp:rr} is defined for $k > 1$. The proof that the base cases $D[j, 0]$ all hold follows the same proof as that in the SS case.

Now suppose for the sake of induction that $D[j,k]$ and $S[k]$ hold. We will show that $D[j+1, k]$ holds. Once we show that $D[j,k]$ holds for all $j < m$, we can then show that $S[k+1]$ holds.

In particular, by the assumption on $\eta_k$, \Cref{cor:inductive-rr-approximate-invariances} implies that $D[j+1,k]$ holds. Hence by induction $D[j,k]$ holds for all $j < m$. We see that assuming \Cref{assumption:compact} simplified the proof strategy significantly, as we did not have to go through the trouble of proving $L[j,k]$.

Next, let's understand what happens to the per-epoch loss bound in $S[k+1]$. Explicitly, we can follow the same steps as in the proof sketch --- which only required $\eta_k \le \frac{c}{k^\beta}$ where $c \le \min\qty{\frac{1}{2}, \frac{2}{\alpharr}}$ --- to see that
\[
\EE_{\pi_k}[\loss_{\pi_k}(\mM_0^{k+1})|\mathcal{F}_k] - \ellpi^* \le \qty(1 - \frac{\alphapi\eta_k}{2})(\ellrr(\mM_0^k) - \ellrr^*) + \poly(m, \Bweight, \Bloss, \norm{\Xmaxf}_F)\eta_k^2.
\]

Indeed, under \Cref{assumption:compact} and $D[j,k]$ for $j < m$, we can apply \cref{cor:uniform-loss-bound,cor:uniform-norm-bound} to rigorously bound all of the analogous noise terms a.s.. 
We can then follow the same argument as in \cref{prop:inductive-one-epoch-update} to unroll the recurrence, using iterated expectation to obtain a total expectation in the end. We can thus conclude that with $\eta_k = \frac{c}{k^\beta}$ for $1/2 < \beta < 1$ and the constant $c$ chosen as the theorem statement that
\begin{align*}
\EE[\ellrr(\Theta_0^{k+1})] - \ellrr^* & \le (\ellrr(\Theta_0^1)-\ellpi^*)\exp(\frac{c\alpharr}{2(1-\beta)}(2-k^{1-\beta})) \\
&\quad\quad + \frac{\poly(m, \Bweight, \Bloss, \norm{\Xmaxf}_F)\log k}{k^{\beta}} \\
& \le (\ellrr(\Theta_0^1)-\ellpi^*)\exp(\frac{c\alpharr}{2(1-\beta)}(2-k^{1-\beta})) \\
&\quad\quad + \frac{\poly(n, d, \Brr, \norm{\mY}_F)\log k}{k^{\beta}}.
\end{align*}
In the last line we used the fact that $\Bloss = \poly(\Brr, \norm{\mY}_F, \norm{\Xmaxf}_F)$ from \cref{cor:uniform-loss-bound}, $\Bweight = \poly(d, \Brr)$ from \cref{cor:uniform-norm-bound}, and $\norm{\Xmaxf}_F^2 = dn$.
The desired claim immediately follows.

We can also immediately see how the stepsize requirements match that of \cref{thm:rr-convergence}.
\end{proof}

\section{Proofs for classification results}\label{app:classification}
In this section we lay out the groundwork for formally proving our main results \cref{prop:ss-structure,prop:rr-structure} about the separability decomposition of SS+BN and RR+BN (cf. \cref{thm:ss-divergence-informal,thm:rr-divergence-informal}). At a high level, we show that the separability decomposition is closely linked to the presence of monochromatic batches (\cref{lemma:batch-separability}) and the dimensionality of the batch normalized dataset (\cref{lemma:sc-sufficient}). In \cref{sec:characterize-direction}, we formally characterize the optimal directions of linear+BN classifiers. We defer the proofs of the 
more technical lemmas to \cref{sec:cls-proofs}. 

\subsubsection*{Additional notation and setup}
We lay out some additional notation that will aid in our discussion of classification. 
Division of two vectors should be interpreted in a coordinatewise fashion, so $\frac{\vmu}{\vsigma} \in \RR^d$ with $k$th coordinate $\mu_k/\sigma_k$. For a matrix $\mA \in \RR^{d \times n}$, we define $\norm{\mA}_{2,\infty} = \max_{i \in [n]} \norm{\mA_{:, i}}_2$, i.e., the maximum Euclidean norm of the columns. 

We remind the reader of some notation introduced previously, with an important redefinition for $\Xrrbn$. For a dataset $\mZ = (\mX, \mY)$, we write $\mZ^+ = (\mX^+, \mY^+)$ and $\mZ^- = (\mX^-, \mY^-)$ to denote the positive and negative examples, respectively. Recall that we write the dataset batch normalized under a permutation $\pi$ as $\Zpi \triangleq (\Xpibn, \Ypi)$, where $\Xpibn \triangleq \bnpi(\mX) \in \RR^{d \times n}$ and $\Ypi = \pi \circ \mY$.
We also use $\Xpibn^+$ and $\Xpibn^-$ to denote the submatrices of $\Xpibn$ containing its columns corresponding to the positive and negative examples, respectively. 

For the sake of analyzing the rank of $\Xrrbn$, we will redefine it as follows by throwing out redundant batches. Let $\binom{[n]}{B}$ denote the set of all $\binom{n}{B}$ unique (up to permutation) batches of size $B$ that can be created from choosing the columns of $\mX \in \RR^{d \times n}$. Fix an arbitrary labelling of these $\binom{n}{B}$ batches, and let $\mB^j \in \RR^{d \times B}$ refer to the $j$th such batch. Then 
\begin{align*}
    \Xrrbn &\triangleq \bnrr(\mX) \triangleq \mqty[\bn(\mB^1) & \ldots & \bn(\mB^{\binom{n}{B}})] \in \RR^{d \times B\binom{n}{B}}
\end{align*}
Note that the rank of $\Xrrbn$ is the same as the rank of the original definition, since all we did was throw out redundant batches for the purposes of analyzing the rank. 

We now turn to laying down some of the background necessary to introduce our technical results. As a motivating step, recall \cref{prop:divergence-possibility}. It states that SS with permutation $\pi$ can cause divergence of the GD risk if $\Zpi$ is PLS or LS, but not if it is SC. Hence, determining sufficient conditions for when $\Zpi$ is SC is of primary interest. Intuitively, $\Zpi$ being SC should be related to some notion of genericity --- the convex hulls of positive and negative features should be full dimensional. To formalize this intuition, we take a quick detour and recall several standard notions in convex analysis, defined for example in \citet{boyd2004convex}. 

For $S \subseteq \RR^d$, its interior $\interior(S)$ denotes the largest open set contained in $S$. We say that $S$ is \emph{affine} if for any $\vx_1, \vx_2 \in S$, the line $\lambda \vx_1 + (1-\lambda) \vx_2 \subseteq S$. An \emph{affine combination} of $k$ points $\vx_1, \ldots, \vx_k \in \RR^d$ is given by $\sum_{i=1}^k \lambda_i \vx_i$ where $\sum_{i=1}^k \lambda_i = 1$.\footnote{Here, unlike the definition of a convex combination, the $\lambda_i$'s are allowed to be negative.} The affine hull of $S$ is the set of all affine combinations of $S$, and is denoted by $\aff(S)$, and clearly $\aff(S)$ is an affine set. Similarly, the relative interior $\relinterior(S)$ denotes the largest open subset of $\aff(S)$ contained in $S$. For a matrix $\mA$, we slightly abuse notation and write $\aff(\mA)$ to denote the affine hull of its columns. Similarly, we use $\conv(\mA)$ to denote the convex hull of its columns.

Let $\vx_0 \in \aff(S)$ be any element of the affine hull of $S$. It is not hard to see that $\aff(S) = \vx_0 + V$, where $V$ is a linear subspace of $\RR^d$. Furthermore, this $V$ is uniquely determined by $S$. One can think of $\vx_0$ as an offset and $V$ as the space of valid directions to move in to stay in $\aff(S)$. We define $\dim(S) \triangleq \dim(V)$. 

\begin{definition}
A set $S \subseteq \RR^d$ is called \emph{full dimensional} if $\dim(S) = d$. This definition is equivalent to saying that $\interior(\conv(S))$ is nonempty. Similarly, for any matrix $\mA \in \RR^{d \times n}$, we say $\mA$ is full dimensional if the set of its columns is full dimensional.
\end{definition}
This formal definition of full dimensional allows us to identify sufficient conditions for a dataset $\mZ = (\mX, \mY)$ to be SC. 

\subsection{Preliminary results on separability decomposition}\label{sec:clf-results-prelim}
In this section, we introduce the technical results that help us analyze the separation between shuffling SGD and GD. A unifying theme is to understand the effect of monochromatic batches and rank on the separability decomposition --- and thus, divergence.  In particular, we show that having monochromatic batches and being full-rank prevents divergence in the underparameterized regime. In a later section (\cref{sec:characterize-direction}), we also show that these properties also significantly influence the optimal directions under the logistic loss. 

The following lemma formalizes how monochromatic batches affect the separability decomposition.
\begin{lemma}\label{lemma:batch-separability}
Given a permutation $\pi$, suppose there are two batches $\mZ_\pi^1$ and $\mZ_\pi^2$ such that $\mZ_\pi^1$ consists entirely of positive examples and $\mZ_\pi^2$ consists entirely of negative examples. Then, if we consider the resulting $\Zpibn = (\Xpibn, \Ypi)$, the submatrices $\Xpibn^+$ and $\Xpibn^-$ of $\Xpibn$ satisfy
$$\relinterior(\conv(\Xpibn^+)) \cap \relinterior(\conv(\Xpibn^-)) \neq \emptyset.$$ Consequently, $\Zpi$ is not LS.
\end{lemma}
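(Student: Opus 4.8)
The plan is to lean on the one structural fact that batch normalization buys us: normalizing a batch forces its columns to be centered. Write $\bn(\mX_\pi^1) = [\vc_1 \; \cdots \; \vc_B]$ for the monochromatic positive batch. Since $\bn$ subtracts the coordinatewise batch mean, $\sum_{i=1}^B \vc_i = \vzero$, so the centroid $\tfrac1B \sum_i \vc_i = \vzero$; the same holds for the monochromatic negative batch $\bn(\mX_\pi^2)$. Every $\vc_i$ is a column of $\Xpibn^+$ and every column of $\bn(\mX_\pi^2)$ is a column of $\Xpibn^-$, so we already get $\vzero \in \conv(\Xpibn^+) \cap \conv(\Xpibn^-)$, realized in each hull as the \emph{uniform} convex combination of the columns of the corresponding monochromatic batch.

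First I would upgrade membership in the hulls to membership in the relative interiors. The uniform combination places a strictly positive weight on every column of the monochromatic batch, and a convex combination of a finite point set with all-positive coefficients lies in the relative interior of that set's convex hull; hence $\vzero \in \relinterior(\conv(\bn(\mX_\pi^1)))$ and likewise for $\bn(\mX_\pi^2)$. It then remains to pass from the relative interior of the monochromatic sub-hull to that of the full hull $\conv(\Xpibn^+)$. The tool here is the elementary fact that $\relinterior(A) \subseteq \relinterior(B)$ whenever $A \subseteq B$ are convex with $\aff(A) = \aff(B)$: if the extra positive columns coming from mixed batches do not enlarge $\aff(\conv(\bn(\mX_\pi^1)))$, then $\vzero$ remains in $\relinterior(\conv(\Xpibn^+))$, and symmetrically for the negatives.

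I expect this lifting to be the main obstacle, since a priori the positive columns from mixed batches can stick out of $\aff(\conv(\bn(\mX_\pi^1)))$ and could push $\vzero$ onto the relative boundary of $\conv(\Xpibn^+)$. The cleanest way to control it is through the proper-separation characterization: $\relinterior(\conv(\Xpibn^+))$ and $\relinterior(\conv(\Xpibn^-))$ fail to meet iff the two hulls admit a proper separating hyperplane $\{\vx : \va^\top \vx = c\}$ with $\va \neq \vzero$. Testing such a hyperplane against each monochromatic batch and summing, $\sum_i \va^\top \vc_i = \va^\top \vzero = 0$ forces $c = 0$ from the positive batch ($0 \ge Bc$) and the negative batch ($0 \le Bc$); then nonnegative terms summing to zero force \emph{every} monochromatic column onto the hyperplane. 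The delicate remaining point is to preclude a \emph{proper} separation, i.e.\ to rule out that all remaining columns still lie weakly on one side with some column strictly off; this is exactly where the span of the monochromatic batches must be invoked (when they span $\RR^d$ the normal $\va$ is forced to vanish, contradicting $\va \neq \vzero$).

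Finally, the ``not LS'' consequence follows directly and needs only $\vzero \in \conv(\Xpibn^+) \cap \conv(\Xpibn^-)$, not the relative-interior refinement. If $\Zpi$ were LS, then the positive point $\vc_1$ lies in $\Sls$, so some $\vu$ satisfies $\vu^\top \vc_1 > 0$, $\vu^\top \vx \ge 0$ for every positive column, and $\vu^\top \vx \le 0$ for every negative column. Pairing $\vu$ with $\vzero = \tfrac1B \sum_i \vc_i$ gives $\vu^\top \vzero \ge \tfrac1B \vu^\top \vc_1 > 0$, while pairing it with the negative centroid $\vzero = \tfrac1B \sum_j (\bn(\mX_\pi^2))_{:,j}$ gives $\vu^\top \vzero \le 0$, a contradiction. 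Hence $\Zpi$ is not LS.
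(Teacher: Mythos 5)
Your proof is complete and correct for everything except the relative-interior claim, and you have correctly located the one step that cannot be closed from the lemma's stated hypotheses. The parts that work: your all-positive-weights argument that $\vzero$ lies in $\relinterior(\conv(\bn(\mX_\pi^1)))$ is cleaner than the paper's route (the paper proves the same fact by induction on the batch size $B$); and your final paragraph establishes the ``not LS'' consequence completely, using only $\vzero \in \conv(\Xpibn^+) \cap \conv(\Xpibn^-)$ (the paper instead argues that LS would require a strict separating hyperplane, which hull intersection forbids --- both are fine). The genuine gap is the lifting step you flagged yourself: your proper-separation reduction correctly shows that any properly separating hyperplane must pass through $\vzero$ and contain every monochromatic column, but the remaining case --- all columns weakly on one side with some mixed-batch column strictly off --- cannot be excluded, and your proposed fix (the monochromatic columns spanning $\RR^d$) is a hypothesis the lemma does not contain. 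So as written, your argument does not establish $\relinterior(\conv(\Xpibn^+)) \cap \relinterior(\conv(\Xpibn^-)) \neq \emptyset$.

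You should know, however, that the paper bridges exactly this step with an assertion that is false in general: it claims that since $\Xpibn^+ \supseteq \bn(\mX_\pi^1)$ and $\Xpibn^- \supseteq \bn(\mX_\pi^2)$, ``the relative interiors of the larger convex hulls intersect as well.'' Relative interiors are not monotone under inclusion --- if $A$ is a face of $B$, points of $\relinterior(A)$ lie on the relative boundary of $B$ --- and the failure is realizable with genuine BN outputs. Take $d=2$, $B=3$: let both monochromatic batches consist of the points $(t,t)$ for $t \in \{-\sqrt{3/2},\, 0,\, \sqrt{3/2}\}$ (each coordinate has mean zero and variance one, so $\bn$ fixes these values), and let a mixed batch contribute positives $(-\sqrt{3/2},0)$ and $(0,\sqrt{3/2})$ and a negative $(\sqrt{3/2},-\sqrt{3/2})$. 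Then $\conv(\Xpibn^+)$ lies in $\{(x,y): y \ge x\}$, $\conv(\Xpibn^-)$ lies in $\{(x,y): y \le x\}$, their relative interiors are disjoint, and $\Zpi$ is PLS --- though still not LS, consistent with your last paragraph. So the relative-interior conclusion genuinely requires a nondegeneracy ingredient, e.g., generic position of the data as in \cref{assumption:density}/\cref{prop:full-rank}, under which the monochromatic columns almost surely span $\RR^d$ once there are enough of them (two batches suffice when $d \le 2(B-1)$), at which point your spanning argument closes the gap. In short: your ``not LS'' proof stands; the relative-interior claim is not provable from the lemma's hypotheses alone; and the obstruction you identified is a real flaw in the paper's own proof rather than a defect of your approach.
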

\begin{proof}
Batch normalization ensures that the batch normalized features $\Xpibn^1$ and $\Xpibn^2$ are mean-zero. But this implies that $\vzero$ is in the convex hulls of each batch, which implies that $\conv(\Xpibn^1)$ intersects $\conv(\Xpibn^2)$. In fact, $\vzero$ is in the intersection of their relative interiors as well. To see this, we prove that the mean $\vmu$ of a batch $\mB = \qty{\vx_1, \ldots, \vx_B}$ lies in the relative interior of $\conv(\mB)$. 

This can be shown via an inductive argument on the batch size. If $B=2$, then $\vmu$ is the midpoint between $\vx_1$ and $\vx_2$, which is in the relative interior of $\conv(\mB)$. Now assume it's true for all possible batches of size $B$. When we add $\vx_{B+1}$, we get a new batch $\mB'$, with mean $\vmu' = \frac{B}{B+1}\vmu + \frac{1}{B+1}\vx_{B+1}$. Hence if $\vx_{B+1} \in \conv(\mB)$, clearly $\vmu' \in \relinterior(\conv(\mB))$ by convexity. If $\vx_{B+1} \not\in \conv(\mB)$, then $\vx_{B+1}$ is one of the vertices of $\conv(\mB')$. Since $\vmu \in \relinterior(\conv(\mB))$ and convexity, the segment between $\vmu$ and $\vx_{B+1}$ must stay in the relative interior of $\conv(\mB')$ except at the endpoints. Since $\vmu'$ is in the relative interior of this segment, the conclusion follows.

Hence $\relinterior(\conv(\Xpibn^1))$ intersects $\relinterior(\conv(\Xpibn^2))$. Since we have $\Xpibn^+ \supseteq \Xpibn^1$ and $\Xpibn^- \supseteq \Xpibn^2$, the relative interiors of the larger convex hulls intersect as well. 

Finally, suppose $\Zpi$ was LS. By definition there must exist a strict separating hyperplane for the two hulls. But the hulls intersect, so this is a contradiction. We conclude that $\Zpi$ is not LS. 
\end{proof}

As \cref{lemma:batch-separability} establishes, monochromatic batches lead to $\Zpi$ being PLS or SC. The following lemma synthesizes nicely with the above result; it demonstrates that if $\mX$ is full dimensional, and the relative interiors of the convex hulls of $\mX^+$ and $\mX^-$ intersect, then $\mZ$ is SC. 
\begin{lemma}\label{lemma:sc-sufficient}
Let $\mZ =(\mX, \mY)$ be such that 
\begin{enumerate}[label=(\arabic*)]
    \item $\mX$ is full dimensional.
    \item $\relinterior(\conv(\mX^+))$ intersects $\relinterior(\conv(\mX^-))$.
\end{enumerate}
Then $\mZ$ is SC.
\end{lemma}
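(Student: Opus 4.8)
The plan is to argue by contradiction, unwinding the definition of SC through the greedy construction of the separability decomposition. By \Cref{def:separability-decomp}, $\mZ$ is SC exactly when $\Sls = \emptyset$, so I would suppose instead that $\mZ$ is not SC. Then the greedy rule producing $\Sls$ yields some index $i$ and a vector $\vu \in \RR^d$ with $y_i \vu^\top \vx_i > 0$ and $y_j \vu^\top \vx_j \ge 0$ for every $j$. Rewriting this in terms of the class split, $\vu$ is a \emph{weak separator}: $\vu^\top \vx \ge 0$ for every column $\vx$ of $\mX^+$ and $\vu^\top \vx \le 0$ for every column $\vx$ of $\mX^-$, with at least one strict inequality, so in particular $\vu \ne \vzero$. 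The goal is to show the two hypotheses make such a $\vu$ impossible.

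First I would use hypothesis (2). Fix a common point $\vp \in \relinterior(\conv(\mX^+)) \cap \relinterior(\conv(\mX^-))$. Since $\vp$ is a convex combination of the columns of $\mX^+$ we get $\vu^\top \vp \ge 0$, and since $\vp$ is also a convex combination of the columns of $\mX^-$ we get $\vu^\top \vp \le 0$; hence $\vu^\top \vp = 0$. The key step is to upgrade this single vanishing value into $\vu^\top \vx = 0$ on \emph{all} columns of $\mX^+$: the affine functional $\vx \mapsto \vu^\top \vx$ is nonnegative on $\conv(\mX^+)$ yet attains $0$ at the relative-interior point $\vp$, so it attains its minimum over $\conv(\mX^+)$ at an interior point of its affine hull and must be constant there. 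Concretely, for any column $\vx_0$ of $\mX^+$ the direction $\vp - \vx_0$ lies in $\aff(\mX^+)$, so $\vp + \epsilon(\vp - \vx_0) \in \conv(\mX^+)$ for all small $\epsilon > 0$; evaluating $\vu^\top$ gives $-\epsilon\, \vu^\top \vx_0 \ge 0$, which together with $\vu^\top \vx_0 \ge 0$ forces $\vu^\top \vx_0 = 0$. Applying the identical argument to $-\vu$ on $\mX^-$ yields $\vu^\top \vx = 0$ for every column of $\mX^-$ as well.

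Finally I would invoke hypothesis (1). Combining the two conclusions, $\vu^\top \vx = 0$ for every column $\vx$ of $\mX = [\mX^+ \ \mX^-]$, so all columns lie in the linear hyperplane $\{\vx : \vu^\top \vx = 0\}$. If $\vu \ne \vzero$ this hyperplane is a proper affine subspace of dimension $d-1$, whence $\aff(\mX)$ has dimension at most $d-1$, contradicting that $\mX$ is full dimensional (i.e.\ $\dim = d$). Therefore $\vu = \vzero$, contradicting $\vu \ne \vzero$. This contradiction forces $\Sls = \emptyset$, so $\mZ$ is SC.

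I expect the main obstacle to be the middle step: rigorously passing from ``$\vu^\top \vp = 0$ at one point'' to ``$\vu^\top$ vanishes on the entire class.'' This is precisely where the \emph{relative} interior (rather than the ordinary interior) is indispensable, since each class may be lower-dimensional in $\RR^d$; the perturbation $\vp + \epsilon(\vp - \vx_0)$ must be taken inside the affine hull of the class, and its membership in $\conv(\mX^\pm)$ for small $\epsilon$ is exactly what $\vp \in \relinterior(\conv(\mX^\pm))$ guarantees. The two hypotheses then play complementary roles: (2) degenerates any weak separator to zero on each class, while (1) rules out a nonzero separator entirely.
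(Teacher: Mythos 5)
Your proof is correct, and it runs on the same core mechanism as the paper's: a homogeneous weak separator must vanish at the shared relative-interior point $\vp$, and a linear functional that is nonnegative on a convex hull and vanishes at a point of its relative interior must vanish on the entire hull. The bookkeeping differs, though. The paper first notes the hulls intersect (so $\mZ$ is PLS or SC), then in the PLS case invokes full dimensionality to force a mixed strict/zero pattern, and derives the contradiction with the relative-interior intersection in one compressed sentence (``the hulls touch only at the hyperplane\ldots{} contradicts\ldots''); you instead start directly from the greedy definition of $\Sls$ and carry out the perturbation $\vp + \epsilon(\vp - \vx_0)$ explicitly, which is exactly the step the paper leaves implicit, and your identification of the relative (rather than ordinary) interior as the load-bearing hypothesis is on target. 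One observation worth recording: in your argument, hypothesis (1) is invoked only at the very end to force $\vu = \vzero$, but it is redundant there --- once you know $\vu^\top \vx = 0$ for every column of $\mX$, this already contradicts the strict inequality $y_i \vu^\top \vx_i > 0$ that witnessed membership in $\Sls$, with no appeal to full dimensionality. So your route actually proves the stronger statement that hypothesis (2) alone implies $\mZ$ is SC under the paper's definitions, whereas the paper's proof uses (1) substantively to rule out the degenerate case where the separating hyperplane contains all of $\mX$. This is a strengthening, not a gap; the full-rank condition is still needed elsewhere in the paper (e.g., in \cref{prop:divergence-possibility}, where $\Span(\Xsc) = \RR^d$ is what forces the optimal direction to vanish), which is presumably why the lemma carries it.
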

\begin{proof}
Consider any halfspace in $\RR^d$. First, since the relative interiors of the hull of positive points and negative points intersect, then so too do the hulls themselves. So there is no hyperplane that separates $\mX^+$ from $\mX^-$, i.e. $\mZ$ is either PLS or SC.

Suppose that $\mZ$ is PLS, i.e., there exists a hyperplane $\vv$ such that $y_i\vv^\top \vx_i \ge 0$ for every $(\vx_i, y_i) \in \mZ$.  
Since $\mX$ is full dimensional, the hyperplane orthogonal to $\vv$ cannot pass through through every point of $\mX$ --- hyperplanes are affine subspaces of dimension at most $d-1$. So $y_i \vv^\top \vx_i > 0$ for some $i$ and also $y_ j\vv^\top \vx_j = 0$ for some $i \neq j$; otherwise, $\mZ$ is LS, which is a contradiction. Hence $\conv(\mX^+)$ and $\conv(\mX^-)$ touch only at the hyperplane defined by $\vv$, which contradicts the assumption that the relative interiors intersect. Hence $\mZ$ is SC. 
\end{proof}

\Cref{lemma:batch-separability,lemma:sc-sufficient} taken together show that to identify sufficient conditions for $\Zpi$ to be SC, one should look for conditions under which $\Xpibn$ is full dimensional and monochromatic batches are present. We already answered the former question in the main text with \cref{prop:full-rank}, which we restate for reference. Its proof is deferred to \cref{sec:full-rank}.
\begin{proposition}\label{prop:full-rank-alt}
Assume that the original features $\mX \in \RR^{d \times n}$ satisfies \Cref{assumption:density} and $B > 2$. Then if we batch normalize and remove one datapoint from each normalized batch, to form a $d \times (B-1)\frac{n}{B}$ matrix in the SS case and a $d \times (B-1)\binom{n}{B}$ matrix in the RR case, the dataset is full-rank almost surely, regardless of which datapoints we remove. In particular, we have $\rank(\Xpibn) = \min\qty{d, (B-1)\frac{n}{B}}$ and $\rank(\Xrrbn) = \min\qty{d, (B-1)\binom{n}{B}}$ almost surely.
\end{proposition}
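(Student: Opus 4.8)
The plan is to establish the maximal-rank claim by a real-analytic genericity argument that reduces everything to exhibiting a single good configuration. Write $r \triangleq \min\{d,(B-1)\frac nB\}$ in the SS case (the RR case is identical with $\binom nB$ replacing $\frac nB$). Since $\bn$ makes every coordinate of every batch mean-zero, each normalized batch $\bn(\mX_\pi^j)$ has columns summing to $\vzero$, so each batch contributes at most $B-1$ to the column span and $\rank(\Xpibn)\le r$. Moreover the discarded column of each batch equals minus the sum of the remaining $B-1$ columns, hence lies in their span, so removing any one column per batch leaves the column space unchanged; this is why the conclusion does not depend on which datapoints are removed, and it suffices to prove $\rank(\Xpibn)=r$ almost surely. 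To this end let $U\subseteq\RR^{d\times n}$ be the open set where every batch has strictly positive variance in every coordinate; on $U$ the map $\mX\mapsto\Xpibn$ is real-analytic, since each entry $\frac{x_{i,k}-\mu_k}{\sigma_k}$ is analytic wherever $\sigma_k^{(j)2}>0$ (as $t\mapsto 1/\sqrt t$ is analytic on $(0,\infty)$ and $\sigma_k^{(j)2}$ is a polynomial). Thus $F(\mX)\triangleq\sum_{M}M(\mX)^2$, the sum of squares of all $r\times r$ minors of $\Xpibn$, is real-analytic and nonnegative on $U$, with $F(\mX)>0$ exactly when $\rank(\Xpibn)=r$.

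Next I would handle the measure-theoretic reduction. The complement $U^c=\bigcup_{j,k}\{\mX:\ x_{i,k}\text{ is constant across }i\in\text{batch }j\}$ is a finite union of linear subspaces, each of codimension $B-1\ge2$ because $B>2$. Hence $U^c$ has Lebesgue measure zero and, crucially, $U$ is connected, since a finite union of codimension-$\ge2$ subspaces cannot disconnect $\RR^{dn}$; this is precisely where the hypothesis $B>2$ enters. By the standard fact that a real-analytic function on a connected open set is either identically zero or has a zero set of measure zero (Mityagin's zero-set theorem), it then suffices to produce one $\mX^\ast\in U$ with $F(\mX^\ast)>0$: once this is done, $\{\rank<r\}\subseteq U^c\cup(\{F=0\}\cap U)$ is Lebesgue-null, and \Cref{assumption:density} upgrades this to a probability-zero event.

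The hard part will be constructing such a configuration. Writing $\bn(\mX_\pi^j)=D_j\,\mathrm{center}(\mX_\pi^j)$ with $D_j$ the positive diagonal matrix of inverse per-coordinate standard deviations, the obstruction is that $\bn$ is scale-invariant per coordinate: one cannot force the batches into disjoint coordinate blocks by damping the off-block coordinates, because $\bn$ renormalizes every coordinate to unit variance and blows small variations back up. My plan is to neutralize the scalings by choosing the batches to share a common per-coordinate variance profile, so that $D_j\equiv D$; then $\col(\Xpibn)=D\cdot\Span(\bigcup_j\col(\mathrm{center}(\mX_\pi^j)))$ and $\rank(\Xpibn)=\dim\sum_j V_j$ with $V_j\triangleq\col(\mathrm{center}(\mX_\pi^j))$, which depends only on the centered data. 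It then remains to place $m$ subspaces $V_j$, each of dimension $B-1$, in general position so that their sum has dimension $r$, while respecting the equal-variance constraint; this can be arranged by taking each batch to be a suitably rotated and rescaled copy of a fixed generic centered frame, and every coordinate retains positive variance so that $\mX^\ast\in U$. Reconciling general position of the $V_j$ with the equal-variance normalization forced by $\bn$ is the main technical difficulty.

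Finally, for the RR case the same framework applies verbatim on the analogous domain (the excluded set is again a finite union of codimension-$(B-1)\ge 2$ subspaces, so its complement is connected and the analytic argument carries over). Because the RR features contain, among all $\binom nB$ batches, those of any fixed partition, one has $\col(\Xrrbn)\supseteq\col(\Xpibn)$; so in the typical regime $(B-1)\binom nB\ge d$ the SS construction already forces $\rank(\Xrrbn)=d=r$, and the remaining small-$n$ regime is settled by the same type of explicit configuration. This delivers $\rank(\Xrrbn)=\min\{d,(B-1)\binom nB\}$ almost surely and completes the proof.
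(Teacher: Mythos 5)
Your reduction is sound and in fact mirrors the paper's high-level strategy: discarding one column per batch loses nothing because the discarded column is minus the sum of the others; the entries of $\Xpibn$ are real-analytic off the degenerate set; and (like the paper, which invokes Mityagin's zero-set theorem) you reduce the almost-sure statement to showing that an analytic function --- your sum of squared $r \times r$ minors $F$ --- is not identically zero. Your observation that $B>2$ makes the excluded set a finite union of codimension-$(B-1)\ge 2$ subspaces, so that the good domain is connected, is a legitimate (and slightly cleaner) way to organize the genericity step. But the proof has a genuine gap exactly where you flag it: you never actually produce a configuration with $F(\mX^\ast)>0$, and without that witness the analytic argument proves nothing --- if $F \equiv 0$ the same reasoning would show the rank is deficient almost surely. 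The difficulty you defer is real and not routine. For SS, your plan of placing $(B-1)$-dimensional subspaces in general position collides with the unit-variance constraint: a mean-zero batch with column space $W$ has row variances given by the diagonal of a PSD matrix with range $W$, and forcing that diagonal to be constant requires a rescaling that moves $W$ to $DW$ for a $W$-dependent diagonal $D$, so ``general position'' of the resulting spaces is no longer free. Worse, for RR the batches \emph{overlap}: all $\binom{n}{B}$ batches are determined by the single matrix $\mX$, so you cannot take ``each batch to be a suitably rotated and rescaled copy of a fixed generic centered frame'' at all, and the regime $(B-1)\tfrac{n}{B} < d \le (B-1)\binom{n}{B}$ --- which your SS-containment remark does not cover --- is precisely where a joint witness over overlapping batches would be needed.

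The paper circumvents the witness construction entirely by proving \emph{functional} linear independence: it shows (Lemma on rank $B-1$ within a batch, then \cref{prop:full-rank-functions}) that any nontrivial linear combination of the normalized-output functions $g_i^j$ is not the zero function, using the one-sided limits of $\bn$ near its singular set --- setting $(x_1,x_2,x_3,\ldots,x_B)=(\epsilon,\epsilon^2,-\epsilon-\epsilon^2,0,\ldots,0)$ makes the first normalized coordinate tend to $+\sqrt{B/2}$ as $\epsilon \to 0^+$ and $-\sqrt{B/2}$ as $\epsilon \to 0^-$, contradicting continuity of the remaining terms; this is where $B>2$ enters their argument (versus connectivity in yours). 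The extension to general $d$ is then an induction via cofactor expansion of the relevant minor along the first row, whose entries depend only on $\mX_1$. This route treats SS and RR, including overlapping batches, in one stroke, which is precisely what your approach is missing. To repair your proof you would either need to carry out this kind of discontinuity argument to certify $F \not\equiv 0$, or complete an explicit construction handling both the variance-rescaling distortion and the shared-datapoint structure of the RR batches.
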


Let us now consider the other question about the presence of monochromatic batches. Intuitively, under \ref{assumption:ss-balanced}, there should be many monochromatic batches w.h.p.\ as long as $B$ is small. The following lemma formalizes this intuition; its proof is contained in \Cref{app:monochromatic}.
\begin{lemma}\label{cor:ss-sc-whp}
Assume \ref{assumption:ss-balanced}. If $B = o(\log n)$ then there are $\Omega(n)$ monochromatic batches w.h.p.. If $B = \Omega(\log n)$, then there are no monochromatic batches w.h.p.. 
\end{lemma}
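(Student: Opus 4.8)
The plan is to control the number of monochromatic batches produced when the $n$ datapoints are split into $m = n/B$ groups of size $B$ by a uniformly random permutation, via a first-moment computation together with a concentration argument. Fix a color and for each batch index $j \in [m]$ let $X_j^+$ be the indicator that batch $j$ consists entirely of positive examples; set $N^+ = \sum_{j} X_j^+$, and define $X_j^-, N^-$ analogously. Under \ref{assumption:ss-balanced} there are exactly $n/2$ positive examples, and each batch is marginally a uniformly random $B$-subset, so
\[
\EE[X_j^+] = \frac{\binom{n/2}{B}}{\binom{n}{B}} = \prod_{i=0}^{B-1}\frac{n/2 - i}{n - i}.
\]
The key elementary estimate is that every factor lies in $[\tfrac12 - \tfrac{B}{n}, \tfrac12]$, so for $B = o(n)$ this equals $(1+o(1))\,2^{-B}$. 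By linearity (which holds despite the dependence among batches), $\EE[N^+] = \EE[N^-] = \Theta\!\big(n\,2^{-B}/B\big)$, which is $\Omega(n)$ when $B = O(1)$ and more generally $\omega(1)$ throughout the range $B = o(\log n)$.

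For the regime $B = \Omega(\log n)$ (with constant exceeding $1/\log 2$), the first moment suffices: then $2^{-B} \le n^{-c}$ for some $c > 1$, so $\EE[N^+ + N^-] = \Theta(n^{1-c}/B) \to 0$, and Markov's inequality gives $\Pr[N^+ + N^- \ge 1] \le \EE[N^+ + N^-] \to 0$. Hence w.h.p.\ there are no monochromatic batches of either color.

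For the regime $B = o(\log n)$, I would upgrade the first moment to a w.h.p.\ lower bound. The indicators $X_j^+$ are not independent because the batches form a random partition, so I would realize the partition through a Fisher--Yates shuffle and apply the bounded-differences (Azuma/McDiarmid) inequality over the $n$ independent placement choices: re-randomizing a single placement moves one datapoint between two batches and thus changes $N^+$ by at most $2$. This yields $\Pr[\,\abs{N^+ - \EE N^+} \ge t\,] \le 2\exp(-t^2/(8n))$. Since $B = o(\log n)$ forces $\EE[N^+] = \Theta(n\,2^{-B}/B) = \omega(\sqrt{n\log n})$, taking $t = \tfrac12 \EE[N^+]$ gives $N^+ \ge \tfrac12\EE[N^+] = \Omega(n\,2^{-B}/B)$ w.h.p., and symmetrically for $N^-$. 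A union bound over the two colors then produces $\Omega(n\,2^{-B}/B)$ monochromatic batches of each color w.h.p.; in particular both colors are simultaneously present, which is precisely what \cref{lemma:batch-separability,lemma:sc-sufficient} require downstream.

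The main obstacle is this concentration step in the small-$B$ direction, where the dependence induced by the random partition must be handled; the Fisher--Yates bounded-differences argument is the cleanest route, though one could alternatively invoke negative association of the per-batch label counts and use Chebyshev at the cost of a more delicate variance bound. The secondary technical point is making the hypergeometric estimate $\binom{n/2}{B}/\binom{n}{B} = (1+o(1))2^{-B}$ uniform as $B$ grows with $n$, since the constants there determine the sharp location of the transition near $B \approx \log_2 n$.
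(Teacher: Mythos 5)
Your proposal is correct and follows essentially the same route as the paper: a first-moment (hypergeometric) estimate showing the expected number of monochromatic batches is $\Theta(n\,2^{-B}/B)$, a Markov/union-bound argument to rule out monochromatic batches when $B = \Omega(\log n)$, and an Azuma-type bounded-differences concentration argument to lower-bound the count w.h.p.\ when $B = o(\log n)$. The only cosmetic differences are that the paper runs its martingale over batch-by-batch revelations (a Doob martingale in the $T_i$'s) rather than over Fisher--Yates placements, and it uses cruder binomial bounds $(n/k)^k \le \binom{n}{k} \le (en/k)^k$ in place of your product estimate; your per-color accounting with a union bound also makes explicit the simultaneous presence of both colors that the paper's downstream lemmas need.
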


The upshot of \Cref{lemma:batch-separability,lemma:sc-sufficient,prop:full-rank-alt,cor:ss-sc-whp} is that small batch sizes naturally prevent divergence. However, there is a natural tradeoff here: small batch sizes also entail significant variance in the batch statistics, so they can lead to large (but non-diverging) values of the GD risk anyway when we train the network with SS.

\begin{remark}[Multiclass classification]
In the multiclass case with $K > 2$ different classes, one can directly generalize the above analysis to look at all $\binom{K}{2}$ pairwise combinations of classes. \Cref{lemma:batch-separability} generalizes by requiring the existence of a monochromatic batch \emph{for each class}. Hence as $K$ increases the batch size must shrink to ensure that $\Zpi$ is SC w.h.p., opening up a wider range of batch sizes for SS divergence.  
\end{remark}

Finally, we formally define a robust notion of the separability decomposition that will prove helpful for quantifying the effects of increasing the batch size. To do so, we rely on the notion of the margin of a linearly separably dataset and the so-called penetration depth of overlapping convex hulls.

\begin{definition}[Margin and penetration depth]\label{definition:margin-penetration-depth}
Let $\mZ = (\mX, \mY)$ be a dataset, and let $\mX^+, \mX^-$ denote the positive and negative features, respectively. If $\mZ$ is linearly separable, then the \emph{margin} of $\mZ$ is defined to be the $\ell_2$ margin corresponding to the maximum margin classifier for $\mZ$.

If $\mZ$ is not linearly separable, then by definition $\conv(\mX^+)$ intersects $\conv(\mX^-)$. The \emph{penetration depth} \citep{agarwal2000penetration} of $\mZ$ is defined as the smallest Euclidean distance of translation of $\conv(\mX^+)$ such that the resulting convex body still intersects $\conv(\mX^-)$. In words, this quantifies the smallest perturbation we need to make $\mZ$ linearly separable.
\end{definition}

\begin{definition}[$\gamma$-robust separability decomposition]\label{assumption:margin}
Let $\mZ = (\mX, \mY)$ be a dataset and $\Sgdls \sqcup \Sgdsc$ be the separability decomposition of $\Zgd \triangleq (\Xgdbn,\mY) \triangleq (\bn(\mX),\mY)$.

For $\gamma > 0$, we say that $\mZ$ is $\gamma$-robust if the following conditions hold. 
\begin{enumerate}[label=\normalfont(\arabic*)]
    \item One of the following: 
    \begin{enumerate}[label=\normalfont(\alph*)]
        \item $\Zgd$ is LS and $\Sgdls$ has margin at least $\gamma$; or,
        \item $\Zgd$ is SC and $\Sgdsc$ has penetration depth at least $\gamma$. 
    \end{enumerate}
    \item Let $\sigma_k$, $a_k$, and $b_k$ denote the standard deviation, min, and max of the $k$th feature of $\mX$. Then $\min_{k \in [d]} \frac{\sigma_k}{b_k-a_k} = \Omega(1)$. 
    \item $\norm{\Xgdbn}_{2,\infty} = O(\sqrt{d})$, i.e., the maximum Euclidean norm of datapoints in $\bn(\mX)$ is $O(\sqrt{d})$. 
\end{enumerate}
\end{definition}

\Cref{assumption:margin} formalizes the informal statement in \Cref{thm:ss-divergence-informal} that ``$\Zgd$'s separability decomposition can change with small perturbations.'' If a dataset is robust, its separability decomposition cannot change easily by small perturbations (e.g., due to batch normalization) on datapoints. Notice here that PLS datasets can never be robust; to see why, note that $\mZ$ is PLS whenever $\Zgd$ is PLS. It follows that if $\mZ$ is PLS then it is not $\gamma$-robust for any $\gamma > 0$. 

This definition of robustness is also natural from the perspective of concentration, since it provides an immediate link between concentration of batch statistics and the separability decomposition of $\Zpi$. In order to estimate $\Xgdbn$ well, we need to estimate $\vsigma$ to within a multiplicative factor, which explains (2). Moreover, the degree to which the SS datapoints concentrate around the GD datapoints also depend multiplicatively on the size of GD datapoints, which explains (3).

The following proposition provides a sufficient condition for the datapoints concentrating within a distance $\gamma$ of the GD datapoints (cf.\ the definition of $\gamma$-robustness); its proof is deferred to \cref{sec:concentration}.
\begin{proposition}\label{prop:concentration-robustness}
Suppose $\mZ$ is $\gamma$-robust and $B = \Omega(d\log(nd)/\gamma^2)$. Then with probability at least $1-1/\poly(n, d)$ over the choice of $\pi$, we have $\norm{\Xpibn - \pi \circ \Xgdbn}_{2,\infty} = O(\gamma)$. 
\end{proposition}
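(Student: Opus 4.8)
The plan is to reduce the uniform (over columns) bound to coordinatewise concentration of the per-batch mean and variance, and then propagate those perturbations through the batch-normalization map. Fix a permutation $\pi$, write $z_{i,k} \triangleq (\Xgdbn)_{k,i} = \frac{x_{i,k}-\mu_k}{\sigma_k}$ for the GD-normalized features, and for a datapoint lying in batch $j$ let $\mu_k^{(j)}, (\sigma_k^{(j)})^2$ denote the batch-$j$ empirical mean and variance of coordinate $k$. Using $x_{i,k} = \mu_k + \sigma_k z_{i,k}$, a one-line computation gives the key identity
\[
(\Xpibn - \pi\circ\Xgdbn)_{k,i} = \frac{z_{i,k}\bigl(\sigma_k - \sigma_k^{(j)}\bigr) + (\mu_k - \mu_k^{(j)})}{\sigma_k^{(j)}},
\]
so the entire problem is governed by the relative deviations $\frac{|\mu_k^{(j)}-\mu_k|}{\sigma_k}$ and $\frac{|\sigma_k^{(j)}-\sigma_k|}{\sigma_k}$.

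First I would establish the per-batch concentration. Under a uniformly random $\pi$, each of the $m=n/B$ batches is \emph{marginally} a uniform size-$B$ sample without replacement from the $n$ columns, so our concentration inequalities for without-replacement batch statistics (\cref{prop:concentration-features}), a Hoeffding--Serfling-type bound that uses only the bounded range $b_k - a_k = O(\sigma_k)$ supplied by robustness condition (2), yield, with failure probability $\exp(-\Omega(B\gamma^2/d))$ for each pair $(j,k)$,
\[
|\mu_k^{(j)} - \mu_k| = O\!\left(\sigma_k\sqrt{\tfrac{\log(nd)}{B}}\right), \qquad |(\sigma_k^{(j)})^2 - \sigma_k^2| = O\!\left(\sigma_k^2\sqrt{\tfrac{\log(nd)}{B}}\right).
\]
The variance bound follows by applying the same inequality to the second moments, whose range is $O(\sigma_k^2)$, and absorbing the lower-order mean-squared term. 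Substituting $B = \Omega(d\log(nd)/\gamma^2)$ makes both right-hand sides $O(\sigma_k\,\gamma/\sqrt d)$ and $O(\sigma_k^2\,\gamma/\sqrt d)$; in particular $\sigma_k^{(j)}/\sigma_k = 1 + O(\gamma/\sqrt d)$, whence $|\sigma_k - \sigma_k^{(j)}| = O(\sigma_k\gamma/\sqrt d)$ and $1/\sigma_k^{(j)} = O(1/\sigma_k)$. A union bound over all $md \le nd$ pairs $(j,k)$ keeps the total failure probability at $1/\poly(n,d)$ (choosing the constant hidden in $B$ large enough to beat the polynomial's degree); on the complementary event all these deviation bounds hold simultaneously.

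Finally I would plug the deviation bounds into the key identity to obtain the entrywise estimate $|(\Xpibn-\pi\circ\Xgdbn)_{k,i}| = O\!\left(\frac{\gamma}{\sqrt d}(|z_{i,k}|+1)\right)$, and then bound each column in $\ell_2$:
\[
\norm{(\Xpibn-\pi\circ\Xgdbn)_{:,i}}_2^2 = O\!\left(\frac{\gamma^2}{d}\sum_{k=1}^d (|z_{i,k}|^2+1)\right) = O\!\left(\frac{\gamma^2}{d}\bigl(\norm{(\Xgdbn)_{:,i}}_2^2 + d\bigr)\right) = O(\gamma^2),
\]
where the last step uses robustness condition (3), namely $\norm{\Xgdbn}_{2,\infty} = O(\sqrt d)$, so that $\sum_k z_{i,k}^2 = O(d)$ for every column $i$. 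Maximizing over $i$ then gives $\norm{\Xpibn - \pi\circ\Xgdbn}_{2,\infty} = O(\gamma)$, as claimed.

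I expect the main obstacle to be the concentration step rather than the algebra: one must obtain a without-replacement deviation inequality that (i) controls the batch \emph{variance} and not merely the mean, and (ii) is scaled by $\sigma_k$ \emph{uniformly} across all $d$ coordinates. Point (ii) is precisely where robustness condition (2) enters, converting the raw range $b_k - a_k$ into a $\sigma_k$-relative deviation, which is what allows a single batch size $B$ to work simultaneously for every coordinate regardless of its individual scale. The variance concentration in (i) requires controlling a without-replacement sum of squared bounded variables; the bounded-range hypothesis handles this, but it is the part of the argument that demands the most care.
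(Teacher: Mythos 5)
Your proposal is correct and follows essentially the same route as the paper's proof: the same algebraic decomposition of $\Xpibn - \pi \circ \Xgdbn$ into $\sigma_k$-relative deviations of the batch mean and standard deviation, per-coordinate without-replacement concentration with range $b_k - a_k = O(\sigma_k)$ supplied by robustness condition (2), a union bound over all batch--coordinate pairs, and the choice $\epsilon = O(\gamma/\sqrt{d})$ combined with $\norm{\Xgdbn}_{2,\infty} = O(\sqrt{d})$ from condition (3), yielding the stated batch size $B = \Omega(d\log(nd)/\gamma^2)$. The only cosmetic difference is that you obtain variance concentration by applying a Hoeffding--Serfling bound to the centered squares, whereas the paper invokes the Bardenet--Maillard/Maurer bounds (\cref{lemma:variance-upper-tail,lemma:variance-lower-tail}) inside \cref{prop:concentration-features}; either suffices under condition (2).
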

With this setup in hand, we are ready to present our main theorems characterizing the separability decompositions of SS+BN and RR+BN. 
\subsection{Separability decomposition for SS}\label{subsec:ss-structure}
\begin{theorem}[Separability decomposition for SS, formal]\label{prop:ss-structure}
Throughout this theorem, assume that $B > 2$ and \ref{assumption:ss-balanced} and \Cref{assumption:density} hold. 

Suppose that $d \le (B-1)\frac{n}{B}$. Then the following hold.
\begin{enumerate}[label=\normalfont(\arabic*)]
    \item \label{item:underparam-small} $\Zpi$ is SC w.h.p.\ for $B = o(\log n)$. If we relax \Cref{assumption:density}, then $\Zpi$ can be PLS as well. 
    \item \label{item:underparam-large} Suppose further that $\mZ$ is $\gamma$-robust. Then $\Zpi$ has the same separability decomposition as $\Zgd$ w.h.p.\ for $B = \Omega(d\log (nd)/\gamma^2)$. 
\end{enumerate}
Now suppose $d > (B-1)\frac{n}{B}$. Then the following hold. 
\begin{enumerate}[label=\normalfont(\arabic*)]\setcounter{enumi}{2}
    \item \label{item:overparam-small} $\Zpi$ is PLS w.h.p.\ for $B = o(\log n)$. 
    \item \label{item:overparam-large} $\Zpi$ is LS w.h.p.\ for $B = \Omega(\log n)$.
\end{enumerate}
\end{theorem}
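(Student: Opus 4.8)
The four cases are governed by two independent controls: the number of monochromatic batches, which \cref{cor:ss-sc-whp} ties to $B$ versus $\log n$, and the rank of $\Xpibn$, which \cref{prop:full-rank-alt} pins at $\min\{d,(B-1)\tfrac{n}{B}\}$ under \Cref{assumption:density}. The plan is to combine these with the two bridging lemmas: \cref{lemma:batch-separability} (a positive and a negative monochromatic batch force $\relinterior(\conv(\Xpibn^+))\cap\relinterior(\conv(\Xpibn^-))\neq\emptyset$, hence $\Zpi$ is not LS) and \cref{lemma:sc-sufficient} (full-dimensional features with intersecting relative interiors force SC). The underparameterized parts follow almost directly from these; the overparameterized parts need one extra linear-algebra identity isolated below.

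\textbf{Underparameterized regime ($d\le(B-1)\tfrac{n}{B}$).} For part~\ref{item:underparam-small}, \cref{cor:ss-sc-whp} gives $\Omega(n)$ monochromatic batches w.h.p.; under \ref{assumption:ss-balanced} a symmetry/counting argument shows both a positive and a negative monochromatic batch appear w.h.p., so \cref{lemma:batch-separability} yields the relative-interior intersection. Since $\rank(\Xpibn)=d$ a.s.\ by \cref{prop:full-rank-alt} (and since each batch is centered, $\vzero$ lies in its convex hull, so rank $d$ is equivalent to full dimensionality), \cref{lemma:sc-sufficient} gives SC. Dropping \Cref{assumption:density} only removes full dimensionality, leaving ``not LS,'' i.e.\ PLS or SC. For part~\ref{item:underparam-large}, I would feed $\gamma$-robustness into \cref{prop:concentration-robustness} to get $\norm{\Xpibn-\pi\circ\Xgdbn}_{2,\infty}=O(\gamma)$ w.h.p., then argue that an $O(\gamma)$ per-point perturbation with small enough constant cannot change the decomposition: in the LS branch it preserves a positive margin, and in the SC branch it preserves positive penetration depth, while \cref{prop:full-rank-alt} keeps $\Xpibn$ full-dimensional so \cref{lemma:sc-sufficient} still certifies SC. Hence $\Zpi$ inherits the decomposition of $\Zgd$.

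\textbf{Overparameterized regime ($d>(B-1)\tfrac{n}{B}$).} The crux is the following identity. Writing $s=\Xpibn^\top\vu$, centering of each batch forces $\sum_{i\in B_j}s_i=0$ for every batch $j$, so the image of $\vu\mapsto\Xpibn^\top\vu$ (the row space of $\Xpibn$) is contained in $C\triangleq\{s\in\RR^n:\sum_{i\in B_j}s_i=0\ \forall j\}$. By \cref{prop:full-rank-alt} this image has dimension $(B-1)\tfrac{n}{B}$, which equals $\dim C=n-\tfrac{n}{B}$; since the image lies inside $C$ and the dimensions match, \emph{the image equals $C$}. Thus every sign pattern $s\in C$ is realized by some $\vu$, and separability becomes purely combinatorial: a direction with $y_is_i>0$ for all $i$ exists iff every batch is mixed, an $\Sls$-witness (some $y_{i_0}s_{i_0}>0$ with all $y_is_i\ge0$) exists iff some batch is mixed, and any point in a monochromatic batch is forced into $\Ssc$. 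Hence $\Zpi$ is LS iff there are no monochromatic batches and PLS iff mixed and monochromatic batches coexist. Now part~\ref{item:overparam-large} follows because $B=\Omega(\log n)$ kills all monochromatic batches w.h.p.\ (\cref{cor:ss-sc-whp}), and part~\ref{item:overparam-small} follows because $B=o(\log n)$ produces $\Omega(n)$ monochromatic batches (so $\Ssc\neq\emptyset$) and, w.h.p.\ under \ref{assumption:ss-balanced}, at least one mixed batch (so $\Sls\neq\emptyset$).

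\textbf{Main obstacle.} Parts~\ref{item:underparam-small} and~\ref{item:underparam-large} are largely bookkeeping on top of the supplied lemmas; the real work is the overparameterized characterization. The delicate step is the dimension-matching argument that upgrades the inclusion (row space) $\subseteq C$ to an equality, since this is exactly what converts the separability question into the clean combinatorial criterion about monochromatic batches, and it hinges on the exact rank count $(B-1)\tfrac{n}{B}$ of \cref{prop:full-rank-alt} holding almost surely. I would also be careful with the two probabilistic ``both colors appear'' and ``some mixed batch appears'' statements under \ref{assumption:ss-balanced}, which are what let us separate PLS from SC and from LS.
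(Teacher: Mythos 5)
Your proposal is correct, and in the underparameterized regime it is the paper's proof almost verbatim: monochromatic batches from \cref{cor:ss-sc-whp}, the bridge \cref{lemma:batch-separability} $\to$ \cref{lemma:sc-sufficient} with full rank from \cref{prop:full-rank-alt} for part (1), and \cref{prop:concentration-robustness} plus the margin/penetration-depth stability argument for part (2) (your extra step of re-certifying SC via full dimensionality and \cref{lemma:sc-sufficient} is slightly more careful than the paper, which asserts directly that strictly overlapping hulls remain SC). Where you genuinely diverge is the overparameterized regime. The paper handles parts (3) and (4) by citing \cref{sec:characterize-direction}: \cref{lemma:overparam-separation} constructs a separator for all non-monochromatic batches by prescribing arbitrary values on $B-1$ points per batch (using the a.s.\ linear independence from \cref{prop:full-rank-alt}) and letting the mean-zero constraint determine the last point, while \cref{fact:interpolate} pins monochromatic batches to the decision boundary. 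You instead prove surjectivity of $\vu \mapsto \Xpibn^\top\vu$ onto the constraint subspace $C=\{s:\sum_{i\in B_j}s_i=0\ \forall j\}$ by matching $\rank(\Xpibn)=(B-1)\tfrac{n}{B}=\dim C$, and then read off the separability decomposition combinatorially ($\Sls$ = mixed batches, $\Ssc$ = monochromatic batches). The two arguments rest on the same rank fact and are logically equivalent, but your dimension-count packaging is cleaner: it yields the full characterization (LS iff no monochromatic batch, PLS iff both kinds coexist) in one stroke rather than through two separate constructive lemmas. You are also more explicit than the paper about a small but real probabilistic point: concluding PLS in part (3) requires at least one \emph{mixed} batch w.h.p.\ (so that $\Sls\neq\emptyset$), and concluding SC in part (1) requires monochromatic batches of \emph{both} colors; the paper uses both facts but attributes them to \cref{cor:ss-sc-whp} without comment.
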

\begin{proof}
We consider cases based on whether the batch size is large or small.

\textbf{Small batch size.~~}
By \cref{cor:ss-sc-whp}, when $B = o(\log n)$, w.h.p.\ we get a batch comprised solely of positive examples and a batch comprised solely of negative examples. By \cref{lemma:batch-separability}, this implies that $\Zpi$ is PLS or SC, and the relative interior of the positive features intersects the relative interior of the negative features. By \cref{lemma:sc-sufficient}, if $\Xpibn$ is full dimensional, then $\Zpi$ is SC. Hence it suffices to analyze the rank of $\Xpibn$. 

The maximal rank of $\Xpibn$ is $\min\qty{d, (B-1)\frac{n}{B}}$ because of the mean zero constraint enforced in every batch. Because we assumed that \Cref{assumption:density} holds, \Cref{prop:full-rank} implies that $\Xpibn$ achieves this upper bound almost surely. Putting it all together, we conclude that when $d \le (B-1)\frac{n}{B}$, $\Xpibn$ is full-dimensional. It follows that $\Zpi$ is SC w.h.p.\ over the choice of $\pi$, which proves \ref{item:underparam-small}.

On the other hand, when $d > (B-1)\frac{n}{B}$, there always exists a hyperplane that passes through all of the monochromatic batches of $\Zpibn$ and perfectly classifies non-monochromatic batches (see \cref{sec:characterize-direction}). It follows that $\Zpibn$ is PLS, which proves \ref{item:overparam-small}.

\textbf{Large batch size.~~}
Now consider when $B = \Omega(\log n)$. In this regime, \cref{cor:ss-sc-whp} implies there are no monochromatic batches with high probability. Moreover, \cref{prop:full-rank} implies that the features are full rank. It thus follows that when $d > (B-1)\frac{n}{B}$, $\Zpi$ is LS, which proves \ref{item:overparam-large}. Let us now consider the case $d \le (B-1)\frac{n}{B}$. 

When $\mZ$ is $\gamma$-robust, we can directly apply \cref{prop:concentration-robustness} to prove \ref{item:underparam-large}. Indeed, for $B = \Omega(d\log(nd)/\gamma^2)$, each SS datapoint is within distance $O(\gamma)$ of the corresponding GD datapoint with probability at least $1 - 1/\poly(n,d)$. By increasing the batch size by at most a constant factor, we can guarantee that each SS datapoint is in fact within distance $\gamma/3$ of the corresponding GD datapoint. 

If $\Zgd$ is LS, $\gamma$-robustness implies that the max-margin hyperplane for $\Zgd$ has margin at least $\gamma$. Since each SS datapoint moves at most $\gamma/3$ from the corresponding GD datapoint, this hyperplane still has margin at least $2\gamma/3$, implying that $\Zpi$ is LS. If $\Zgd$ is SC, then $\gamma$-robustness implies that we need to translate the convex hulls of positive and negative points by at least $\gamma$ to separate them. But we can only translate them by a total of $\gamma/3 + \gamma/3 = 2\gamma/3 < \gamma$, so the hulls stay strictly overlapping. This implies $\Zpi$ is also SC.  This concludes the proof of \ref{item:underparam-large}.
\end{proof}
\begin{remark}
If $\Zgd$ is not $\gamma$-robust for any $\gamma > 0$ (for example if $\Zgd$ is PLS), then it is not hard to construct examples where the separability decomposition of $\Zpi$ is LS or PLS, and SS diverges. For a good example of this scenario, see \Cref{fig:xgd-final,fig:xss-final}. Even if $\Zgd$ is $\gamma$-robust, if $B = o(d\log(nd)/\gamma^2)$, then concentration can fail to hold in the worst case, and we can find analogous constructions where SS diverges.
\end{remark}

\subsection{Separability decomposition for RR}\label{subsec:rr-structure}

\begin{theorem}[Separability decomposition for RR, formal]\label{prop:rr-structure}
Suppose that $B > 2$ and \ref{assumption:rr-balanced} and \Cref{assumption:density} hold.

If $d \le (B-1)\binom{n}{B}$, then $\Zrr$ is SC and $\Xrrbn$ is full-rank almost surely, regardless of the separability decomposition of $\Zgd$.

Otherwise, if $d > (B-1)\binom{n}{B}$, then $\Zrr$ is deterministically PLS, regardless of the separability decomposition of $\Zgd$.
\end{theorem}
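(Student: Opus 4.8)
The plan is to mirror the proof of \Cref{prop:ss-structure}, exploiting the fact that the RR dataset $\Xrrbn$ contains \emph{every} size-$B$ batch, so the high-probability statements needed for SS become deterministic. The first observation I would record is that \ref{assumption:rr-balanced} forces monochromatic batches of both signs to be present: since there are at least $B$ positive and at least $B$ negative examples, one can form an all-positive batch and an all-negative batch, and both appear as sub-batches of $\Xrrbn$ by construction. Applying \Cref{lemma:batch-separability} to $\Xrrbn$ (which contains this monochromatic pair) then yields $\relinterior(\conv(\Xrrbn^+)) \cap \relinterior(\conv(\Xrrbn^-)) \neq \emptyset$, so $\Zrr$ is never LS, regardless of the separability decomposition of $\Zgd$.

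For the underparameterized case $d \le (B-1)\binom{n}{B}$, I would invoke \Cref{prop:full-rank-alt}: under \Cref{assumption:density} and $B>2$ we have $\rank(\Xrrbn) = \min\{d,(B-1)\binom{n}{B}\} = d$ almost surely, which is exactly the claimed full-rank conclusion. To upgrade full rank to the full dimensionality required by \Cref{lemma:sc-sufficient}, I would note that each batch is mean-zero after $\bn$, so $\vzero \in \conv(\Xrrbn) \subseteq \aff(\Xrrbn)$; combined with $\rank(\Xrrbn)=d$ this forces $\aff(\Xrrbn) = \Span(\Xrrbn) = \RR^d$, i.e.\ $\Xrrbn$ is full dimensional. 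With full dimensionality and the intersection of relative interiors from the first step, \Cref{lemma:sc-sufficient} gives that $\Zrr$ is SC almost surely.

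For the overparameterized case $d > (B-1)\binom{n}{B}$, the point is that the rank bound is now \emph{deterministic}: every batch loses one dimension to centering, so $\rank(\Xrrbn) \le (B-1)\binom{n}{B} < d$ for all data, and $\Xrrbn$ is never full dimensional. Since the monochromatic batches still make $\Zrr$ non-LS (first step), it remains only to rule out SC, i.e.\ to exhibit a column in $\Sls$. Here I would construct an exposing hyperplane exactly as in the overparameterized branch of \Cref{prop:ss-structure}: using the rank deficiency, find $\vu$ that is orthogonal to all columns coming from monochromatic batches while strictly and correctly classifying the columns of at least one non-monochromatic batch, so that this single $\vu$ certifies membership of those columns in $\Sls$. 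This is precisely the optimal-direction characterization developed in \cref{sec:characterize-direction}; granting it, both $\Sls$ and $\Ssc$ are nonempty, so $\Zrr$ is PLS deterministically.

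The hard part is the overparameterized non-SC step. Deterministic rank deficiency by itself does \emph{not} imply $\Sls \neq \emptyset$, because whether $\vzero$ lies in the \emph{relative} interior of $\conv(\{y_c \vx_c\})$ is a question internal to $\Span(\Xrrbn)$ and is insensitive to the ambient dimension: any component of a candidate $\vu$ orthogonal to $\Span(\Xrrbn)$ contributes nothing to the inner products $\langle \vu, \vx_c\rangle$. The argument must therefore use the concrete geometry of the batch-normalized all-batches dataset---that the non-monochromatic batches admit a common correctly-classifying direction orthogonal to the monochromatic ones---which is the technical content I would import from \cref{sec:characterize-direction}. By contrast, the underparameterized SC direction is routine once full dimensionality has been established, and the full-rank claim is immediate from \Cref{prop:full-rank-alt}.
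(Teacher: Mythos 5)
Your proposal is correct and follows essentially the same route as the paper's proof: monochromatic batches plus \cref{lemma:batch-separability} rule out LS, \cref{prop:full-rank-alt} with \cref{lemma:sc-sufficient} gives SC almost surely when $d \le (B-1)\binom{n}{B}$, and the hyperplane construction from the overparameterized SS branch (\cref{sec:characterize-direction}) gives PLS otherwise. Your explicit upgrade of full rank to full dimensionality via the mean-zero constraint, and your caution that rank deficiency alone does not yield $\Sls \neq \emptyset$, are correct refinements of steps the paper leaves implicit.
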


\begin{proof}
When there are at least $B$ positive and $B$ negative examples, there exists a batch of all positive and a batch of all negative examples. Hence by \cref{lemma:batch-separability}, $\Zrr$ is PLS or SC. By \cref{prop:full-rank-alt}, under \Cref{assumption:density}, $\Xrrbn$ attains the maximal rank of $\min\qty{d, (B-1)\binom{n}{B}}$ almost surely. So by \cref{lemma:sc-sufficient}, if $d \le (B-1)\binom{n}{B}$, $\Zrr$ will be SC and $\Xrrbn$ is full-rank almost surely. On the other hand, if $d > (B-1)\binom{n}{B}$, then $\Zrr$ is PLS deterministically because there always exists a hyperplane that passes through all of the monochromatic batches of $\Zrrbn$ and perfectly classifies non-monochromatic batches.
\end{proof}

\subsection{Characterizing the optimal direction of classifiers}\label{sec:characterize-direction}
Thus far, we have primarily considered the separability decomposition of $\Zpi$ and $\Zrr$. In fact, we can say more about the direction of optimal classifiers under the logistic loss. First, we prove \cref{lemma:infimize}, which constrains optimal directions via the separability decomposition. Next, we leverage overparameterization and the rank properties shown in \cref{prop:full-rank-alt} to characterize the optimal direction under the logistic loss for data drawn from a density. Using these insights, we can prove our main result of this section, \cref{prop:overparam-takeaway}.

We first restate and prove \cref{lemma:infimize}.
\begin{lemma}\label{lemma:infimize-alt}
Let $\mZ = \Sls \sqcup \Ssc$ be the separability decomposition of $\mZ$. If $\vv$ is an optimal direction for $\mathcal{L}$, then $\vv^\top \vx = 0$ for all $\vx \in \Span(\Xsc)$ and $y_i\vv^\top \vx_i > 0$ for every $(\vx_i, y_i) \in \Sls$. 
\end{lemma}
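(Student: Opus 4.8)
The plan is to exploit the additive structure of the logistic risk over the separability decomposition, together with the two structural facts recalled from \citet{ji2019implicit} in \cref{subsec:classification-results}: there is a separator $\vu^{\mathrm{MM}}$ with $y_i (\vu^{\mathrm{MM}})^\top \vx_i > 0$ for every $(\vx_i, y_i) \in \Sls$ and $(\vu^{\mathrm{MM}})^\top \vx = 0$ for all $\vx \in \Span(\Xsc)$, and the restriction of $\mathcal L$ to $\Span(\Xsc)$ has a unique finite minimizer $\vv^{\mathrm{SC}}$. Recalling $\ell(\hat y, y) = \log(1 + e^{-y \hat y})$, I write $\mathcal L(\vw^\top \mX, \mY) = \mathcal L(\vw^\top \Xls, \Yls) + \mathcal L(\vw^\top \Xsc, \Ysc)$ and abbreviate the summands as $\mathcal L_{\mathrm{LS}}(\vw)$ and $\mathcal L_{\mathrm{SC}}(\vw)$; note $\mathcal L_{\mathrm{SC}}(\vw)$ depends only on the projection of $\vw$ onto $\Span(\Xsc)$ and $\mathcal L_{\mathrm{LS}}(\vw) \ge 0$. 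First I would compute $\inf_{\vw} \mathcal L = \mathcal L_{\mathrm{SC}}^* \triangleq \mathcal L_{\mathrm{SC}}(\vv^{\mathrm{SC}})$: the lower bound is immediate from $\mathcal L_{\mathrm{LS}} \ge 0$ and $\mathcal L_{\mathrm{SC}} \ge \mathcal L_{\mathrm{SC}}^*$, while the upper bound follows by evaluating along $\vw_s = \vv^{\mathrm{SC}} + s\vu^{\mathrm{MM}}$, for which $\mathcal L_{\mathrm{SC}}(\vw_s) = \mathcal L_{\mathrm{SC}}^*$ (since $\vu^{\mathrm{MM}} \perp \Span(\Xsc)$) and $\mathcal L_{\mathrm{LS}}(\vw_s) \to 0$ as $s \to \infty$.

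Next, let $\vv$ be an optimal direction, witnessed by $\vu$ with $\vw(t) = \vu + t\vv$ infimizing $\mathcal L$, so $\mathcal L(\vw(t)) \to \mathcal L_{\mathrm{SC}}^*$. Since $\mathcal L_{\mathrm{LS}}(\vw(t)) \ge 0$ and $\mathcal L_{\mathrm{SC}}(\vw(t)) \ge \mathcal L_{\mathrm{SC}}^*$ while their sum tends to $\mathcal L_{\mathrm{SC}}^*$, both must hit their floors in the limit: $\mathcal L_{\mathrm{LS}}(\vw(t)) \to 0$ and $\mathcal L_{\mathrm{SC}}(\vw(t)) \to \mathcal L_{\mathrm{SC}}^*$. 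For the $\Sls$ conclusion, from $\mathcal L_{\mathrm{LS}}(\vw(t)) \to 0$ each nonnegative summand $\log(1 + e^{-y_i \vw(t)^\top \vx_i})$ vanishes, forcing $y_i \vw(t)^\top \vx_i \to +\infty$; writing $y_i \vw(t)^\top \vx_i = y_i \vu^\top \vx_i + t\, y_i \vv^\top \vx_i$ with the first term constant, this is possible only if $y_i \vv^\top \vx_i > 0$, which is exactly the claim for $(\vx_i, y_i) \in \Sls$.

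For the $\Ssc$ conclusion, from $\mathcal L_{\mathrm{SC}}(\vw(t)) \to \mathcal L_{\mathrm{SC}}^* < \infty$ each nonnegative summand stays bounded in $t$, so $y_i \vw(t)^\top \vx_i = y_i \vu^\top \vx_i + t\, y_i \vv^\top \vx_i$ is bounded below; letting $t \to \infty$ forces $y_i \vv^\top \vx_i \ge 0$ for every $(\vx_i, y_i) \in \Ssc$. Let $\vp$ be the projection of $\vv$ onto $\Span(\Xsc)$; since $\vx_i \in \Span(\Xsc)$ we have $\vp^\top \vx_i = \vv^\top \vx_i$, hence $y_i \vp^\top \vx_i \ge 0$ throughout $\Ssc$. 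The crux is the combinatorial fact that the only $\vp \in \Span(\Xsc)$ with $y_i \vp^\top \vx_i \ge 0$ for all $(\vx_i, y_i) \in \Ssc$ is $\vp = \vzero$: otherwise, $\Xsc$ spanning $\Span(\Xsc)$ yields some $(\vx_j, y_j) \in \Ssc$ with $y_j \vp^\top \vx_j > 0$, and then $\vw_R \triangleq R\vu^{\mathrm{MM}} + \vp$ for large $R$ weakly separates all of $\mZ$ (using $\vu^{\mathrm{MM}} \perp \Span(\Xsc)$ on $\Ssc$ and strict separation of $\Sls$) while strictly separating $(\vx_j, y_j)$, placing it in $\Sls$ --- a contradiction. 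Hence $\vp = \vzero$, i.e.\ $\vv^\top \vx = 0$ for all $\vx \in \Span(\Xsc)$.

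The routine convexity bookkeeping --- the infimum computation and the splitting of the limit into its $\Sls$ and $\Ssc$ parts --- is straightforward. The main obstacle is the combinatorial step for the $\Ssc$ conclusion, which is precisely where the definition of the separability decomposition and the orthogonality of $\vu^{\mathrm{MM}}$ to $\Span(\Xsc)$ are essential; I would present it via the short separating-direction argument above, or cite the corresponding structural result of \citet{ji2019implicit}. This same fact underlies the asserted finiteness of $\vv^{\mathrm{SC}}$, so it can be reused rather than reproved.
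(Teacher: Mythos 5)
Your proposal is correct, and it takes a somewhat different route from the paper's own (very terse) proof. The paper argues both claims by direct contradiction: for $\Ssc$, it invokes the definition of $\Ssc$ to produce a point with $y_i\vv^\top\vx_i \le 0$ and notes the loss diverges along the ray if this margin is strictly negative; for $\Sls$, it asserts that a nonpositive margin on an $\Sls$ point is incompatible with infimizing, because a hyperplane strictly separating $\Sls$ and orthogonal to $\Span(\Xsc)$ exists. You instead make explicit what that second step presupposes — that $\inf_{\vw}\mathcal{L} = \mathcal{L}_{\mathrm{SC}}(\vv^{\mathrm{SC}})$, computed along the ray $\vv^{\mathrm{SC}} + s\vu^{\mathrm{MM}}$ — and then split the infimizing sequence's loss into its $\Sls$ and $\Ssc$ floors, which cleanly yields $y_i\vv^\top\vx_i > 0$ on $\Sls$. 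More substantively, your $\Ssc$ argument differs in mechanism: the paper's ``by definition of $\Ssc$'' step tacitly relies on the structural fact from \citet{ji2019implicit} that any direction with nonzero projection onto $\Span(\Xsc)$ has a \emph{strictly} negative margin on some $\Ssc$ point (equivalently, $\vzero \in \relinterior(\conv(\qty{y_i\vx_i}_{\Ssc})))$, and as written it only handles the single point it exhibits. You avoid needing this fact: you first extract the weak inequality $y_i\vv^\top\vx_i \ge 0$ on all of $\Ssc$ from boundedness of the $\Ssc$ summands, and then upgrade it to full orthogonality via the greedy definition of $\Sls$ — if the projection $\vp$ of $\vv$ onto $\Span(\Xsc)$ were nonzero, then $R\vu^{\mathrm{MM}} + \vp$ for large $R$ would weakly separate all of $\mZ$ while strictly separating some $\Ssc$ point, certifying its membership in $\Sls$, a contradiction. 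This buys self-containedness (you rederive the anticoncentration property from the decomposition's maximality rather than citing it, and you also cover the zero-margin edge case the paper's proof leaves implicit) at the cost of length; both arguments ultimately rest on the same two ingredients, namely that $\vu^{\mathrm{MM}}$ strictly separates $\Sls$ while annihilating $\Span(\Xsc)$, and that the $\Ssc$-restricted risk has a finite minimizer.
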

\begin{proof}
By definition of $\Ssc$, there exists some $(\vx_i, y_i) \in \Ssc$ such that $y_i\ev{\vv, \vx_i} \le 0$. If $\ev{\vv, \vx_i} \neq 0$, then $y_i\ev{t\vv, \vx_i} \to -\infty$ as $t \to \infty$, which contradicts the assumption that $\vv$ is an optimal direction. Similarly, if $y_i\ev{\vv, \vx_i} \le 0$ for some $(\vx_i, y_i) \in \Sls$, then clearly $\vu+t\vv$ cannot infimize $\mathcal{L}$, as there exists a hyperplane which strictly separates $\Sls$ and is orthogonal to $\Span(\Ssc)$. 
\end{proof}

Next, we restate and prove  \cref{prop:divergence-possibility}.
\begin{proposition}\label{prop:divergence-possibility-alt}
Suppose \ref{assumption:full-rank-ss} holds. Assume that the iterates $\vpi(t)$ infimize $\ellpi$, and their projections onto $\Span(\Xpisc)^{\perp}$ converge in direction to some optimal direction $\vpi^*$ for $\ellpi$. Then the GD risk $\ellgd$ diverges if and only if $\Zpi$ is PLS or LS and there exists some $(\vx_i, y_i) \in \Zgd$ such that $y_i\vpi^{*\top} \vx_i < 0$. The analogous statement holds true for $\Zrr$ under \ref{assumption:full-rank-rr}. Furthermore, the if part holds true for SS and RR without \Cref{assumption:full-rank-reg}.
\end{proposition}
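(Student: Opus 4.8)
The plan is to track the full-batch logistic risk $\ellgd(\vpi(t)) = \sum_{(\vx_i,y_i)\in\Zgd}\log(1+e^{-y_i\vpi(t)^\top\vx_i})$ summand by summand, reading off the fate of each term from the sign of $y_i\vpi^{*\top}\vx_i$. Since a finite sum of nonnegative terms is unbounded exactly when one term is, and $\log(1+e^{-s})$ blows up precisely as $s\to-\infty$, everything reduces to the scalars $y_i\vpi(t)^\top\vx_i$ for the \emph{GD}-normalized points $\vx_i$. Set $U\triangleq\Span(\Xpisc)$ and decompose $\vpi(t)=\va(t)+\vb(t)$ with $\va(t)\in U$ and $\vb(t)\in U^\perp$; by hypothesis $\vb(t)/\norm{\vb(t)}\to\vpi^*$, and by \cref{lemma:infimize-alt} the optimal direction obeys $\vpi^*\in U^\perp$ together with strict separation $y_i\vpi^{*\top}\vx_i>0$ on the LS block of $\Zpi$.

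First I would record two structural facts that require no rank hypothesis. (i) Splitting the sum defining $\ellpi$ into its LS and SC blocks, the SC contribution depends only on $\va$ (its features $\Xpisc$ lie in $U$, so $\vb$ drops out); as this contribution is coercive on $U$ (the defining property of the SC block, cf.\ \citet{ji2019implicit}), infimizing forces $\va(t)$ to stay bounded, hence $y_i\va(t)^\top\vx_i$ is bounded for every GD point. (ii) If the LS block of $\Zpi$ is nonempty, i.e.\ $\Zpi$ is PLS or LS, then driving the LS contribution to its infimum $0$ requires $y_i\vpi(t)^\top\vx_i\to+\infty$ on those points; since $\va(t)$ is bounded by (i), this forces $\norm{\vb(t)}\to\infty$. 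Consequently, for any fixed GD point, $y_i\vpi(t)^\top\vx_i = y_i\va(t)^\top\vx_i + \norm{\vb(t)}\,y_i(\vpi^*+o(1))^\top\vx_i$, so the limit is governed by the sign of $y_i\vpi^{*\top}\vx_i$.

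This yields the ``if'' direction with no rank assumption: if $\Zpi$ is PLS or LS and some GD point has $y_i\vpi^{*\top}\vx_i<0$, then (ii) gives $\norm{\vb(t)}\to\infty$ while (i) keeps the parallel term bounded, so $y_i\vpi(t)^\top\vx_i\to-\infty$ and that summand of $\ellgd$ diverges. The identical argument, with $\pi$ replaced by the RR dataset and $\vpi^*$ by the RR optimal direction, handles $\Zrr$, establishing the ``furthermore'' claim. For the ``only if'' direction I argue contrapositively and here invoke \ref{assumption:full-rank-ss}: if $\Zpi$ is SC, full rank forces $U=\Span(\Xpibn)=\RR^d$, hence $U^\perp=\{\vzero\}$, $\vb(t)\equiv\vzero$, and $\vpi(t)=\va(t)$ stays bounded, so $\ellgd$ cannot diverge; thus divergence requires $\Zpi$ PLS or LS. If instead every GD point has $y_i\vpi^{*\top}\vx_i\ge0$, the strictly positive points send their summands to $0$, leaving only the tangent points $y_i\vpi^{*\top}\vx_i=0$ to control. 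The RR statement follows identically under \ref{assumption:full-rank-rr}.

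The main obstacle is precisely this tangent case in the ``only if'' direction, where $y_i\vpi(t)^\top\vx_i = \text{(bounded)} + \norm{\vb(t)}\cdot o(1)$ is an indeterminate $\infty\cdot 0$ form that directional convergence alone does not settle. I expect to close it by upgrading the convergence hypothesis to the sharper Ji--Telgarsky characterization available under full rank, namely that $\vpi(t)$ equals a bounded term plus a logarithmically growing multiple of $\vpi^*$ with a \emph{bounded} transverse residual; against a point orthogonal to $\vpi^*$ the growing component contributes nothing, the residual keeps $y_i\vpi(t)^\top\vx_i$ bounded below, and so that summand stays bounded. Combining the two directions gives the claimed equivalence for $\Zpi$, and the same reasoning on $\Zrr$ completes the analogous RR statement.
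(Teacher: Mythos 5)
Your skeleton is the same as the paper's: use \cref{lemma:infimize-alt} to pin down the structure of $\vpi^*$, observe that in the SC case \ref{assumption:full-rank-ss} forces $\Span(\Xpisc) = \RR^d$ so the transverse component vanishes and the iterates converge to a finite minimizer, and in the PLS/LS case observe that infimizing forces $\norm{\vpi(t)} \to \infty$, so a strict mistake $y_i\vpi^{*\top}\vx_i < 0$ on a GD point drives that summand of $\ellgd$ to infinity; the ``if'' direction indeed never uses the rank assumption. Your write-up is in fact more careful than the paper's on one point it leaves implicit: you justify boundedness of the $\Span(\Xpisc)$-component via coercivity of the SC block of $\ellpi$ before letting the growing transverse component dictate signs, a step the paper's two-line argument silently assumes.

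The one place you diverge is the tangent case $y_i\vpi^{*\top}\vx_i = 0$ in the ``only if'' direction, and here your instincts are better than your proposed fix. The paper dismisses this case with ``clearly, if there is no mistake \ldots the GD risk does not diverge,'' but as you note the summand is an indeterminate $\infty \cdot o(1)$ form, and directional convergence alone cannot settle it: take $\Zpi$ LS with every point at strictly positive margin along $\vpi^*$, and set $\vpi(t) = t\,\vpi^* - \sqrt{t}\,\vu$ with $\vu \in \Span(\Xpisc)^{\perp}$, $\vu \perp \vpi^*$. This sequence satisfies every hypothesis of the proposition and infimizes $\ellpi$, yet a GD point with $y_i \vx_i$ proportional to $\vu$ has $y_i\vpi(t)^\top\vx_i = -\sqrt{t}\,\norm{\vx_i} \to -\infty$, so its summand of $\ellgd$ diverges even though $y_i\vpi^{*\top}\vx_i = 0$. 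Consequently your plan to close the gap via the Ji--Telgarsky decomposition (bounded residual plus logarithmically growing multiple of $\vpi^*$) cannot work as stated: that characterization describes iterates of a specific algorithm, whereas the proposition quantifies over arbitrary infimizing sequences with only directional convergence of the projection, so invoking it amounts to strengthening the hypotheses. The honest resolutions are either to restrict the equivalence to GD points with $y_i\vpi^{*\top}\vx_i \neq 0$ (which suffices for every downstream use in the paper, where mistakes are strict) or to add a bounded-transverse-residual assumption explicitly; the paper's own proof elides this identically, so your proposal is no weaker than the published argument --- but the step you flagged is a real gap in both, not one your cited tool can fill.
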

\begin{proof}
First suppose \ref{assumption:full-rank-ss} holds. Then if $\Zpi$ is SC, \Cref{lemma:infimize-alt} implies that any optimal direction $\vpi^*$ must be orthogonal to all of $\RR^d$, so $\vpi^* = \vzero$. Hence the iterates $\vpi(t)$ converge to a finite optimum, which implies the GD risk cannot diverge. Note that if \ref{assumption:full-rank-ss} doesn't hold, then the only difference is that being orthogonal to $\Xpisc$ does not imply that $\vpi^* = \vzero$. 

Now suppose $\Zpi$ is PLS or LS. Regardless of whether \ref{assumption:full-rank-ss} holds, if $\vpi(t)$ infimizes $\ellpi$, we necessarily have $\norm{\vpi(t)}_2 \to +\infty$. Hence any mistake on $(\vx_i, y_i) \in \Zgd$ implies divergence. And clearly, if there is no mistake on any $(\vx_i, y_i)$ then the GD risk does not diverge. 

The same proof also goes through for $\Zrr$, so this proves the theorem. 
\end{proof}

At first glance, one might expect to be able to perfectly classify all the datapoints in the overparameterized regime. However, under the logistic risk, the optimal direction instead puts monochromatic batches \emph{on the decision boundary}; the following proposition formalizes this notion. 
\begin{proposition}\label{prop:overparam-takeaway}
Suppose \Cref{assumption:density}, $B>2$, and $d > (B-1)\frac{n}{B}$. Almost surely, for any $\pi \in \sS_n$, there exists $\vv \in \RR^d$ which satisfies \textnormal{(1)} for any non-monochromatic batch $\Zpi^j$ we have $\sgn(\vv^\top \Xpibn^j) = \Ypi^j$ and \textnormal{(2)} for any monochromatic batch $\Zpi^k$ we have $\vv^{\top} \Xpibn^k = \vzero^\top$. Furthermore, any optimal direction $\vpi^*$ for the logistic risk $\ellpi$ necessarily satisfies both \textnormal{(1)} and \textnormal{(2)}. The same conclusion holds for RR as well, with the requirement $d > (B-1)\binom{n}{B}$.
\end{proposition}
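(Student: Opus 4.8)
The plan is to prove existence of the claimed $\vv$ by a row-space computation, and then to pin down the separability decomposition of $\Zpi$ so that the statement about optimal directions falls out of \Cref{lemma:infimize-alt}. The single structural fact driving everything is that batch normalization forces the columns of each batch block $\Xpibn^j$ to sum to $\vzero$.

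For existence, first I would observe that this mean-zero property means every $\vb$ in the row space $\col(\Xpibn^\top)\subseteq\RR^n$ must sum to zero over each batch: writing $b_i = \vv^\top(\Xpibn)_{:,i}$, the entries of $\vb$ over batch $j$ sum to $\vv^\top(\Xpibn^j\vone)=\vv^\top\vzero=0$. Thus $\col(\Xpibn^\top)\subseteq U_\pi$, where $U_\pi$ is the subspace of $\RR^n$ of vectors whose entries sum to zero over each of the $m=n/B$ batches. I then match dimensions: $U_\pi$ is cut out by $m$ independent constraints (disjoint supports), so $\dim U_\pi = n-m = (B-1)m$, while in the regime $d>(B-1)\tfrac nB$, \Cref{prop:full-rank-alt} gives $\rank(\Xpibn)=\min\{d,(B-1)\tfrac nB\}=(B-1)m$ almost surely. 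Since $\dim\col(\Xpibn^\top)=\rank(\Xpibn)=(B-1)m=\dim U_\pi$ and one space contains the other, they coincide almost surely. This equality is what removes any cross-batch interference: it suffices to exhibit a single target $\vb\in U_\pi$ with the correct signs, and it is automatically realizable as $\vb=\Xpibn^\top\vv$. On a monochromatic batch I set the entries to $0$; on a non-monochromatic batch with $p\ge1$ positives and $q\ge1$ negatives I set the positive entries to $1/p$ and the negative entries to $-1/q$. Each block sums to zero, so $\vb\in U_\pi=\col(\Xpibn^\top)$, and any $\vv$ solving $\Xpibn^\top\vv=\vb$ satisfies both (1) and (2).

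For the ``furthermore'' part I would identify the separability decomposition as $\Sls=\{\text{non-monochromatic columns}\}$ and $\Ssc=\{\text{monochromatic columns}\}$. The vector $\vv$ constructed above is a single witness certifying that every non-monochromatic column belongs to the maximal linearly separable subset: it strictly classifies each such column while placing every monochromatic column exactly on the decision boundary, so for each non-monochromatic $(\vx_i,y_i)$ we have $y_i\vv^\top\vx_i>0$ and $y_j\vv^\top\vx_j\ge0$ for all $j$. Conversely, no monochromatic column lies in $\Sls$: if $(\vx_i,y_i)$ sits in, say, an all-positive batch and some $\vu$ satisfied $y_i\vu^\top\vx_i>0$ and $y_j\vu^\top\vx_j\ge0$ for all $j$, then summing $\vu^\top\vx_\ell\ge0$ over that batch gives $\vu^\top(\sum_\ell\vx_\ell)=\vu^\top\vzero=0$, forcing every term, in particular $\vu^\top\vx_i$, to vanish---a contradiction. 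Hence $\Ssc$ is exactly the set of monochromatic columns and $\Sls$ the non-monochromatic ones. Now \Cref{lemma:infimize-alt} applies: any optimal direction $\vpi^*$ for $\ellpi$ obeys $\vpi^{*\top}\vx=0$ for all $\vx\in\Span(\Xsc)$ and $y_i\vpi^{*\top}\vx_i>0$ for all $(\vx_i,y_i)\in\Sls$. Since $\Span(\Xsc)$ is spanned by the monochromatic columns, the first conclusion is precisely (2) and the second is precisely (1). The RR case is identical: the $\binom nB$ batch blocks of $\Xrrbn$ each sum to zero, \Cref{prop:full-rank-alt} yields rank $(B-1)\binom nB$ when $d>(B-1)\binom nB$, so again the row space equals the batch-sum-zero subspace and both the existence construction and the decomposition argument carry over verbatim to $\ellrr$.

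The main obstacle is the row-space identification $\col(\Xpibn^\top)=U_\pi$. The inclusion is immediate from mean-zero batches, but equality is exactly where the almost-sure maximal-rank guarantee of \Cref{prop:full-rank-alt}---itself nontrivial because of the nonlinearity of $\bn$---is indispensable, and it is what confines the whole phenomenon to the overparameterized regime $d>(B-1)\tfrac nB$. The only other delicate point is getting the separability decomposition exactly right; both inclusions are short once the mean-zero cancellation trick is used, and pinning down $\Ssc$ as the monochromatic columns is precisely what lets the optimal-direction characterization drop out of \Cref{lemma:infimize-alt}.
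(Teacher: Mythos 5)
Your proposal is correct, and it splits naturally into two halves that compare differently with the paper. For the existence of $\vv$, your row-space identification $\col(\Xpibn^\top)=U_\pi$ (mean-zero inclusion plus the dimension count from \cref{prop:full-rank-alt}) is essentially the same linear algebra as the paper's \cref{lemma:overparam-separation}, which prescribes arbitrary values on $B-1$ points of each batch and fixes the last sign using non-monochromaticity; your packaging is cleaner in that a single target vector $\vb$ handles conditions (1) and (2) simultaneously, whereas the paper defers the monochromatic-batch condition to a separate existence claim inside \cref{fact:interpolate}. For the ``furthermore'' part, your route is genuinely different: the paper lower-bounds the infimum of $\ellpi$ by the sum of per-batch infima and then uses first-order optimality and strict convexity of the logistic loss on each monochromatic batch (\cref{fact:interpolate}) to force $s_i=0$, while you instead pin down the separability decomposition of $\Zpi$ exactly---$\Sls$ equals the non-monochromatic columns (certified by your witness $\vv$) and $\Ssc$ equals the monochromatic columns (by the zero-sum cancellation: nonnegative margins summing to zero over a centered batch must all vanish)---and then invoke \cref{lemma:infimize-alt}. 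What your approach buys is a structural byproduct: an exact identification of $\Sls \sqcup \Ssc$ in the overparameterized regime, which ties this proposition directly into the separability-decomposition framework of \cref{prop:divergence-possibility} and \cref{prop:ss-structure}, and it avoids any appeal to differentiability or convexity properties of the specific loss. What the paper's approach buys is robustness to the definition of optimality used: by working with per-batch infima of the logistic risk it verifies directly that optimal directions achieve the batchwise optimality conditions, rather than routing through the greedy definition of $\Sls$ and the hypotheses of \cref{lemma:infimize-alt}. Both arguments rely on the same almost-sure rank guarantee, and your RR extension (disjoint column blocks of $\Xrrbn$, hence independent zero-sum constraints) is sound.
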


More precisely, our analysis is motivated by the fact that if $d \ge n$ and $\mX \in \RR^{d \times n}$ is full-rank, then given any $\vc \in \RR^n$ we can find some halfspace $\vv \in \RR^d$ such that $\vv^\top\mX = \vc^\top$. In our binary classification setting, linear separability is implied by $\sgn(\vc^\top) = \mY$. However, we cannot directly apply this fact because BN actually prevents $\Xpibn$ from being full-rank due to the mean zero constraint. However, it turns out that the following slightly weaker statement is true. We can always find a halfspace $\vv \in \RR^d$ that perfectly separates all the non-monochromatic batches. The following lemma proves this half of the proposition.
\begin{lemma}\label{lemma:overparam-separation}
Suppose \Cref{assumption:density}, $B>2$, and $d > (B-1)\frac{n}{B}$. Then almost surely there exists $\vv \in \RR^d$ such that for every non-monochromatic batch $\Zpi^j = (\Xpibn^j, \Ypi^j)$, we have $\sgn(\vv^\top \Xpibn^j) = \Ypi^j$. The same conclusion holds for RR as well, with the requirement $d > (B-1)\binom{n}{B}$.
\end{lemma}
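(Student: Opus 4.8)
The plan is to reduce the existence of a separating $\vv$ to the solvability of a single linear system, and then to certify solvability using the full-rank conclusion of \cref{prop:full-rank-alt}. First I would discard the monochromatic batches, which are irrelevant to the statement, and let $J \subseteq [m]$ index the non-monochromatic batches; write $\Xpibn^{(J)}$ for the $d \times B\abs{J}$ submatrix collecting their columns. For each $j \in J$ let $n_+^j, n_-^j \ge 1$ denote the number of positive and negative examples in batch $j$ (both are at least one precisely because the batch is non-monochromatic), and assemble a target vector $\vc$ whose entry for the $i$th column of batch $j$ equals $1/n_+^j$ when the label is $+1$ and $-1/n_-^j$ when it is $-1$. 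By construction $\sgn(\vc) = \Ypi$ on these columns and, crucially, the entries of $\vc$ restricted to each batch sum to zero. It then suffices to solve $(\Xpibn^{(J)})^\top \vv = \vc$, since any solution satisfies $\sgn(\vv^\top \Xpibn^j) = \Ypi^j$ exactly.

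Solvability of this system is equivalent to $\vc$ lying in the row space of $\Xpibn^{(J)}$, i.e.\ $\vc \perp \ker(\Xpibn^{(J)})$. The heart of the argument --- and the step I expect to be the main obstacle --- is to pin down this kernel exactly. Batch normalization forces the $B$ columns of each batch to sum to $\vzero$, so each batch indicator (the vector equal to $1$ on the columns of one batch and $0$ elsewhere) lies in $\ker(\Xpibn^{(J)})$, giving an $\abs{J}$-dimensional subspace. I would argue there are no further dependencies. In the overparameterized regime $d > (B-1)\frac{n}{B} \ge (B-1)\abs{J}$, \cref{prop:full-rank-alt} guarantees that deleting one column from each batch leaves $(B-1)\abs{J}$ linearly independent columns almost surely. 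Expressing the deleted column of each batch as minus the sum of the remaining $B-1$ columns, any null combination $\sum_{j,i}\alpha_i^j \vz_i^j = \vzero$ rewrites as a combination of these independent representatives, forcing the coefficients to be constant within each batch. Hence every null vector is a combination of batch indicators, so $\ker(\Xpibn^{(J)})$ is \emph{exactly} their $\abs{J}$-dimensional span, and by rank--nullity $\rank(\Xpibn^{(J)}) = (B-1)\abs{J}$.

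With the kernel identified the conclusion is immediate: $\vc$ is orthogonal to every batch indicator precisely because its entries sum to zero on each batch, so $\vc$ lies in $\ker(\Xpibn^{(J)})^\perp$, which is the row space of $\Xpibn^{(J)}$, and the system $(\Xpibn^{(J)})^\top \vv = \vc$ admits a solution $\vv$. This $\vv$ witnesses the lemma, and since the kernel characterization holds almost surely, so does the conclusion. Finally, the RR case is identical after replacing the SS dataset with the deduplicated $\Xrrbn$ and the bound $(B-1)\frac{n}{B}$ with $(B-1)\binom{n}{B}$: \cref{prop:full-rank-alt} supplies the analogous full-rank statement over the unique batches, the kernel argument goes through verbatim, and the same $\vc$ certifies solvability.
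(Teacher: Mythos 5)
Your proof is correct and follows essentially the same route as the paper's: both arguments rest on \cref{prop:full-rank-alt} (almost-sure linear independence of the columns remaining after deleting one point per batch) together with the per-batch mean-zero constraint imposed by $\bn$. The only difference is cosmetic—the paper prescribes values with correct signs on $B-1$ points of each batch and fixes the sign of the forced last point by inflating one magnitude, whereas you prescribe a balanced target vector $\vc$ on all $B$ points and certify solvability by showing the kernel of $\Xpibn^{(J)}$ is exactly the span of the batch indicators; the two are equivalent uses of the same facts.
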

\begin{proof}
We denote $\Xpibn^j = \bn(\Xpi^j) =  \mqty[\vx_1 & \cdots & \vx_B]$ and $\Ypi^j = \mqty[y_1 & \cdots & y_B]$. Since \Cref{assumption:density} holds, it also follows that \cref{prop:full-rank} holds. Hence, within each batch, any $B-1$ of the datapoints are linearly independent almost surely. This implies that we can find $\vv \in \RR^d$ such that for any batch $\Xpibn^j$ and $\vc \in \RR^{B-1}$, we have $\vv^\top \vx_i = c_i$ for all $i \in [B-1]$. In particular, we can pick $c_i$ such that $\sgn(c_i) = y_i$. 
Next, we show that $\vv$ can be picked such that $\sgn(\vv^\top \vx_B) = y_B$. The mean zero constraint enforces that $\vv^\top \vx_B = -\sum_{i=1}^{B-1} c_i$. But if the labels are not monochromatic, we can just increase the absolute value of one of the $c_i$'s so that $\sgn(-\sum_{i=1}^{B-1} c_i) = y_B$, as desired. 
\end{proof}

Hence, in the overparameterized regime minimizing the logistic risk will lead to a classifier which separates all the non-monochromatic batches. What happens to the monochromatic batches? It turns out that minimizing the logistic risk will lead to a classifier whose decision boundary \emph{contains} all of the monochromatic batches. 
\begin{lemma}\label{fact:interpolate}
Assuming $d > (B-1)\frac{n}{B}$, any optimal direction $\vv_* \in \RR^d$ for the logistic risk simultaneously puts all of the monochromatic batches on the decision boundary. More precisely, for any monochromatic batch $\Zpi^j = (\Xpibn^j, \Ypi^j)$ we have $\vv_*^\top \Xpibn^j = \vzero^\top$. Similarly, if $d > (B-1)\binom{n}{B}$, the same conclusion holds for the RR dataset. 
\end{lemma}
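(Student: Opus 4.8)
The plan is to reduce the claim to \cref{lemma:infimize-alt} by showing that every column of a monochromatic batch lies in the strongly convex part $\Ssc$ of the separability decomposition of $\Zpi$, and hence in $\Span(\Xsc)$. Once this is established, \cref{lemma:infimize-alt} immediately gives $\vv_*^\top \vx = 0$ for each such column, so a single optimal direction $\vv_*$ annihilates all monochromatic batches at once, which is exactly the ``simultaneous'' conclusion of the statement.

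The core observation is the mean-zero constraint enforced by $\bn$: for any batch $\Zpi^j = (\Xpibn^j, \Ypi^j)$ with normalized columns $\vx_1, \ldots, \vx_B$, we have $\sum_{i=1}^B \vx_i = \vzero$. First I would argue that no column of a monochromatic batch can belong to $\Sls$. Suppose for contradiction that some column $\vx_i$ of a batch with all labels equal to $+1$ lies in $\Sls$. By the greedy definition of the separability decomposition there exists $\vu$ with $\vu^\top \vx_i > 0$ and $y_\ell \vu^\top \vx_\ell \ge 0$ for every datapoint of $\Zpi$; restricting to the remaining columns of the same batch, which all carry label $+1$, gives $\vu^\top \vx_\ell \ge 0$ for all $\ell$ in the batch. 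Summing over the batch and using $\sum_\ell \vx_\ell = \vzero$ yields $0 = \vu^\top \sum_\ell \vx_\ell = \sum_\ell \vu^\top \vx_\ell \ge \vu^\top \vx_i > 0$, a contradiction. The all-negative case is identical after flipping signs. Therefore every column of every monochromatic batch is a column of $\Xsc$, hence an element of $\Span(\Xsc)$.

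Applying \cref{lemma:infimize-alt} to the optimal direction $\vv_*$ then gives $\vv_*^\top \vx = 0$ for all $\vx \in \Span(\Xsc)$; in particular $\vv_*^\top \Xpibn^j = \vzero^\top$ for every monochromatic batch $\Zpi^j$, as claimed. The RR case is verbatim the same after replacing $\Zpi,\Xpibn$ by $\Zrr,\Xrrbn$ and using the hypothesis $d > (B-1)\binom{n}{B}$ together with the corresponding separability decomposition. The overparameterization hypothesis is not needed for this centering argument itself; it enters only through the surrounding context, since by \cref{lemma:overparam-separation} it guarantees that the non-monochromatic batches are separable, so that $\Zpi$ is PLS or LS and the notion of optimal direction from \cref{def:optimal} is well-posed and nontrivial.

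The one point that requires care is the correct use of the greedy definition of $\Sls$: the contradiction hinges on the witness $\vu$ weakly classifying \emph{every} point, and in particular the remaining points of the same monochromatic batch, so that the mean-zero identity can be converted into a strict sign contradiction. Beyond this, no genericity, full-rank, or distributional assumption on $\mX$ is used for the key step, which is precisely why the conclusion holds for \emph{every} optimal direction and follows purely from the fact that $\bn$ centers each batch.
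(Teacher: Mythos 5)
Your proof is correct, but it takes a genuinely different route from the paper's. The paper argues directly on the logistic risk: it lower-bounds the total risk by the sum of per-batch infima, writes the risk of a monochromatic batch in terms of the scores $s_i = \vv^\top \vx_i$ subject to the centering constraint $\sum_i s_i = 0$, and uses the first-order optimality condition together with strict convexity to conclude that the unique per-batch minimizer is $s_i = 0$ for all $i$; the overparameterization hypothesis $d > (B-1)\tfrac{n}{B}$ (via \cref{prop:full-rank} and \cref{lemma:overparam-separation}) is what lets these per-batch minima be realized simultaneously, so that any optimal direction must attain them. You instead reduce to \cref{lemma:infimize-alt} by showing that every column of a monochromatic batch lies in $\Ssc$: if such a column were in $\Sls$, its greedy witness $\vu$ would weakly classify the rest of the batch, and summing the scores against the mean-zero constraint $\sum_\ell \vx_\ell = \vzero$ gives $0 \ge \vu^\top \vx_i > 0$, a contradiction. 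This is essentially the batch-level analogue of the convex-hull argument in \cref{lemma:batch-separability}, and it buys you something real: your proof needs no overparameterization, no density assumption, and no rank facts, so it establishes the "any optimal direction" claim for all $d$, and it is arguably tighter on that quantifier than the paper's write-up, which reads more as an existence argument. What the paper's computation buys in exchange is the constructive half used in \cref{prop:overparam-takeaway}: combined with \cref{lemma:overparam-separation}, it exhibits a direction that simultaneously separates the non-monochromatic batches and zeroes out the monochromatic ones, i.e., it shows the per-batch infima are jointly attainable in the overparameterized regime, which your reduction alone does not provide.
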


\begin{proof}
Again, we denote the monochromatic batch by $\Zpi^j$ by $\Xpibn^j = \mqty[\vx_1 & \cdots & \vx_B]$ and $\Ypi^j = \mqty[y_1 & \cdots & y_B]$. 
The logistic loss on a single input $\vx_i$ for classifier $\vv \in \RR^d$ is $\ell(\vv^\top \vx_i, y_i) = -\log \rho(y_i\vv^\top \vx_i)$, where $\rho(t) = 1/(1+\exp(-t))$ is the sigmoid function. WLOG suppose that $y_i = 1$ for all $i \in [B]$. Hence the minibatch risk is
\[
- \sum_{i=1}^B \log \rho(\vv^\top \vx_i) = - \sum_{i=1}^{B-1} \log \rho(\vv^\top \vx_i) - \log \rho \left (-\vv^\top \sum_{i=1}^{B-1} \vx_i \right ).\] For each $i$ we can look at the first order optimality condition for $s_i = \vv^\top \vx_i$. This yields $\rho(s_i) = \rho(-\sum_{i=1}^{B-1} s_i)$.

Note that this is satisfied when $s_i = 0$ for all $i \in [B]$, and by strict convexity this is the unique minimizer. And because of overparameterization, we can find a $\vv_*$ that will satisfy $s_i = 0$ for each monochromatic batch, i.e., the batch entirely lies on the decision boundary of the classifier defined by $\vv_*$. 

The proof carries over immediately to the RR setting, except in that setting, being overparameterized means $d > (B-1)\binom{n}{B}$. Hence the lemma is proved.
\end{proof}

The geometric interpretation of \cref{fact:interpolate} is that in the overparameterized regime, if the dataset contains any monochromatic batches, then any optimal direction $\vv_*$ must be orthogonal to the subspace spanned by the monochromatic batches. Note, however, that the definition of overparameterized here depends on whether we look at $\Zpi$ or $\Zrr$. In the former case, overparameterized means $d > (B-1)\frac{n}{B}$, whereas in the latter case, overparameterized means $d > (B-1)\binom{n}{B}$. This insight motivates the construction of the toy datasets in \cref{example:clf-divergence}. We conclude our characterization of the optimal direction in the overparameterized regime by proving \cref{prop:overparam-takeaway}.

\begin{proof}[Proof of \cref{prop:overparam-takeaway}]
\Cref{prop:full-rank} implies that for $d > (B-1)\frac{n}{B}$ and assuming \cref{assumption:density}, almost surely we have $\rank(\Xpibn) = (B-1)\frac{n}{B}$. We can lower bound the infimum of the SS logistic risk by the sum of the infima of the mini-batch SS logistic risks. Combining \cref{lemma:overparam-separation,fact:interpolate}, the claim follows.
The same argument holds for $\Zrr$ assuming $d > (B-1)\binom{n}{B}$. 
\end{proof}

\section{Proofs of technical lemmas}\label{sec:cls-proofs}

In this section we spell out the formal details of our guarantees for how permutations interact with BN. In \Cref{app:monochromatic}, we show that given a batch size $B = o(\log n)$ and a constant number of classes $K$, then w.h.p.\ over the choice of $\pi$ there exists (many) monochromatic batches. In other words, for small batch sizes there are many batches consisting solely of positive or negative examples (in case of $K = 2$). Conversely, we show that above this threshold such monochromatic batches do not appear w.h.p.. In \Cref{sec:concentration}, we show that there is a commensurate threshold above which the batch statistics themselves concentrate in the without-replacement setting. To do so we will appeal to the recent results of \citet{Bardenet2015ConcentrationIF} on the concentration of without-replacement estimators. Finally, in \Cref{sec:full-rank}, we prove that assuming the original features were drawn from a density, the batch normalized features have maximal rank almost surely. In other words, batch normalization preserves genericity of the original features modulo the mean zero constraint inside each batch.

\subsection{Presence of monochromatic batches}\label{app:monochromatic}
In this section we formally prove \cref{cor:ss-sc-whp} via standard concentration arguments. One of the potential pitfalls of any mini-batch based algorithm is that its batches may not be representative of the entire dataset. More precisely, let's suppose we have a dataset $\mZ$ with labels coming from $K$ classes. Suppose furthermore that the dataset is balanced --- each class contains $n = Bm$ examples (here $K$ is a constant). For any batch size $B$ which is not sufficiently large, i.e. $B = o(\log n)$, then w.h.p.\ over the permutation $\pi$ we will have $\Theta(n)$ batches which are \emph{monochromatic} --- batches which only consist of examples in the same class. This is closely related to the classic coupon collector problem, but we restate the guarantees here for the sake of completeness.

To prove this claim, we appeal to standard martingale concentration inequalities. Consider the batches $\Zpi^i$ for $i \in [Km]$ and define the indicator variables $T_i = \1[\Zpi^i \text{ is monochromatic}]$, and set $T \triangleq \sum_i T_i$, the total number of monochromatic batches.

Next, note that the sequence 
\[\EE[T], \EE[T|T_1], \EE[T|T_1,T_2], \ldots, \EE[T|T_1,T_2,\ldots,T_{Km}]\]
is a Doob martingale. Indeed the martingale property follows from the tower property:
\[\EE[\EE[T|T_{1:k}]|\EE[T|T_{1:(k-1)}]] = \EE[T|T_{1:(k-1)}].\]
Note that $\EE[T|T_1,\ldots, T_{Km}] = T$ and $\EE[T_1] = K\binom{n}{B}/\binom{Kn}{B}$. Hence by linearity $\EE[T] = \frac{K^2n}{B}\frac{\binom{n}{B}}{\binom{Kn}{B}}$. 

Let us now show that the martingale increments $\EE[T|T_{1:k}] - \EE[T|T_{1:(k-1)}]$ are bounded a.s.. In the worst case, the $k$th batch can only decrease this conditional expectation by at most $K$, since for any fixed class we can only remove at most one monochromatic batch from it. Hence the total number of potential monochromatic batches left can decrease by at most $K$. This worst case guarantee still holds under conditional expectation, so $\abs{\EE[T|T_{1:k}] - \EE[T|T_{1:(k-1)}]} \le K$ a.s.. 

Azuma-Hoeffding then tells us that for any $\epsilon > 0$ we have 
\[
\PP[\abs{T - \EE[T]} \ge \epsilon] \le 2\exp\qty(-\frac{B\epsilon^2}{2nK^3}). 
\]

This gives us the following fact which we use in both the regression and classification setting.
\begin{fact}\label{fact:whp-monochromatic}
For any $\delta \in (0,1)$, and a constant number of classes $K$ with $n$ datapoints each, we have with probability at least $1-\delta$ that the total number of monochromatic batches $T$ satisfies
\[
\abs{T - \frac{K^2n}{B}\frac{\binom{n}{B}}{\binom{Kn}{B}}} \le \sqrt{\frac{2nK^3\log(2/\delta)}{B}}.
\]
\end{fact}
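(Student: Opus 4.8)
The plan is to read off Fact~\ref{fact:whp-monochromatic} as the quantitative tail version of the Azuma--Hoeffding estimate assembled in the discussion above, so the real work is just organizing three ingredients---the expectation, the martingale increment bound, and the inversion---into the stated high-probability form. Since the Doob martingale $M_k = \EE[T \mid T_{1:k}]$ and the tail inequality $\PP[\abs{T - \EE[T]} \ge \epsilon] \le 2\exp\qty(-\frac{B\epsilon^2}{2nK^3})$ are already in place, the only genuinely new step for the Fact itself is choosing $\epsilon$ appropriately.

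First I would pin down the counting. With $K$ balanced classes of $n$ points each and batch size $B$, there are $Km = Kn/B$ batches in total (with $m = n/B$), and $T = \sum_{i=1}^{Km} T_i$ where $T_i = \1[\Zpi^i \text{ is monochromatic}]$. The probability that a fixed batch is monochromatic is $\EE[T_i] = K\binom{n}{B}/\binom{Kn}{B}$---choose the class, then force all $B$ slots into that class's $n$ points---so by linearity $\EE[T] = Km \cdot K\binom{n}{B}/\binom{Kn}{B} = \frac{K^2 n}{B}\frac{\binom{n}{B}}{\binom{Kn}{B}}$, which is exactly the centering constant appearing in the statement.

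Next I would verify the bounded-difference property across the $Km$ increments of the Doob martingale: conditioning on the content of one more batch changes the conditional expectation of the \emph{total} monochromatic count by at most $K$, because revealing a batch can eliminate at most one not-yet-formed monochromatic batch \emph{per class}, and this bound survives under conditional expectation. With $\abs{M_k - M_{k-1}} \le K$ a.s.\ and $\sum_k K^2 = K^2 \cdot Kn/B = K^3 n/B$, Azuma--Hoeffding applied to $M_0 = \EE[T]$ and $M_{Km} = T$ reproduces the displayed tail bound $2\exp\qty(-\frac{B\epsilon^2}{2nK^3})$. Finally I would invert: setting the right-hand side equal to $\delta$ and solving yields $\epsilon = \sqrt{2nK^3\log(2/\delta)/B}$, so with probability at least $1-\delta$ we have $\abs{T - \EE[T]} \le \sqrt{2nK^3\log(2/\delta)/B}$, and substituting the value of $\EE[T]$ gives the claim.

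The expectation computation and the inversion are routine, so I expect the main obstacle to be the increment bound: one must argue carefully that the without-replacement dependence among the $T_i$ does not inflate the effective Lipschitz constant beyond $K$. If the informal ``at most one per class'' argument feels unsatisfying, the fallback is to run bounded differences directly on the permutation $\pi$ (exposing one batch assignment at a time) and check that swapping the points assigned to a single batch perturbs $T$ by $O(K)$, which gives the same constant in a more mechanical way.
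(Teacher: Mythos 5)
Your proposal matches the paper's proof essentially step for step: the same Doob martingale $\EE[T\mid T_{1:k}]$, the same per-increment bound of $K$ (justified, as in the paper, by noting that revealing a batch removes at most one potential monochromatic batch per class), the same Azuma--Hoeffding tail $2\exp\qty(-\frac{B\epsilon^2}{2nK^3})$, and the same inversion $\epsilon = \sqrt{2nK^3\log(2/\delta)/B}$ with the centering $\EE[T] = \frac{K^2n}{B}\frac{\binom{n}{B}}{\binom{Kn}{B}}$. The argument is correct as written; your fallback of exposing batch assignments directly is unnecessary but harmless.
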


For $K = O(1)$, we note that the above inequality guarantees that we can get within $O(\sqrt{n\log n})$ of the true expectation with at most $1/\poly(n)$ failure probability. For the toy regression dataset in \Cref{sec:opt-analysis-reg}, we have $K = 2$ and $B=2$, so we can use \cref{fact:whp-monochromatic} to deduce that there will be $\Theta(n)$ monochromatic batches with high probability. We can also use \cref{fact:whp-monochromatic} to prove \cref{cor:ss-sc-whp}.
\begin{proof}[Proof of \cref{cor:ss-sc-whp}]
In the classification setting, we have $K = 2$ classes (positive and negative). Using the folklore inequalities 
\[
\qty(\frac{n}{k})^k \le \binom{n}{k} \le \qty(\frac{en}{k})^k,
\]
we can deduce the lower bound on the expectation of $T$:
\[
\EE[T] \ge \frac{4n}{B(2e)^B}. 
\]
The lower bound is $\Omega(n^{1-\epsilon})$ for any $\epsilon > 0$ whenever $B = o(\log n)$, so indeed when when $B = o(\log n)$ we have a positive number of monochromatic batches w.h.p. 

Let us now upper bound the probability of obtaining any monochromatic batches. We have
\[
\PP[T_1 = 1] = 2\frac{\binom{n}{B}}{\binom{2n}{B}} \le 2\frac{\prod_{k=0}^{B-1} (n-k)}{\prod_{k=0}^{B-1}(2n-k)} \le  2\frac{\prod_{k=0}^{B-1} (n-k)}{\prod_{k=0}^{B-1}2(n-k)} \le 2^{-B+1}.
\]
Hence by union bound the probability that $T > 0$ is upper bounded by $\frac{4n}{B \cdot 2^B}$. This is $1/\poly(n)$ for some $B = \Omega(\log n)$, so indeed when $B = \Omega(\log n)$ we have no monochromatic batches with probability at least $1 - 1/\poly(n)$. This concludes the proof. 
\end{proof}

\subsection{Concentration of batch statistics for without-replacement sampling}\label{sec:concentration}

In the following, we are generating $B$ samples of $X_i \in \RR$ without replacement from a population of size $n$, contained in $[a, b]$ a.s.. We let $\mu$ be the population mean and $\sigma^2$ be the population variance. 
\begin{lemma}[Corollary 2.5 in \citep{Bardenet2015ConcentrationIF}]\label{lemma:mean-upper-tail}
Let $\hat{\mu}_B$ denote the sample mean for a sample of size $B$ drawn without replacement from the overall population. 
For all $B \le n$ and $\delta \in (0, 1)$ we have with probability at least $1-\delta$ that
\[
\abs{\hat{\mu}_B - \mu} \le (b-a)\sqrt{\frac{\log(2/\delta)}{B}}.
\]
\end{lemma}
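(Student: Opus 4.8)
The statement is a direct specialization of a standard concentration inequality for sampling without replacement, so the plan is to reduce it to the classical i.i.d.\ Hoeffding bound rather than prove it from scratch. The key device is Hoeffding's reduction lemma: if $X_1, \ldots, X_B$ are drawn without replacement from the finite population $\qty{c_1, \ldots, c_n} \subseteq [a,b]$, and $\tilde X_1, \ldots, \tilde X_B$ are drawn \emph{with} replacement (i.e., i.i.d.\ uniform on the population), then for every convex $\phi$ one has $\EE[\phi(\sum_i X_i)] \le \EE[\phi(\sum_i \tilde X_i)]$. This convex-order domination is classical (Hoeffding, 1963; see also Serfling, 1974) and is exactly the comparison underlying Corollary~2.5 of \citet{Bardenet2015ConcentrationIF}.

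First I would apply the reduction with $\phi(s) = e^{\lambda s}$ for $\lambda > 0$ to obtain the moment generating function domination
\[
\EE\!\left[ e^{\lambda (\hat\mu_B - \mu)} \right] \le \EE\!\left[ e^{\lambda (\tilde\mu_B - \mu)} \right],
\]
where $\hat\mu_B$ and $\tilde\mu_B$ are the without- and with-replacement sample means and $\mu = \frac1n \sum_k c_k$ is the shared population mean. Because the $\tilde X_i$ are i.i.d., bounded in $[a,b]$, and centered at $\mu$, Hoeffding's lemma gives $\EE[e^{(\lambda/B)(\tilde X_i - \mu)}] \le e^{\lambda^2 (b-a)^2 / (8B^2)}$, and independence multiplies these across $i$, yielding $\EE[e^{\lambda(\tilde\mu_B - \mu)}] \le e^{\lambda^2 (b-a)^2 / (8B)}$.

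Next I would run the standard Chernoff argument: Markov's inequality on $e^{\lambda(\hat\mu_B - \mu)}$ combined with the MGF bound gives $\PP[\hat\mu_B - \mu \ge t] \le \exp(\lambda^2 (b-a)^2/(8B) - \lambda t)$, and optimizing over $\lambda$ produces the sub-Gaussian tail $\PP[\hat\mu_B - \mu \ge t] \le \exp(-2Bt^2/(b-a)^2)$. The same argument applied to $-\hat\mu_B$ (using convexity of $s \mapsto e^{-\lambda s}$ in the reduction) controls the lower tail, and a union bound gives $\PP[\abs{\hat\mu_B - \mu} \ge t] \le 2\exp(-2Bt^2/(b-a)^2)$. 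Setting the right-hand side to $\delta$ and solving for $t$ yields $t = (b-a)\sqrt{\log(2/\delta)/(2B)}$, which is in fact slightly stronger than the claimed bound $(b-a)\sqrt{\log(2/\delta)/B}$; the claimed form then follows by loosening the constant.

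The only genuine content is the reduction lemma, and since it is classical and already packaged in the cited corollary, there is no real obstacle here — the proof is essentially an invocation of \citet{Bardenet2015ConcentrationIF} after checking that the population lies in $[a,b]$ almost surely, as guaranteed by the boundedness hypothesis. If one insisted on a self-contained argument, the delicate step would be establishing the convex-order domination of without-replacement by with-replacement sampling, which proceeds via a symmetrization/exchangeability argument rather than by direct computation.
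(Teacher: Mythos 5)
Your proposal is correct, and it is worth noting that the paper itself offers no proof of this lemma at all: it is imported verbatim (in a slightly weakened, two-sided form) from \citet{Bardenet2015ConcentrationIF}, so the ``paper's proof'' is a citation. What you supply is a genuine, self-contained alternative: Hoeffding's 1963 convex-order reduction (without-replacement sums are dominated by with-replacement sums under every convex test function, in particular $s \mapsto e^{\pm\lambda s}$), followed by the standard MGF/Chernoff computation and a union bound over the two tails. This route is sound, and as you observe it actually yields the stronger constant $(b-a)\sqrt{\log(2/\delta)/(2B)}$, so loosening to the stated $(b-a)\sqrt{\log(2/\delta)/B}$ is immediate. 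The trade-off relative to the cited corollary is instructive: Bardenet and Maillard derive their Corollary 2.5 from a Hoeffding--Serfling reverse-martingale argument, which retains a finite-population correction factor of roughly $1 - (B-1)/n$ inside the square root and is therefore sharper when the batch $B$ is a nontrivial fraction of $n$; your reduction to i.i.d.\ Hoeffding discards that correction but is more elementary and more than suffices for the bound as stated and as used in \cref{prop:concentration-features}. One minor point of care you handled correctly: the reduction must be applied separately to $e^{\lambda s}$ and $e^{-\lambda s}$ (both convex) to get both tails, since the domination is stated for convex functions of the sum, not for absolute deviations directly.
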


Similarly, they prove the following result about concentration of the empirical variance
\begin{lemma}[Lemma 4.1 in \citep{Bardenet2015ConcentrationIF}]\label{lemma:variance-upper-tail}
Let $\hat{\sigma}_B^2 \triangleq \frac{1}{B} \sum_{i=1}^B (X_i - \hat{\mu}_B)^2$ be the (biased) empirical variance estimator and $\hat{\sigma}_B \triangleq \sqrt{\hat{\sigma}_B^2}$. Then for all $\delta \in (0,1)$ we have with probability at least $1-\delta$ that 
\[
\hat{\sigma}_B \ge \sigma - 3(b-a)\sqrt{\frac{\log(3/\delta)}{2B}}.
\]
\end{lemma}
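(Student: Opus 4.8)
Since the statement is quoted essentially verbatim as Lemma~4.1 of \citet{Bardenet2015ConcentrationIF}, the cleanest route is simply to invoke that reference; what follows is how I would reconstruct it to keep the excerpt self-contained. The plan is to reduce the lower tail of $\hat\sigma_B$ to two \emph{scalar} without-replacement concentration events and then pass through the square root carefully. The starting point is the bias--variance identity for the biased empirical variance: writing $Z_i \triangleq (X_i-\mu)^2 \in [0,(b-a)^2]$ and $W \triangleq \tfrac1B\sum_{i=1}^B Z_i$, one has
\[
\hat\sigma_B^2 \;=\; W - (\hat\mu_B - \mu)^2 .
\]
The population mean of the $Z_i$ is exactly $\sigma^2$, so it remains to control $W$ from below and $(\hat\mu_B-\mu)^2$ from above.

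The term $(\hat\mu_B-\mu)^2$ is immediate from \cref{lemma:mean-upper-tail}: with probability $1-\delta$ it is at most $(b-a)^2\log(2/\delta)/B$, which is $O\big((b-a)^2\log(1/\delta)/B\big)$. The crucial point is that a \emph{Hoeffding}-type bound on $W$ is too weak: it only yields $W\ge \sigma^2 - O\big((b-a)^2\sqrt{\log(1/\delta)/B}\big)$, and taking a square root of ``$\sigma^2$ minus a $B^{-1/2}$ quantity'' degrades the rate to $B^{-1/4}$. To recover the sharp $B^{-1/2}$ rate I would instead apply a \emph{variance-sensitive} (Bernstein--Serfling) without-replacement inequality, also available in \citet{Bardenet2015ConcentrationIF}, to the bounded variables $Z_i$. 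The key estimate is $\Var(Z_i)\le (b-a)^2\,\EE[(X-\mu)^2] = (b-a)^2\sigma^2$, because $(X-\mu)^2 \le (b-a)^2$ pointwise. This produces a leading deviation term proportional to $\sigma$ rather than to a constant:
\[
W \;\ge\; \sigma^2 \;-\; (b-a)\,\sigma\sqrt{\tfrac{2\log(1/\delta)}{B}} \;-\; O\!\Big(\tfrac{(b-a)^2\log(1/\delta)}{B}\Big).
\]

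Combining the two events by a union bound (the source of the $\log(3/\delta)$ in the statement), I would write $\hat\sigma_B^2 \ge \sigma^2 - 2\sigma r - s$ with $r = \Theta\big((b-a)\sqrt{\log(1/\delta)/B}\big)$ and $s = O\big((b-a)^2\log(1/\delta)/B\big) = O(r^2)$, complete the square as $\sigma^2 - 2\sigma r = (\sigma-r)^2 - r^2$, and use $\sqrt{x^2 - y}\ge x - \sqrt{y}$ for $x\ge 0$, $x^2\ge y\ge 0$. This gives $\hat\sigma_B \ge (\sigma-r) - \sqrt{r^2+s} = \sigma - O\big((b-a)\sqrt{\log(1/\delta)/B}\big)$, and tracking constants (together with the small-$\sigma$ case, where $\sigma-r$ may be negative and the bound holds trivially) yields exactly the coefficient $3$ and the $\log(3/\delta)$ in the claim. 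The main obstacle is precisely the rate: the naive path through $\hat\sigma_B^2$ with a Hoeffding bound loses a square root, so the proof hinges on the variance bound $\Var(Z_i)\le (b-a)^2\sigma^2$ and the completing-the-square step that lets the extra factor of $\sigma$ cancel when one returns to $\hat\sigma_B$. Everything else is the bookkeeping of porting the classical Bernstein argument to the without-replacement regime, for which the martingale/Serfling machinery of \citet{Bardenet2015ConcentrationIF}---of the same flavor as the Doob-martingale argument already used for \cref{fact:whp-monochromatic}---supplies the needed tail bounds.
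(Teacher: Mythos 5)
Your primary move---invoking Lemma~4.1 of \citet{Bardenet2015ConcentrationIF}---is exactly what the paper does: this lemma is imported verbatim and the paper supplies no proof of its own, so on that level you match. Your reconstruction, however, takes a genuinely different route from the proof in the reference (and from the style this paper adopts for the companion upper tail, \cref{lemma:variance-lower-tail}). You decompose $\hat{\sigma}_B^2 = \frac{1}{B}\sum_{i=1}^B (X_i-\mu)^2 - (\hat{\mu}_B-\mu)^2$, control the mean term via \cref{lemma:mean-upper-tail}, and control the first term with a Bernstein--Serfling inequality applied to $Z_i=(X_i-\mu)^2$, using $\Var(Z_i)\le \EE[(X-\mu)^4]\le (b-a)^2\sigma^2$ so the leading deviation carries a factor of $\sigma$ that cancels when you complete the square; this is sound, and you correctly diagnose the key pitfall, namely that a plain Hoeffding--Serfling bound on the variance would degrade the rate to $B^{-1/4}$ after the square root. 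The reference instead avoids Bernstein altogether by exploiting \emph{self-bounded} random variables: taking $Z$ proportional to $\frac{B}{(b-a)^2}\tilde{V}_B$, the self-bounding lower tail $\PP[Z \le \EE Z - t]\le \exp(-t^2/(2\EE Z))$ has deviation automatically proportional to $\sqrt{\EE Z}\propto \sigma$, so the square root of $Z$ (hence $\hat{\sigma}_B$) enjoys a sub-Gaussian lower tail at scale $(b-a)/\sqrt{B}$ with clean constants---visibly the same device the present paper uses to prove \cref{lemma:variance-lower-tail}. What your route buys is modularity (any variance-sensitive without-replacement inequality suffices, and the argument is the familiar empirical-Bernstein template); what it loses is the constants: after the union bound and the Bernstein correction term you certainly obtain a bound of the stated \emph{form} $\sigma - O\bigl((b-a)\sqrt{\log(1/\delta)/B}\bigr)$, but your assertion that the bookkeeping lands exactly on the coefficient $3$ and on $\log(3/\delta)$ is asserted rather than checked, whereas the self-bounding route of \citet{Bardenet2015ConcentrationIF} yields those constants directly. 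Since the lemma is quoted with its constants from the reference, this gap is cosmetic rather than substantive, but if you intend the reconstruction to stand alone you should either verify the constants or state the conclusion with an unspecified absolute constant.
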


We now prove the other side of this concentration inequality with a quick application of \citep[Theorem 1]{maurer2006concentration}, following the same notation as in \citet{Bardenet2015ConcentrationIF}. 
\begin{lemma}\label{lemma:variance-lower-tail}
For all $\delta \in (0,1)$ we have with probability at least $1-\delta$ that
\[
\hat{\sigma}_B \le \sigma + (b-a)\sqrt{\frac{\log(1/\delta)}{2B}}.
\]
\end{lemma}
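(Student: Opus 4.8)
The plan is to realize $\hat\sigma_B$ as a Lipschitz function of the sampled vector, apply the sharp self-bounding concentration inequality of \citet[Theorem 1]{maurer2006concentration} to control its fluctuation around its mean, and then bound the mean by $\sigma$. First I would collect the sample into a vector $\vv = (X_1, \ldots, X_B) \in [a,b]^B$ and observe that
\[
\hat\sigma_B = \tfrac{1}{\sqrt B}\,\norm{\mP \vv}_2, \qquad \mP \triangleq \mI - \tfrac{1}{B}\vone\vone^\top,
\]
i.e.\ $\hat\sigma_B$ is the Euclidean norm of the centering projection $\mP$ applied to $\vv$, rescaled by $1/\sqrt B$. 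Since $\mP$ is an orthogonal projection (operator norm $1$) and the Euclidean norm is $1$-Lipschitz, the map $\vv \mapsto \hat\sigma_B$ is $\tfrac{1}{\sqrt B}$-Lipschitz in the Euclidean metric; equivalently, changing any single coordinate over $[a,b]$ perturbs $\hat\sigma_B$ by at most $(b-a)/\sqrt B$.

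This Lipschitz (self-bounding) structure is exactly the hypothesis of \citet[Theorem 1]{maurer2006concentration}, which---applied in the without-replacement regime following \citet{Bardenet2015ConcentrationIF}---yields a one-sided deviation bound of the form $\PP[\hat\sigma_B - \EE[\hat\sigma_B] > t] \le \exp\!\big(-\tfrac{2Bt^2}{(b-a)^2}\big)$. Setting the right-hand side equal to $\delta$ and solving for $t$ gives, with probability at least $1-\delta$,
\[
\hat\sigma_B \le \EE[\hat\sigma_B] + (b-a)\sqrt{\tfrac{\log(1/\delta)}{2B}}.
\]

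It then remains to replace $\EE[\hat\sigma_B]$ by the population standard deviation $\sigma$. By Jensen's inequality (concavity of $\sqrt{\cdot}$), $\EE[\hat\sigma_B] = \EE[\sqrt{\hat\sigma_B^2}] \le \sqrt{\EE[\hat\sigma_B^2]}$, and a direct computation of the expectation of the biased empirical variance under simple random sampling without replacement gives $\EE[\hat\sigma_B^2] = \tfrac{B-1}{B}\cdot\tfrac{n}{n-1}\,\sigma^2$, which is at most $\sigma^2$ precisely because $B \le n$. Hence $\EE[\hat\sigma_B] \le \sigma$, and substituting this into the previous display completes the proof.

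The main obstacle is obtaining the sharp $1/\sqrt{2B}$ rate rather than a vacuous one: a naive bounded-differences (McDiarmid) argument with per-coordinate constant $(b-a)/\sqrt B$ produces $\sum_i c_i^2 = (b-a)^2$ and thus \emph{no} decay in $B$, so it is essential to invoke the refined self-bounding bound of \citet{maurer2006concentration} to recover the variance-type rate. A secondary subtlety I would take care with is that the samples are drawn without replacement, so the entropy-method machinery underlying Theorem 1 must be transported to this dependent sampling scheme exactly as set up in \citet{Bardenet2015ConcentrationIF}; this is also what keeps the constant in the clean form $(b-a)\sqrt{\log(1/\delta)/(2B)}$, matching the companion bounds \cref{lemma:mean-upper-tail,lemma:variance-upper-tail}.
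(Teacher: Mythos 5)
Your overall architecture (concentrate $\hat\sigma_B$ around its mean, then bound $\EE[\hat\sigma_B]\le\sigma$) is coherent, and your mean step is fine: the identity $\EE[\hat\sigma_B^2]=\frac{B-1}{B}\cdot\frac{n}{n-1}\sigma^2\le\sigma^2$ for $B\le n$ plus Jensen is correct. But the concentration step has a genuine gap, in two places. First, Euclidean Lipschitzness is \emph{not} the hypothesis of \citet[Theorem 1]{maurer2006concentration}: that theorem concerns self-bounded random variables, i.e.\ a bound on $\sum_k (Z-Z_k)^2$ in terms of $Z$ itself, which is a different structure from ``$1/\sqrt{B}$-Lipschitz in $\ell_2$.'' The tools that do exploit Euclidean Lipschitzness (Talagrand's convex-distance inequality, or the entropy-method bound for separately convex Lipschitz functions) would give you $\PP[\hat\sigma_B - \EE\hat\sigma_B > t] \le \exp(-Bt^2/(2(b-a)^2))$, i.e.\ a deviation $(b-a)\sqrt{2\log(1/\delta)/B}$ --- a factor of $2$ larger than the lemma's $(b-a)\sqrt{\log(1/\delta)/(2B)}$ --- and, crucially, they require \emph{independent} coordinates. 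Second, the transport to without-replacement sampling cannot be cited generically: Hoeffding's reduction applies to convex functions of the sample \emph{sum}, and the machinery in \citet{Bardenet2015ConcentrationIF} is statistic-specific (Serfling-type arguments for means, and a bespoke self-bounding construction for the variance), so there is no off-the-shelf result there giving sub-Gaussian concentration of an arbitrary convex Lipschitz function of a without-replacement sample vector. ``Exactly as set up in Bardenet--Maillard'' is doing unproved work.

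The paper's proof avoids both issues by never concentrating $\hat\sigma_B$ (or appealing to $\EE[\hat\sigma_B]$) at all. It works with the second moment about the \emph{true} mean, $V_B = \frac{1}{B}\sum_{i=1}^B (X_i'-\mu)^2$, whose expectation is exactly $\sigma^2$ under without-replacement sampling, and applies Maurer's theorem to the genuinely self-bounded variable $Z = \frac{B}{(b-a)^2}\tilde V_B$ (following the proof of Lemma 3.3 in \citet{Bardenet2015ConcentrationIF}, which is built precisely for this statistic). This yields a \emph{variance-scaled} tail, $\PP\bigl[V_B - \sigma^2 \ge (b-a)\sigma\sqrt{2\log(1/\delta)/B}\bigr] \le \delta$. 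Then, since $V_B = \hat\sigma_B^2 + (\hat\mu_B-\mu)^2 \ge \hat\sigma_B^2$, completing the square via $\sigma^2 + 2\sigma c \le (\sigma+c)^2$ with $c = (b-a)\sqrt{\log(1/\delta)/(2B)}$ converts the multiplicative-$\sigma$ bound on the variance into exactly the stated additive bound on the standard deviation. That square-completion trick is the missing ingredient that recovers the sharp $1/\sqrt{2B}$ constant your route loses; if you want to salvage your approach, you would need a concentration inequality for $\hat\sigma_B$ under without-replacement sampling with sub-Gaussian parameter $(b-a)^2/(4B)$, which none of your cited tools supply.
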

\begin{proof}
We take the self bounded random variable $Z = \frac{B}{(b-a)^2}\Tilde{V}_B$, where
\[
\Tilde{V}_B \triangleq \frac{1}{B} \sum_{i=1}^B (X_i - \mu)^2
\]
is computed with the samples $X_i$ sampled \emph{with replacement}. 
On the other hand, 
\[
V_B \triangleq \frac{1}{B} \sum_{i=1}^B (X_i' - \mu)^2
\]
is computed with the samples $X_i'$ sampled \emph{without replacement}. We can relate the concentration of $V_B$ to that of $\Tilde{V}_B$; the latter quantity is possible to analyze with the entropy method. 
Indeed, a routine modification of the proof of \citet[Lemma~3.3]{Bardenet2015ConcentrationIF} (which uses essentially the same definition of $Z$) implies that 
\[
\PP\qty[V_B - \sigma^2 \ge \frac{(b-a)^2}{B}\epsilon] \le \exp(-\frac{(b-a)^2\epsilon^2}{2B\sigma^2}).
\]
Solving for $\epsilon$ in terms of $\delta$ yields $\epsilon = \sqrt{\frac{2B\sigma^2}{(b-a)^2}\log(1/\delta)}$, so we obtain
\[
\PP\qty[V_B - \sigma^2 \ge (b-a)\sigma \sqrt{\frac{2\log(1/\delta)}{B}}] \le \delta.
\]
Since $V_B = (\hat{\mu}_B - \mu)^2 + \hat{\sigma}_B^2 \ge \hat{\sigma}_B^2$, we can complete the square to obtain 
\[
\PP\qty[\hat{\sigma}_B^2 \ge \qty(\sigma + (b-a)\sqrt{\frac{\log(1/\delta)}{2B}})^2] \le \delta.
\]
So with probability at least $1-\delta$, we have 
\[
\hat{\sigma}_B^2 \le \qty(\sigma + (b-a)\sqrt{\frac{\log(1/\delta)}{2B}})^2,
\]
and taking square roots implies the desired result. 
\end{proof}

Now, let us return to the question of concentration for batch norm with a randomly selected permutation $\pi$. The following proposition shows that assuming the batch size is large enough, the features of corresponding SS and GD datapoints are close to each other.
\begin{proposition}\label{prop:concentration-features}
If 
\[
B = \Omega\qty(\frac{\log(nd)}{\min_{k \in [d]}(\frac{\sigma_k}{b_k-a_k})^2\epsilon^2}),
\]
then with probability at least $1-1/\poly(n, d)$ we have
\[
\norm{\Xpibn - \pi \circ \Xgdbn}_{2, \infty} \le \frac{\epsilon}{1-\epsilon} \norm{\Xgdbn}_{2, \infty} + \frac{\epsilon\sqrt{d}}{1-\epsilon}.
\]
Here, we remind the reader that for a matrix $\mA \in \RR^{d \times n}$, we define $\norm{\mA}_{2,\infty} = \max_{i \in [n]} \norm{\mA_{:, i}}_2$, i.e. the maximum Euclidean norm of the columns. 
\end{proposition}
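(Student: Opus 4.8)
The plan is to reduce the matrix-level bound to a family of one-dimensional without-replacement concentration statements, one for each (coordinate, mini-batch) pair, and then assemble them via a union bound and a short algebraic identity. Fix a coordinate $k \in [d]$ and a mini-batch index $j$. Under a uniformly random permutation $\pi$, the $j$th block of $B$ columns is \emph{marginally} a size-$B$ sample drawn without replacement from the population $\qty{x_{1,k}, \ldots, x_{n,k}} \subseteq [a_k, b_k]$, whose mean and variance are exactly the full-batch statistics $\mu_k, \sigma_k^2$ used to define $\Xgdbn$, and whose (biased) empirical statistics are the mini-batch quantities $\hat\mu_k^{(j)}, \hat\sigma_k^{(j)}$ used to define $\Xpibn$. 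First I would apply \cref{lemma:mean-upper-tail,lemma:variance-upper-tail,lemma:variance-lower-tail} to this sample with failure probability $\delta' = 1/\poly(n,d)$: provided $B = \Omega(\log(1/\delta')/(\sigma_k/(b_k-a_k))^2\epsilon^2)$, these yield $\abs{\hat\mu_k^{(j)} - \mu_k} \le \epsilon\sigma_k$ and $\abs{\hat\sigma_k^{(j)} - \sigma_k} \le \epsilon\sigma_k$. Taking the worst coordinate and using $\log(1/\delta') = \Theta(\log(nd))$ recovers precisely the stated lower bound on $B$.

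Next I would union bound over all $d$ coordinates and all $m = n/B \le n$ mini-batches — at most $3dn$ events — so that all the mean and variance estimates above hold simultaneously with probability at least $1 - 1/\poly(n,d)$. I emphasize that no independence across blocks is needed: each block is only invoked through its marginal law (a without-replacement sample with the correct population statistics), so the concentration lemmas apply event-by-event and the union bound requires no joint structure.

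The remaining computation is deterministic. For a datapoint $\vx_i$ lying in mini-batch $j$ and a coordinate $k$, write $u_k = (x_{i,k} - \mu_k)/\sigma_k$ for the corresponding GD-normalized entry. A direct manipulation gives
\[
\frac{x_{i,k} - \hat\mu_k^{(j)}}{\hat\sigma_k^{(j)}} - u_k = \frac{u_k(\sigma_k - \hat\sigma_k^{(j)}) + (\mu_k - \hat\mu_k^{(j)})}{\hat\sigma_k^{(j)}}.
\]
Substituting the concentration bounds together with $\hat\sigma_k^{(j)} \ge (1-\epsilon)\sigma_k$ yields the per-coordinate estimate $\abs{(\Xpibn)_{k,i} - u_k} \le \tfrac{\epsilon}{1-\epsilon}(\abs{u_k} + 1)$. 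Finally I would take the Euclidean norm over $k$ and apply the triangle inequality in $\RR^d$ to the componentwise-nonnegative bound $\tfrac{\epsilon}{1-\epsilon}\abs{u_k} + \tfrac{\epsilon}{1-\epsilon}$, which separates the $\abs{u_k}$ contributions (summing to $\norm{(\Xgdbn)_{:,i}}_2$) from the constant contributions (summing to $\sqrt{d}$). Maximizing over columns $i$ then gives the claimed bound on $\norm{\Xpibn - \pi \circ \Xgdbn}_{2,\infty}$.

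I expect the main subtlety — more a bookkeeping point than a genuine obstacle — to be the dependency structure of the mini-batches induced by a single permutation, since the blocks partition the dataset and are therefore not independent. The clean resolution is the observation above that each block is needed only marginally, after which a plain union bound suffices and no exchangeability or negative-association argument is required. The rest amounts to the one-dimensional algebraic identity and converting the per-coordinate batch-size requirement and the choice $\delta' = 1/\poly(n,d)$ into the stated $\Omega(\log(nd)/\min_k(\sigma_k/(b_k-a_k))^2\epsilon^2)$ form.
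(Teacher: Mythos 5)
Your proposal is correct and follows essentially the same route as the paper's proof: the same per-coordinate, per-batch application of \cref{lemma:mean-upper-tail,lemma:variance-upper-tail,lemma:variance-lower-tail}, the same algebraic identity with $\hat\sigma_k \ge (1-\epsilon)\sigma_k$, and the same union bound over coordinates and batches to reach the $\norm{\cdot}_{2,\infty}$ bound. Your explicit remark that each mini-batch need only be invoked through its marginal law as a without-replacement sample (so no joint independence across blocks is required) is a point the paper leaves implicit, but it does not change the argument.
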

\begin{proof}
WLOG consider the unnormalized first batch $\Xpi^1 = \qty{\vx_1, \ldots, \vx_B}$. We initially handle concentration along its first coordinate $\qty{x_1, \ldots, x_B}$ for its first datapoint $x_1$. Write $\hat{\vmu},\hat{\vsigma} \in \RR^d$ to denote the mini-batch mean and standard deviation, respectively, and $\hat{\mu}_k, \hat{\sigma}_k \in \RR$ for the $k$th coordinate of these vectors. Similarly let $\vmu, \vsigma \in \RR^d$ denote the full-batch mean and standard deviation, and let $\mu_k, \sigma_k \in \RR$ to denote the $k$th coordinate of these vectors, respectively. Finally, let $\va, \vb \in \RR^d$ denote the coordinate-wise min and max of $\mX$, with $a_k, b_k \in \RR$ denoting the min and max for the $k$th coordinate of $\mX$.

Hence, the first feature of the first datapoint in 
the normalized first batch $\Xpibn^1$ is $\frac{x_1-\hat{\mu}_1}{\hat{\sigma}_1}$, whereas the corresponding quantity in $\Xgdbn$ is $\frac{x_1-\mu_1}{\sigma_1}$. 

Pick 
\[
B = \Omega\qty(\frac{(\log(1/\delta)}{(\frac{\sigma_1}{b_1-a_1})^2\epsilon^2}).
\]
Then from \cref{lemma:mean-upper-tail,lemma:variance-upper-tail,lemma:variance-lower-tail}, we see that $\abs{\hat{\mu}_1 - \mu_1} \le \epsilon\sigma_1$ and $\abs{\hat{\sigma}_1 - \sigma_1} \le \epsilon \sigma_1 $ with probability at least $1 - \delta$ for the first datapoint $x_1$. 

Now, we have 
\begin{align*}
\abs{\frac{x_1-\hat{\mu}_1}{\hat{\sigma}_1} - \frac{x_1-\mu_1}{\sigma_1}} &= \abs{\frac{x_1-\mu_1}{\hat{\sigma}_1} - \frac{x_1-\mu_1}{\sigma_1} + \frac{\mu_1 - \hat{\mu}_1}{\hat{\sigma}_1}} \\
&\le \abs{\frac{\sigma_1 - \hat{\sigma}_1}{\hat{\sigma}_1}}\abs{\frac{x_1-\mu_1}{\sigma_1}} + \frac{\abs{\mu_1 - \hat{\mu}_1}}{\hat{\sigma}_1} \\
&\le \frac{\epsilon\sigma_1}{(1-\epsilon)\sigma_1} \abs{\frac{x_1-\mu_1}{\sigma_1}} + \frac{\epsilon\sigma_1}{(1 - \epsilon)\sigma_1} \\
&\le \frac{\epsilon}{1-\epsilon} \abs{\frac{x_1-\mu_1}{\sigma_1}} + \frac{\epsilon}{1 - \epsilon}.
\end{align*}

 To aggregate this bound across features, we need to pick $B$ such that $\abs{\hat{\mu}_k - \mu_k} \le \epsilon\sigma_k$ and $\abs{\hat{\sigma}_k - \sigma_k} \le \epsilon \sigma_k$ for every feature $k \in [d]$. Indeed, this is achieved via the union bound by picking $\delta = 1/\poly(d)$ and 
\[
B = \Omega\qty(\frac{\log(1/\delta)}{\min_{k \in [d]}(\frac{\sigma_k}{b_k-a_k})^2\epsilon^2}).
\]

For this batch size, we see that 
\[
\norm{\frac{\vx_1 - \hat{\vmu}}{\hat{\vsigma}} - \frac{\vx_1 - \vmu}{\vsigma}}_2 \le \frac{\epsilon}{1-\epsilon} \norm{\frac{\vx_1 - \vmu}{\vsigma}}_2 + \frac{\epsilon\sqrt{d}}{1-\epsilon}.
\]
Note that a similar inequality holds for all of $\vx_1, \dots, \vx_B$. Now, recalling the definition of $\norm{\cdot}_{2, \infty}$, by applying union bound on all the batches, we have
\[
\norm{\Xpibn - \pi \circ \Xgdbn}_{2, \infty} \le \frac{\epsilon}{1-\epsilon} \norm{\Xgdbn}_{2, \infty} + \frac{\epsilon\sqrt{d}}{1-\epsilon},
\]
which occurs with probability at least $1-1/\poly(n, d)$ when we appropriately choose $B = \Omega\qty(\frac{\log(nd)}{\min_{k \in [d]}(\frac{\sigma_k}{b_k-a_k})^2\epsilon^2})$.
\end{proof}

We now use the above proposition to prove \Cref{prop:concentration-robustness}.
\begin{proof}[Proof of \Cref{prop:concentration-robustness}]
Conditions (2) and (3) in the definition of $\gamma$-robustness ensures that $\min_{k \in [d]}(\frac{\sigma_k}{b_k-a_k})^2 = \Omega(1)$ and $\norm{\Xgdbn}_{2, \infty} = O(\sqrt{d})$. Hence taking 
$B = \Omega(\frac{\log(nd)}{\epsilon^2})$ as in \cref{prop:concentration-features}, we conclude that with probability at least $1-1/\poly(n,d)$ we have 
\[
\norm{\Xpibn - \pi \circ \Xgdbn}_{2, \infty} \le O\qty(\frac{\epsilon\sqrt{d}}{1-\epsilon}).
\]
Hence, by taking $\epsilon = O(\frac{\gamma}{\sqrt{d}})$, we see that 
\[
\norm{\Xpibn - \pi \circ \Xgdbn}_{2, \infty} \le O(\gamma).
\]
Plugging this choice of $\epsilon$ back into our definition of $B$, we conclude that when $B = \Omega(\frac{d\log(nd)}{\gamma^2})$, $\Xpibn$ concentrates around $\pi \circ \Xgdbn$ within distance $\gamma$. 
\end{proof}

\subsection{Rank of batch normalized features}\label{sec:full-rank}
In this section we prove \cref{prop:full-rank}, which states that for batch sizes greater than $2$,  \Cref{assumption:density} implies that the batch normalized dataset will be full-rank (and hence full-dimensional) almost surely. 

One shift in perspective that is fruitful for proving linear independence is to view batch normalization as an operation that returns functions of the input dataset. Since BN operates independently on each of the $d$ features, we first handle the case where the input is a batch of scalars. Let $\binom{[n]}{B}$ denote the set of all $\binom{n}{B}$ batches of size $B$ that can be created from $n$ datapoints contained in $\mX$. Fix an arbitrary labelling of these $\binom{n}{B}$ batches, and let $\mB^j$ refer to the $j$th such batch. WLOG suppose that $\mB^1 = \qty{x_1, \ldots, x_B} \in \RR^B$. 

Formally, let $\mathcal{F}^B \triangleq \qty{f: \RR^B \setminus \qty{\vx \in \RR^B \mid x_1 = x_2 = \cdots = x_B} \to \RR}$ denote the space of real valued functions on batches of size $B$ where BN is defined. On batch $\mB^j = \qty{x_{j_1}, \ldots, x_{j_B}}$, $\bn$ is an operation that maps this batch to the set of $B$ functions $\qty{g_{i}^j(\mB^j)}_{i=1}^{B}$, where 
\[
g_i^j(\mB^j) \triangleq \frac{x_{j_i} - \mu^j}{\sigma^j} \in \mathcal{F}^B
\]where $\mu^j$ and $\sigma^j$ are the empirical mean and standard deviation, respectively of $\mB^j$. If $j$ is clear from context, we may drop the superscript $j$ without chance of confusion. We also sometimes abuse notation and write $g_i^j$ as a function of $\mX$, since $\mX$ contains all datapoints in $\mB^j$. From this perspective, $\bnpi$ maps a dataset of $n$ datapoints to $n$ functions.

We first show that, within a batch, the functions have rank $B-1$ over $\RR$.
\begin{lemma}\label{lemma:not-identically-zero}
Viewed as functions, any subset of $B-1$ functions of the batch normalized outputs $\qty{g_i^1} = \qty{\frac{x_1 - \mu}{\sigma}, \ldots, \frac{x_B-\mu}{\sigma}}$ have rank $B-1$ over $\RR$.
\end{lemma}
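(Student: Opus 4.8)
The plan is to characterize \emph{all} linear relations among the full collection $g_1^1, \ldots, g_B^1$ and show that the space of such relations is exactly one-dimensional, spanned by the all-ones vector; the statement for any $B-1$ of the functions then follows immediately. First I would record the defining identity of batch normalization: since $\bn$ centers each batch, $\sum_{i=1}^B g_i^1 \equiv 0$ on the domain, so the $B$ functions are already linearly dependent and have rank at most $B-1$. The content of the lemma is the reverse inequality together with the claim that \emph{no other} relations exist.

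The key step is to clear the denominator. Suppose $\sum_{i=1}^B c_i g_i^1 \equiv 0$ on $\RR^B \setminus \qty{x_1 = \cdots = x_B}$. Writing $g_i^1 = (x_i - \mu)/\sigma$ with $\mu = \frac{1}{B}\sum_j x_j$, and multiplying through by $\sigma$ (which is strictly positive on the domain), the relation becomes the identity $\sum_{i=1}^B c_i x_i - \mu \sum_{i=1}^B c_i = 0$, valid on a dense open subset of $\RR^B$. The left-hand side is an affine form in $x$, i.e.\ a polynomial, and it vanishes on a set whose complement is the single line $\qty{x_1 = \cdots = x_B}$; since a nonzero polynomial cannot vanish on the dense complement of a proper affine subspace, it must be the zero polynomial. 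This passage from ``vanishing on the BN domain'' to a genuine polynomial identity is the one point requiring a little care, and it is the closest thing to an obstacle in the argument.

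Next I would compare coefficients. Setting $S \triangleq \sum_i c_i$ and substituting $\mu = \frac{1}{B}\sum_j x_j$, the identity reads $\sum_i c_i x_i = \frac{S}{B}\sum_j x_j$ as polynomials, so matching the coefficient of each $x_i$ gives $c_i = S/B$ for every $i$. Hence every linear relation among $g_1^1, \ldots, g_B^1$ has all coefficients equal, the relation space is exactly the span of $(1,\ldots,1)$, and the rank is precisely $B-1$.

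Finally, for any subset of size $B-1$ — say the one obtained by dropping $g_B^1$ — a relation $\sum_{i=1}^{B-1} c_i g_i^1 \equiv 0$ is a relation on all $B$ functions with $c_B = 0$. By the above, all $c_i$ equal $S/B$, and in particular $c_B = S/B = 0$ forces $S = 0$, whence every $c_i = 0$. Thus those $B-1$ functions are linearly independent; since the argument is symmetric in which index is dropped, \emph{any} $B-1$ of the outputs have rank $B-1$, as claimed.
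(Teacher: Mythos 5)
Your proof is correct and takes essentially the same route as the paper's: clear the denominator $\sigma$, reduce to a linear identity among the coordinate functions $x_1, \ldots, x_B$, and compare coefficients (the paper forces $\sum_i c_i = 0$ by noting $x_B$ appears only on one side, which is the same coefficient-matching step). Your two refinements are cosmetic rather than a different approach — characterizing the full relation space as the span of $(1,\ldots,1)$ instead of arguing directly on a $(B-1)$-subset, and making explicit the passage from vanishing on the punctured domain to a genuine polynomial identity, a point the paper leaves implicit under ``linear independence of the $x_i$'s as functions over $\RR$.''
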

\begin{proof}
First, we note that the functions $x_i$ are linearly independent over $\RR$ for $i \in [n]$. WLOG take the subset of $B-1$ functions to be $\qty{g_i^1}_{i=1}^{B-1}$. Suppose that there is a dependence relationship
\[
\sum_{i=1}^{B-1} c_i\frac{x_i- \mu}{\sigma} = 0. 
\]
Rearranging, we see that 
\[
\sum_{i=1}^{B-1} c_ix_i = \frac{1}{B}\qty(\sum_{t=1}^{B} x_t) \qty(\sum_{i=1}^{B-1}c_i)
\]
and because of linear independence of the $x_i$'s as functions over $\RR$ and only the right-hand side contains $x_B$, the only way this can happen is if $\sum_{i=1}^{B-1} c_i = 0$. But then we have a dependence relationship between the $x_i$'s for $i \le B-1$. Linear independence of the $x_i$'s thus implies that $c_i = 0$ for each $i$. 
\end{proof}
With the above lemma in hand, we can show that as functions, any collection of batch normalized outputs are essentially full rank. The caveat is we need to throw out one function in each batch, as each batch is trivially dependent because of the zero mean constraint. 
\begin{proposition}\label{prop:full-rank-functions}
Let $B > 2$. Consider the $B\binom{n}{B}$ functions that are the batch normalized outputs of all $\binom{n}{B}$ batches $\mB^j \in \binom{[n]}{B}$. If we take any subset of $(B-1)\binom{n}{B}$ of these functions obtained by removing one function from each of the $\binom{n}{B}$ batches, then the rank of these functions over $\RR$ is $(B-1)\binom{n}{B}$. In particular, the rank of the functions corresponding to any $\pi$ is $(B-1)\frac{n}{B}$.
\end{proposition}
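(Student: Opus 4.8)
The plan is to prove that the $(B-1)\binom{n}{B}$ retained functions are linearly independent over $\RR$; since each batch's $B$ outputs satisfy the mean-zero identity $\sum_{i=1}^B g_i^j \equiv 0$, the function removed from each batch is the negative sum of the retained ones, so the displayed count is simultaneously the obvious upper bound and, once independence is shown, the exact rank. Within a single batch, independence of any $B-1$ of the $g_i^j$ is precisely \Cref{lemma:not-identically-zero}. The entire difficulty is therefore \emph{decoupling across distinct batches}, which may overlap in as many as $B-1$ of their variables. The structural feature I would exploit is that, viewed as a function of the entries of $\mX$, each $g_i^j = (x_{j_i} - \mu^j)/\sigma^j$ is singular exactly on the collapse locus $Z_j \triangleq \{\sigma^j = 0\}$ where all coordinates of batch $j$ coincide, together with the fact that $\bn$ is invariant under batchwise affine reparametrizations $x \mapsto ax+b$.

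Suppose $\sum_{j}\sum_{i\in S_j} c_{i,j}\, g_i^j \equiv 0$ on the domain where every batch is nondegenerate, with $S_j$ the retained index set of batch $j$. To isolate the coefficients of a fixed batch $j_0$, I would evaluate this identity along rays approaching a generic point $p$ of $Z_{j_0}$: choose $p$ so that all coordinates indexed by $\mB^{j_0}$ equal a common value $c$, while the remaining coordinates take generic values distinct from $c$ and from one another. For any other batch $j \neq j_0$ we have $\mB^j \setminus \mB^{j_0} \neq \emptyset$ (distinct size-$B$ sets), so at such $p$ batch $j$ still contains two distinct values and $\sigma^j(p) \neq 0$; hence $g_i^j$ is continuous at $p$ and, along a ray $p + \epsilon w$, converges as $\epsilon \to 0^+$ to the constant $g_i^j(p)$, independent of the direction $w$. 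By contrast, the scale-and-shift invariance of $\bn$ forces $g_i^{j_0}(p+\epsilon w)$ to equal the single-batch normalization of the direction $w$ restricted to the coordinates of $\mB^{j_0}$, independent of $\epsilon$; thus its limit is $g_i^{j_0}(u)$ for an arbitrary direction $u$ in the batch-$j_0$ subspace.

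Passing to the limit therefore converts the dependence into $\sum_{i\in S_{j_0}} c_{i,j_0}\, g_i^{j_0}(u) = \kappa$ for all directions $u$, where $\kappa$ collects the direction-independent contributions of the batches $j\neq j_0$. Since the single-batch outputs are odd, $g_i^{j_0}(-u) = -g_i^{j_0}(u)$, evaluating at $u$ and $-u$ gives $\kappa = -\kappa$, so $\kappa = 0$ and $\sum_{i\in S_{j_0}} c_{i,j_0}\, g_i^{j_0}(u) \equiv 0$. Now \Cref{lemma:not-identically-zero}, applied to the retained functions of batch $j_0$, yields $c_{i,j_0}=0$ for every $i \in S_{j_0}$. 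Letting $j_0$ range over all $\binom{n}{B}$ batches kills every coefficient, establishing independence. I expect this decoupling step --- manufacturing a limit in which exactly one batch survives while all others freeze to constants --- to be the main obstacle; the rest is bookkeeping together with the genericity of $p$, which holds because each offending condition $\sigma^j = 0$ cuts out a proper subvariety of the affine space $Z_{j_0}$ and there are only finitely many $j$.

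Finally, the ``in particular'' claim follows by specializing the same argument to the $m = n/B$ \emph{disjoint} batches induced by a fixed permutation $\pi \in \sS_n$: these form a sub-collection of all size-$B$ batches, and for disjoint batches the requirement $\mB^j \setminus \mB^{j_0} \neq \emptyset$ is immediate, so the $(B-1)m$ retained functions are independent and the per-batch mean-zero identity shows the remaining $m$ columns add no rank, giving rank exactly $(B-1)\frac{n}{B}$.
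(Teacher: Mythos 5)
Your proof is correct, and its skeleton is the same as the paper's: both arguments decouple batches by degenerating batch $j_0$ onto its collapse locus $\{\sigma^{j_0}=0\}$, where every other batch's functions stay continuous (distinct size-$B$ sets force each other batch to keep two distinct values at the limit point) while the collapsing batch's outputs behave anisotropically, and both ultimately exploit the sign flip under $u \mapsto -u$. The difference is in execution, and yours is the more robust version. The paper picks the single explicit curve $(x_1,\ldots,x_B)=(\epsilon,\epsilon^2,-\epsilon-\epsilon^2,0,\ldots,0)$ and argues that the batch-$j_0$ side of the dependence relation jumps as $\epsilon\to 0^{\pm}$ because the first normalized coordinate tends to $\pm\sqrt{B/2}$; but along that same curve the third coordinate tends to $\mp\sqrt{B/2}$, so the jump of the full linear combination is proportional to $c_{1,1}-c_{3,1}$ and can cancel (e.g.\ $c_{1,1}=c_{3,1}\neq 0$), a loophole the paper's write-up does not address. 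Your argument closes exactly this loophole: by the affine invariance $\bn(c\vone+\epsilon u)=\bn(u)$ for $\epsilon>0$, the batch-$j_0$ terms along the ray $p+\epsilon w$ are \emph{exactly} the single-batch normalization of the direction $u=w|_{j_0}$, so the limiting identity $\sum_{i\in S_{j_0}} c_{i,j_0}\, g_i^{j_0}(u)=\kappa$ holds for \emph{every} nonconstant direction $u$; oddness forces $\kappa=0$, and then \Cref{lemma:not-identically-zero} kills all the coefficients at once, making the single-batch lemma do the cross-batch work rather than a hand-picked numerical computation. Your treatment of the genericity of $p$ and of the ``in particular'' claim for a fixed $\pi$ (the disjoint batches are a sub-collection of a valid retained set, and the removed functions add no rank because $\sum_{i=1}^{B} g_i^j \equiv 0$) matches the paper's.
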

\begin{proof}
Consider the batch $\mB^1 = \qty{x_1, \ldots, x_B}$. By \cref{lemma:not-identically-zero}, we know that the functions $\qty{g_i^1}_{i=1}^{B-1} = \qty{\frac{x_i-\mu}{\sigma}}_{i=1}^{B-1}$ are linearly independent. Consider a dependence relation amongst any subset of the $(B-1)\binom{n}{B}$ described in the theorem statement. WLOG we can suppose that this was formed by throwing out $g_B^j$ from each batch $\mB^j$ and consider a dependence relation between the $(B-1)\binom{n}{B}$ remaining functions. The dependence relation reads
\begin{equation}\label{eq:dependence}
    \sum_{i=1}^{B-1} c_{i,1} \frac{x_i-\mu}{\sigma} = \sum_{i=1}^{B-1}\sum_{j>1} c_{i,j}g_i^j.
\end{equation}

We show that some setting of the input $x_i$ for $i \in [n]$ yields a contradiction unless $c_{i,1} = 0$ for $i \in [B]$. The main insight is that BN has jump discontinuities at the points where it is undefined, i.e., where the entire batch is equal to the same thing. 

More formally, for $i > B$ we set the other datapoints $x_i$ to be arbitrary pairwise distinct positive real numbers. Suppose for the sake of contradiction that $c_{i,1} \neq 0$ for some $i \in [B]$; WLOG we can assume that $i=1$. Since the functions on the LHS are linearly independent by \cref{lemma:not-identically-zero}, the LHS is not identically zero. We show that the LHS  of \cref{eq:dependence} exhibits discontinuous behavior in the punctured neighborhood around $(x_1, \ldots, x_B) = \vzero$, whereas the RHS of \cref{eq:dependence} is continuous on the same neighborhood; this yields a contradiction. 

Indeed, set $(x_1, x_2, x_3, \ldots, x_B) = (\epsilon, \epsilon^2, -\epsilon-\epsilon^2, 0, \ldots, 0)$, where $\epsilon \neq 0$. We have 
\[
\frac{x_1-\mu}{\sigma} = \frac{\sqrt{B}\epsilon}{\sqrt{\epsilon^2 + \epsilon^4 + (\epsilon+\epsilon^2)^2}}.
\]
If $\epsilon \to 0^+$, then the the first normalized coordinate approaches $+\sqrt{\frac{B}{2}}$. However, if $\epsilon \to 0^-$, then the first normalized coordinate approaches $-\sqrt{\frac{B}{2}}$. This is a contradiction, since the RHS of \cref{eq:dependence} is continuous as a function of $\epsilon$. We conclude that $c_{i,1} = 0$ for all $i \in [B-1]$. 

The same argument holds if we replace the LHS with any batch $\mB^j$. Hence, $c_{i,j} = 0$ for all $i \in [B-1]$ and $j \in [\binom{n}{B}]$. We conclude the rank of these $(B-1)\binom{n}{B}$ functions is $(B-1)\binom{n}{B}$, as desired. Notice also that this argument also shows that the functions corresponding to the batches in $\pi$ also have rank $(B-1)\frac{n}{B}$.
\end{proof}
Note that the assumption that $B > 2$ is critical for the construction in the proof. If $B=2$, then actually $\frac{x_i-\mu}{\sigma} \in \qty{\pm 1}$, and the proof breaks down. The batch normalized dataset will be a Boolean matrix; hence, its rank cannot be analyzed by using density arguments.

The following lemma establishes that the zero set of any nontrivial linear combination of the $g_i^j$'s is a measure zero subset of $\RR^n$.
\begin{lemma}\label{claim:zero-set-null}
Suppose that $c_{i,j}$ are not identically zero. Then for $B > 2$, the zero set of $\sum_{i=1}^{B-1} \sum_{j=1}^{\binom{n}{B}} c_{i,j} g_i^j$ is a measure zero subset of $\RR^n$.
\end{lemma}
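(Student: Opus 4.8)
The plan is to recognize $F \triangleq \sum_{i=1}^{B-1}\sum_{j=1}^{\binom{n}{B}} c_{i,j} g_i^j$ as a real analytic function on a connected, full-measure open subset of $\RR^n$, and then to invoke the standard fact that a real analytic function which is not identically zero on a connected open set vanishes only on a Lebesgue-null set. The nonvanishing input to that fact is already supplied by \cref{prop:full-rank-functions}; the work is to set up the analyticity and connectedness hypotheses correctly.

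First I would delineate the domain. For each batch $\mB^j$ let $Q^j(\vx)$ denote its unnormalized sample variance, a nonnegative quadratic polynomial in $\vx \in \RR^n$ that vanishes exactly on the subspace $L_j \triangleq \qty{\vx : x_{j_1} = \cdots = x_{j_B}}$. Each $g_i^j = (x_{j_i} - \mu^j)/\sqrt{Q^j}$ is defined precisely off $L_j$, so $F$ is defined on the open set $U \triangleq \RR^n \setminus \bigcup_j L_j$. Since $L_j$ imposes $B-1$ independent linear constraints, $\dim L_j = n - (B-1)$; as $B > 2$ these subspaces have codimension at least $2$, so $\bigcup_j L_j$ is both Lebesgue-null and of Hausdorff dimension at most $n-2$.

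Two facts then finish the argument. First, on $U$ each $Q^j > 0$, so $1/\sqrt{Q^j}$ is real analytic there, being the composition of the analytic map $t \mapsto t^{-1/2}$ on $(0,\infty)$ with the polynomial $Q^j$; hence $F$, a finite $\RR$-linear combination of products of linear polynomials with these analytic factors, is real analytic on $U$. Second, because each $L_j$ has codimension at least $2$, a generic smooth path between any two points of $U$ can be perturbed to avoid $\bigcup_j L_j$, so $U$ is connected. By \cref{prop:full-rank-functions} the functions $\qty{g_i^j}$ appearing in $F$ are linearly independent over $\RR$, so the hypothesis that the $c_{i,j}$ are not identically zero guarantees $F \not\equiv 0$ on $U$. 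Applying the real analytic vanishing lemma on the connected open set $U$ shows $\qty{F = 0} \cap U$ is Lebesgue-null; since $\RR^n \setminus U$ is itself null, the full zero set of $F$ in $\RR^n$ is null, as claimed.

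The main obstacle is establishing that the natural domain $U$ is connected, which is exactly where the hypothesis $B > 2$ enters: removing the codimension-$(B-1)$ sets $L_j$ leaves $\RR^n$ connected only when $B-1 \ge 2$. This dovetails with the remark following \cref{prop:full-rank-functions} that the argument genuinely fails at $B = 2$, where the $g_i^j$ are $\pm 1$-valued (hence piecewise constant rather than analytic, and the removed hyperplanes disconnect the space). A secondary technical point is the real analytic measure-zero lemma itself; connectedness of $U$ is all it requires (simple connectivity is unnecessary), and one may cite the standard identity-theorem-based argument that the zero set of a nonzero real analytic function on a connected open set has Lebesgue measure zero.
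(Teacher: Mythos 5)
Your proof is correct and follows essentially the same route as the paper's: nonvanishing of the combination via \cref{prop:full-rank-functions}, real analyticity away from the degenerate loci where a batch is constant, and the standard fact (Mityagin's Proposition~0) that a not-identically-zero real analytic function on a connected open set has a Lebesgue-null zero set. Your explicit check that $U$ is connected---each removed subspace $L_j$ has codimension $B-1 \ge 2$ precisely when $B > 2$---is a worthwhile sharpening of a point the paper glosses over: the paper only asserts analyticity ``on finitely many connected components of $\RR^n$,'' which by itself would not rule out the function vanishing identically on a positive-measure component, so your connectedness step is exactly what makes the appeal to the analytic vanishing lemma airtight.
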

\begin{proof}
Since $B > 2$, \cref{prop:full-rank-functions} implies that $f(\mX) \triangleq \sum_{i=1}^{B-1} \sum_{j=1}^{\binom{n}{B}} c_{i,j} g_i^j$ is not identically zero. Furthermore, $f(\mX)$ is real analytic on finitely many connected components of $\RR^{n}$, since each of the functions $g_i^j$ are real analytic on finitely many connected components of $\RR^n$. The claim follows by applying Proposition 0 in \citet{mityagin2015zero}. 
\end{proof}

Having established these results, we can finally prove \cref{prop:full-rank}. 
\begin{proof}[Proof of \cref{prop:full-rank}]
Denote 
\[
\mX = \mqty[\mX_1 \\ \vdots \\ \mX_d] \in \RR^{d \times n},
\]
and the $(B-1)\binom{n}{B}$ functions 
\[
\vg_i^j(\mX) = \mqty[g_{i, 1}^j(\mX_1) \\ \vdots\\ g_{i, d}^j(\mX_d)] \in \RR^d 
\]

We can assemble these vector valued functions into a matrix $\mG(\mX)$ of size $d \times (B-1)\binom{n}{B}$ with the $k$th row consisting of the scalar valued functions $g_{i, k}^j(\mX)$. Now consider the determinant of any $\min\qty{d, (B-1)\frac{n}{B}} \times \min\qty{d, (B-1)\frac{n}{B}}$ submatrix of these scalar functions, which is itself a function of $\mX$. 

To prove the claim, it suffices to show that this determinant --- which is a function of $\mX \in \RR^{d \times n}$ --- is analytic almost everywhere and not identically zero; then \cref{claim:zero-set-null} implies that it vanishes on a measure zero set of $\RR^{d \times n}$. 

We prove the above claim by induction on $d$. For $d=1$, note that for any $i, j$, the scalar function $g_{i, 1}^{j}(\mX)$ is not identically zero and is analytic on finitely connected components of $\RR^{d \times n}$. Now suppose the claim is true for $d$, we prove the claim for $d+1$. When we use cofactor expansion along the first row of $\mG$, which has functions $g_{i,1}^j$ which crucially only depend on the $\mX_1$, we obtain 
\[
\sum_{i=1}^{B-1}\sum_{j=1}^{\binom{n}{B}} (-1)^{i+j}g_{i,1}^j(\mX_1) \det(\mM_{i,j}(\mX)),
\]
where $\mM_{i,j}$ denotes the minor corresponding to the the $i$th function of batch $j$. Note that these minors are not functions of $\mX_1$, so they can be treated as constants with respect to $\mX_1$. By induction these constants are nonzero almost surely. Then \cref{prop:full-rank-functions} implies that the determinant, which is a linear combination of the functions $g_{i,1}^j$ is itself is not zero identically. Also, the determinant is manifestly piecewise analytic on finitely connected components of $\RR^{(d+1) \times n}$, being a polynomial of such functions. Hence the induction is completed.

We can now finish off the proof of the proposition. When \Cref{assumption:density} holds, the probability that the data falls in measure zero set is a probability zero event. In other words, almost surely any such $d \times (B-1)\binom{n}{B}$ matrix constructed by batch normalizing and throwing out one normalized point in each batch is full rank. The same argument holds for the $(B-1)\frac{n}{B}$ functions that correspond to the batch normalized outputs for a permutation $\pi$. Hence the proposition is proved. 
\end{proof}

\section{Additional experiments on real data}\label{app:experiments}

In this section we provide detailed explanations of our experimental setup and present our additional experiments for regression and classification. All experiments were implemented in PyTorch 1.12.

\subsection{Experiment details}
We first define the architectures used in our real data experiments outlined in \Cref{sec:clf-experiments}.

For the linear+BN networks, the 1-layer network is
\[
\mX \mapsto \mW_1 \mGamma_1 \bn(\mX),
\]
On the other hand, the 2-layer network is 
\[
\mX \mapsto \mW_2 \mGamma_2 \bn(\mW_1 \mX)
\] 
and the 3-layer network is 
\[
\mX \mapsto \mW_3 \mGamma_3 \bn(\mW_2 \mGamma_2 \bn(\mW_1 \mX)).
\] 
Hence the difference between the 1-layer network and deeper network is that the deeper networks have tunable parameters inside of $\bn$. 

For the MLP experiments, the 3-layer network is
\[
\mX \mapsto \mW_3\relu(\mGamma_3 \bn(\mW_2 \relu(\mGamma_2 \bn(\mW_1 \mX)))).
\]

For the ResNet18 experiments, we used the ResNet18 architecture available through PyTorch, using \texttt{ResNet18\_Weights.DEFAULT} pretrained weights.

The linear and BN layers were all initialized using the default PyTorch initialization. For the linear+BN networks, the linear layers were instantiated with a width of 512 and \texttt{bias=False}. For the 3 layer MLP, the linear layers were instantiated with a width of 512 and \texttt{bias=True}. The BN layers were instantiated with \texttt{track\_running\_stats=False}. As alluded to in \cref{sec:related-work}, to evaluate the training GD risk $\ellgd$ in the eval loop, we passed in the entire dataset as a single batch, thus avoiding EMA altogether. Except for the ResNet18 experiments, the images in the dataset were flattened into vectors. 

Except for in the respective batch size and momentum ablation study (\Cref{fig:bsize_ablation,fig:momentum_ablation}), we used batch size $B=128$ and no momentum. Note that for all of the datasets,  $\log_2 n \approx 16$, which suggests that we are in the asymptotic regime where divergence can happen as stated in \Cref{thm:ss-divergence-informal}. 

We now explain the difference in divergence behavior between the 1-layer and deeper linear+BN networks for SS. As suggested by \Cref{thm:ss-divergence-informal}, divergence can happen if the separability decomposition of $\Zgd$ is not robust to perturbation. In the 1-layer case, the data remains far from being linearly separable. Meanwhile, in the deeper case, the network is incentivized to train the parameters inside $\bn$ such that the final features (e.g., $\bnpi(\mW_1 \mX)$) are closer to being LS. But the nonlinearity of BN is not enough to make $\bnpi(\mW_1 \mX)$ robustly LS. This also explains why introducing nonlinear activations prevents the divergence phenomenon. 

We also note that in reality RR is run for $T$ epochs. Thus, if $T < \frac{\binom{n}{B}}{\frac{n}{B}}$, the optimization routine only sees a proper subset of $\Zrr$. However, there are other forces that help ensure that the subset of $\Zrr$ actually seen during optimization is SC and satisfies \ref{assumption:full-rank-rr}. For example, it is likely that the algorithm sees non-monochromatic batches that also cause the hulls to overlap. One way this can happen is if $\vzero$ is in the relative interiors of the convex hulls of the monochromatic portion of each batch. Moreover, with extremely high probability we never see a repeat batch, so by \cref{prop:full-rank-alt} the rank of the subset of $\Zrr$ is w.h.p.\ equal to $T \cdot \min\qty{d, (B-1)\frac{n}{B}}$. For us, $d = 10 \cdot 512$, $B = 128$, and $n \ge 50000$, so the rank of the subset of $\Zrr$ we see easily outstrips the dimensionality of the final linear layers. This ensures that \ref{assumption:full-rank-rr} holds.

In \Cref{fig:bsize_ablation}, we see that divergence on 3 layer linear+BN networks generally occurs for large batch sizes, which corroborates \cref{thm:ss-divergence-informal}. These batch sizes were picked because they are common choices for batch sizes in practice. For the largest batch size ($B=128$), there does not appear to be divergence within 1000 epochs, which we address below. 

In \Cref{fig:momentum_ablation}, we see that the presence of momentum preserves divergence for SS, and in some cases accelerates it. Note that for each stepsize we used the same permutation. For the $\eta = 10^{-4}$ experiment, although the 0 momentum run did not start to diverge within 1000 epochs, the $0.9$ and $0.99$ momentum runs started to diverge. This further lends evidence to the claim that the apparent reason for no divergence for $\eta = 10^{-4}$ without momentum is that the small learning rate leads to slower convergence to an optimal direction for $\ellpi$.

\begin{figure}[!ht]
    \centering
\includegraphics[width=0.6\textwidth]{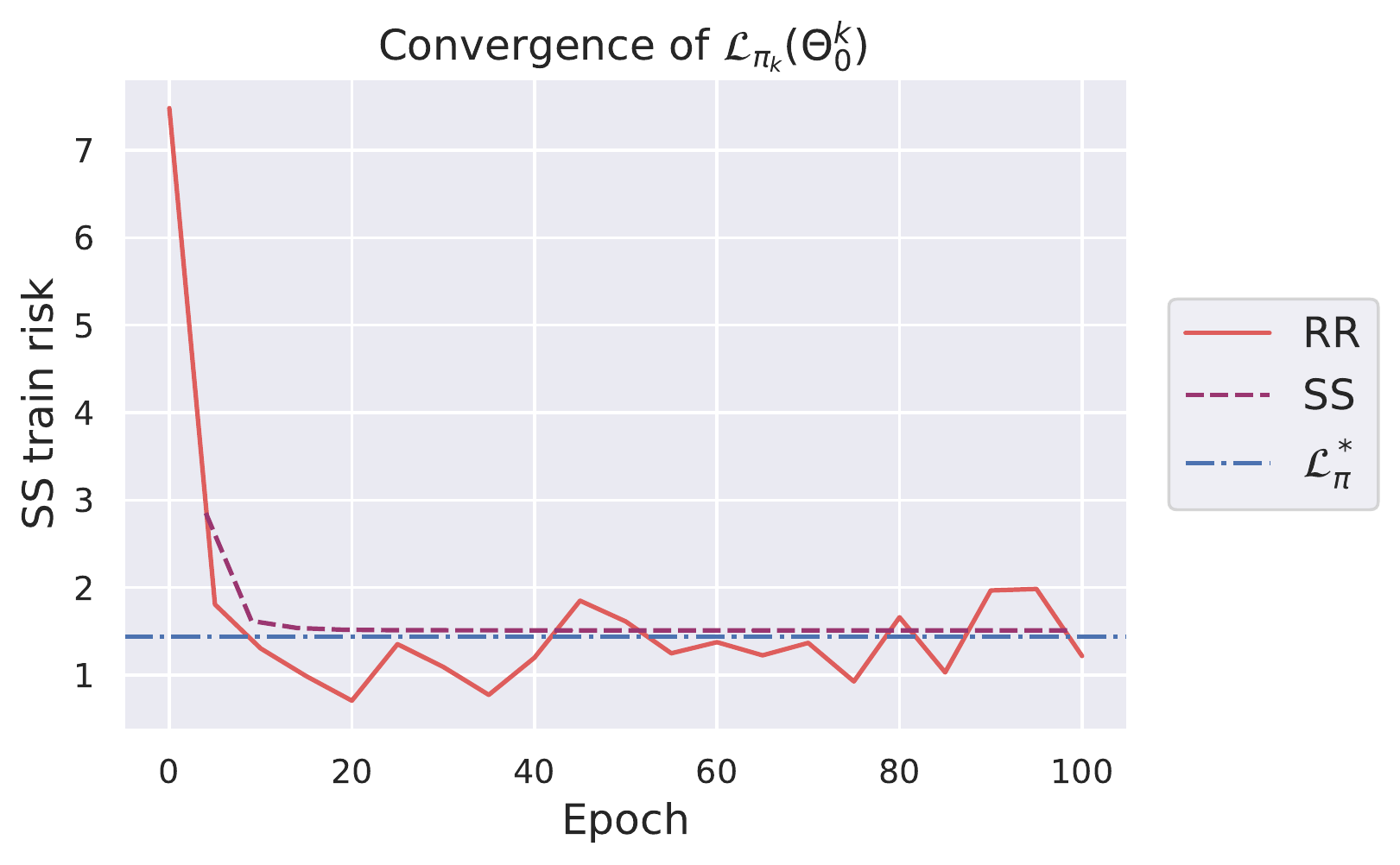}
     \caption{Loss evolution for $\ell_{\pi_k}(\mM(k))$ for experiment described in \cref{sec:reg-experiments}.}
     \label{fig:regression-convergence}
\end{figure}

\begin{figure}[!ht]
    \centering
    \includegraphics[width=0.6\textwidth]{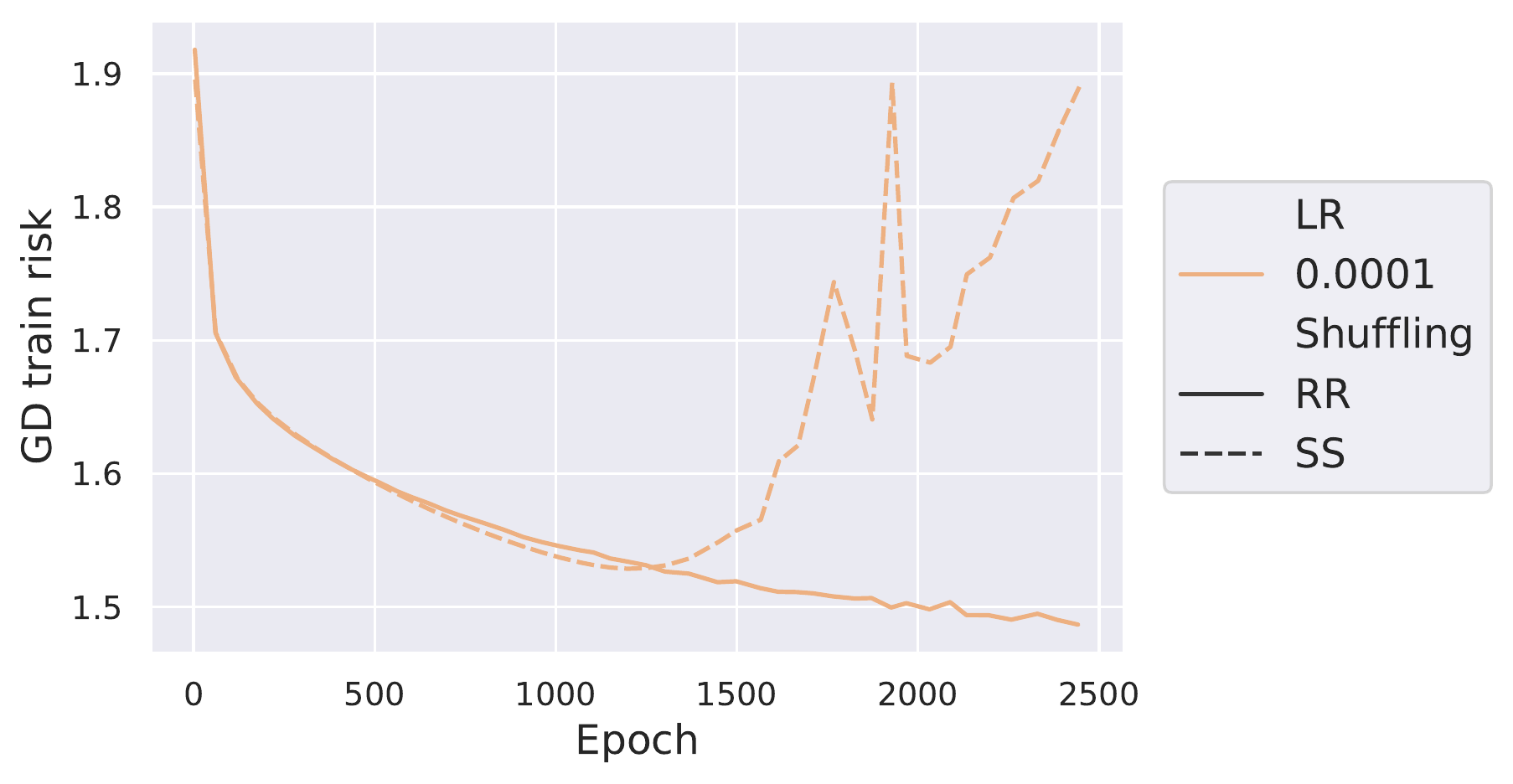}
    \caption{Eventual separation between SS and RR for 3 layer linear+BN network for $\eta=10^{-4}$ after around epoch 1200 on CIFAR10.}
    \label{fig:eventual_separation}
\end{figure}

\begin{figure}[!ht]
    \centering
    \includegraphics[width=0.6\textwidth]{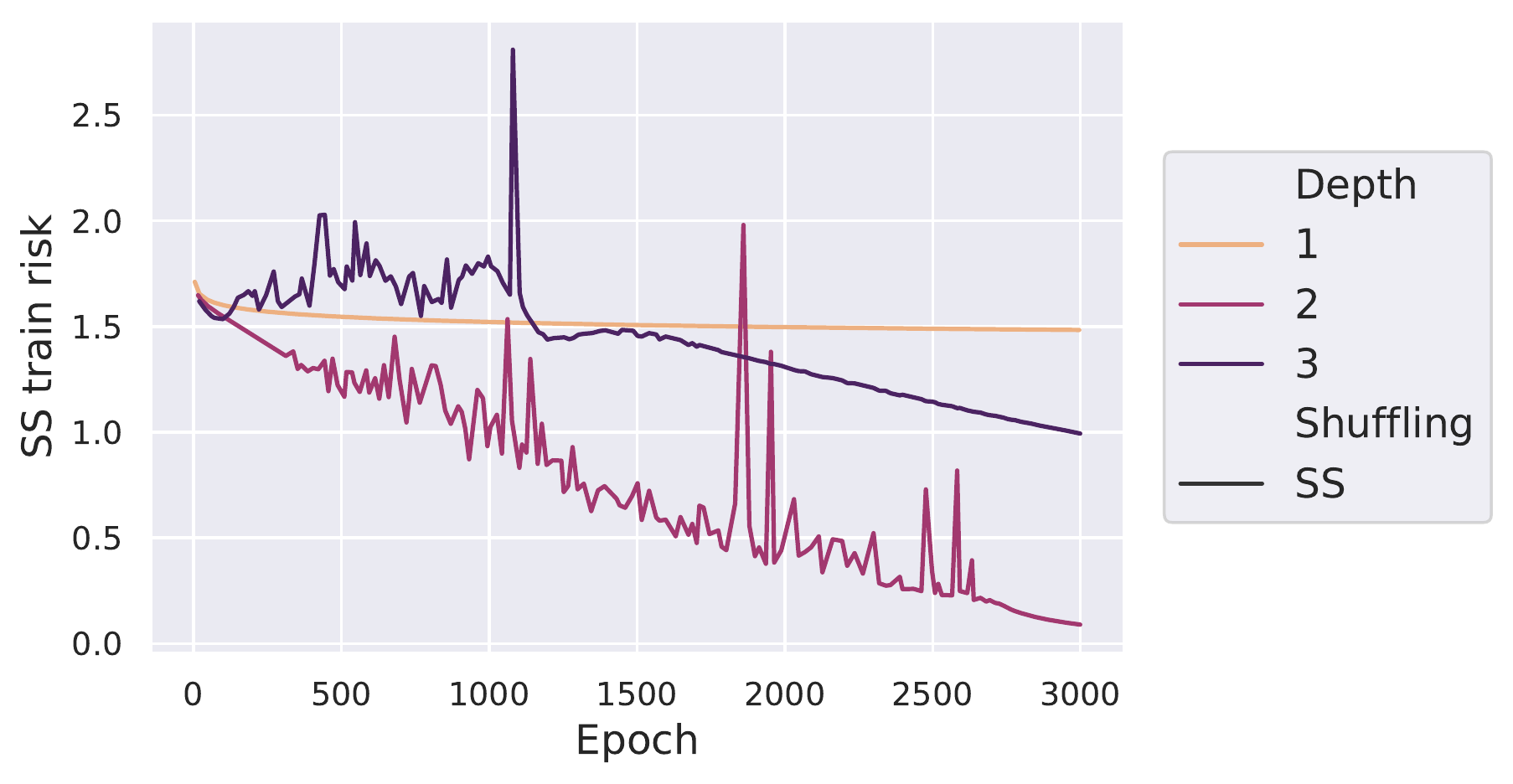}
    \caption{Evidence that the 2 layer and 3 layer linear+BN actually had diverging GD risks when trained with SS. Indeed, the SS risk for both 2 and 3 layer networks continues to decrease, whereas the 1 layer network seems to plateau.}
    \label{fig:linear_separability}
\end{figure}

\begin{figure}[!ht]
    \centering
    \includegraphics[width=0.6\textwidth]{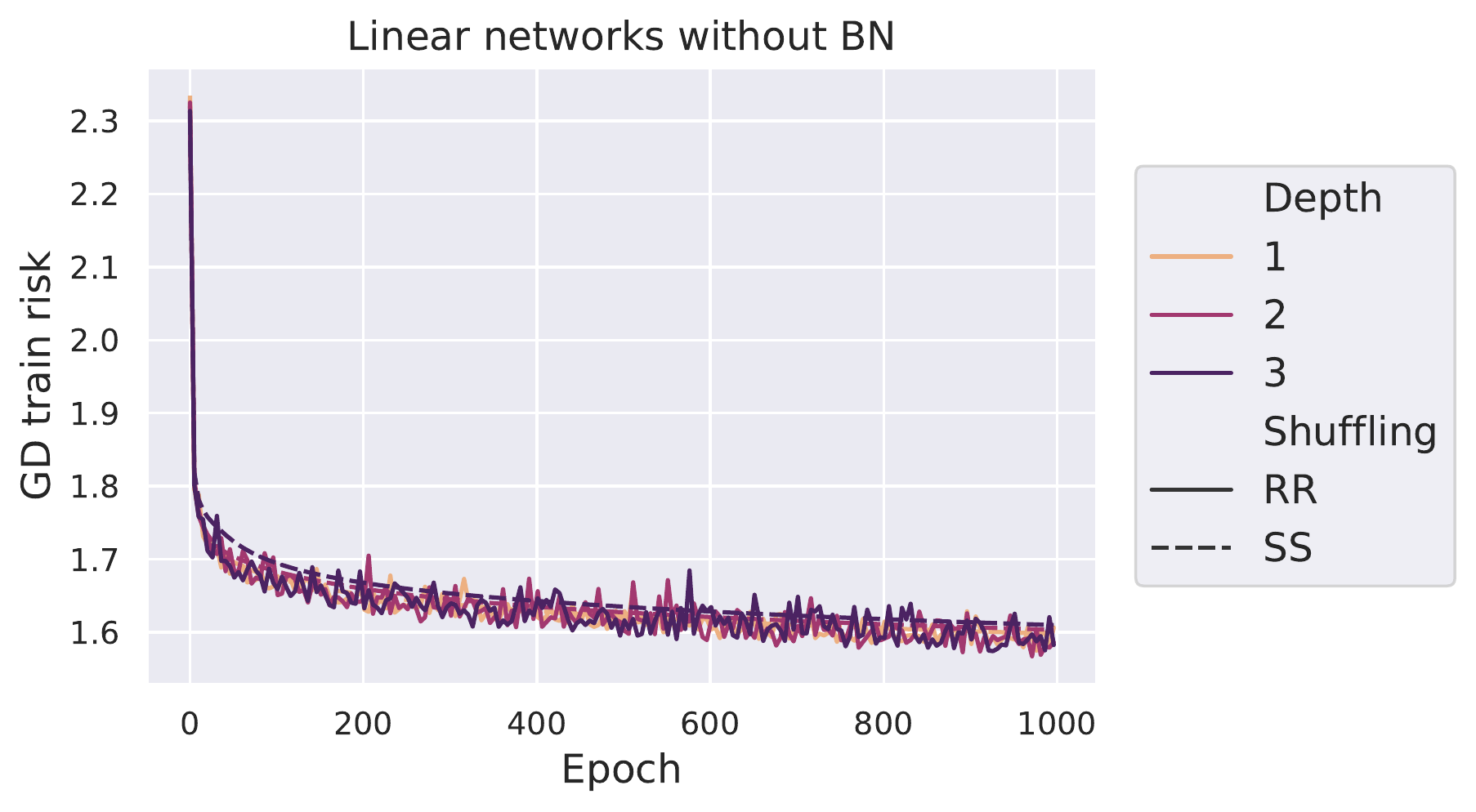}
    \caption{No divergence occurs for 1, 2, and 3 layer linear networks without BN trained with SS. Here $\eta = 10^{-2}$.}
    \label{fig:no_bn}
\end{figure}

\begin{figure}[!ht]
     \centering
     \includegraphics[width=\textwidth]{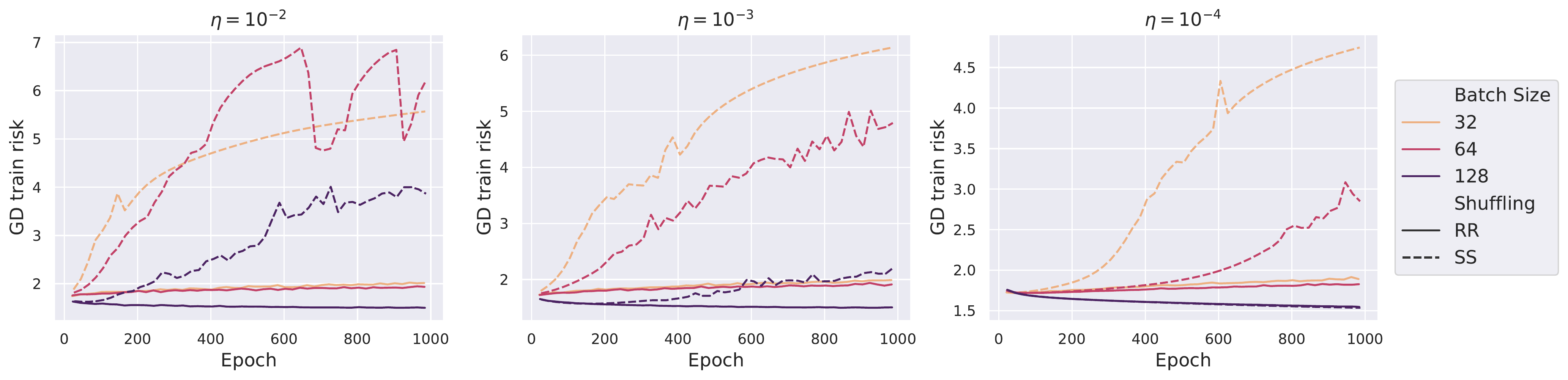}
      \caption{Batch size ablation. These experiments were done on 3 layer linear+BN networks; each subfigure shows the results of training with different stepsizes $\eta \in \qty{10^{-2}, 10^{-3}, 10^{-4}}$. All experiments were performed on CIFAR10. All batch sizes are in the regime where divergence can happen according to \cref{thm:ss-divergence-informal}.}
     \label{fig:bsize_ablation}
     \hfill
\end{figure}

\begin{figure}[!ht]
     \centering
     \includegraphics[width=\textwidth]{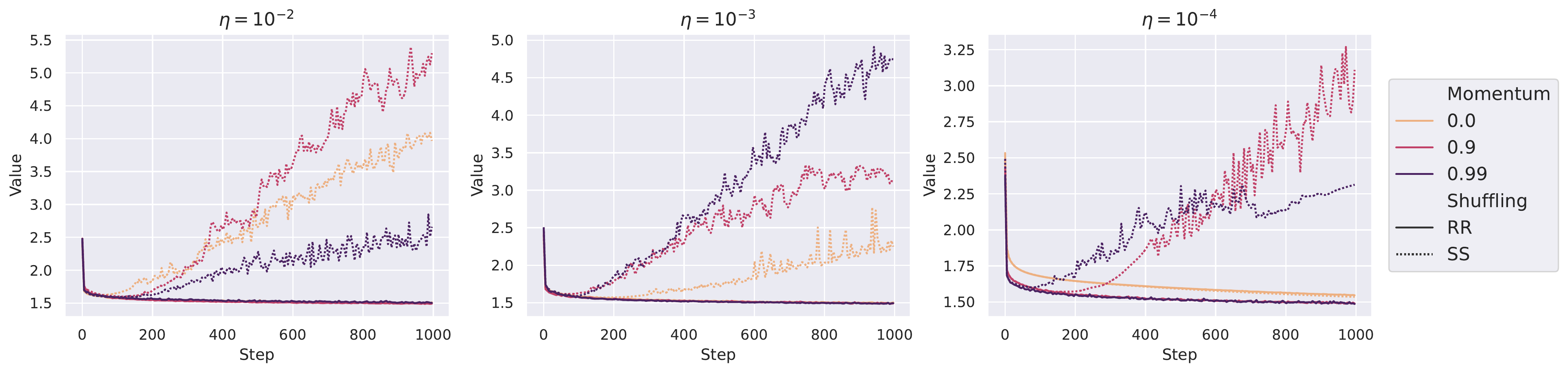}   
     \caption{Momentum ablation. These experiments were done on 3 layer linear+BN networks; each subfigure shows the results of training with different stepsizes $\eta \in \qty{10^{-2}, 10^{-3}, 10^{-4}}$.  All experiments were performed on CIFAR10. For each subfigure, the experiment was run with the same random permutation.}
     \label{fig:momentum_ablation}
     \hfill
\end{figure}
\FloatBarrier

\section{Calculations for toy datasets}\label{sec:toy-datasets}
In this section we do the detailed calculations and provide additional figures to understand the toy datasets we introduced in \Cref{sec:opt-analysis-reg,example:clf-divergence}. Since both constructions use $B=2$, we remind the reader that the batch normalization of any two distinct real numbers is $(-1, 1)$. It follows that if we batch normalize with $B=2$, we obtain $\Xpibn \in \qty{-1, 1}^{d \times n}$. Recall the distorted risk $\ellpi$ reflects the single-shuffle batch normalized dataset $\Zpi$. This $\ellpi$ is a \emph{random} quantity, and it is over this source of randomness (the construction of the batches) that we show that there is a gap between SS, RR, and GD.

\subsection{Regression toy dataset}

\begin{proposition}\label{prop:toy-regression-dataset}
    There exists a regression dataset $\mZ = (\mX, \mY) \in [-1, 1]^{1 \times 16n} \times [-1, 1]^{1 \times 16n}$ such that the following statements hold with batch size $B=2$: 
    \begin{enumerate}[label=\normalfont{(\arabic*)},ref=(\theenumi)]
        \item \label{item:reg-prop-1} $\Mgd^* = \Mrr^* = 0$.
        \item \label{item:reg-prop-2} $\Mpi^* \neq 0$ with probability at least $1 - O(\frac{1}{\sqrt{n}})$. 
        \item \label{item:reg-prop-3} $\abs{\Mpi^*} = \Omega(\frac{1}{\sqrt{n}})$ with constant probability.  
    \end{enumerate}
\end{proposition}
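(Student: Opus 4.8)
The plan is to reduce everything to a single scalar statistic of the random matching induced by $\pi$, and then argue by symmetry and anticoncentration. Since $B=2$ and $d=1$, batch normalizing any pair $\qty{i,j}$ sends the two points to $\sgn(x_i-x_j)$ and $\sgn(x_j-x_i)$, so $\Xpibn \in \qty{-1,1}^{1 \times 16n}$ and hence $\Xpibn\Xpibn^\top = 16n$. The least-squares optimum is therefore
\begin{equation*}
\Mpi^* = \Ypi\Xpibn^\top(\Xpibn\Xpibn^\top)^{-1} = \frac{1}{16n}\sum_{\qty{i,j}} \sgn(x_i-x_j)(y_i-y_j),
\end{equation*}
the sum ranging over the $8n$ matched pairs, and a draw of $\pi$ is equivalent to a uniformly random perfect matching of the $16n$ points. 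In the construction (four clusters symmetric under $x\mapsto -x$, with labels constant on each cluster and the two positive clusters placed farther from the origin than the two negative clusters), same-label pairs contribute $0$, and each bichromatic pair contributes $\pm 2$ with sign determined solely by whether its matched positive endpoint lies in the right or left positive cluster. Writing $A$ (resp.\ $B$) for the number of positive points on the right (resp.\ left) matched to a negative point, this collapses to $\Mpi^* = \tfrac{1}{8n}(A-B)$.

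First I would dispatch \ref{item:reg-prop-1}. The reflection $x\mapsto -x$ is a symmetry of the dataset that fixes labels and, because the feature mean is $0$, negates $\Xgdbn$; hence $\mY\Xgdbn^\top = -\mY\Xgdbn^\top = 0$, giving $\Mgd^*=0$. The same reflection is a measure-preserving bijection on matchings under which every pair's contribution negates, so $\Mpi^*$ and $-\Mpi^*$ are equidistributed; combined with the averaging identity $\Mrr^* = \EE_\pi[\Mpi^*]$ from \cref{prop:ss-rr-relationship}, this yields $\Mrr^* = 0$.

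For the fluctuation claims I would analyze $A-B$. Conditioning on the total number $k=A+B$ of positives matched to negatives, the reflection symmetry makes the right/left split of these $k$ positives exchangeable, so $A$ is distributed symmetrically about $k/2$ and in fact follows a hypergeometric-type law on the $8n$ positives ($4n$ per side). Since $k=\Theta(n)$ with high probability (the expected number of bichromatic pairs is a constant fraction of $8n$; cf.\ \cref{fact:whp-monochromatic}), the conditional standard deviation of $A-B$ is $\Theta(\sqrt n)$, giving $\Var(\Mpi^*)=\Theta(1/n)$. A hypergeometric central limit theorem (equivalently, a bounded-fourth-moment Paley–Zygmund argument) then shows $|A-B|=\Omega(\sqrt n)$, i.e.\ $|\Mpi^*|=\Omega(1/\sqrt n)$, with constant probability, which is \ref{item:reg-prop-3}. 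For \ref{item:reg-prop-2} I would bound the atom at $0$: $\Mpi^*=0$ forces $A=B$, and the local-limit (maximal-atom) estimate for the hypergeometric law gives $\PP[A=B \mid k] = O(1/\sqrt k) = O(1/\sqrt n)$, whence $\PP[\Mpi^*\neq 0]\ge 1-O(1/\sqrt n)$.

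The main obstacle is the point anticoncentration in \ref{item:reg-prop-2}: unlike the variance bound, controlling $\PP[\Mpi^*=0]$ needs a local (not merely central) limit estimate, and the dependencies inherent in a uniform random matching block a direct appeal to Erdős–Littlewood–Offord. The key technical step is therefore to extract, via the reflection symmetry, enough conditional independence—the exchangeable right/left labeling of the matched positives—to reduce $A-B$ to a hypergeometric statistic whose maximal atom is provably $O(1/\sqrt n)$. Once this reduction is in place, the variance computation and the constant-probability lower tail are routine, and the three claims follow.
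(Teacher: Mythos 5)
Your proposal is correct and follows essentially the same route as the paper: your statistic $A-B$ is exactly the paper's $\Tpi{1}{L}-\Tpi{2}{L}$ (so your $\Mpi^* = (A-B)/(8n)$ agrees with the paper's $(\kpi{1}-4n)/(4n)$), your reflection argument for (1) is the paper's batch-swapping bijection, and your conditional hypergeometric law $\PP[A=a \mid A+B=k] = \binom{4n}{a}\binom{4n}{k-a}/\binom{8n}{k}$ is precisely the ratio of binomial coefficients the paper bounds via entropy estimates. The only cosmetic difference is in (3), where you invoke a hypergeometric CLT/Paley--Zygmund argument while the paper sums its $O(1/\sqrt{n})$ per-atom bounds over a window of width $c\sqrt{n}$ and takes $c$ small---both are routine finishes of the same anticoncentration step.
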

We first describe the construction and then prove that the dataset satisfies the properties outlined above.

\paragraph{Formal construction of regression dataset:}
Construct the dataset as follows. Take $\mA \in \RR^{1 \times 4n}$ to be $4n$ equally spaced points in the interval $(\frac{3}{4}, 1)$. Define
\begin{align*}
    \mX^1 = \mA;& \quad\quad \mX^2 = -\mA \\ 
    \mX^3 = -\mA + \frac{1}{2}\vone^\top;& \quad\quad \mX^4 = \mA - \frac{1}{2}\vone^\top
\end{align*}
and 
\begin{align*}
    \mY^1 = \vone^\top;& \quad\quad \mY^2 = \vone^\top \\
    \mY^3 = -\vone^\top;& \quad\quad \mY^4 = -\vone^\top.
\end{align*}
Notice that the indices also match which quadrant the cluster of points are in. Visually, these four groups of points $\mZ^i \triangleq (\mX^i, \mY^i)$ are clusters with the $i$th cluster in the $i$th quadrant. For brevity, we also refer to these clusters by their index $i$ only, so cluster $1$ refers to the cluster $\mZ^1 = (\mX^1, \mY^1)$, and so on. These definitions are consistent with the clusters depicted in \cref{fig:toy-dataset-reg-unnormalized-alt}. 

Then take $\mX = \mqty[\mX^1 & \mX^2 & \mX^3 & \mX^4] \in \RR^{1 \times 16n}$ and $\mY = \mqty[\mY^1 & \mY^2 & \mY^3 & \mY^4] \in \RR^{1 \times 16n}$. After applying BN with permutation $\pi$ and batch size $2$, we obtain a dataset $\Xpibn$ with every point being located in one of four SS clusters $\Zpi^i \triangleq (\Xpibn^i, \Ypi^i)$ for $i \in [4]$ located at $(\pm 1, \pm 1)$ with the same relative labelling: $\Zpi^1$ is located at $(1, 1)$ and then labelling counterclockwise (see \cref{fig:toy-dataset-reg-normalized}).

In \cref{fig:toy-dataset-reg-unnormalized-alt}, we visualize the construction with $n=3$ (so the depicted dataset has $16n=48$ datapoints). We plot the slopes of the $\Mgd^*$ (green solid line), $\Mrr^*$  (purple dash-dotted line), and typical values for $\Mpi^*$ (yellow dotted line). In \cref{fig:toy-dataset-reg-normalized}, we show what $\Zpi$ looks like for a typical permutation $\pi$. The sizes of the points represent the number of points that end up in the corresponding cluster.

\begin{figure}[!ht]
    \centering
    \begin{subfigure}[b]{0.48\textwidth}
         \centering
         \includegraphics[width=\textwidth]{reg-dataset-unnormalized.pdf}
         \caption{Unnormalized toy regression dataset $\mZ$ demonstrating distortion of SS with constant probability. Notice how the GD and RR lines are aligned with slope 0, but the SS lines are distorted away from 0.}
         \label{fig:toy-dataset-reg-unnormalized-alt}
     \end{subfigure}
     \hfill 
     \begin{subfigure}[b]{0.48\textwidth}
         \centering
         \includegraphics[width=\textwidth]{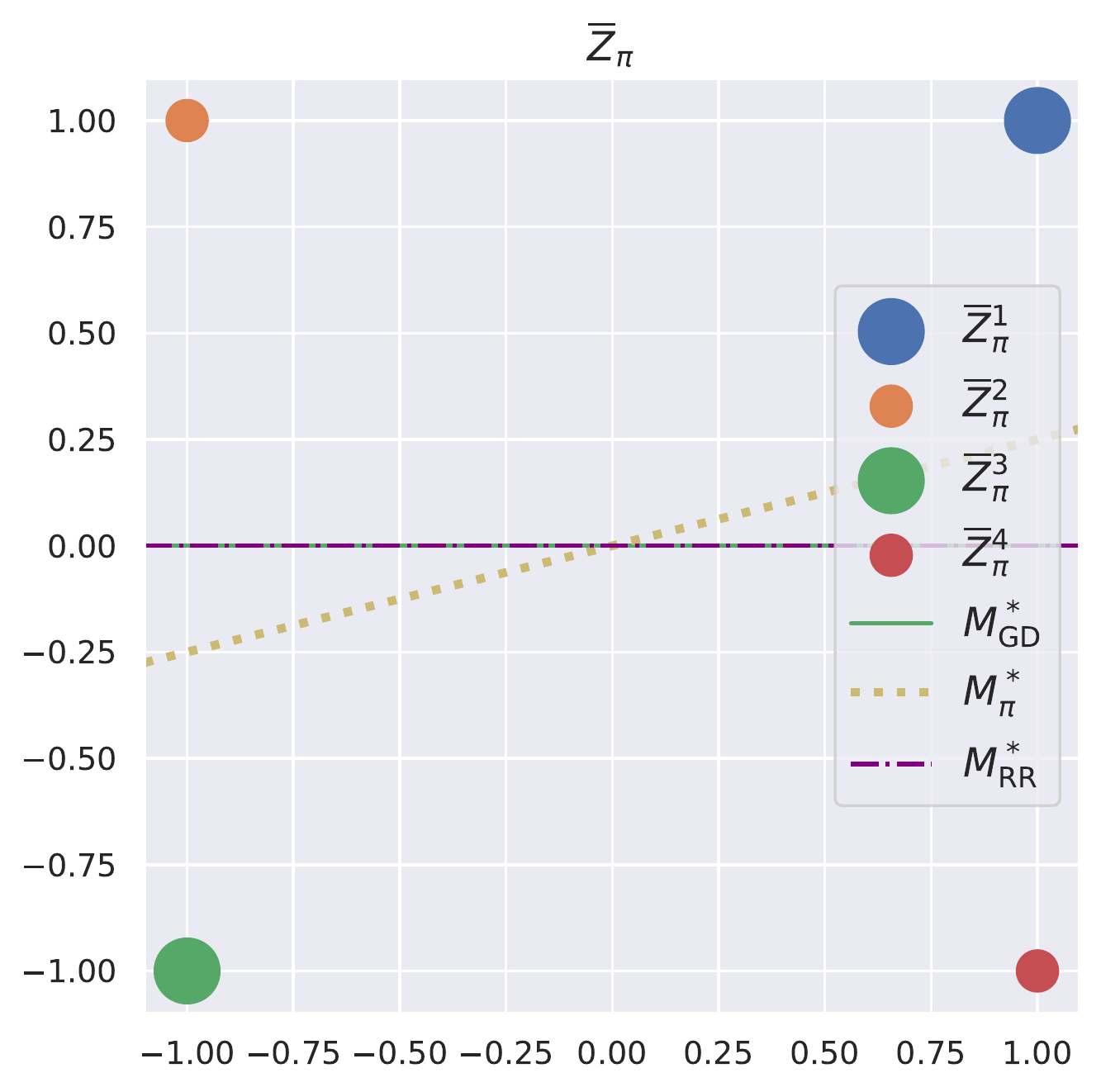}
         \caption{Toy regression dataset after BN with permutation $\pi$. The size of the point corresponding to $\Zpi^i$ represents the number of normalized points in $\Zpi^i$ (not to scale), and here we picked $\pi$ such that $\Mpi^* = \frac{1}{4}$.}
         \label{fig:toy-dataset-reg-normalized}
     \end{subfigure}
     
\end{figure}

\subsubsection{Analyzing the regression toy dataset}

In order to prove this proposition, we will need the following standard technical estimate on the asymptotics of binomial coefficients (see e.g. \citet{thomas2006elements})
\begin{lemma}\label{lemma:binomial-estimate}
For all $n$ and $k$ we have
\[
\sqrt{\frac{n}{8k(n-k)}}2^{H(k/n)n} \le \binom{n}{k} \le \sqrt{\frac{n}{\pi k(n-k)}}2^{H(k/n)n},
\]
where $H(p) \triangleq -p\log_2 p - (1-p)\log_2(1-p)$ is the binary entropy function. 
\end{lemma}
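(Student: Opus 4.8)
The plan is to derive both inequalities from Robbins' sharp form of Stirling's formula,
\[
\sqrt{2\pi m}\,(m/e)^m e^{1/(12m+1)} < m! < \sqrt{2\pi m}\,(m/e)^m e^{1/(12m)} \qquad (m \ge 1),
\]
applied to $n!$, $k!$, and $(n-k)!$ in the identity $\binom{n}{k} = n!/(k!(n-k)!)$. First I would dispose of the degenerate cases: the asserted bounds only make sense for $1 \le k \le n-1$ (otherwise $k(n-k)=0$ and both sides are ill-defined), so I restrict to this range throughout and set $p \triangleq k/n$.

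Next I would substitute the Stirling bounds and simplify into a clean closed form. The factors $e^{-m}$ cancel since $e^{-n}/(e^{-k}e^{-(n-k)})=1$, and the power factors collapse to the entropy term: a one-line logarithm computation gives $n^n/(k^k(n-k)^{n-k}) = 2^{nH(p)}$, because $\log_2$ of the left side is $n\log_2 n - k\log_2 k - (n-k)\log_2(n-k) = nH(p)$. The square-root prefactors combine to $\sqrt{2\pi n}/(\sqrt{2\pi k}\sqrt{2\pi(n-k)}) = \sqrt{n/(2\pi k(n-k))}$. Hence in all cases
\[
\binom{n}{k} = \sqrt{\frac{n}{2\pi k(n-k)}}\,2^{nH(p)}\, e^{\rho}, \qquad \rho \triangleq r_n - r_k - r_{n-k},
\]
where each remainder satisfies $r_m \in (1/(12m+1),\, 1/(12m))$. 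The entire problem then reduces to sandwiching the scalar correction $e^{\rho}$ between the two target constants.

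The upper bound is then immediate: using $r_n < 1/(12n)$ and $r_k, r_{n-k} > 0$ gives $\rho < 1/(12n) \le 1/12$, so $e^{\rho} < e^{1/12} < \sqrt 2$, and multiplying the universal prefactor $1/\sqrt{2\pi}$ by $\sqrt 2$ yields exactly $1/\sqrt\pi$. The lower bound is where the real work lies, and it is the main obstacle: I need $e^{\rho} \ge \sqrt{2\pi}/\sqrt{8} = \sqrt\pi/2 \approx 0.886$, so the constant $8$ (rather than the naive $2\pi \approx 6.28$) leaves only a thin slack, and the estimate is in fact tight. Keeping all three remainder terms, $\rho > \tfrac{1}{12n+1} - \tfrac{1}{12k} - \tfrac{1}{12(n-k)}$, I would check that this clears $\ln(\sqrt\pi/2)$ comfortably whenever $\min(k,n-k) \ge 2$ (where the negative terms sum to at most $1/12$), then verify the boundary family $\min(k,n-k)=1$ directly, where the bound is tightest at $n=3$ but still holds; the single residual case $(n,k)=(2,1)$ must be checked by hand, and there $\binom{2}{1}=2$ meets the lower bound with equality. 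This direct handling of the smallest cases—forced by the optimality of the constant $8$—completes the argument for all $1 \le k \le n-1$.
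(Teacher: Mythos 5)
Your proposal is correct: the Robbins-form Stirling sandwich, the identity $n^n/(k^k(n-k)^{n-k}) = 2^{nH(k/n)}$, the computation $e^{\rho} < e^{1/12} < \sqrt{2}$ for the upper constant, and the split into $\min(k,n-k)\ge 2$ (where $\rho > -1/12$ comfortably beats $\ln(\sqrt{\pi}/2)$) versus the boundary family with the equality case $(n,k)=(2,1)$ together cover all $1 \le k \le n-1$, which is the only regime where the statement is meaningful. The paper gives no proof of this lemma—it is cited as a standard estimate from \citet{thomas2006elements}—and your argument is essentially the same Stirling-based derivation found in that reference, so the approaches coincide.
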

Now let us prove the prove the proposition. 
\begin{proof}[Proof of \cref{prop:toy-regression-dataset}]
First, it is clear from the symmetry of $\mX$ that $\Ygd \Xgdbn^\top = 0$, so $\Mgd^* = 0$. This proves the first half of \ref{item:reg-prop-1}.

\paragraph{Setup:}
For $i \in [4]$, let $\kpi{i}$ denote the number of points that end up in cluster $\Zpi^i$ after normalizing with permutation $\pi$. Since we have $\Mpi^* = \frac{1}{16n}\Ypi \Xpibn^\top$, evidently the $\kpi{i}$ completely determine $\Mpi^*$. In fact, more is true: we claim that $\Mpi^* = 0$ if and only if $\kpi{1} = 4n$. 

To see why this is true, first note that the label $\mY$ is unaffected by BN. Hence, there are necessarily $8n$ points that end up with $y$ coordinate $1$. It follows that if there are $k$ points in $\Zpi^1$, then there are $8n-k$ points in $\Zpi^2$. Similarly, if there are $j$ in $\Zpi^3$, then there are $8n-j$ in $\Zpi^4$. On the other hand, recall that $\bn$ with $B=2$ and $d=1$ always sends one point in each batch to $x=+1$ and one point to $x=-1$. Hence, there are $8n$ points that end up with $x$ coordinate $1$, which means $k+8n-j = 8n$, implying that $k=j$. 

Referring back to the formula for $\Mpi^*$, we see that 
\begin{align*}
   \Mpi^* &= \frac{1}{16n}\sum_{i=1}^4 \Ypi^i (\Xpibn^i)^\top  \\
   &= \frac{1}{16n}(k - (8n-k) + k - (8n-k)) \\
   &= \frac{k-4n}{4n}.
\end{align*}
Hence $\Mpi^* = 0$ if and only if $\kpi{1} = 4n$. Referring back to \cref{fig:toy-dataset-reg-normalized}, we see that the sizes of the clusters represent $\kpi{i}$, and the plotted $\Mpi^*$ with slope $\frac{1}{4}$ corresponds to a $\pi$ where $\kpi{1} = 5n$.

To analyze $\kpi{1}$, for $i,j \in [4]$ we introduce the random variables $\Tpi{i}{j}$ to denote the number of batches formed with one point from cluster $i$ and cluster $j$ with permutation $\pi$. Similarly let $U \triangleq \qty{1, 2}$ denote the upper clusters and $L \triangleq \qty{3, 4}$ denote the lower clusters. Define $\Tpi{i}{L} \triangleq \Tpi{i}{3} + \Tpi{i}{4}$, which represents the total number of batches with one point in cluster $i$ and another point in $L$ when using permutation $\pi$. Finally, define $\Tpi{U}{L} \triangleq \Tpi{1}{L} + \Tpi{2}{L}$, which represents the total number of batches with one point in cluster $U$ and another in $L$ with permutation $\pi$.

With this notation in hand, let us prove the claim.
Evidently, $\kpi{1} = \Tpi{U}{U} + \Tpi{1}{L}$. Similarly we have $\kpi{2} = \Tpi{U}{U} + \Tpi{2}{L}$. 
Because we established earlier that $\kpi{1} + \kpi{2} = 8n$, 
then $\kpi{1} = 4n$ if and only if $\Tpi{1}{L} = \Tpi{2}{L}$. In words, this means that $\Mpi^* = 0$ if and only if the number of $\qty{1, L}$ batches is the same as the number of $\qty{2, L}$ batches.

\paragraph{RR averages out distortion:}
For every $\pi$ we can find $\pi'$ such that $\mM_{\pi'}^* = -\Mpi^*$. This is because we can always find $\pi'$ that swap $\Tpi{1}{L}$ and $\Tpi{2}{L}$, by turning all $\qty{1, L}$ batches into $\qty{2, L}$ batches and vice versa. Then \cref{prop:ss-rr-relationship} implies that $\Mrr^* = 0$, which proves the second half of \ref{item:reg-prop-1}.

\paragraph{SS is distorted:}
Now, let us show that $\PP[\kpi{1} = 4n] = O(\frac{1}{\sqrt{n}})$. The main idea is that conditioned on $\Tpi{U}{L} = 2t$, we can compute the probability that $\Tpi{1}{L} = \Tpi{2}{L} = t$ exactly. Indeed, of the $4n$ points in cluster 1, we pick $t$ of them to form batches with $L$, and similarly for cluster $2$. This gives $\binom{4n}{t}^2$ ways for $\Tpi{1}{L} = \Tpi{2}{L} = t$. In total, there are $8n$ points in $U$ and we picked $2t$ of them to match with $L$, which gives a denominator of $\binom{8n}{2t}$. Hence 
\[
\PP[\kpi{1} = 4n | \Tpi{U}{L} = 2t] = \frac{\binom{4n}{t}^2}{\binom{8n}{2t}}.
\]

In order to obtain the $O(\frac{1}{\sqrt{n}})$ bound we desire, we need to use the fact that $\Tpi{U}{L}$ --- the number of batches between $U$ and $L$ --- concentrates tightly. In fact, if we color the 4 clusters corresponding to membership in $U$ and $L$ and slightly generalize the analysis leading to  \cref{fact:whp-monochromatic}, we obtain that for some absolute constant $C$, we have $\abs{\Tpi{U}{L} - 4n} \le 2C\sqrt{n\log n}$ with probability at least $1-1/n$. 

Applying \cref{lemma:binomial-estimate}, we obtain for all $t$ such that $\abs{t-2n} \le 2C\sqrt{n \log n}$, we have
\begin{align}
    \PP[\kpi{1} = 4n | \Tpi{U}{L} = 2t] &=  O\qty(\frac{\frac{n}{t(4n-t)}2^{8H(\frac{t}{4n})n}}{\sqrt{\frac{n}{t(4n-t)}}2^{8H(\frac{t}{4n})n}}) \\
    &= O\qty(\sqrt{\frac{n}{t(4n-t)}}) \\
    &= O\qty(\frac{1}{\sqrt{n}}). \label{eq:conditional-prob}
\end{align}

Hence we have 
\begin{align*}
    \PP[\kpi{1} = 4n] &= \sum_{t=0}^{4n} \PP[\kpi{1} = 4n | \Tpi{U}{L} = 2t] \PP[\Tpi{U}{L} = 2t] \\
    &\le \frac{1}{n} + \sum_{\abs{t-2n} \le C\sqrt{n\log n}}  \PP[\kpi{1} = 4n | \Tpi{U}{L} = 2t] \PP[\Tpi{U}{L} = 2t]\\
    &\le \frac{1}{n} + O\qty(\frac{1}{\sqrt{n}}) \\
    &\le O\qty(\frac{1}{\sqrt{n}}),
\end{align*}
where in the second line we have used the union bound  along with the fact that $\Tpi{U}{L}$ concentrates, and in the third line we have used \cref{eq:conditional-prob}. This proves \ref{item:reg-prop-2}.

\paragraph{Quantitative SS distortion bounds with constant probability:}
Finally, we show \ref{item:reg-prop-3}. Suppose that $\kpi{1} = 4n+d$ for $\abs{d} = O(\sqrt{n})$. The above analysis for the case of $d = 0$ immediately generalizes to show that, if $t-d > 0$, 
\[
\PP[\kpi{1} = 4n+d|\Tpi{U}{L} = 2t-d] = \frac{\binom{4n}{t}\binom{4n}{t-d}}{\binom{8n}{2t-d}}. 
\]
Notice that since $2t-d$ concentrates around $4n$, it suffices to only consider the high probability regime where $2t-d = 4n + O(\sqrt{n\log n})$. In particular, since $\abs{d} = O(\sqrt{n})$, we have $t-d=O(t)$.

Thus, if we plug in \cref{lemma:binomial-estimate}, we obtain in the regime where $t-d=O(t)$ that
\[
 \frac{\binom{4n}{t}\binom{4n}{t-d}}{\binom{8n}{2t-d}} = O\qty(\frac{1}{\sqrt{t}}) 2^{4n\qty[H(\frac{t}{4n}) + H(\frac{t-d}{4n}) - 2H(\frac{2t-d}{8n})]}.
\]
Concavity of binary entropy implies that $H(\frac{t}{4n}) + H(\frac{t-d}{4n}) - 2H(\frac{2t-d}{8n}) \le 0$. It follows that 
\[
\PP[\kpi{1} = 4n+d|\Tpi{U}{L} = 2t-d] = O\qty(\frac{1}{\sqrt{t}}).
\]
Following the same argument as in the $d=0$ case, we have for $\abs{d} = O(\sqrt{n})$ that
\begin{align*}
    \PP[\kpi{1} = 4n+d] &= \sum_{2t-d=0}^{4n} \PP[\kpi{1} = 4n | \Tpi{U}{L} = 2t-d] \PP[\Tpi{U}{L} = 2t-d] \\
    &\le \frac{1}{n} + \sum_{\abs{t-\frac{d}{2}-2n} \le C\sqrt{n\log n}}  \PP[\kpi{1} = 4n+d | \Tpi{U}{L} = 2t-d] \PP[\Tpi{U}{L} = 2t-d]\\
    &\le \frac{1}{n} + O\qty(\frac{1}{\sqrt{n}}) \\
    &\le O\qty(\frac{1}{\sqrt{n}}),
\end{align*}
From here, it follows that there exists some positive constant $c$ such that
\[
\PP[\abs{\kpi{1} - 4n} > c\sqrt{n}] = \Omega(1),
\]
which proves \ref{item:reg-prop-3}.
\end{proof}

\subsection{Classification toy dataset}
In this section, we motivate how we constructed our toy classification dataset parameterized by $n$. We then give a detailed construction and analysis of the dataset, parameterized by $n$. 
We plot the unnormalized $\mZ$ in \cref{fig:toy-dataset-clf-unnorm-alt} and the normalized $\Zgd$ in \cref{fig:toy-dataset-clf-norm-alt} for $n=10$. 

The main idea is that since $d=2$ and the optimal direction is orthogonal to the SC portion of the separability decomposition (\Cref{lemma:infimize}), we can fix the optimal directions of $\Zpi$ and $\Zgd$ by carefully constraining $\Span(\Xpisc)$ and $\Span(\Xgdsc)$, respectively. For, $\Zgd$ we carefully select the boundary points $\mX_{\mathrm{bdr}}$ which define the boundary of $\conv(\Xgdbn^+)$ and $\conv(\Xgdbn^-)$ so that $\dim(\Span(\Xgdsc)) = 1$. For $\Zpi$, we need to ensure that $\Span(\Xpisc)$ is a one dimensional subspace of $\RR^2$ which is close to orthogonal with $\Span(\Xgdsc)$. Although we can guarantee $\dim(\Span(\Xpisc)) \ge 1$ w.h.p., the main subtlety here is ensuring that equality holds with constant probability. Given the above, we are afforded the luxury of adding datapoints which are misclassified by $\vpi^*$, the optimal direction of $\ellpi$.

\begin{proposition}\label{prop:toy-classification-dataset}
    There exists a classification dataset $\mZ = (\mX, \mY) \in [-3, 3]^{2 \times (2n+6)} \times \qty{-1, 1}^{2n+6}$ such that the following statements hold with batch size $B=2$: 
    \begin{enumerate}[label=\normalfont{(\arabic*)},ref=(\theenumi)]
        \item \label{item:clf-prop-1} $\Zgd$ is PLS and as $n \to \infty$, $\vgd^*$ converges in direction to $\mqty[1 & 2]^\top$. 
        \item \label{item:clf-prop-2} $\Zpi$ is PLS with constant probability. If so, we have $\vpi^* = \mqty[1 & -1]^\top$.
        \item \label{item:clf-prop-3} There exists points $(\vx_i, y_i) \in \Zgd$ such that $y_i \langle \vpi^*, \vx_i \rangle < 0$, i.e., GD points that $\vpi^*$ misclassifies.
    \end{enumerate}
    Hence, the GD risk $\ellgd$ diverges with constant probability if we train with SS.
\end{proposition}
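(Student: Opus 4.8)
The plan is to reduce the divergence claim to \cref{prop:divergence-possibility}, whose ``if'' direction holds for SS without any full-rank assumption. Concretely, once parts \ref{item:clf-prop-1}--\ref{item:clf-prop-3} are established, the conclusion is immediate: on the constant-probability event that $\Zpi$ is PLS (part \ref{item:clf-prop-2}) its optimal direction is $\vpi^* = \mqty[1 & -1]^\top$, while part \ref{item:clf-prop-3} exhibits a GD datapoint that $\vpi^*$ misclassifies, so \cref{prop:divergence-possibility} forces $\ellgd$ to diverge on that event. Thus the entire burden is to construct $\mZ$ and verify the three structural properties, the genuinely delicate one being the probabilistic statement \ref{item:clf-prop-2}.

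For the deterministic parts I would first exploit the $B=2$ simplification: batch-normalizing any two distinct reals yields $(-1,+1)$, so every column of $\Xgdbn=\bn(\mX)$ is determined by the global mean and variance, while every column of $\Xpibn$ lands on a corner $(\pm1,\pm1)$ of the square, with the corner fixed by the relative order of the two paired points in each coordinate. Using this, I would choose the six fixed boundary points $\mX_{\mathrm{bdr}}$ so that the separability decomposition of $\Zgd$ has a one-dimensional $\Span(\Xgdsc)$; \cref{lemma:infimize} then pins $\vgd^*$ to the orthogonal complement, and a short limiting computation shows this direction converges to $\mqty[1 & 2]^\top$ as $n\to\infty$, giving part \ref{item:clf-prop-1}. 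Part \ref{item:clf-prop-3} is then a one-line check: evaluate $y_i\langle \mqty[1 & -1]^\top, \vx_i\rangle$ on the designated error points $\ol{\mX}^+_{\mathrm{err}},\ol{\mX}^-_{\mathrm{err}}$ of $\Zgd$ and confirm the sign is negative.

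The core is part \ref{item:clf-prop-2}. Here I would introduce counting random variables (as in \cref{prop:toy-regression-dataset}) tracking how the $2n$ random points distribute among the four corners under $\pi$. The target geometry is that the off-diagonal corners $(1,-1)$ and $(-1,1)$ receive only positive, resp.\ only negative, labels, forming a linearly separable part correctly classified by $\mqty[1 & -1]^\top$ since $\langle\mqty[1 & -1]^\top,(1,-1)\rangle>0$ and $\langle\mqty[1 & -1]^\top,(-1,1)\rangle<0$, while the diagonal corners $(1,1),(-1,-1)$ receive both labels, so the overlapping (SC) portion lies on the line $\Span(\mqty[1 & 1]^\top)$. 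In that configuration \cref{lemma:sc-sufficient} forces $\Zpi$ to be PLS with $\Span(\Xpisc)=\Span(\mqty[1 & 1]^\top)$, and \cref{lemma:infimize} then forces $\vpi^*=\mqty[1 & -1]^\top$ by orthogonality. I would show these corner events occur simultaneously with constant probability by a direct combinatorial computation on the pairing induced by $\pi$, controlling the number of cross-cluster batches via the concentration estimate \cref{fact:whp-monochromatic}.

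The main obstacle is exactly the subtlety flagged in the construction: guaranteeing $\dim(\Span(\Xpisc))=1$ rather than $2$ with constant probability. If the overlapping portion accidentally occupies three or four corners, then $\Span(\Xpisc)=\RR^2$, forcing $\vpi^*=\vzero$ and no divergence. Ruling this out requires an anticoncentration argument: conditioned on the high-probability value of the cross-cluster batch count, I would lower bound by an absolute constant the probability that the mixed labels stay confined to the $(1,1)$--$(-1,-1)$ diagonal, using the binomial-coefficient asymptotics of \cref{lemma:binomial-estimate} in the same spirit as the $\Omega(1/\sqrt{n})$ anticoncentration bound for the regression dataset. Combining this event with the corner-occupancy events of the previous paragraph yields part \ref{item:clf-prop-2}, and the reduction to \cref{prop:divergence-possibility} completes the proof.
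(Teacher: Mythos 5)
Your overall architecture matches the paper's proof: the same reduction of divergence to the ``if'' direction of \cref{prop:divergence-possibility} (which indeed needs no full-rank assumption), the same $B=2$ corner analysis, the same use of boundary points to pin $\Span(\Xgdsc)$ to a line so that \cref{lemma:infimize} determines $\vgd^*$, and the same one-line sign check for part \ref{item:clf-prop-3}. The genuine gap is in your plan for part \ref{item:clf-prop-2}. In the paper's construction the confinement of the bulk to the diagonal corners is \emph{deterministic}, not probabilistic: the $2n$ cluster points sit in two tiny antipodal clusters near $(2,2)^\top$ and $(-2,-2)^\top$, so in any batch containing a bulk point both coordinates are ordered identically, and the batch lands in the $(1,1)$-direction configuration with probability one. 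The only points that can ever reach the off-diagonal corners $(+1,-1)^\top$ and $(-1,+1)^\top$ are the $O(1)$ specially placed boundary/error points, so the event $\Span(\Xpisc)=\Span((1,1)^\top)$ depends only on how those few points are matched by $\pi$; the paper bounds the bad matchings by crude conditioning to get probability at least $1/9$. Your proposal instead treats the corner occupancy of all $2n$ points as random and tries to ``confine the mixed labels to the diagonal'' via an anticoncentration argument using \cref{lemma:binomial-estimate} in the spirit of \cref{prop:toy-regression-dataset}. That tool does not fit this job: if the bulk could genuinely occupy off-diagonal corners, confinement would require avoiding $\Theta(n)$ bad placements and its probability would decay in $n$ rather than be constant; and in the regression example anticoncentration serves the opposite purpose (showing the optimum \emph{is} distorted away from zero with constant probability), not an avoidance bound. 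So without committing to a construction in which the bulk geometry forces diagonal placement outright, your part-\ref{item:clf-prop-2} argument does not close.

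Two smaller points. First, \cref{lemma:sc-sufficient} certifies that a dataset is SC, not PLS; to conclude PLS with $\Span(\Xpisc)=\Span((1,1)^\top)$ you should apply the overlapping-hulls argument \emph{within} that one-dimensional span (where monochromatic batches place both labels at both diagonal corners, so the restricted problem is SC) and separately verify that the off-diagonal points are strictly separated by $(1,-1)^\top$, which together with \cref{lemma:infimize} pins $\vpi^*$. Second, be aware when fleshing out part \ref{item:clf-prop-3} that the misclassified points must be the designated error points of $\Zgd$ (normalized versions of $\mX^{\pm}_{\mathrm{err}}$) chosen so that $y_i\langle (1,-1)^\top,\vx_i\rangle<0$; this is a constraint on where you place $\mX^{\pm}_{\mathrm{err}}$ relative to the limiting normalization $\vsigma\to(2,2)^\top$, not something that holds automatically for points near the cluster.
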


\begin{proof}
In our construction, we separate out $\mZ = (\mX, \mY)$ into several groups of points: $\mX^+_{\mathrm{cor}}$, $\mX^+_{\mathrm{err}}$,  $\mX^+_{\mathrm{bdr}}$, and their negative variants. Let the overlined version of these matrices denote the corresponding points after full-batch BN (i.e., after taking $\Xgdbn \triangleq \bn(\mX)$), and the overlined version with an extra $\pi$ subscript denoting the corresponding points after BN with permutation $\pi$ (i.e., $\Xpibn \triangleq \bnpi(\mX)$). For example, $\ol{\mX}^+_{\mathrm{cor}}$ denotes the features for the positive examples in $\Xgdbn$ that are to be classified correctly by $\vpi^*$, whereas $\ol{\mX}^+_{\mathrm{cor},\pi}$ denotes those same datapoints in $\Xpibn$ after batch normalization with $\pi$. 

\paragraph{Setup:}
Let us first explain the semantic meanings of the different groups of points in our construction. 
\begin{align*}
    \mX^+_{\mathrm{cor}} &: \text{Unnormalized positive examples correctly classified by $\vpi^*$ with positive margin} 
 \\ 
 \mX^+_{\mathrm{err}} &: \text{Unnormalized positive example incorrectly classified by $\vpi^*$}  \\
 \mX^+_{\mathrm{bdr}} &: \text{Unnormalized positive examples on the decision boundary of $\vv^*$} \\
    \ol{\mX}^+_{\mathrm{cor}} &: \text{full-batch-normalized positive examples correctly classified by $\vpi^*$ with positive margin} 
 \\ 
 \ol{\mX}^+_{\mathrm{err}} &: \text{full-batch-normalized positive example incorrectly classified by $\vpi^*$}  \\
 \ol{\mX}^+_{\mathrm{bdr}} &: \text{full-batch-normalized positive examples on the decision boundary of $\vgd^*$}
\end{align*}
The semantic meanings of the negative versions of these points are completely analogous. We also define $\mX_{\mathrm{bdr}} = \mX^+_{\mathrm{bdr}} \cup \mX^-_{\mathrm{bdr}}$, and the normalized quantity analogously. 

We construct $\mX^+_{\mathrm{cor}}$, $\mX^+_{\mathrm{err}}$, and $\mX^+_{\mathrm{bdr}}$ as follows. Take $\mX^+_{\mathrm{cor}}$ to be $n$ equally spaced points on the line segment of width $\frac{1}{n}$ centered at $\mqty[2 & 2]^\top$. For the sake of visual clarity, we increase the spacing of the points in the diagram \cref{fig:toy-dataset-clf-unnorm-alt}. Define $\mX^+_{\mathrm{err}}$ to be $\mqty[3 & 2.5]^\top$. Next, define $\mX^+_{\mathrm{bdr}}$ to be $\mqty[-3 & 1 \\
1.5 & -0.5]$, lying on the line $y = -0.5x$. 
Finally, define 
\begin{align*}
    \mX^-_{\mathrm{cor}} &= -\mX^+_{\mathrm{cor}} \\
    \mX^-_{\mathrm{err}} &= -\mX^+_{\mathrm{cor}} \\
    \mX^-_{\mathrm{bdr}} &= -\mX^+_{\mathrm{bdr}}.
\end{align*}

In \cref{fig:classification-datasets}, we visualize this toy dataset. Note the visual similarity between $\mZ$ in \cref{fig:toy-dataset-clf-unnorm-alt} and $\Zgd$ in \cref{fig:toy-dataset-clf-norm-alt}; this is a feature of the construction. Indeed, as we'll see shortly, as $n \to \infty$, $\Xgdbn$ approaches a uniform rescaling of $\mX$ in all coordinates. We also plotted the decision boundaries corresponding to $\vgd^*$ and $\vpi^*$. We highlight the fact that in \cref{fig:toy-dataset-clf-norm-alt}, $\ol{\mX}^+_{\mathrm{err}}$ and $\ol{\mX}^-_{\mathrm{err}}$ are both on the wrong side of the decision boundary for $\vpi^*$. 
\begin{figure}[!ht]
    \centering
    \begin{subfigure}[b]{0.48\textwidth}
         \centering
        \includegraphics[width=\textwidth]{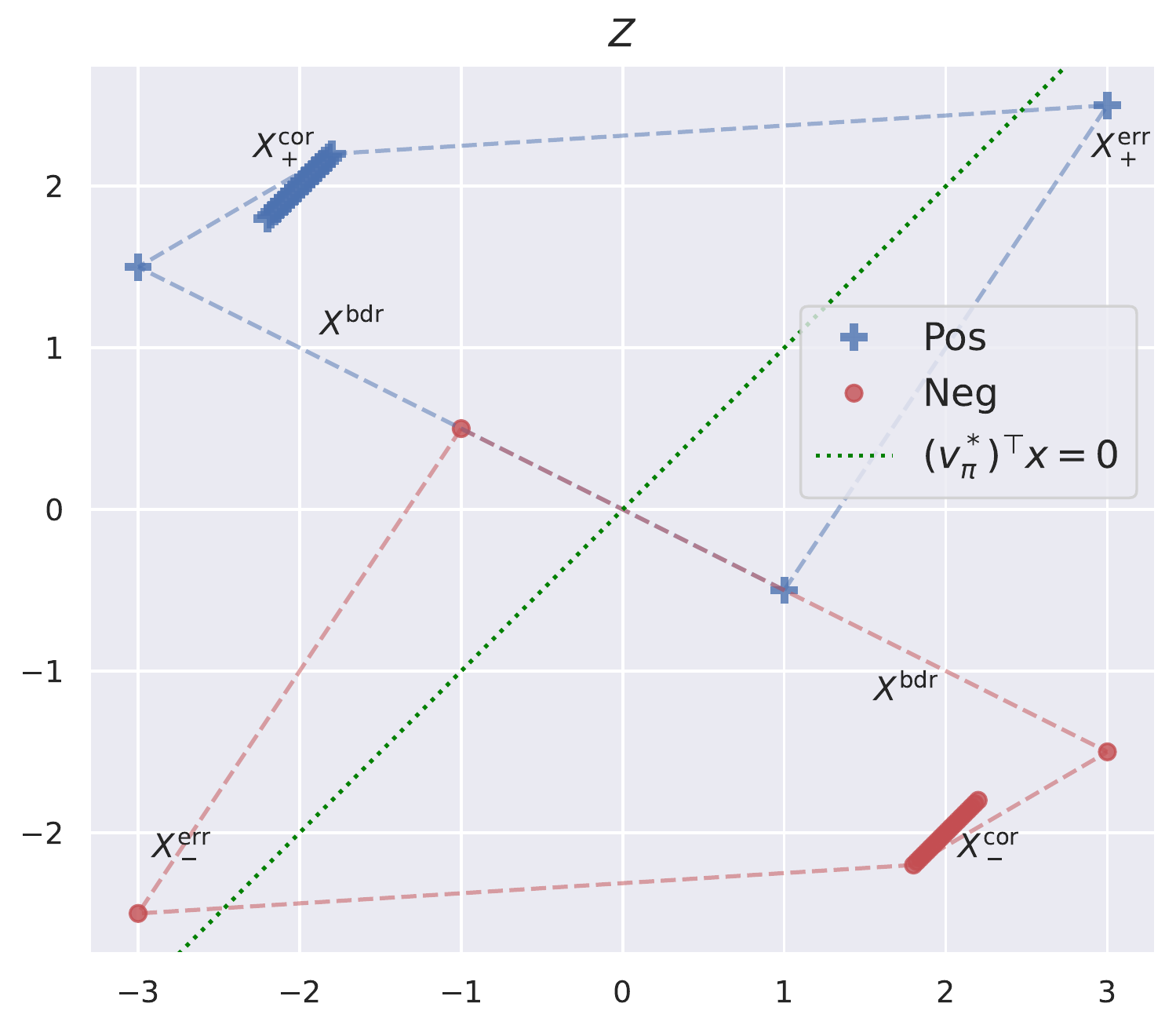}
         \caption{Unnormalized toy classification dataset $\mZ$.\newline}
         \label{fig:toy-dataset-clf-unnorm-alt}
     \end{subfigure}
     \hfill
    \begin{subfigure}[b]{0.48\textwidth}
         \centering
        \includegraphics[width=\textwidth]{toy_dataset_clf_normalized.pdf}
         \caption{Normalized toy classification dataset $\Zpi$ demonstrating divergence of SS with constant probability.}
         \label{fig:toy-dataset-clf-norm-alt}
     \end{subfigure}
    \caption{Toy classification dataset (a) before full-batch BN, i.e. $\mZ$ (b) after full-batch BN, i.e. $\Zgd$. }
    \label{fig:classification-datasets}
\end{figure}
\paragraph{GD is PLS:}
Evidently $\vmu = \vzero$ and one can compute that as $n \to \infty$ that $\vsigma \to \mqty[2 & 2]^\top$. Regardless, we see that $\Zgd$ is PLS with $\Span(\Xgdsc)$ corresponding to the one-dimensional subspace spanned by $\ol{\mX}_{\mathrm{bdr}}$ and $\vgd^*$ correctly classifies all of the other points. In the limit $n \to \infty$, we have that $\Span(\ol{\mX}_{\mathrm{bdr}}) = \Span(\mqty[-2 & 1]^\top)$ and $\vgd^*$ is in the direction $\mqty[1 & 2]^\top$. This proves \ref{item:clf-prop-1}.

\paragraph{SS is PLS with constant probability:}
Now, let us compute what happens to $\Zpi$. Because BN with $B=2$ sends features to $\pm 1$, this implies that a normalized batch is either mapped to 
\begin{enumerate}[label=\normalfont{(\arabic*)}]
    \item $\mqty[-1 & +1 \\
-1 & +1]$, i.e. the normalized batch lies in the direction $\mqty[+1 & +1]^\top$; or 
    \item $\mqty[-1 & +1 \\
+1 & -1]$, i.e the normalized batch lies in the direction $\mqty[+1 & -1]^\top$. 
\end{enumerate} 

Note that due to \cref{fact:whp-monochromatic}, with high probability, there will be a batch drawn from $\mX^+_{\mathrm{cor}}$ and a batch drawn from $\mX^-_{\mathrm{cor}}$. These batches necessarily land in situation (1), so $\mqty[1 & 1]^\top \in \Span(\Xpisc)$  with high probability. On the other hand, with high probability there will be a batch with one point from $\mX^+_{\mathrm{cor}}$ and one from $\mX^-_{\mathrm{cor}}$, which lands in (2). Hence, as long as $\Span(\Xpisc) = \Span(\mqty[1 & 1]^\top)$, this implies that $\Zpi$ is PLS with optimal direction $\mqty[-1 & +1]^\top$. 

By inspection, to ensure that $\Span(\Xpisc) = \Span(\mqty[1 & 1]^\top)$, there are two bad events we need to avoid:
\begin{enumerate}[label=(\alph*)]
    \item We send a positive example to $\mqty[+1 & -1]^\top$.
    \item We send a negative example to $\mqty[-1 & +1]^\top$.
\end{enumerate}
We will use 
\[
\PP[\text{avoid (a) and (b)}] = \PP[\text{avoid (a) }|\text{ avoid (b)}] \PP[\text{avoid (b)}].\]
Note that by symmetry, $\PP[\text{avoid (b)}] = \PP[\text{avoid (a)}]$. 

A little thought reveals that (a) can happen only if the positive boundary example (i.e. in $\mX^+_{\mathrm{bdr}}$) located originally at $\mqty[1 & -0.5]^\top$ is batched together with a point in $\mX_{\mathrm{bdr}} \cup \mX^+_{\mathrm{cor}}$. In turn, this event occurs with probability at most $\frac{2}{3}$. So $\PP[\text{avoid (b)}] \ge \frac{1}{3}$. Also, notice avoiding (a) still happens with probability at least $\frac{1}{3}$ even after conditioning on avoiding (b). Hence, the probability that we avoid both (a) and (b) is at least $\frac{1}{3}^2 = \frac{1}{9}$. So with constant probability, $\Span(\Xpisc) = \Span(\mqty[1 & 1]^\top)$. This proves \ref{item:clf-prop-2}.

Putting it all together, we see that with constant probability, $\Span(\Xpisc) = \Span(\mqty[1 & 1]^\top)$, and $\Zpi$ is PLS with optimal direction $\vpi^* = \mqty[+1 & -1]^\top$. Recall that $\vgd^* \to \mqty[1 & 2]^\top$ as $n \to \infty$. So asymptotically we have 
\[
\frac{\abs{\ev{\vpi^*, \vgd^*}}}{\norm{\vpi^*}\norm{\vgd^*}} = \frac{1}{\sqrt{10}}.
\]

\paragraph{SS misclassifies GD points:}
On the constant probability event that $\Zpi$ is PLS, we have $\vpi^*$ is in the direction $\mqty[1 & -1]^\top$. This misclassifies $\ol{\mX}^+_{\mathrm{err}}$ and $\ol{\mX}^-_{\mathrm{err}}$. So this proves \ref{item:clf-prop-3}.
\end{proof}
\begin{remark}
Note that at the cost of visual clarity, we can modify the construction to obtain optimal classifiers $\vpi^*$ and $\vgd^*$ which are asymptotically orthogonal. In this alternate construction, we take $\mX^+_{\mathrm{bdr}} = \mqty[-1 & 0.5 \\
1 & -0.5]$, lying on the line $y = -x$, and $\mX^+_{\mathrm{cor}}$ to be $n$ equally spaced points on the line segment between $\mqty[-2 & 2]^\top$ and $\mqty[-2+\frac{1}{n} & 2+\frac{1}{n}]^\top$. Then $\vgd^* \to \mqty[1 & 1]^\top$, and $\vpi^* = \mqty[1 & -1]^\top$ with constant probability, as desired.
\end{remark}

\end{document}